\let\footnote=\endnote
\newcommand{\by}{\bm{y}}
\newcommand{\bz}{\bm{z}}
\newcommand{\cC}{\mathcal{C}}
\newcommand{\cE}{\mathcal{E}}
\newcommand{\cG}{\mathcal{G}}
\newcommand{\cH}{\mathcal{H}}
\newcommand{\cL}{\mathcal{L}}
\newcommand{\cM}{\mathcal{M}}
\newcommand{\cO}{\mathcal{O}}
\newcommand{\cR}{\mathcal{R}}
\newcommand{\cV}{\mathcal{V}}
\newcommand{\EE}{\mathbb{E}}
\newcommand{\PP}{\mathbb{P}}
\newcommand{\tr}{\mathop{\mathrm{tr}}}
       \let\hat\widehat
       \let\tilde\widetilde
\def\given{\,|\,}
\def\Biggiven{\,\Big{|}\,}
\def\tr{\mathop{\text{tr}}\kern.2ex}
\def\P{{\mathbb P}}
\def\E{{\mathbb E}}
\def\supp{\mathop{\text{supp}}}
\long\def\comment#1{}
\def\tr{\mathop{\text{Tr}}}
\newcommand{\bel}{\begin{eqnarray}\label}
\newcommand{\eel}{\end{eqnarray}}
\newcommand{\bes}{\begin{eqnarray*}}
\newcommand{\ees}{\end{eqnarray*}}
\def\real{{\mathbb{R}}}
\def\R{{\real}}
\def\mid{\,|\,}
\def\P{{\mathbb P}}
\def\E{{\mathbb E}}
\def\supp{\mathop{\text{supp}\kern.2ex}}
\def\given{{\,|\,}}
\def\T{{\mathcal{T}}}
\newcommand{\op}{o_{\raisemath{-1.5pt}\PP}}
\begin{document}


\RUNAUTHOR{Liu, Fang and Lu}

\RUNTITLE{Lagrangian Inference for Ranking Problems}

\TITLE{Lagrangian Inference for Ranking Problems}


\ARTICLEAUTHORS{%

\AUTHOR{Yue Liu}
\AFF{Department of Statistics, Harvard University, Boston, MA 02138,  \EMAIL{yueliu@fas.harvard.edu}}\AUTHOR{Ethan X. Fang}
\AFF{Department of Statistics, Pennsylvania State University, University Park, PA 16802,  \EMAIL{xxf13@psu.edu}}
\AUTHOR{Junwei Lu}
\AFF{Department of Biostatistics, Harvard T.H. Chan School of Public Health, Boston, MA 02130,  \EMAIL{junweilu@hsph.harvard.edu}}
} 

\ABSTRACT{%
We propose a novel combinatorial inference framework to conduct general uncertainty quantification in ranking problems. We consider  the widely adopted  Bradley-Terry-Luce (BTL) model, where each item is assigned a positive preference score that determines the Bernoulli distributions of pairwise comparisons' outcomes. Our proposed method aims to infer general ranking properties of the BTL model. 
The general ranking properties include the ``local'' properties such as if an item is preferred over another and the ``global'' properties such as if an item is among the top $K$-ranked items. We further generalize our inferential framework to multiple testing problems where we control the false discovery rate (FDR), and apply the method to infer the top-$K$ ranked items. We also derive the information-theoretic lower bound to justify the minimax optimality of the proposed method. We conduct extensive numerical studies using both synthetic and real datasets to back up our theory. 
}%


\KEYWORDS{Combinatorial inference, Ranking, Pairwise comparisons, Bradley-Terry-Luce model, Minimax lower bound.} 

\maketitle

%



\section{Introduction}\label{sec:intro}

Ranking problems aim to study relative orderings of some set of items, and 
find many applications such as  sports competition  \citep{massey1997statistical,pelechrinis2016sportsnetrank, price2017scores,xia2018network}, online gamers ranking (e.g., Microsoft TrueSkill ranking system \cite{minka2007trueskill, minka2018trueskill}), web search and information retrieval \citep{dwork2001rank, bouadjenek2013sopra, guo2020deep}, 
 recommendation systems \citep{baltrunas2010group, he2018adversarial,  geyik2019fairness},
  crowdsourcing \citep{chen2013pairwise, suh2017adversarial, liang2020online}, gene ranking \citep{boulesteix2009stability, kolde2012robust, kim2015hydra}, {assortment optimization \citep{li2018integrating, aouad2018approximability}}, among many others. Due to the practical importance, ranking problems draw significant attention from different communities such as operations research \citep{mcfadden1973conditional, mohammadi2020ensemble}, statistics \citep{hunter2004mm, chen2020partial}, machine learning \citep{richardson2006beyond, guo2020deep}, and sociology \citep{brown2003ranking, subochev2018ranking}.

In ranking problems, given some comparisons among pairs of  $n$ items, we aim to infer the relative ranking of these items. Many models {are} proposed to study this problem, and one  of the most widely used parametric models is the Bradley-Terry-Luce (BTL) model \citep{bradley1952rank, luce1959individual}. In the BTL model, each item is assigned a latent positive preference score that determines its rank, and the latent scores determine the relative preference among the $n$ items.

 
Based on the BTL  model, there are several tracks of works that estimate the ranks of the items by estimating the latent scores. The first track is the rank centrality method   \citep{negahban2017rank, dwork2001rank, maystre2015fast, vigna2016spectral, jang2016top}, which is also known as the spectral method. This class of methods connects pairwise comparisons with random walk over the comparison graph. In particular, each node in the graph represents an  item, and  the probability of moving from node~$i$ to node $j$ equals the probability that item $j$ is preferred over item $i$. Based on this approach, \cite{negahban2017rank} { show} that the preference scores of items {coincide} the stationary distribution under the random walk, and derive a fast rate of convergence of the estimator for the scores. {\cite{chen2019spectral} further { improve} the  convergence rate in \cite{negahban2017rank} by removing the logarithmic factor.}
The second track is based on considering the regularized maximum likelihood estimator (MLE) \citep{ford1957solution, hunter2004mm, lu2015individualized}. This approach estimates the latent scores by maximizing the regularized likelihood function, and \cite{chen2019spectral} derive the rate of convergence of the estimator under the $\ell_2$ regularization.  {\cite{negahban2018learning} also consider nuclear norm  regularization.} In addition, \cite{azari2013generalized} consider the method of moments for the Plackett-Luce model, and \cite{mosteller2006remarks, jiang2011statistical, neudorfer2018predicting} consider general least square methods, and their estimation consistency for preference scores are established by various works~\citep{chen2015spectral,maystre2015fast,jang2016top,duchi2010consistency, rajkumar2014statistical, rajkumar2016can}.

Despite the aforementioned significant progress of rank estimation, the uncertainty quantification in ranking problems remains largely unexplored, which is of crucial importance in practice. For example,  saying that player $i$ is ranked higher than player $j$ without a confidence score is not very informative in practice. In this paper, we propose a novel combinatorial inferential framework for testing ranking properties. In particular, given $n$ items, we define a ranking list $\gamma$ as a permutation over the set of $n$ items $[n] = \{1,...,n\}$, and let $\cR$  be the set of all possible rankings (i.e., all possible permutation over $[n]$). Let ranking list $\gamma^{*}$  be the true underlying ranking of $n$ items.  We aim to test whether  $\gamma^{*}$ satisfies certain ranking properties based on partial pairwise comparison observations. For example, let $\cR_i$ be a subset of $\cR$ representing the ranking property with respect to item $i$.
We test the general  ranking property for a given item $i$,  i.e., whether item $i$ has certain properties, i.e., $${H}_{0}:  \text{item } i \text{ does not satisfy the property}   \text{ \ \ v.s. \ \  } {H}_{a}: \text{item } i \text{  satisfies the property},$$ which is equivalent to
$${H}_{0}:  \gamma^* \notin \cR_i
\text{ \ \ v.s. \ \  } 
{H}_{a}: \gamma^* \in \cR_i.$$

\subsection{Motivating Applications}

The inference in ranking problems finds many applications. 
For instance, it is of practical interest to test whether movie
$A$ is preferred over movie $B$ on average, and test whether chess player~$C$ is stronger than player $D$. Such problems are pairwise ranking inference problems that fit into our framework as defined in the following example.

\begin{example}[Pairwise ranking inference] \label{eg:intro-1}
	Consider testing whether  item $i$ is ranked higher than item $j$. Let $\mathcal{R}_i$ be the set of all possible rankings that item $i$ is ranked higher than item $j$. We consider the following hypothesis testing problem that
	$${H}_{0}: \text{Item } j \text{ is ranked higher than item } i  \text{ \ \ v.s. \ \  }  { H}_{a}: \text{Item } i \text{ is ranked higher than item } j.$$
\end{example}  

Another important application of inference in ranking problems is the top-$K$ {inference}.
 For instance, in recommendation systems, one important goal is to find a few most appealing items for the users \citep{cremonesi2010performance}.  In biomedical studies, only a small subset of top-ranked genes is informative, and it is crucial for the investigators to identify this set of genes to perform detailed analysis \citep{boulesteix2009stability}. {In assortment optimization, the challenge is to identify a subset of items that maximize revenue based on customer preferences \citep{li2018integrating, aouad2018approximability}.
 } We first summarize the single top-$K$ inference problem in the following example.

\begin{example}[Single top-$K$ inference] 
	\label{eg:intro-2}
	Consider testing whether  item $i$ is among the top-$K$ items (a special case is $K=1$). Here $\mathcal{R}_i$ is the set of all possible rankings that item $i$ is among the top-$K$ items. We consider the following hypothesis testing problem that
	$${H}_{0}: \text{Item } i \text{ is not among the top-$K$ items}  \text{ \ \  v.s. \ \  }  {H}_{a}: \text{Item } i \text{ is among the top-$K$ items}.$$
\end{example}

We then extend the problem to the multiple testing setup, where the goal is to infer the set of all top-$K$ items.

\begin{example}[Top-$K$ inference] 
	\label{eg:intro-3}
	Consider the problem of identifying the set of top-$K$ items. Here $\mathcal{R}_i$ is the set of all possible rankings that item $i$ is among the top-$K$ items, {$i \in [n]$}. We consider the following multiple testing problem that 
	$${H}_{0}: \text{Item $i$ is not among the top-$K$  items} \text{ \ v.s.  \  }  {H}_{a}: \text{Item $i$ is among the top-$K$ items,}   \text{ \ for\  } i \in [n].$$
\end{example}

\subsection{Major Contributions}
To the best of our knowledge, this paper provides the first inferential framework for ranking problems. Our proposed method can test a broad class of hypotheses for ranking problems. Theoretically, we show that the p-values are valid, and our procedures are powerful. We summarize the major contributions below.
 
\begin{itemize}
\item We are among the first to study general inferential approaches for ranking problems beyond estimation, and we propose a novel general framework for inferring different ranking properties. 
 We  show that our proposed methods are asymptotically valid and powerful. Furthermore, we generalize the method to the more challenging multiple testing setup to widen the applicability.

\item In our inferential framework, we  propose a novel general Lagrangian debiasing procedure to handle the constrained parameter space. Our Lagrangian debiasing procedure addresses the challenge raised  by the non-identifiability of the BTL model. Most existing works on high-dimensional inference, such as \cite{zhang2014confidence,
van2014asymptotically,ning2017general}, focus on inferring the parameter under the unconstrained space, and do not apply to the BTL model due to the constraints. By considering the optimality condition of the Lagrangian dual problem, our proposed approach provides a new tool for high-dimensional inference under general constraints. We also  derive the asymptotic distribution of our debiased estimator.
{We point out that this new Lagrangian debiasing procedure can be applied to general high-dimensional constrained inferential problems beyond ranking problems, which itself is of great interest.}
  
\item We provide a new framework to derive the minimax lower bound for multiple testing in ranking problems, which provides new theoretical insights.
To the best of our knowledge, this is the first time that such a lower bound is derived. In particular, {let the preference score vector be $\theta \in \mathbb{R}^n$, which represents the scores of all $n$ items that determine the ranks of all items.} We first define a new minimax risk for the multiple testing problems that 
\begin{equation} \nonumber
\mathfrak{R} = \inf_{\psi}  \sup_{ \theta} \P(\symbol{"0023} \text{ false positives}+ \symbol{"0023} \text{ false negatives}\geq 1),
\end{equation} 
where the infimum is taken over all possible selection procedure $\psi$. Here the risk is the probability of making at least one Type I or Type II error. If the minimax risk $\mathfrak{R} \geq\ 1-\epsilon$ for some constant $\epsilon>0$, we say that any  procedure fails for the multiple testing problem since they cannot control the Type I error or Type II error in the minimax sense. 
{
To derive the necessary conditions for controlling minimax risk, we further define a novel distance
$\Delta({\theta})$ in \eqref{equ:distance}, 
and a divider set $\mathcal{M}({\theta})$ in Definition~\ref{def:divider}, which capture the combinatorial structures of ranking properties. Intuitively, the distance is a  signal strength for  selecting the items of interest; the divider set is the set of items that are crucial for  selecting the items, and the size of the divider set increases as the distance decreases. 
We  show that the numerical signal strength $\Delta({\theta})$ and combinatorial signal strength $|\mathcal{M}({\theta})|$ together measure the difficulty in the multiple testing problems. 
We also give two concrete examples where  $\mathfrak{R}$ is arbitrarily close to 1 if $ \Delta({\theta}) \lesssim  
\sqrt{\frac{\log n}{npL}}$.
In addition, we show that our lower bound matches our upper bound to justify the optimality of the proposed method. 
}
\end{itemize} 

\subsection{Literature Review}

\noindent{\bf Ranking Problem.}
 There has been a long history of works on ranking problems \citep{mallows1957non, keener1993perron,  altman2005ranking, jiang2011statistical,  osting2013enhanced, vigna2016spectral, ding2018new, filiberto2018new, guo2020deep, pujahari2020aggregation}. 
Some ranking systems are based on explicit preference scores or ratings provided by individuals, which is closely related to the matrix completion problem \citep{candes2009exact, negahban2012restricted}. In these problems, an individual only provides scores for a subset of items, and we estimate the individual's preference scores for other items.   
However, users' explicit scores can be inconsistent and noisy, or even not available in some cases. This motivates researchers to develop methods for ranking aggregations from comparison results or partial rankings provided by users \citep{saaty2003decision, ailon2008aggregating, ailon2010aggregation, gleich2011rank, ammar2011ranking, farnoud2012novel,   volkovs2012flexible,  ammar2012efficient, swain2017consensus, napoles2017weighted, jang2017optimal, jang2018top, chen2018nearly, zhang2020ranking}. 

Another important track of works on ranking problems are based on pairwise comparison data \citep{kendall1940method, kendall1955further,  adler1994selection, talluri2006theory,chen2017asymptotically, beutel2019fairness, jain2020spectral, chen2021optimal}. 
For instance, \cite{lu2011learning} study the Mallows model from pairwise comparisons. \cite{chen2017asymptotically} study the sequential design with pairwise comparisons.  \cite{chen2015spectral} propose a new two-step method called the {spectral MLE}, and prove that it is minimax optimal. 
\cite{jang2016top} show that the spectral method itself is optimal for identifying the top-$K$ items in the sense of achieving the minimal sample size. \cite{chen2019spectral}
further study the sample complexity of regularized MLE and spectral method in a sparse pairwise comparison setting. 

There are other general frameworks on ranking problems such as the
Thurstone model \citep{thurstone1927law, vojnovic2017parameter, orban2019generalization, Jin_Xu_Gu_Farnoud_2020} and Plackett-Luce model \citep{guiver2009bayesian,hajek2014minimax}.  For instance, \cite{Jin_Xu_Gu_Farnoud_2020} propose a heterogeneous Thurstone model capturing heterogeneity of different individuals, and propose an algorithm to estimate the preference score vector and heterogeneity.  
Beyond parametric models, there are also  nonparametric  methods for ranking problems. For instance, \cite{shah2017simple} analyze a simple counting algorithm proposed by \cite{copeland1951reasonable}, which counts the numbers of wins of each item, and show its optimality and robustness. \cite{shah2016stochastically}, \cite{chen2018optimal}, and \cite{pananjady2017worst} consider the strong stochastically transitive (SST) model for pairwise comparisons. 
Furthermore, some other works consider ranking problems under specific settings such as active-ranking  \citep{ jamieson2011active, busa2013top, heckel2019active}, and crowd-sourcing \citep{chen2013pairwise, chen2016bayesian,  suh2017adversarial, liang2020online}. 
However, we point out the all above works focus on the estimation problem, and do not consider uncertainty quantification and inferential methods in ranking.  One exception is \cite{hall2009using}. This work focuses on using $m$-out-of-$n$ bootstrap to estimate the distribution of an empirical rank, which  requires empirical choice of $m$ and is of less practical interest. In contrast, we provide a more general framework that solves the problems of practical interest.

\vspace{4pt}

\noindent{\bf Constrained Inference.}  The inference under equality or inequality constraints is of great interest in literature. The low-dimensional constrained inference dates back to \cite{chernoff1954distribution}, which proves that the likelihood ratio weakly converges to a weighted chi-square distribution for constrained testings. Under the low-dimensional setting, the constrained inference has been further studied in \cite{gourieroux1982likelihood, kodde1986wald, rogers1986modified, shapiro1988towards, wolak1989testing, molenberghs2007likelihood, susko2013likelihood}, among many others.  
 For the high-dimensional constrained inference,  \cite{yu2019constrained} assume the existence of natural constraint on parameters, and test whether the parameters lie on the boundary of the constraint. By applying the debiasing approach in \cite{ning2017general}, the authors study the asymptotic distribution of test statistics under constraints. 

We note that the above mentioned methods cannot be applied to solve our problem. This is mainly due to the unique challenge that in our setting, the Fisher information matrix is singular due to the non-identifiability issue.


\vspace{4pt}

\noindent{\bf Paper Organization.}
The rest of our paper is organized as follows. In Section~\ref{sec:setup}, we introduce some preliminaries of ranking problems and  some ranking properties.
In Section~\ref{sec:method}, we present our debiased estimator with constraints. We then provide the { general hypothesis testing}. In Section~\ref{sec:multi test}, we extend our method to handle  multiple testing problems. In Section \ref{sec:LB_FWER}, we present the lower bound theory with applications to several examples.
We provide numerical results in Section \ref{sec:simulation} and some discussions in Section \ref{sec:discussion}.

\vspace{4pt}

\noindent{\bf Notations.}
Let $|A|$ represent the cardinality of set $A$, and $[n]$ represent the set of $\{1, \cdots, n\}$ for $n \in \mathbb{Z}^{+}$. For vector $v=\left(v_{1}, \ldots, v_{d}\right)^{T} \in \mathbb{R}^{d}$, and $1 \leq q \leq \infty,$ we define norm of $v$ as $\|v\|_{q}=\big(\sum_{i=1}^{d}\left|v_{i}\right|^{q}\big)^{1 / q}$. In particular,
 $\|v\|_{\infty}=\max _{1 \leq i \leq d}\left|v_{i}\right|$. For a matrix $M=\left[M_{ij}\right]$, 
let $\ell_1$-norm $\|M\|_{1}=\max _{j} \sum_{i}\left|M_{ij}\right|$, $\ell_\infty$-norm $\|M\|_{\infty}=\max _{i} \sum_{j}\left|M_{i j}\right|$, and the operator norm
$\|M\|_{2}=\sigma_{\max }(M)$
where $\sigma_{\max }(M)$ represents the largest singular value of matrix $M$. In addition, $a_n=O(b_n)$ or $a_n \lesssim b_n$ means there exists a constant $C>0$ such that $a_n \leq Cb_n$, and $a_n=o(b_n)$ means
$\lim_{n \rightarrow \infty} \frac{a_n}{b_n}=0$. we write $a_{n} \asymp b_{n}$ if $C \leq a_{n} / b_{n} \leq C^{\prime}$ for some $C, C^{\prime}>0$. {For a sequence of random
variables $\{X_n\}$, we write $X_n \overset{d}{\longrightarrow} X$ if $X_n$ converges in distribution to the random variable $X$. }Throughout
the paper, we let $C, C_1, C_2, \cdots,c, c_1, c_2, \cdots$ be generic constants which may change in different places.



\section{Preliminaries and Problem Setup}\label{sec:setup}
 In this section, we provide  some preliminaries to facilitate our discussions. We first briefly review the Bradley-Terry-Luce model and introduce our data generating scheme. Then, we provide the definitions of rankings and ranking properties.

\subsection{Bradley-Terry-Luce Model} 
We consider the Bradley-Terry-Luce (BTL) parametric model \citep{bradley1952rank, luce1959individual}. This model assumes  a  hidden preference score $\omega^*_i>0$ for each item $i$, $1 \leq i \leq n$. The scores determine the ranking and {the distributions of comparison results}. Let ${\omega}^*=(\omega^*_1,\cdots,\omega^*_n)^\top \in \mathbb{R}^n$ be the true preference score vector,  
 and its log-transformation is 
  $$
  {\theta}^*=(\theta^*_1,\cdots,\theta^*_n)^\top, \text{ where } \theta^*_i=\log \omega^*_i.
  $$ 
 Here $\omega^*_j>\omega^*_i$ or $\theta^*_j>\theta^*_i$ means that item $j$ is ranked higher (preferred) than item $i$. 
In this paper, for ease of presentation, we consider the case that all scores are in a bounded domain that $w_{i}^{*} \in\left[w_{\min }, w_{\max }\right]$ for all $i\in[n]$, where $ w_{\min},\ w_{\max}>0$. We let $\kappa=\omega_{\max} / \omega_{\min}$ be the condition number, and $\kappa$ is a constant which does not depend on $n$.

When we collect data, we compare the items pairwisely. To model the random pairs for comparisons, we adopt the Erd\"{o}s-R\'{e}nyi random graph. In particular, suppose we have an 
 undirected graph  $\cG = \{\cV,\cE\}$ where $\cV = [n]$ is the vertex set, and {$\cE \subseteq [n]\times [n]$} is the edge set.   In the Erd\"{o}s-R\'{e}nyi random graph $\cG (n,p)$, each edge is drawn independently from a Bernoulli distribution with probability $p$. Here we assume that for each pair $(i,j)\in\cE$, we observe the comparisons $L$ times. Note that we assume for all pairs in $\cE$, we have {a same} number of observations for ease of presentation, but our proposed method can be easily generalized to handle the general setting  where we have different numbers of observations of different pairs. Denote by $y_{i,j}^{(\ell)}$ the $\ell$-th comparison between items $i$ and $j$ for some $i<j$, which depends only on the relative scores of the  two items. 
 We assume that each  $y_{i,j}^{(\ell)}$ is generated independently from a Bernoulli distribution that 
\begin{equation}\label{eqn:y}
y_{i,j}^{(\ell)} \stackrel{\text { ind. }}{=}\left\{\begin{array}{ll}
1, & \text { with probability } \frac{w_{j}^{*}}{w_{i}^{*}+w_{j}^{*}}=\frac{e^{\theta_{j}^{*}}}{e^{\theta_{i}^{*}}+e^{\theta_{j}^{*}}} \\
0, & \text { otherwise,}
\end{array}\right.
\end{equation}
where $y_{i,j}^{(\ell)}=1$ means item $j$ is preferred over item $i$. Here we assume that all $y_{i,j}^{(\ell)}$'s are independent for all $i, j$, $(i<j)$, and $\ell$.


We point out that the BTL  model is invariant that if we multiply $\omega_i^*$, or increase $\theta^*$, by a constant~$c$,  the distribution of ${y}_{i,j}^{(\ell)}$ does not change.  That is, ${\theta}^*$ and ${\theta}^*+c=(\theta^*_1+c,\cdots,\theta^*_n+c)^\top$ are observationally equivalent. Hence, we regard a score vector ${\theta}^* \in \mathbb{R}^n$ as an equivalence class $[{\theta}^*]=\{{\theta}':{\theta}^*+c, c \in \mathbb{R}\}$, and regard the parameter space as the set of equivalence classes of $\mathbb{R}^n$
\citep{hunter2004mm,negahban2017rank}.  For the identifiability of the parameters, we impose a  constraint on the parameter space $\mathcal{C}$ that we let $\mathcal{C}=\{\theta: f(\theta)=0\}$, and propose our inferential framework under this constraint, where the function $f$ is smooth, and ensures the identifiability of the parameter $\theta$. Specific examples of $f$ include $\bm{1}^\top{\theta}=0$, ${\theta}_1=1$ (${\theta}_1$ is the preference score of the first item), among others \citep{negahban2017rank,Jin_Xu_Gu_Farnoud_2020,chen2013pairwise}.

\subsection{Ranking and Ranking Property}
As discussed in the introduction, our goal is to infer some general ranking properties { based on the BTL model using} samples of pairwise comparisons among all items. We first provide the formal definition of the ranking and its properties.
\begin{definition}[Ranking]
	\label{def:ranking}
	Assume there are $n$ items. Let $\cR$  be all  bijections from the set $[n]$ onto itself. 
	Then each $\gamma \in \cR$ is a possible rank of the $n$ items. Let $\gamma_i$ be the rank of item $i$ in ranking~$\gamma$, where $\gamma_i < \gamma_j$ if item $i$ is ranked higher (preferred) than item $j$.  Let $\gamma^* \in \cR$  be the  true ranking of these $n$ items. Finally, we let $\gamma({\theta})$ be the induced ranking from the underlying preference score vector ${\theta}$.
\end{definition}

When we are interested in some ranking  property of a given item, we are essentially interested in testing if the ranking satisfies some properties as we discussed in the introduction. Thus, we infer if the true ranking belongs to some set of rankings. To facilitate our discussion,  we define the equivalent rankings and ranking properties with respect to a single item below.

\begin{definition}[Equivalent rankings with respect to a single item]
	\label{def:equi_ranking}
	Rankings $\gamma$ and $\gamma' \in \cR$  are equivalent  with respect to item $i$ if 
	\[
	\{j \in [n]: \gamma_j < \gamma_i \} = \{j \in [n]: \gamma'_j < \gamma'_i \},
	\]
	or equivalently, 
	\[
	\{j \in [n]: \gamma_j > \gamma_i \} = \{j \in [n]: \gamma'_j >\gamma'_i \}.
	\]
	Furthermore, we  let the equivalent class of a ranking $\gamma$ with respect to item $i$ be $\cR_{\gamma, i} = \{\gamma' \in \cR: \gamma' \text{ is equivalent to } \gamma {\text{ with respect to item } i} \}$.
\end{definition}

\begin{definition}[Ranking property with respect to a single item]
	\label{def:ranking property}
	 A ranking property $\cR_i$ with respect to a single item $i$ is a set of rankings such that $\cR_i\subset \cR$, and for any ranking $\gamma \in\ \cR_i$, its equivalent class $\cR_{\gamma,i}$ satisfies $\cR_{\gamma,i} \subseteq \cR_i$ .
\end{definition}

{Essentially, the ranking property $\cR_i$ with respect to item $i$ is a subset of all possible rankings $\cR$, and a collection of disjoint equivalent classes.} 
Specific examples of ranking property with respect to item $i$ include Examples \ref{eg:intro-1} and \ref{eg:intro-2} where we are interested in testing if item $i$ is preferred over another given item, or item $i$ is ranked within top-$K$. 

{\begin{itemize}
	\item Example \ref{eg:intro-1}: (Pairwise preference between item $i$ and item $j$). We aim to test if item $i$ is ranked higher than item $j$, which means $\gamma^*_i <\gamma^*_j$  or  $\theta^*_i>\theta^*_j$. Letting 	$$\cR_i = \{\gamma: \gamma_i < \gamma_j\} = \{\gamma(\theta): \theta_i > \theta_j\},$$ we show that $\cR_i$ is a ranking property as defined in Definition~\ref{def:ranking property}. 	
	If $\gamma \in \cR_i$, which means $\gamma_i <\gamma_j$, then for any ranking $\gamma'$ equivalent to $\gamma$ ({ i.e., $\gamma' \in \cR_{\gamma,i}$}), we have \[
	\{k \in [n]: \gamma_k < \gamma_i \} = \{k \in [n]: \gamma'_k < \gamma'_i \}.
	\]
	Thus, $\gamma'_i <\gamma'_j$, which means that $\gamma' \in \cR_i$ and further gives the equivalent class $\cR_{\gamma,i} \subseteq \cR_i$. We have that $\cR_i$ in this example satisfies  Definition \ref{def:ranking property}.

	\item Example \ref{eg:intro-2}: (Top-$K$ test). If item $i$'s preference score is larger than $n-K$ items, i.e., $\theta_i > \theta_{(K+1)}$, where $\theta_{(K+1)}$ denotes the $(K+1)$-th largest preference score, or equivalently, $\gamma_i \le K$. We aim to test if item $i$ is ranked among top-$K$ items. Thus, we have that $\cR_i$ is
	$$\cR_i=\{\gamma: \gamma_i \leq K\} = \{\gamma(\theta): \theta_i > \theta_{(K+1)}\}.$$	
To see that $\cR_i$ is a ranking property as defined in Definition~\ref{def:ranking property}, we have that If $\gamma \in \cR_i$, which means $\gamma_i \leq K$, then for any ranking $\gamma'$ equivalent to $\gamma$ ({i.e., $\gamma' \in \cR_{\gamma,i}$}), and we have
	$\{k \in [n]: \gamma_k < \gamma_i \} = \{k \in [n]: \gamma'_k < \gamma'_i \}$. Thus,  the ranking of item $i$ does not change, and we still have $\gamma'_i \leq K$, which means $\gamma' \in \cR_i$. We  have that $\cR_i$ is  a  ranking property satisfying  Definition~\ref{def:ranking property}.

\end{itemize}
}

In the next section, given a ranking property $\cR_i$, we propose a novel approach to test 
whether item $i$ satisfies this property that
$${H}_{0}:  \gamma^* \notin \cR_i
\text{ \ \ v.s. \ \  } 
{H}_{a}: \gamma^* \in \cR_i.$$

%

\section{Inference}
\label{sec:method}
{ In this section, we propose our inferential framework to test general ranking properties. The first step in our inferential framework is a novel Lagrangian debiasing method, which handles the general constrained inference with penalization, and we apply the method to infer the latent scores. We then adopt a Gaussian multiplier bootstrap approach to test general ranking properties. We conclude this section by showing that our method controls the Type I error, and is asymptotically powerful. }
\subsection{Lagrangian Debiasing Approach}\label{sec:debias}
We first propose a novel Lagrangian debiased estimator of the preference scores. 
Our proposed method is motivated from the regularized maximum likelihood estimator (MLE) approach. Assuming the BTL model, the MLE approach \citep{ford1957solution, hunter2004mm} provides an estimator for the latent preference scores by solving the following convex optimization problem
 \begin{equation}\label{eqn:thetahat}
 \hat\theta = \operatorname{argmin}_{{\theta} \in \mathbb{R}^{n}}  \cL_{\lambda_0}({\theta}):=\cL({\theta})+ \lambda_0\|{\theta}\|_{2}^{2},
 \end{equation}
where $\lambda_0>0$ is a tuning parameter, and the negative log-likelihood  function $\cL({\theta})$ is
\begin{equation}\label{eqn:likelihood}
\cL({\theta}) =\sum_{(i, j) \in \cE, i>j} \Big\{-y_{j, i}(\theta_{i}-\theta_{j})+\log \big(1+e^{\theta_{i}-\theta_{j}}\big)\Big\},
\end{equation}
 where $y_{j,i}=\sum_{\ell=1}^{L} y_{j,i}^{(\ell)}/L$.
 
{Note that the regularization guarantees that the obtained estimator satisfying $\bm{1}^T \hat{\theta} = 0$ \citep{chen2019spectral}, and the deduction of optimal rate of the obtained estimator relies on the strong convexity of $\cL_{\lambda_0}({\theta})$}. Different works study the theoretical guarantees of the MLE approach \citep{shah2015estimation, negahban2017rank, chen2019spectral, wang2020stretching}. In particular,  \cite{chen2019spectral} study the convergence rate of the estimator \eqref{eqn:thetahat} in terms of $\ell_\infty$-norm. For self-completeness, we provide the result below.

\begin{lemma} \label{lem:consistency}
Under the BTL model, suppose that $\kappa:=\frac{w_{\max }}{w_{\min }}< C$ for some constant $C>0$. If {the pairwise comparison probability $p$ in Erd\"{o}s-R\'{e}nyi graph satisfies } $p \geq \frac{C_{0} \log n}{n}$ for some sufficiently large constant $C_{0}>0$, and the regularization parameter $\lambda_0 = c_{\lambda_0}\sqrt{np\log n/L}$ for some constant $c_{\lambda_0}>0$, we have that the estimator $\hat\theta$ derived from the regularized MLE achieves the optimal  rate
	\begin{equation}\nonumber
	\|\hat{{\theta}}-{{\theta}^*}\|_\infty\lesssim \sqrt{\frac{\log n}{npL}}
	\end{equation} with probability at least $1-\cO(n^{-5})$. 
\end{lemma}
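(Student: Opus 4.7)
The plan is to follow the leave-one-out analysis of the regularized MLE that is now standard in the BTL literature. The proof decomposes into three pieces: (i) local strong convexity of the penalized loss on the identifiable subspace; (ii) an $\ell_2$-consistency bound derived from the first-order optimality condition; (iii) a bootstrap from $\ell_2$ to $\ell_\infty$ via a leave-one-out perturbation argument.

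First, I would analyze the Hessian of $\cL_{\lambda_0}$. A direct calculation gives
$$
\nabla^2 \cL(\theta) \;=\; \sum_{(i,j)\in\cE,\, i>j} \frac{e^{\theta_i-\theta_j}}{(1+e^{\theta_i-\theta_j})^2}(e_i-e_j)(e_i-e_j)^\top,
$$
which is a weighted Laplacian of the comparison graph. Under $\kappa<C$ the edge weights are bounded below by a constant, so standard matrix Bernstein / spectral concentration for Erd\H{o}s--R\'enyi graphs with $p\gtrsim \log n/n$ shows that the second-smallest eigenvalue is of order $np$ with probability at least $1-\cO(n^{-10})$. Thus $\cL_{\lambda_0}$ is strongly convex on $\{\theta:\mathbf{1}^\top\theta=0\}$ with modulus $\asymp np$.

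Next I would derive the $\ell_2$ bound. The optimality condition $\nabla\cL(\hat\theta)+2\lambda_0\hat\theta=0$, together with a mean-value expansion around $\theta^*$, yields $(\nabla^2\cL(\tilde\theta)+2\lambda_0 I)(\hat\theta-\theta^*)=-\nabla\cL(\theta^*)-2\lambda_0\theta^*$. Strong convexity from step (i) then gives
$$
\|\hat\theta-\theta^*\|_2 \;\lesssim\; \frac{\|\nabla\cL(\theta^*)\|_2+\lambda_0\|\theta^*\|_2}{np}.
$$
Writing $\nabla\cL(\theta^*)=\sum_{(i,j)}(y_{j,i}-\E y_{j,i})(e_i-e_j)$ and applying Hoeffding/Bernstein coordinate-wise and then a matrix concentration bound gives $\|\nabla\cL(\theta^*)\|_2\lesssim \sqrt{np\log n/L}$, and with the prescribed $\lambda_0$ one obtains $\|\hat\theta-\theta^*\|_2\lesssim \sqrt{\log n/(pL)}$.

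The main obstacle is upgrading this to the entrywise $\ell_\infty$ bound, which is sharper than the naive $\|\cdot\|_2/\sqrt{n}$. I would use the leave-one-out device: for each $m\in[n]$, define $\hat\theta^{(m)}$ as the minimizer of the modified loss in which every $y_{i,j}^{(\ell)}$ with $i=m$ or $j=m$ is replaced by its population mean $e^{\theta^*_j}/(e^{\theta^*_i}+e^{\theta^*_j})$. Then $\hat\theta^{(m)}$ is independent of all comparisons involving item $m$. Bounding $\|\hat\theta-\hat\theta^{(m)}\|_2$ via the same strong-convexity argument, where the gradient perturbation is supported only on edges incident to $m$, gives $\|\hat\theta-\hat\theta^{(m)}\|_2\lesssim \sqrt{\log n/(npL)}$, and in particular $|\hat\theta_m-\hat\theta^{(m)}_m|$ is of the same order. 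The residual $|\hat\theta^{(m)}_m-\theta^*_m|$ is then controlled by taking the $m$-th coordinate of the optimality equation for $\hat\theta^{(m)}$: by independence, a Bernstein bound on the sum of $\cO(np)$ independent comparisons involving item $m$ yields a rate of $\sqrt{\log n/(npL)}$. Combining the two pieces by the triangle inequality and a union bound over $m$ delivers the claim with the stated probability. The delicate steps are (a) keeping independence between $\hat\theta^{(m)}$ and the $m$-th column of data so that the Bernstein step is legitimate, and (b) showing that the leave-one-out perturbation has its $\ell_2$ mass concentrated on edges adjacent to $m$ so that the strong-convexity factor of $np$ can be exploited; both hinge on the graph-Laplacian structure established in step (i).
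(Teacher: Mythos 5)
Your sketch is essentially the paper's approach: the paper proves this lemma for the regularized MLE simply by invoking the leave-one-out / strong-convexity-on-$\{\mathbf{1}^\top\theta=0\}$ analysis of \cite{chen2019spectral} (Section 6), which is exactly the three-step argument you outline (spectral lower bound of order $np$ for the graph Laplacian Hessian, $\ell_2$ bound from the optimality condition, leave-one-out bootstrap to $\ell_\infty$), with the paper's appendix adding only a separate transfer argument for the spectral estimator. The only caveat is that the delicate coupling you flag in steps (a)–(b) is resolved in the cited work by an iterative/gradient-descent leave-one-out analysis rather than a one-shot bound, but this is a matter of execution, not of approach.
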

\begin{remark}
We point out that the same rate can also be achieved by the spectral method \cite{negahban2017rank}, and we provide the proof in Appendix Section \ref{sec:pf:consistency}.
\end{remark}

Since $\hat\theta$ is derived from a regularized MLE, conducting inference based on $\hat\theta$ is challenging. Over the past few years, the debiasing approach achieves great successes for penalized regression. For examples,  \cite{zhang2014confidence,
van2014asymptotically,
javanmard2014confidence,
javanmard2014hypothesis} study the debiasing approach based on linear or generalized linear models, and \cite{ning2017general} provide a decorrelation approach to { inferring} estimators derived from penalized MLE methods. 

However, these existing debiasing methods cannot be directly used in our problem. This is because that the parameter of interest is non-identifiable in the BTL model, and the Fisher information matrix is singular. To ensure the identifiability, as discussed in Section 2.1, we impose a constraint of the parameter that we let $\theta$ belongs to the set $\cC=\{\theta:f(\theta) = 0\}$ for some smooth function $f$. 
To handle the challenges raised by the constraint, we propose a general Lagrangian debiasing method in the next part.

 \subsubsection{Lagrangian Debiasing method} \label{sec:311}
We propose a general Lagrangian debiasing method for inference based on penalized MLE with constraints. Our method is motivated by \cite{ning2017general}, where the authors consider a one-step estimator by solving the first-order approximation of the score function $\nabla \cL(\hat{{\theta}}) + \nabla^2 \cL(\hat{{\theta}})({\theta}-\hat{{\theta}}) = 0$. To handle the constraint on the parameters that $f({\theta})=0$, we consider the Lagrangian dual function. In particular, under the constraint $f(\theta)=0$, the MLE method aims to solve the problem that
$$
\min_{\theta} {\cL}(\theta),\text{ subject to }f(\theta) = 0.
$$
The corresponding Lagrangian dual problem is 
$$
\max_{\lambda}\min_{\theta} \cL(\theta) + \lambda f(\theta),
$$
where $\lambda \in \R$ is the Lagrangian multiplier. Considering the { first-order optimality} condition of the Lagrangian dual problem, we have that an optimal dual solution pair $(\theta,\lambda)$ satisfies
\begin{equation}\label{eqn:solve}
\nabla\cL(\theta) + \lambda\nabla f(\theta) = 0, \text{ and }f(\theta) = 0.
\end{equation}

Based on the above equations, we propose our debiasing approach. In particular, given a penalized estimator $\hat\theta$ from \eqref{eqn:thetahat}, we obtain a debiased estimator $\hat\theta^d$ by solving the following system of equations of $\theta$ and $\lambda$, which are first-order approximations of \eqref{eqn:solve},
\begin{equation}
\label{equ:gen-debias}
\left\{\begin{array}{ll}
\nabla \cL(\hat{{\theta}})+ \nabla^2 \cL(\hat{{\theta}})({\theta}-\hat{{\theta}})+\lambda \nabla f(\hat{{\theta}})=0  \\
f(\hat{{\theta}})+ 
\nabla f(\hat{{\theta}})^\top ({\theta}-\hat{{\theta}})=0 
\end{array}\right.
\end{equation}
or equivalently,
\begin{equation}
\Bigg(\begin{array}{cc}
\nabla^{2} \cL(\hat{\theta}) & \nabla f(\hat{\theta}) \\
\nabla f(\hat{\theta})^\top & 0
\end{array}\Bigg)
\Bigg(\begin{array}{c}
{\theta}-\hat{\theta} \\
\lambda
\end{array}\Bigg) 
= \Bigg(\begin{array}{c}
-\nabla \cL(\hat{\theta}) \\
-f(\hat{\theta})
\end{array}\Bigg)
\end{equation}
See Figure \ref{fig:LDE} for  illustration. 
 	
 \begin{figure}[h]
 	\begin{center}
 		\begin{tabular}{c}		
 			\includegraphics[width=.7\textwidth,angle=0]{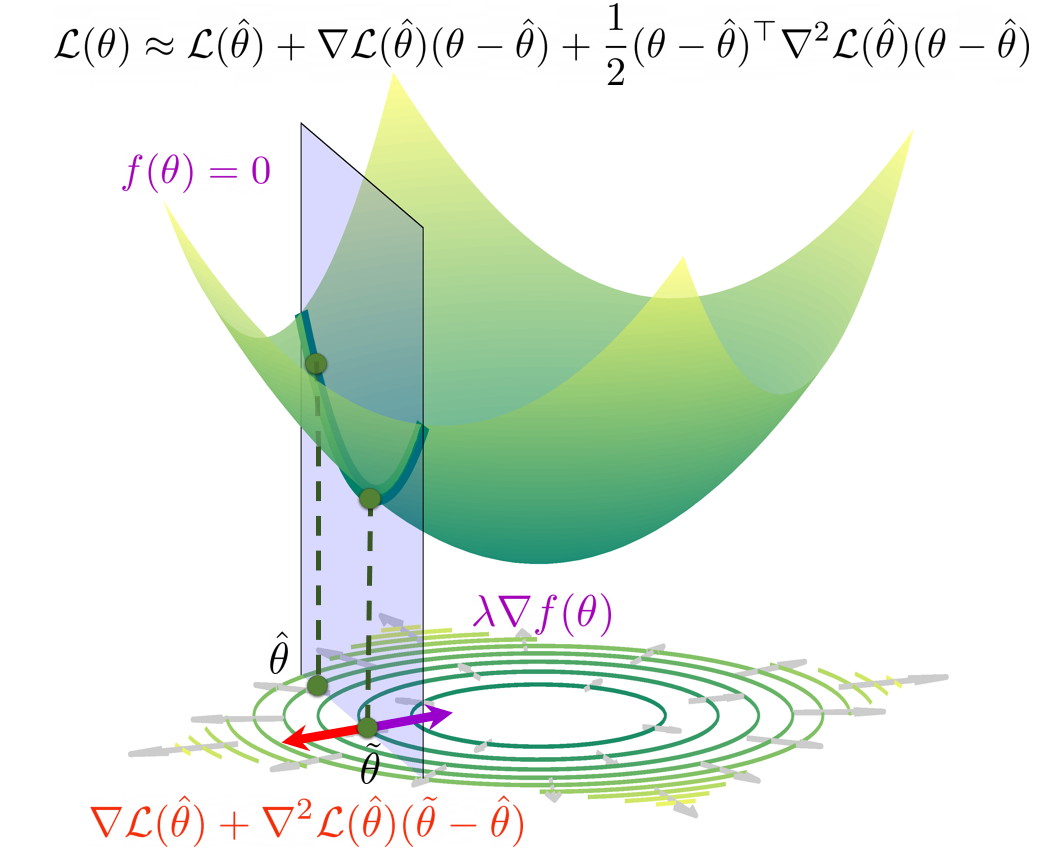} 
 		\end{tabular}
 	\end{center}
 	\caption{Geometric illustration of our Lagrangian debiasing method. The green surface is the approximation of the loss function $\mathcal{L}(\theta)$. The green circles are the contour lines of the surface. The transparent purple plane is the constraint $f(\theta)=0$. The red arrow  represents $\nabla \cL(\hat{{\theta}})+ \nabla^2 \cL(\hat{{\theta}})(\tilde{\theta}-\hat{{\theta}})$, and the purple arrow represents $\lambda \nabla f(\theta)$.} 
 	\label{fig:LDE}
 \end{figure}

We point out that our Lagrangian debiasing method can be applied to general inference problems beyond  the BTL model. In what follows, we first present the debiasing approach for inference under general constraints. Then we provide the debiasing method under the BTL model under the special case that the constraint function $f$ is linear.

 For general inferential problems under some constraint that the parameter belongs to the set $\mathcal{C}=\{{\theta}:f({\theta})=0\}$, suppose we have an initial estimator $\hat{{\theta}} \in \mathcal{C}$, and let the loss function be $\cL(\theta)$. By~\eqref{equ:gen-debias}, if the matrix below is invertible, we define 
\begin{equation}\label{equ:hat sigma}
\hat{\Sigma}:=\Bigg( \begin{array}{cc} \nabla^2 {\cL(\hat{{\theta}})} 
& \nabla {f(\hat{{\theta}})}\\
\nabla {f(\hat{{\theta}})}^\top 
& 0 \end{array} \Bigg).
\end{equation}
We have that the debiased estimator $\hat\theta^d$ satisfies
\begin{equation} \label{eqn:Sinv}
\Bigg(\begin{array}{c}
\hat{\theta}^d-\hat{\theta} \\
\lambda
\end{array}\Bigg) 
= \hat\Sigma^{-1}
\Bigg(\begin{array}{c}
-\nabla \cL(\hat{\theta}) \\
-f(\hat{\theta})
\end{array}\Bigg).
\end{equation}
When the problem is  high-dimensional, the  matrix $\hat{\Sigma}$ 
is not invertible due to the rank-deficiency, and it becomes challenging to solve problem \eqref{equ:gen-debias}. Motivated by \eqref{eqn:Sinv}, {we aim to find an estimator for the inverse of  the population version of $\hat\Sigma$, which is }
$${\Sigma}^*=\Bigg( \begin{array}{cc} \mathbb{E}\left[\nabla^{2} \cL(\theta^{*}) \right] 
& \nabla {f({{\theta}}^*)}\\
\nabla {f({{\theta}}^*)}^\top 
& 0 \end{array} \Bigg).$$
We achieve this by first finding an estimator for the inverse of the population version of $\mathbb{E}\left[\nabla^{2} \cL(\theta^{*}) \right] $, and  then obtain  an estimator for the inverse by block matrix inverse. 
 
Specifically, we estimate the inverse of $\mathbb{E}\left[\nabla^{2} \cL(\theta^{*}) \right]$ using the constrained $\ell_1$-minimization for inverse matrix estimation (CLIME) method \citep{cai2011constrained}. Denote the estimator as $\hat\Omega$. We obtain an estimator for the inverse of  $\hat \Sigma$ by $\hat \Theta = \Big( \begin{smallmatrix} 
  \hat \Theta_{11} & \hat \Theta_{12}\\
  \hat \Theta_{12}^\top & \hat \Theta_{22} \end{smallmatrix} \Big)$ 
  where 
\begin{equation}\nonumber
\hat \Theta_{11} 
  = \hat \Omega - \hat \Omega \nabla {f(\hat{{\theta}})} \big(\nabla {f(\hat{{\theta}})}^\top \hat \Omega \nabla {f(\hat{{\theta}})}\big)^{-1} \nabla {f(\hat{{\theta}})}^\top \hat \Omega,
\end{equation}
  and
\begin{equation}\nonumber
  \hat \Theta_{12} 
  =\hat \Omega \nabla {f(\hat{{\theta}})}
  \big(\nabla {f(\hat{{\theta}})}^\top \hat \Omega \nabla {f(\hat{{\theta}})}\big)^{-1}, \ \text{ and }\ 
  \hat \Theta_{22} = 
  - \big(\nabla {f(\hat{{\theta}})}^\top \hat \Omega \nabla {f(\hat{{\theta}})}\big)^{-1}.
  \end{equation}
  Thus, we obtain $\hat{\theta}^{d}$ by plugging $\hat\Theta$ into \eqref{eqn:Sinv} that

\begin{equation} \label{eqn:general debias}
\Bigg(\begin{array}{c}
\hat{\theta}^d-\hat{\theta} \\
\lambda
\end{array}\Bigg) 
= \hat \Theta
\Bigg(\begin{array}{c}
-\nabla \cL(\hat{\theta}) \\
-f(\hat{\theta})
\end{array}\Bigg)
\end{equation}
and 
\begin{equation} \label{eqn:lde}
\widehat{{\theta}}^{d} = \hat{{\theta}} -\hat \Theta_{11}  \nabla {\cL({\hat{{\theta}}})} .
\end{equation}
  

Before presenting the asymptotic properties  of $\widehat{{\theta}}^{d}$, we first impose some assumptions. We point out that here we purposely do not specify the convergence rates in the following assumptions since our proposed method is a general framework, and as long as the assumptions for Theorem~\ref{thm:gen-asy} are satisfied, the Lagrangian debiased method achieves the asymptotic normality. We also point out that, under our scaling assumptions in the following theorems, the assumptions are indeed satisfied. 

\begin{assumption}[Consistency for initial estimation of parameters]
\label{ass:consistency}
For some rate $r_1$ which depends on the sample size and parameter dimension, we assume 
$\big\|\widehat{\theta}-\theta^{*}\big\|_{\infty}
\lesssim r_1$.
\end{assumption}

\begin{assumption}[Condition on loss function]
	\label{ass:loss}
For some rate $r_2$ and constant $L_1$, if $\theta={\theta}^*+t(\hat{{\theta}}-{\theta}^*)$ for $t\in[0,1]$, it holds that
$$\big\| \nabla \cL({{\theta}}^*)\|_{\infty}
\lesssim r_2,
\|\nabla^2 \cL({{\theta}})-\nabla^2\cL({{\theta}}^*)\|_\infty
\leq L_1 \|{{\theta}}- {\theta}^*\|_\infty.$$
\end{assumption}

\begin{assumption}[Condition on constraint function]
	\label{ass:constraint}
For some constants $c_1$ and $L_2$, if $\theta={\theta}^*+t(\hat{{\theta}}-{\theta}^*)$ for $t\in[0,1]$, it holds that
$$\big\|\nabla {f({{\theta}}^*)} \big\|_\infty \lesssim c_1,\ 
\|\nabla f({{\theta}})-\nabla f({{\theta}}^*)\|_\infty
\leq L_2 \|{{\theta}}- {\theta}^*\|_\infty.$$	
\end{assumption}

\begin{assumption} \label{ass:omega}
For some rates $r_3, r_4, r_5$ and constants $c_2, c_3$, we assume that
$$	
\|I-\hat{\Omega} \nabla^2{\cL(\hat{{\theta}})}\|_\infty
	\lesssim r_3 ,\ 
	\|\hat \Omega-\Omega^*\|_\infty \lesssim r_4,\ \big\| \Omega^* \big\|_\infty \lesssim c_2, \  
	\nabla {f({{\theta}}^*)}^\top  \Omega^* \nabla {f({{\theta}}^*)} \gtrsim c_3,\ \big\|{\Theta}^*\big\|_{\infty} \lesssim r_5.
$$
\end{assumption}

\begin{assumption}[Central limit theorem (CLT) of the score function]
\label{ass:CLT}
For every $i \neq j$, if $({\Theta}_{11}^* {\Sigma}_{11}^* {{\Theta}_{11}^*}^\top)_{jj} \geq C$ and $(\bm{e}_i-\bm{e}_j)^\top({\Theta}_{11}^* {\Sigma}_{11}^* {{\Theta}_{11}^*}^\top)(\bm{e}_i-\bm{e}_j) \geq C$ for some constant $C>0$, it holds that
$$\frac{\sqrt{n}\big[{\Theta}_{11}^* \nabla \cL({{\theta}}^*)\big]_j}
{\sqrt{[{\Theta}_{11}^* {\Sigma}_{11}^* {{\Theta}_{11}^*}^\top]_{jj}}} \overset{d}{\longrightarrow} N(0,1)$$ and 
$$\sqrt{n}
\frac{[{\Theta}^*_{11}
	\nabla \cL({{\theta}}^*)]_i-[{\Theta}^*_{11}
	\nabla \cL({{\theta}}^*)]_j} {\sqrt{(\bm{e}_i-\bm{e}_j)^T({\Theta}_{11}^* {\Sigma}_{11}^* {{\Theta}_{11}^*}^\top)(\bm{e}_i-\bm{e}_j)}}
\overset{d}{\longrightarrow} N(0,1), $$
 where $[{\Theta}_{11}^* \nabla \cL({{\theta}}^*)]_j$ is the $j$-th entry of ${\Theta}_{11}^* \nabla \cL({{\theta}}^*)$, $[{\Theta}_{11}^* {\Sigma}_{11}^* {{\Theta}_{11}^*}^\top]_{jj}$ is the $j$-th diagonal {element} of matrix ${\Theta}_{11}^* {\Sigma}_{11}^* {{\Theta}_{11}^*}^\top$, and ${\Sigma}_{11}^*$, $\Theta_{11}^*$  is the upper left $ n\times n $  block of ${\Sigma}^*$, $\Theta^*$ respectively.	
\end{assumption}

By the above assumptions, the following two corollaries hold, which are crucial for later proofs. Proofs of Corollaries \ref{cor:gen-asy:1} and \ref{cor:gen-asy:2} can be found in Section~\ref{sec:pf:gen-asy:1} and \ref{sec:pf:gen-asy:2}.

\begin{corollary}\label{cor:gen-asy:1}
	Under Assumptions \ref{ass:consistency} -- \ref{ass:omega},
	we have
	 $\big\|\widehat{\Theta}-\Theta^*\big\|_{\infty}\lesssim r_1 + r_4$.
\end{corollary}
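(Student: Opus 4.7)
The plan is to bound $\hat\Theta - \Theta^*$ block-by-block using a standard telescoping perturbation argument applied to the explicit formulas for $\hat\Theta_{11}, \hat\Theta_{12}, \hat\Theta_{22}$, together with the analogous formulas for $\Theta^*_{11}, \Theta^*_{12}, \Theta^*_{22}$.

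First I would collect preparatory bounds. Writing $\hat g := \nabla f(\hat\theta)$ and $g^* := \nabla f(\theta^*)$, Assumptions~\ref{ass:consistency} and \ref{ass:constraint} give $\|\hat g - g^*\|_\infty \le L_2 r_1 \lesssim r_1$, $\|g^*\|_\infty \lesssim c_1$, and hence $\|\hat g\|_\infty \lesssim 1$. Assumption~\ref{ass:omega} gives $\|\hat\Omega - \Omega^*\|_\infty \lesssim r_4$, $\|\Omega^*\|_\infty \lesssim c_2$, and hence $\|\hat\Omega\|_\infty \lesssim 1$. I would also record that symmetry of $\Omega^*$ (it is the pseudo-inverse of an expected Hessian) lets me identify its $\ell_1$- and $\ell_\infty$-operator norms, which is needed later for rank-one terms.

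Second, I would analyze the scalar denominator $\hat s := \hat g^\top \hat\Omega\hat g$ and its population counterpart $s^* := g^{*\top}\Omega^* g^*$ via the three-term telescope
\begin{equation*}
\hat s - s^* = \hat g^\top(\hat\Omega - \Omega^*)\hat g + (\hat g - g^*)^\top \Omega^* \hat g + g^{*\top}\Omega^*(\hat g - g^*),
\end{equation*}
and bound each piece using $|a^\top M b| \le \|a\|_1 \|M\|_\infty \|b\|_\infty$ with the preparatory bounds, yielding $|\hat s - s^*| \lesssim r_1 + r_4$. Since $s^* \gtrsim c_3$ by Assumption~\ref{ass:omega}, once $r_1+r_4$ is small I would have $\hat s \gtrsim c_3/2$, so $|\hat s^{-1} - s^{*-1}| = |\hat s - s^*|/(\hat s\,s^*) \lesssim r_1 + r_4$.

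Third, I would bound each block by the same idea. The scalar block $\hat\Theta_{22} - \Theta^*_{22} = s^{*-1} - \hat s^{-1}$ is already done. For $\hat\Theta_{12} - \Theta^*_{12} = \hat s^{-1}\hat\Omega\hat g - s^{*-1}\Omega^* g^*$, I would swap $\hat s^{-1}$, $\hat\Omega$, and $\hat g$ one at a time for their starred counterparts and apply $\|Mv\|_\infty \le \|M\|_\infty\|v\|_\infty$. For
\begin{equation*}
\hat\Theta_{11} - \Theta^*_{11} = (\hat\Omega - \Omega^*) - \bigl[\hat s^{-1}(\hat\Omega\hat g)(\hat\Omega\hat g)^\top - s^{*-1}(\Omega^* g^*)(\Omega^* g^*)^\top\bigr],
\end{equation*}
I would expand the bracket by the same telescoping in $\hat s^{-1}$ and $\hat\Omega\hat g$, bounding the rank-one outer products via $\|ab^\top\|_\infty = \|a\|_\infty \|b\|_1$. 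Taking maxima of the three block bounds gives $\|\hat\Theta - \Theta^*\|_\infty \lesssim r_1 + r_4$.

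The main obstacle is precisely the rank-one correction inside $\hat\Theta_{11}$: the matrix $\ell_\infty$-operator norm of $ab^\top$ needs the $\ell_1$ norm of $b$, so one needs a bound on $\|\Omega^* g^*\|_1$ (and on $\|\hat\Omega\hat g\|_1$), which is not stated directly. This is obtained by using the symmetry of $\Omega^*$ (so that $\|\Omega^*\|_1 = \|\Omega^*\|_\infty$) together with the $\ell_1$ bound on $\nabla f$ supplied by the particular BTL identification constraint (e.g., $\nabla f = \mathbf{1}/n$ for the sum-to-zero constraint, giving $\|\nabla f\|_1 = 1$); alternatively, the bound $\|\Theta^*\|_\infty \lesssim r_5$ in Assumption~\ref{ass:omega} can be used to control the relevant rows directly. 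Once this $\ell_1$ bound is in place, the rest of the perturbation algebra is routine.
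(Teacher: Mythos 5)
Your plan is essentially the paper's own proof: the same preparatory telescoping bounds for $\nabla f(\hat{\theta})^\top \hat{\Omega}$ and for the scalar $\nabla f(\hat{\theta})^\top \hat{\Omega}\nabla f(\hat{\theta})$ (with the lower bound $c_3$ used to control its inverse), followed by block-by-block bounds on $\hat{\Theta}_{22}$, $\hat{\Theta}_{12}$, $\hat{\Theta}_{11}$; your direct outer-product decomposition of $\hat{\Theta}_{11}$ is only a repackaging of the paper's route, which telescopes through the already-bounded $\hat{\Theta}_{12}-\Theta^*_{12}$. The one place you diverge is the ``obstacle'' at the end: the paper never invokes an $\ell_1$ bound on $\Omega^*\nabla f(\theta^*)$, symmetry of $\Omega^*$, the specific BTL constraint, or $r_5$; it simply bounds $\|\nabla f(\hat{\theta})^\top\hat{\Omega}\|_\infty$ and $\|\Omega^*\nabla f(\theta^*)\big(\nabla f(\theta^*)^\top\Omega^*\nabla f(\theta^*)\big)^{-1}\|_\infty$ by constants from Assumptions \ref{ass:constraint}--\ref{ass:omega} and applies the product rule for $\|\cdot\|_\infty$ to the rank-one pieces. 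So your extra bookkeeping is a more careful treatment of a point the paper glosses over rather than a missing ingredient of its argument, but note that your proposed fixes go outside the stated assumptions (an $\ell_1$ bound on $\nabla f$ fails for the BTL choice $\nabla f=\mathbf{1}$, and $r_5$ need not be $O(1)$), so if you want the result exactly under Assumptions \ref{ass:consistency}--\ref{ass:omega} you should follow the paper and interpret the bounds on these cross terms as the paper does.
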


\begin{corollary}\label{cor:gen-asy:2}
	Under Assumptions \ref{ass:consistency} -- \ref{ass:omega}, we have $\|I-\widehat{\Theta}\widehat{\Sigma} \|_{\infty}
	\lesssim r_3.$
\end{corollary}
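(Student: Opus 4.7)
The plan is to prove the bound by carrying out the block-matrix multiplication $\widehat{\Theta}\widehat{\Sigma}$ explicitly and showing that, after algebraic simplification, the residual $I-\widehat{\Theta}\widehat{\Sigma}$ factors as a product in which the only ``small'' factor is $I-\widehat{\Omega}\nabla^2\cL(\hat{\theta})$, whose $\ell_\infty$ norm is already controlled by Assumption~\ref{ass:omega}. Write $H=\nabla^2\cL(\hat{\theta})$, $F=\nabla f(\hat{\theta})$, and $A=\bigl(F^{\top}\widehat{\Omega}F\bigr)^{-1}$ so that
\[
\widehat{\Theta}=\begin{pmatrix}\widehat{\Omega}-\widehat{\Omega}FAF^{\top}\widehat{\Omega} & \widehat{\Omega}FA\\ AF^{\top}\widehat{\Omega} & -A\end{pmatrix},\qquad
\widehat{\Sigma}=\begin{pmatrix}H & F\\ F^{\top} & 0\end{pmatrix}.
\]

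First I would verify that the right column of $\widehat{\Theta}\widehat{\Sigma}$ matches the right column of the identity exactly: the $(1,2)$ block equals $\widehat{\Theta}_{11}F=\widehat{\Omega}F-\widehat{\Omega}FA(F^{\top}\widehat{\Omega}F)=0$, and the $(2,2)$ block equals $\widehat{\Theta}_{12}^{\top}F=AF^{\top}\widehat{\Omega}F=1$. So all of the error lives in the left column. Then I would expand the $(1,1)$ block $\widehat{\Theta}_{11}H+\widehat{\Theta}_{12}F^{\top}$ and collect terms around $\widehat{\Omega}H$, arriving at the key identity
\[
I-\widehat{\Theta}\widehat{\Sigma}=\begin{pmatrix}\bigl(I-\widehat{\Omega}FAF^{\top}\bigr)\bigl(I-\widehat{\Omega}H\bigr) & 0\\[2pt] AF^{\top}\bigl(I-\widehat{\Omega}H\bigr) & 0\end{pmatrix}.
\]
This is the crucial algebraic step; once it is in hand, sub-multiplicativity of the matrix $\ell_\infty$ norm yields
\[
\|I-\widehat{\Theta}\widehat{\Sigma}\|_{\infty}\le \max\!\Bigl\{\,\|I-\widehat{\Omega}FAF^{\top}\|_{\infty}\,,\;\|AF^{\top}\|_{\infty}\Bigr\}\cdot\|I-\widehat{\Omega}H\|_{\infty},
\]
and the last factor is $\lesssim r_3$ by Assumption~\ref{ass:omega}.

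Next I would verify that the two ``coefficient'' norms are $O(1)$. By Assumption~\ref{ass:constraint} together with Assumption~\ref{ass:consistency}, $\|F\|_{\infty}=\|\nabla f(\hat{\theta})\|_{\infty}\le \|\nabla f(\theta^{*})\|_{\infty}+L_{2}\|\hat{\theta}-\theta^{*}\|_{\infty}\lesssim c_{1}$. By Assumption~\ref{ass:omega}, $\|\widehat{\Omega}\|_{\infty}\le \|\Omega^{*}\|_{\infty}+\|\widehat{\Omega}-\Omega^{*}\|_{\infty}\lesssim c_{2}$, and the quadratic form $F^{\top}\widehat{\Omega}F$ is bounded below by $c_{3}/2$ via a standard perturbation argument combining $\nabla f(\theta^{*})^{\top}\Omega^{*}\nabla f(\theta^{*})\gtrsim c_{3}$ with the already-established consistency of $\hat{\theta}$ and $\widehat{\Omega}$; hence $|A|\lesssim 1$. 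The rank-one matrix $\widehat{\Omega}FAF^{\top}$ is controlled directly through $\|\widehat{\Theta}_{11}\|_{\infty}$, whose boundedness follows from Corollary~\ref{cor:gen-asy:1} ($\|\widehat{\Theta}-\Theta^{*}\|_{\infty}\lesssim r_{1}+r_{4}$) together with $\|\Theta^{*}\|_{\infty}\lesssim r_{5}$, so that $\|\widehat{\Omega}FAF^{\top}\widehat{\Omega}\|_{\infty}\le \|\widehat{\Omega}\|_{\infty}+\|\widehat{\Theta}_{11}\|_{\infty}=O(1)$, and a short post-multiplication by $\widehat{\Omega}^{-1}$ argument (or equivalently bounding $\|\widehat{\Theta}_{12}\|_{\infty}$ through $\|\Theta^{*}\|_{\infty}$) gives $\|I-\widehat{\Omega}FAF^{\top}\|_{\infty}\lesssim 1$. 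Similarly $\|AF^{\top}\|_{\infty}=|A|\cdot\|F^{\top}\|_{\infty}$ is handled using the $\Theta^{*}_{12}$-bound inside $\|\Theta^{*}\|_{\infty}$, which encodes exactly the quantity $AF^{\top}\widehat{\Omega}$.

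The routine parts of this plan are the block multiplication and the norm bounds from Assumptions~\ref{ass:consistency}--\ref{ass:omega}; the main obstacle I expect is the last point, controlling $\|I-\widehat{\Omega}FAF^{\top}\|_{\infty}$ and $\|AF^{\top}\|_{\infty}$ by $O(1)$ in spite of the $\ell_{1}$-type factors that naturally appear from the rank-one structure. The cleanest way around this is to route those two quantities through the blocks of $\widehat{\Theta}$ itself (rather than expanding them) and invoke the uniform bound $\|\widehat{\Theta}\|_{\infty}\lesssim \|\Theta^{*}\|_{\infty}+r_{1}+r_{4}\lesssim r_{5}$ supplied by Corollary~\ref{cor:gen-asy:1} and Assumption~\ref{ass:omega}. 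Once these $O(1)$ factors are in place, the claimed bound $\|I-\widehat{\Theta}\widehat{\Sigma}\|_{\infty}\lesssim r_{3}$ is immediate from the displayed identity.
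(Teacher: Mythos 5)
Your key identity is exactly the decomposition the paper uses: it verifies that the right column of $\widehat{\Theta}\widehat{\Sigma}$ reproduces the identity exactly, that $(I-\widehat{\Theta}\widehat{\Sigma})_{11}=\bigl(I-\widehat{\Omega}FAF^{\top}\bigr)\bigl(I-\widehat{\Omega}\nabla^{2}\cL(\hat{\theta})\bigr)$ and $(I-\widehat{\Theta}\widehat{\Sigma})_{21}=AF^{\top}\bigl(I-\widehat{\Omega}\nabla^{2}\cL(\hat{\theta})\bigr)$ (with $F=\nabla f(\hat{\theta})$, $A=(F^{\top}\widehat{\Omega}F)^{-1}$), and then pulls the rate $r_3$ out of the common factor $I-\widehat{\Omega}\nabla^{2}\cL(\hat{\theta})$ via Assumption~\ref{ass:omega}. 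Up to that point your plan and the paper's proof are the same, and your bounds $\|F\|_\infty\lesssim c_1$, $\|\widehat{\Omega}\|_\infty\lesssim c_2$, $|A|\lesssim 1$ by perturbing $\nabla f(\theta^*)^{\top}\Omega^*\nabla f(\theta^*)\gtrsim c_3$ are also how the paper proceeds.

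The gap is in how you certify that the two coefficient factors are $O(1)$. Routing $\|I-\widehat{\Omega}FAF^{\top}\|_\infty$ through $\|\widehat{\Theta}_{11}\|_\infty\lesssim\|\Theta^*\|_\infty+r_1+r_4\lesssim r_5$ does not give $O(1)$: $r_5$ is a rate, not a constant — Theorem~\ref{thm:gen-asy} only requires $\sqrt{n}\,r_1^2 r_5=o(1)$, so $r_5$ may diverge (and does in the BTL instance, where $\Theta^*_{11}$ has row sums of order $1/p$) — so this detour would only yield $\lesssim r_5 r_3$, not the claimed $\lesssim r_3$. The auxiliary ``post-multiplication by $\widehat{\Omega}^{-1}$'' step is likewise unjustified: $\widehat{\Omega}$ is a CLIME estimate of the inverse of a (nearly) singular population Hessian, is not assumed invertible, and no control of $\|\widehat{\Omega}^{-1}\|_\infty$ is available. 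The paper avoids both issues by bounding $\bigl\|I-\widehat{\Omega}FAF^{\top}\bigr\|_\infty$ and $\|AF^{\top}\|_\infty$ directly against their population counterparts via triangle inequalities: it uses $\|\widehat{\Theta}_{12}-\Theta^*_{12}\|_\infty\lesssim r_1+r_4$ (from Corollary~\ref{cor:gen-asy:1}), $\|\nabla f(\hat{\theta})-\nabla f(\theta^*)\|_\infty\le L_2 r_1$, $\|\Omega^*\|_\infty\lesssim c_2$, $\|\nabla f(\theta^*)\|_\infty\lesssim c_1$, and the lower bound $c_3$, producing constants of the form $1+c_2c_1^2/c_3$ plus $o(1)$ corrections. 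If you replace the final paragraph of your plan with that direct perturbation argument, the rest of your proof coincides with the paper's.
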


We then present the asymptotic distribution of the Lagrangian debiased estimator.

\begin{theorem}
\label{thm:gen-asy}
Under Assumptions \ref{ass:consistency} -- \ref{ass:CLT},
%
%
%
%
%
%
%
%
%
%
 	if $c_1,c_2,c_3,L_1,L_2=\cO(1)$ and $\sqrt{n}\big(r_1^2r_5  + (r_1+r_4)(r_1^2 + r_2)+ r_1r_3\big)=o(1)$,  
 	we have the following asymptotic distribution for the Lagrangian debiased estimator \eqref{eqn:lde},
 	$$\sqrt{n} \frac{\widehat{{\theta}}_j^{d}-{{\theta}}^*_j} {\sqrt{[{\Theta}_{11}^* {\Sigma}_{11}^* {{\Theta}_{11}^*}^\top]_{jj}}}
 	 \overset{d}{\longrightarrow} N(0,1),$$
 	and 
 	$$\sqrt{n} \frac{(\widehat{{\theta}}_i^{d}-{{\theta}}^*_i)-(\widehat{{\theta}}_j^{d}-{{\theta}}^*_j)} {\sqrt{(\bm{e}_i-\bm{e}_j)^\top {\Theta}_{11}^* {\Sigma}_{11}^* {{\Theta}_{11}^*}^\top(\bm{e}_i-\bm{e}_j)}}
 	 \overset{d}{\longrightarrow} N(0,1),$$
 	where $\bm{e}_k$ is the natural basis with the $k$-th entry be 1 and other entries be 0.
 \end{theorem}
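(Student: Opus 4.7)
The plan is to reduce $\widehat{{\theta}}^d_j - {\theta}^*_j$ to $-[{\Theta}_{11}^*\nabla\cL({\theta}^*)]_j$ plus an asymptotically negligible remainder, and then invoke the CLT in Assumption~\ref{ass:CLT} together with Slutsky's lemma. Substituting the definition $\widehat{{\theta}}^d = \widehat{{\theta}} - \widehat{\Theta}_{11}\nabla\cL(\widehat{{\theta}})$ and Taylor-expanding $\nabla\cL(\widehat{{\theta}}) = \nabla\cL({\theta}^*) + \nabla^2\cL({\theta}^*)(\widehat{{\theta}} - {\theta}^*) + R_1$ with $\|R_1\|_\infty \lesssim L_1 r_1^2$ (Assumption~\ref{ass:loss}), then adding and subtracting ${\Theta}_{11}^*\nabla\cL({\theta}^*)$, yields the master decomposition
\begin{align*}
\widehat{{\theta}}^d - {\theta}^* = -{\Theta}_{11}^*\nabla\cL({\theta}^*) + ({\Theta}_{11}^* - \widehat{\Theta}_{11})\nabla\cL({\theta}^*) + \bigl[I - \widehat{\Theta}_{11}\nabla^2\cL({\theta}^*)\bigr](\widehat{{\theta}} - {\theta}^*) - \widehat{\Theta}_{11} R_1.
\end{align*}

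The three remainder pieces are controlled as follows. The first has $\ell_\infty$ norm $\lesssim (r_1+r_4)r_2$ by Corollary~\ref{cor:gen-asy:1} and $\|\nabla\cL({\theta}^*)\|_\infty \lesssim r_2$. For the middle piece I would further decompose
\begin{align*}
I - \widehat{\Theta}_{11}\nabla^2\cL({\theta}^*) = \bigl[I - \widehat{\Theta}_{11}\nabla^2\cL(\widehat{{\theta}}) - \widehat{\Theta}_{12}\nabla f(\widehat{{\theta}})^\top\bigr] + \widehat{\Theta}_{11}\bigl[\nabla^2\cL(\widehat{{\theta}}) - \nabla^2\cL({\theta}^*)\bigr] + \widehat{\Theta}_{12}\nabla f(\widehat{{\theta}})^\top,
\end{align*}
so that Corollary~\ref{cor:gen-asy:2} bounds the first summand (producing $r_1 r_3$), the Lipschitz condition on $\nabla^2\cL$ together with $\|\widehat{\Theta}_{11}\|_\infty \lesssim r_5$ bounds the second (producing $r_5 r_1^2$), and the identity $f(\widehat{{\theta}}) = f({\theta}^*) = 0$ together with a Taylor expansion of $f$ under Assumption~\ref{ass:constraint} absorbs the third into an $O(r_1^2)$ scalar multiplying $\widehat{\Theta}_{12}$ (and indeed vanishes exactly for the linear BTL constraint, where $\nabla f$ is constant). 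Finally, splitting $\widehat{\Theta}_{11} R_1 = {\Theta}_{11}^* R_1 + (\widehat{\Theta}_{11} - {\Theta}_{11}^*) R_1$ gives $\lesssim r_5 r_1^2 + (r_1+r_4)r_1^2$.

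Collecting everything produces $\|\widehat{{\theta}}^d - {\theta}^* + {\Theta}_{11}^*\nabla\cL({\theta}^*)\|_\infty \lesssim r_1 r_3 + r_5 r_1^2 + (r_1+r_4)(r_2 + r_1^2)$, so the scaling condition $\sqrt{n}(r_1^2 r_5 + (r_1+r_4)(r_1^2+r_2) + r_1 r_3) = o(1)$ makes the $\sqrt{n}$-scaled remainder $o(1)$. Since $[{\Theta}_{11}^*{\Sigma}_{11}^*{\Theta}_{11}^{*\top}]_{jj}$ is bounded away from zero by Assumption~\ref{ass:CLT}, dividing by $\sqrt{[{\Theta}_{11}^*{\Sigma}_{11}^*{\Theta}_{11}^{*\top}]_{jj}}$ and applying the CLT from Assumption~\ref{ass:CLT} with Slutsky's lemma yields the first limit (the Gaussian limit being symmetric, the sign is immaterial). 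The pairwise statement follows from the same decomposition contracted with $(\bm{e}_i - \bm{e}_j)^\top$, since $|\Delta_i - \Delta_j| \le 2\|\Delta\|_\infty$. The main obstacle I anticipate is precisely the third summand in the middle decomposition: for nonlinear $f$ a crude H\"older bound on $\nabla f(\widehat{{\theta}})^\top(\widehat{{\theta}}-{\theta}^*)$ incurs a dimension factor, so one must carefully exploit the exact cancellation $f(\widehat{{\theta}}) = f({\theta}^*) = 0$ together with the Lipschitz bound on $\nabla f$ to recover the quadratic-in-$r_1$ rate that the scaling condition implicitly demands; the remaining bookkeeping is a routine application of Corollaries~\ref{cor:gen-asy:1} and~\ref{cor:gen-asy:2}.
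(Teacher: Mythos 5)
Your proposal is correct and is essentially the paper's own argument in block form: the paper likewise adds and subtracts the leading term $-\Theta^*\bigl(\nabla \cL(\theta^*),\, f(\theta^*)\bigr)^\top$ (in the augmented $(n+1)$-dimensional system), bounds the three remainders by exactly $(r_1+r_4)(r_1^2+r_2)$, $r_1^2 r_5$, and $r_1 r_3$ using Corollaries \ref{cor:gen-asy:1}--\ref{cor:gen-asy:2} and Assumptions \ref{ass:consistency}--\ref{ass:omega}, and then concludes via Assumption \ref{ass:CLT} and Slutsky, just as you do. Your only deviation---working directly with the first block and invoking $f(\hat{\theta})=f(\theta^*)=0$ to absorb the constraint term (which vanishes exactly for the linear BTL constraint)---is a reorganization rather than a different route, and your total remainder rate matches the paper's $r_1^2 r_5+(r_1+r_4)(r_1^2+r_2)+r_1 r_3$.
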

\begin{proof}
See Appendix Section~\ref{sec:pf:gen-asy}  for the detailed proof.
\end{proof}

 \begin{remark}
 	If the constraint function $f$ is linear, it is not difficult to say that $\hat\theta^d$ satisfies the constraint. Meanwhile, under the general constraint, as the problem is nonconvex, $\hat\theta^d$ may violate the constraint. However, even if the constraint is violated, the asymptotic results above still hold.	
 \end{remark}

\subsubsection{Lagrangian Debiasing for BTL model}
We present the debiasing method under the  BTL model. In particular, for ranking problems, letting the constraint function be linear that $\bm{1}^\top{\theta}=0$ as in \cite{chen2013pairwise,negahban2017rank,Jin_Xu_Gu_Farnoud_2020}, { we  define 
\begin{equation}\label{eqn:Hinv}
\Bigg(\begin{array}{cc}
\hat{\Theta}_{11} & \frac{1}{n} \mathbf{1} \\
\frac{1}{n} \mathbf{1}^\top & 0
\end{array}\Bigg) 
= \Bigg(\begin{array}{cc}
\nabla^{2} \cL(\hat{\theta}) & \mathbf{1} \\
\mathbf{1}^\top & 0
\end{array}\Bigg)^{-1} 
 \text{ and }
\Bigg(\begin{array}{cc}
\Theta^*_{11} & \frac{1}{n} \mathbf{1} \\
\frac{1}{n} \mathbf{1}^\top & 0
\end{array}\Bigg) 
= \Bigg(\begin{array}{cc}
\nabla^{2} \cL(\theta^*) & \mathbf{1} \\
\mathbf{1}^\top & 0
\end{array}\Bigg)^{-1}
\end{equation}
Here the invertibility is provied in Remark \ref{rmk:inv} in the Appendix, and the form of inverse is validated in Corollary \ref{lem:eigenvalue_Q}.
} 

The next theorem shows that under mild scaling conditions, the Lagrangian debiasing estimator $\hat{\theta}^d_j$ and the component-wise difference ${\hat{\theta}^d}_i-{\hat{\theta}^d}_j$ for any $i$ and $j$ ($1 \leq i, j \leq n$) are asymptotically normal with mean $\theta^*_j$ and ${{\theta}}^*_i-{{\theta}}^*_j$, respectively.
\begin{theorem}
	\label{thm:asy}
	Considering the BTL model, under constraint parameter set $\mathcal{C}=\{{\theta}:\bm{1}^\top{\theta}=0\}$, if $p \geq \frac{C_{0} \log n}{n}$ for some sufficiently large constant $C_{0}>0$ and $\frac{n\log n}{\sqrt{L}} + \frac{\log n}{\sqrt{p L}} = o(1)$, we have that the Lagrangian debiasing estimator satisfies that, as $n,L\rightarrow\infty$,
	$$
	\sqrt{L}\ \dfrac{\hat{\theta}^d_j-{\theta}_j^*}{\sqrt{[{\Theta_{11}^*}]_{jj}}}\overset{d}{\longrightarrow} N(0,1),
$$
	and
	$$
	\sqrt{L}\frac{{\hat{\theta}^d}_i-{\hat{\theta}^d}_j-({{\theta}}^*_i-{{\theta}}^*_j)}{\sqrt{ (\bm{e}_i-\bm{e}_j)^\top {\Theta_{11}^*} (\bm{e}_i-\bm{e}_j)}} \overset{d}{\longrightarrow} N(0,1). 
	$$ 
\end{theorem}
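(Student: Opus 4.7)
\textbf{Proof plan for Theorem~\ref{thm:asy}.}
The plan is to obtain Theorem~\ref{thm:asy} as a concrete instance of Theorem~\ref{thm:gen-asy} applied to the BTL negative log-likelihood \eqref{eqn:likelihood} with the linear identifiability constraint $f(\theta)=\mathbf{1}^{\top}\theta$. Because $f$ is linear, Assumption~\ref{ass:constraint} is automatic with $c_1=1$ and $L_2=0$, and $\hat\theta^{d}$ satisfies the constraint exactly (cf.\ the Remark after Theorem~\ref{thm:gen-asy}). Moreover, since $\nabla^{2}\cL(\theta)$ is the weighted Laplacian of $\cG$ with edge weights $\sigma'(\theta_i-\theta_j)\in[c,1-c]$ under $\kappa=O(1)$, connectivity of $\cG(n,p)$ for $p\ge C_0\log n/n$ guarantees that the block matrix in \eqref{eqn:Hinv} is invertible. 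This lets me define $\hat\Theta$ and $\Theta^{*}$ directly via the exact block inverse \eqref{eqn:Hinv}, bypassing the CLIME construction and reducing Corollary~\ref{cor:gen-asy:2} to the trivial $I-\hat\Theta\hat\Sigma=0$.

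\textbf{Rate verification.} Lemma~\ref{lem:consistency} gives Assumption~\ref{ass:consistency} with $r_1\asymp\sqrt{\log n/(npL)}$. For Assumption~\ref{ass:loss}, the score $[\nabla\cL(\theta^{*})]_i=\sum_{j\sim i}\bigl(\sigma(\theta_i^{*}-\theta_j^{*})-y_{j,i}\bigr)$ is a sum of $O(np)$ independent centred averages of $L$ Bernoullis; Bernstein plus a union bound yields $r_2\asymp\sqrt{np\log n/L}$. The Lipschitz constant $L_1$ in the Hessian bound follows from $|\sigma''|\le 1/6$ together with the standard concentration $d_{\max}(\cG)\lesssim np$. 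For the inverse-matrix bound, I will control the pseudoinverse $\Theta_{11}^{*}$ entrywise: matrix Bernstein gives $np\lesssim\lambda_{\min}^{+}(\nabla^{2}\cL(\theta^{*}))\le\|\nabla^{2}\cL(\theta^{*})\|_{\mathrm{op}}\lesssim np$, and a first-order perturbation of \eqref{eqn:Hinv} together with $\|\nabla^{2}\cL(\hat\theta)-\nabla^{2}\cL(\theta^{*})\|_{\infty}\lesssim L_1 r_1$ yields the entrywise bound needed in Corollary~\ref{cor:gen-asy:1}.

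\textbf{CLT and normalisation.} Assumption~\ref{ass:CLT} follows from Lindeberg--Lyapunov applied to $\nabla\cL(\theta^{*})$, which is a sum of independent centred edge-level terms, each variable being bounded by $O(1)$. The key algebraic simplification that produces the clean form of Theorem~\ref{thm:asy} is that the Hessian is deterministic in $y$, so $\Sigma_{11}^{*}=\nabla^{2}\cL(\theta^{*})$, and $\Theta_{11}^{*}$ is its Moore--Penrose pseudoinverse on $\mathbf{1}^{\perp}$; therefore
\[
\Theta_{11}^{*}\,\Sigma_{11}^{*}\,\Theta_{11}^{*}\;=\;\Theta_{11}^{*},
\]
which collapses the sandwich variance to $[\Theta_{11}^{*}]_{jj}$. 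Because $y_{j,i}$ is the average of $L$ independent Bernoullis, $\mathrm{Cov}(\nabla\cL(\theta^{*}))=\Sigma_{11}^{*}/L$; propagating this $1/L$ variance through the argument of Theorem~\ref{thm:gen-asy} replaces the generic $\sqrt{n}$ normalisation by the BTL-specific $\sqrt{L}$ in both displayed limits. The pairwise-difference statement follows by applying the same argument to the contrast vector $\bm{e}_i-\bm{e}_j$.

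\textbf{Main obstacle.} The delicate step is the entrywise perturbation bound $\|\hat\Theta_{11}-\Theta_{11}^{*}\|_{\infty}\lesssim r_1$, since $\Theta_{11}^{*}$ is a pseudoinverse of a sparse Laplacian-type matrix and standard spectral perturbation controls only the operator norm. I expect to need a leave-one-out construction in the spirit of \cite{chen2019spectral}, removing one item at a time from $\cG$ to decouple $\hat\theta$ from a single row of the Hessian and thereby obtain an $\ell_{\infty}$-to-$\ell_{\infty}$ bound on the pseudoinverse. Once this is in place, the final scaling condition $\sqrt{n}\bigl(r_1^{2}r_5+(r_1+r_4)(r_1^{2}+r_2)+r_1 r_3\bigr)=o(1)$ reduces, after substituting the rates above and $r_3=0$, to the hypothesis $\frac{n\log n}{\sqrt{L}}+\frac{\log n}{\sqrt{pL}}=o(1)$ stated in the theorem.
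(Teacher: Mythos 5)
Your plan is, in its skeleton, the paper's own proof: exploit the linear constraint $\mathbf{1}^\top\theta=0$ to work with the exact block inverse \eqref{eqn:Hinv} (so the $J_3$/CLIME terms disappear), expand $\hat\theta^d-\theta^*$ around the leading linear term $-\Theta_{11}^*\nabla\cL(\theta^*)$, kill the remainder at scale $1/\sqrt{npL}$, and get the limit from a CLT for the score together with $\Sigma_{11}^*=\nabla^2\cL(\theta^*)$, $\mathrm{Cov}(\nabla\cL(\theta^*)\mid\cE)=\nabla^2\cL(\theta^*)/L$ and the collapse $\Theta_{11}^*\Sigma_{11}^*\Theta_{11}^*=\Theta_{11}^*$ (this is exactly \eqref{eqn:product} and Lemma~\ref{lem:CLT}); your replacement of the generic $\sqrt{n}$ by $\sqrt{L}$ is the right reading of how the theorem is actually proved. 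The one genuine divergence is the step you single out as the main obstacle, and it is a detour: the paper never proves an entrywise or $\ell_\infty$-to-$\ell_\infty$ bound on $\hat\Theta_{11}-\Theta_{11}^*$ and uses no leave-one-out construction. It only establishes the operator-norm perturbation bound $\lesssim\frac{1}{np}\sqrt{\log n/(pL)}$ (Lemma~\ref{lem:qhat-qstar}, a direct consequence of Hessian concentration and the $\asymp np$ spectral gap), and then converts to $\ell_\infty$ crudely by paying a factor $\sqrt{n+1}$, yielding $\|I_1\|_\infty\lesssim\frac{\sqrt{n}\log n}{\sqrt{p}\,L}$ and $\|I_2\|_\infty\lesssim\frac{\log n}{\sqrt{n}pL}$ (Lemma~\ref{lem:I1}); after multiplying by $\sqrt{npL}$ these are $o(1)$ precisely because the theorem assumes $\frac{n\log n}{\sqrt{L}}+\frac{\log n}{\sqrt{pL}}=o(1)$ — that hypothesis exists exactly to absorb the $\sqrt{n}$ loss. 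So if you carried out the leave-one-out entrywise bound you would be proving something stronger than needed (and could perhaps weaken the $n\log n/\sqrt{L}$ condition), but as written your argument rests on a nontrivial unproved estimate that the stated theorem does not require; replace it with the operator-norm route and your proof closes. Two smaller points: the paper does not literally invoke Theorem~\ref{thm:gen-asy} (whose assumptions and $\sqrt{n}$ normalization are tailored to the CLIME construction) but re-runs the same decomposition directly, and it handles the two layers of randomness by conditioning on the comparison graph $\cE$ — the Berry--Esseen CLT for $\Theta_{11}^*\nabla\cL(\theta^*)$ is proved given $\cE$, with the conditioning event shown to hold with probability $1-\cO(n^{-5})$ — so you should make that conditioning explicit rather than applying Lindeberg--Lyapunov to ``edge-level terms'' with the graph implicitly frozen.
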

\begin{proof}
See Appendix~\ref{sec:pf-asy-BTL}  for the detailed proof.
\end{proof}

\begin{remark}
	By Corollary \ref{lem:eigenvalue_Q} in the Appendix, we have that 
	$[{\Theta_{11}^*}]_{jj}  \asymp \frac{1}{np}\text{ for all } 1 \leq j \leq n$. 
	Consequently, for all $j\in[n]$, we have
	$$
	\big|\hat{\theta}^d_j-{\theta}_j^*\big| \lesssim \sqrt{\frac{1}{npL}}
	$$
	with  probability goes to 1. This matches the $\ell_\infty$-norm error achieved by the spectral and regularized MLE methods as analyzed in \citep{chen2019spectral}.
\end{remark}

The asymptotic normality of ${\hat{\theta}^d}_i-{\hat{\theta}^d}_j$ is the fundamental building block for inferring the pairwise preference such as in Example~\ref{eg:intro-1},  where we test if item $i$ is preferred over item $j$. Basically, it is a ``local" test that only involves two items, which can be done with the asymptotic distribution of ${\hat{\theta}^d}_i-{\hat{\theta}^d}_j$. For the more challenging ``global" testing problems, such as Example \ref{eg:intro-2}, where we test if a given item is among the top-$K$ ranked items, we need to {uniformly control} the quantile of maximal statistic, which will be discussed in the next subsection.

\subsection{Hypothesis Testing}
\label{sec:ht}

In this subsection, we propose our general inferential framework for ranking problems. As mentioned in the Introduction, we first  test if a given item $i$ satisfies some property that $$H_{0}:  \gamma^* \notin \cR_i
\text{ \ \ v.s. \ \  } 
H_{a}: \gamma^* \in \cR_i.$$
To facilitate our discussion, we  define the legal pair  and a distance between the null and alternative, which essentially measure the signal strengths in our testing problems. In particular, when we test some property of item $i$, we say a pair $(i,i')$ is a legal pair if the property is true, and if we swap the scores of item $i$ and item $i'$, the property no longer holds. That is, swapping the scores of item $i$ and item $i'$ changes the property of our interest. The distance between the null and alternative is thus defined as the minimal difference of scores among all legal pairs.

\begin{definition}[Legal pair]
	\label{def:legal}
Suppose $\gamma \in \cR_i$. After swapping the scores of item $i$ and item $i'$, we obtain a new rank $\gamma'$. We say that the pair of items $(i,i')$ is legal  if $\gamma' \notin \cR_i$.
\end{definition}

\begin{definition}[Distance between the null and alternative]\label{def:distance}
	We define the distance between the null and alternative as
	$$
	\Delta({\theta},\cR_i)
	=\min_{i': (i,i') \text{ is legal }} |\theta_{i}-\theta_{i'}|.
	$$	
\end{definition}

Then,  we  provide the specific legal pairs and  distances $\Delta({\theta},\cR_i)$ for  Examples \ref{eg:intro-1} and \ref{eg:intro-2}.

\begin{itemize}
	\item Example \ref{eg:intro-1}: (Pairwise preference between item $i$ and item $j$). We aim to test if item $i$ is ranked higher than item $j$, which means $\gamma^*_i <\gamma^*_j$  or  $\theta^*_i>\theta^*_j$. Let the ranking property be 	$$\cR_i = \{\gamma: \gamma_i < \gamma_j\} = \{\gamma(\theta): \theta_i > \theta_j\}.$$ 

	If $\gamma \in \cR_i$ (i.e., $\theta_i > \theta_j$) and we swap scores of item $i$ and item $i'$ where $\theta_{i'} \leq \theta_{j}$, the new rank does not satisfy $\cR_i$. Meanwhile, if we swap  scores of item $i$ and item $i''$ where { $\theta_{i''} > \theta_{j}$}, the new rank still satisfies $\cR_i$. So $(i,i')$ is a legal pair if $\theta_{i'} \leq \theta_{j}$. See
	Figure \ref{fig:pairwise} for  illustration. 
	This observation leads to the distance $$\Delta({\theta},\cR_i)
	=\min_{i': \theta_{i'}\leq\theta_{j}} |\theta_{i}-\theta_{i'}| = |\theta_{i}-\theta_{j}|.$$

	Equivalently, we can test on preference scores instead of ranking, i.e.,
	testing whether item $i$ has a larger score than item $j$,
	$$
	H_0: \theta^*_i \leq \theta^*_j  \text{\ \ v.s.\ \ } H_a: \theta^*_i>\theta^*_j.
	$$

\begin{figure}[h]
	\begin{center}
		\begin{tabular}{c}		
			\includegraphics[width=.7\textwidth,angle=0]{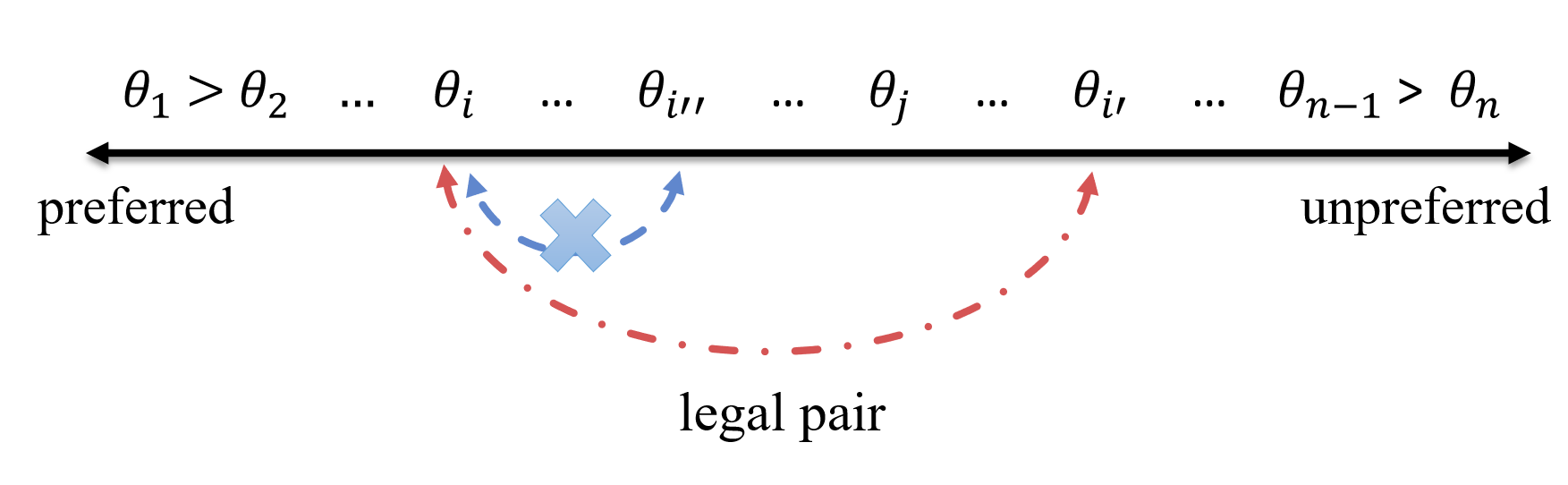} 
		\end{tabular}
	\end{center}
	\caption{Illustration of Example \ref{eg:intro-1}. Assuming $\theta_1 > \theta_2 > \cdots > \theta_n$. Here $\gamma \in \cR_i$ since $\theta_i > \theta_j$. If we swap scores of item $i$ and item $i'$ where $\theta_{i'} \leq \theta_{j}$, the new rank does not satisfy $\cR_i$. Meanwhile, if we swap  scores of item $i$ and item $i''$ where { $\theta_{i''} > \theta_{j}$}, the new rank still satisfies $\cR_i$. Thus, $(i,i')$ is a legal pair if $\theta_{i'} \leq \theta_{j}$. } 
	\label{fig:pairwise}
\end{figure}

	\item Example \ref{eg:intro-2}: (Top-$K$ test). If item $i$'s preference score is larger than $n-K$ items, i.e., $\theta_i > \theta_{(K+1)}$, where $\theta_{(K+1)}$ denotes the $(K+1)$-th largest preference score, or equivalently, $\gamma_i \le K$, we say that item $i$ is ranked among top-$K$ items. We aim to test if item $i$ is ranked among top-$K$ items. Thus, we have that $\cR_i$ is
	$$\cR_i=\{\gamma: \gamma_i \leq K\} = \{\gamma(\theta): \theta_i > \theta_{(K+1)}\}.$$	
	
	If $\gamma \in \cR_i$ (i.e., $\theta_i > \theta_{(K+1)}$), and we swap scores of item $i$ and item $j$ where $\theta_{j} \leq \theta_{(K+1)}$, the new rank does not satisfy $\cR_i$. Meanwhile, if we swap  scores of item $i$ and item $j$ where $\theta_{j} > \theta_{(K+1)}$, we have that the new rank satisfies $\cR_i$. Thus, $(i,j)$  is a leagal pair if $\theta_{j} \leq \theta_{(K+1)}$, and the distance is $$\Delta({\theta},\cR_i)
	=\min_{j:\theta_{j}\leq\theta_{(K+1)}} |\theta_{i}-\theta_{j}|=|\theta_{i}-\theta_{(K+1)}|.$$
	
	Similarly, we can transform the test on ranking into testing on preference score, our test is 
	$$
	H_0: \theta^*_i-\theta^*_{(K+1)}\leq 0 \text{\ \ v.s.\ \ }
	H_a: \theta^*_i-\theta^*_{(K+1)}>0. 
	$$
\end{itemize}


%

Next, we explain our testing procedure with the above two examples.  
Consider Example \ref{eg:intro-1},  where we test if item $i$ is ranked higher than item $j$, or equivalently, we test if item $i$ has a larger score than item $j$,
$$
	H_0: \theta^*_i \leq \theta^*_j  \text{\ \ v.s.\ \ } H_a: \theta^*_i>\theta^*_j.
$$
For this example, by Theorem \ref{thm:asy}, if the assumptions are satisfied, we have 
$$
\sqrt{L}\frac{{\hat{\theta}^d}_i-{\hat{\theta}^d}_j-({{\theta}}^*_i-{{\theta}}^*_j)}{\sqrt{ (\bm{e}_i-\bm{e}_j)^\top {\Theta_{11}^*} (\bm{e}_i-\bm{e}_j)}} \overset{d}{\longrightarrow} N(0,1) .
$$
We  thus reject $H_0$ if
$$ \sqrt{L}\frac{\hat{\theta}^d_i-\hat{\theta}^d_j}{\sqrt{ (\bm{e}_i-\bm{e}_j)^\top {\hat{\Theta}_{11}} (\bm{e}_i-\bm{e}_j)}}>\Phi(1-\alpha),$$ 
where $\Phi(\cdot)$ is the cumulative distribution function of a standard normal random variable. 

Note that this example only involves two items, which is a relatively simple local test. However, for more general testing problems, we need to consider more than two items. For instance, in Example~\ref{eg:intro-2},  we test if item $i$ is ranked among the top-$K$ items that
$$
	H_0: \theta^*_i-\theta^*_{(K+1)}\leq 0 \text{\ \ v.s.\ \ }
	H_a: \theta^*_i-\theta^*_{(K+1)}>0. 
$$
where $\theta^*_{(K+1)}$ is the $(K+1)$-th largest score in terms of order statistic. If the score of item $i$ is larger than the $(K+1)$-th largest score, we have that item $i$ is ranked among top-$K$ items. In this example and more general problems,  we need to study the maximal statistic $\max_{j \neq i} (\hat{\theta}^d_i-\theta^*_i-\hat{\theta}^d_j+\theta^*_j)$. In what follows, we demonstrate that we can estimate the quantiles of this maximal statistic via the Gaussian multiplier bootstrap.



\noindent{\bf Gaussian multiplier bootstrap.}
We start from a general fixed edge set $E \subseteq \mathcal{V} \times \mathcal{V}$. The goal is to control the tail probability of the statistic that 
\begin{equation}\label{eqn:T}
\begin{aligned}
T:=&\max_{(i,j) \in E} \sqrt{{npL}} (\hat{\theta}^d_i-\theta^*_i-\hat{\theta}^d_j+\theta^*_j)\\
=&-\max_{(i, j) \in E} \sqrt{\frac{1}{npL}}\sum_{\ell=1}^L \sum_{k>m}{\mathcal{E}_{mk}}\Big(-y_{mk}^{(\ell)}+\frac{e^{\theta_k^*}}{e^{\theta_k^*}+e^{\theta_m^*}}\Big) n p \big([{\Theta}^*_{11}]_i - [{\Theta}^*_{11}]_j \big)(\bm{e}_k-\bm{e}_m)\\
&+\sqrt{{npL}}(r_{i}-r_{j})\\
:=&\max_{(i, j) \in E} \sqrt{\frac{1}{L}}\sum_{\ell=1}^L x_{ij}^{(\ell)}+\sqrt{{npL}}(r_{i}-r_j),
\end{aligned}
\end{equation}
where $\bm{e}_k$ is the natural basis,  $[{\Theta_{11}^*}]_i$ is the $i$-th row of matrix ${\Theta_{11}^*}$ defined in  \eqref{eqn:Hinv}. The second equality comes from \eqref{eqn:op1} and \eqref{eqn:grad}. $x_{ij}^{(\ell)}$ is defined as
\begin{equation}
\label{eqn:def-x}
 x_{ij}^{(\ell)}:=-\sqrt{{np}} \sum_{k>m}{\mathcal{E}_{mk}}\Big(-y_{mk}^{(\ell)}+\frac{e^{\theta_k^*}}{e^{\theta_k^*}+e^{\theta_m^*}}\Big) \big([{\Theta_{11}^*}]_i-[{\Theta_{11}^*}]_j \big)(\bm{e}_k-\bm{e}_m).
\end{equation} 
which is an independent zero-mean {random variable in $\mathbb{R}$} for $\ell=1, \cdots, L$. {Here $\cE_{mk} = 1$ if $(m,k)\in \cE$ the comparison graph, and $\cE_{mk} = 0$ otherwise.}

To estimate the quantile of $T$, we consider the Gaussian multiplier bootstrap in \cite{Chernozhukov2013Gaussian}. The main idea is to  approximate the distribution of the maximum of a sum of independent random vectors with unknown covariance by the distribution of the maximum of a sum of conditional Gaussian random vectors, which is obtained by multiplying the original vectors with i.i.d. normal random variables. In our case, even though vector ${\theta}^*$
is not  observable, we have some estimators $\hat{{\theta}}$  are available, and we use the estimators to approximate ${\theta}^*$ in the bootstrap.

Hence, we define the following statistic from Gaussian multiple bootstrap
\begin{equation}\label{eqn:W}
W:=\max_{(i, j) \in E} \sqrt{\frac{1}{L}}\sum_{\ell=1}^L\bigg\{- \sqrt{{np}}\sum_{k>m}{\mathcal{E}_{mk}}\Big(-y_{mk}^{(\ell)}+\frac{e^{\hat{\theta}_k}}{e^{\hat{\theta}_k}+e^{\hat{\theta}_m}}\Big)\big([\hat{\Theta}_{11}]_i - [\hat{\Theta}_{11}]_j \big)(\bm{e}_k-\bm{e}_m) \bigg\} z_\ell,
\end{equation}
where $z_\ell$, $\ell=1,\cdots, L$, are i.i.d standard normal random variables.

We then estimate the conditional quantile of $W$ given data $\bm{y}=\big\{y_{mk}^{(\ell)}\big\}^{\ell=1,\cdots,L}_{k>m}$ by 
\begin{equation}\label{eqn:quantile}
c_W(\alpha,E)=\inf \big\{t \in \mathbb{R}:\mathbb{P}(W>t \given\bm{y}) \leq\alpha\big\}.
\end{equation}
The next theorem uniformly controls the tail probability of $T$ by $c_W(\alpha,E)$.  

\begin{theorem}\label{thm:bootstrap}
	Considering the BTL model, for any edge set $E \subseteq \mathcal{V} \times \mathcal{V}$, if $ n^2 p \frac{(\log (n L))^{7}}{L} =o(1)$ and $\frac{n(\log n)^{3/2}}{\sqrt{L}} =o(1)$, we have
	\begin{equation}\nonumber
	\sup_{\alpha \in (0,1)} \big|\mathbb{P}(T > c_W(\alpha,E) )-\alpha\big| \to 0 .
	\end{equation} as $n,L \to \infty$,
\end{theorem}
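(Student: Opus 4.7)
The plan is to decompose $T$ into a leading sum-of-independents piece plus a Taylor-type remainder, apply the Chernozhukov--Chetverikov--Kato (CCK) high-dimensional Gaussian approximation and multiplier bootstrap theorems to the leading piece, absorb the plug-in error incurred by using $\hat\theta$ and $\hat\Theta_{11}$ in place of $\theta^*$ and $\Theta^*_{11}$ in $W$, and finish via an anti-concentration comparison of quantiles. More concretely, write
\[
T = T_0 + R, \qquad T_0 := \max_{(i,j)\in E}\sqrt{\tfrac{1}{L}}\sum_{\ell=1}^{L}x_{ij}^{(\ell)},\qquad R := \sqrt{npL}\max_{(i,j)\in E}(r_i-r_j),
\]
where $x_{ij}^{(\ell)}$ is the independent zero-mean vector defined in \eqref{eqn:def-x} and $r_i-r_j$ is the Taylor residual from the Lagrangian debiasing expansion used in the proof of Theorem~\ref{thm:asy}.

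First I would verify the CCK hypotheses for the triangular array $\{x_{ij}^{(\ell)}\}_{\ell=1}^{L}$, indexed by $(i,j)\in E$ with $|E|\le n^2$. Using $\|\Theta_{11}^*\|_\infty\asymp 1/(np)$ from Corollary~\ref{lem:eigenvalue_Q}, the sub-Gaussian tails of the centered Bernoulli terms $-y_{mk}^{(\ell)}+e^{\theta_k^*}/(e^{\theta_k^*}+e^{\theta_m^*})$, and a Bernstein-type bound on the Erd\H{o}s--R\'enyi edge sums $\sum_{k>m}\mathcal E_{mk}(\cdot)$, one obtains uniform boundedness (or suitable sub-exponential moments) of $x_{ij}^{(\ell)}$ together with bounded second moments $\mathbb{E}[(x_{ij}^{(\ell)})^2]\asymp 1$. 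Invoking the CCK Gaussian approximation theorem for maxima over $|E|\le n^2$ coordinates yields
\[
\sup_{t\in\mathbb R}\bigl|\mathbb{P}(T_0\le t)-\mathbb{P}(Z\le t)\bigr|\to 0,
\]
where $Z$ is the maximum of a centered Gaussian vector with the same covariance as $L^{-1/2}\sum_\ell x_{ij}^{(\ell)}$; the scaling $n^2 p(\log(nL))^7/L=o(1)$ is exactly what the CCK rate $\{(\log n)^7 \cdot (\text{moment factor})/L\}^{1/6}$ requires once the $(np)^{-1}$ factors in $\Theta_{11}^*$ are accounted for.

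Next, I would handle the multiplier bootstrap in two steps. Let $W_0$ be the idealized bootstrap statistic obtained by replacing $\hat\theta,\hat\Theta_{11}$ by $\theta^*,\Theta_{11}^*$ in \eqref{eqn:W}. By the CCK multiplier bootstrap validity theorem, under the same moment conditions, $\sup_t|\mathbb{P}(W_0\le t\mid\bm y)-\mathbb{P}(Z\le t)|\to 0$ in probability. To pass from $W_0$ to $W$, I would bound the conditional covariance difference in the $\ell_\infty$ norm using $\|\hat\theta-\theta^*\|_\infty\lesssim\sqrt{\log n/(npL)}$ from Lemma~\ref{lem:consistency} and $\|\hat\Theta_{11}-\Theta_{11}^*\|_\infty$ from Corollary~\ref{cor:gen-asy:1}, then apply the CCK Gaussian comparison inequality plus anti-concentration of $Z$ to conclude $\sup_t|\mathbb{P}(W\le t\mid\bm y)-\mathbb{P}(Z\le t)|\to 0$ in probability. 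For the residual $R$, the individual bound $\sqrt{npL}|r_i-r_j|=o_{\mathbb P}(1)$ used to prove Theorem~\ref{thm:asy} must be upgraded to a uniform bound $|R|=o_{\mathbb P}(1/\sqrt{\log n})$ by a union bound over the at most $n^2$ pairs, using the scaling $n(\log n)^{3/2}/\sqrt{L}=o(1)$; this extra polylog slack is absorbed by the Gaussian anti-concentration $\mathbb{P}(|Z-t|\le\varepsilon)\lesssim\varepsilon\sqrt{\log n}$.

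Combining the three approximations by the triangle inequality gives $\sup_t|\mathbb{P}(T\le t)-\mathbb{P}(W\le t\mid\bm y)|\to 0$ in probability, which translates into the claimed uniform quantile statement $\sup_\alpha|\mathbb{P}(T>c_W(\alpha,E))-\alpha|\to 0$ by the standard anti-concentration/quantile-continuity argument. The main obstacle is the second step: keeping the plug-in error in the bootstrap covariance, the uniform remainder $R$, and the CCK Gaussian approximation rate all $o(1)$ simultaneously under the two stated scaling conditions requires careful bookkeeping, since the $(np)^{-1}$ weights in $\Theta^*_{11}$ interact with the $\sqrt{np}$ normalization in \eqref{eqn:def-x} and with the $\sqrt{npL}$ scaling of $T$ in ways that must be tracked through every inequality.
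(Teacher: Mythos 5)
Your proposal is correct and follows essentially the same route as the paper: the same decomposition $T=T_0+R$, $W$ versus the idealized $W_0$, verification of the Chernozhukov--Chetverikov--Kato conditions with the envelope $B\asymp n\sqrt p$ (giving $n^2p(\log(nL))^7/L=o(1)$), a lower bound on the second moments via $\Theta_{11}^*\asymp (np)^{-1}$, and absorption of the remainder and plug-in errors under $n(\log n)^{3/2}/\sqrt L=o(1)$ through anti-concentration. The only minor deviations are cosmetic: the paper bounds $|W-W_0|$ directly by a conditional Gaussian-process (Borell) argument rather than comparing covariances, and the remainder bound $\sqrt{npL}\,\|r\|_\infty\lesssim n\log n/\sqrt L$ is already uniform in the coordinates, so no extra union bound over pairs is needed.
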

\begin{proof}
We provide the proof in Appendix  Section \ref{sec:pf:bootstrap}.
\end{proof}

This theorem shows that $c_W(\alpha,E)$ obtained from the Gaussian multiplier bootstrap is a valid quantile estimator for $T=\max_{i, j \in E} \sqrt{{npL}} (\hat{\theta}^d_i-\theta^*_i-\hat{\theta}^d_j+\theta^*_j)$.
In this theorem, the first scaling condition
$n^2 p \frac{(\log (n L))^{7}}{L} =o(1)$ is from the Gaussian approximation for the
maximum of a sum of random vectors, and  the second scaling condition $\frac{n(\log n)^{3/2}}{\sqrt{L}}=o(1)$
is  from approximating $T$ and $W$ by their leading terms. Given this statistic,  we are ready to present the procedure for testing general ranking properties.


\noindent{\bf General testing procedure.}
For general ranking property test {with respect to item $i$}
$${H}_{0}:  \gamma^* \notin \cR_i
\text{ \ \ v.s. \ \  } 
{H}_{a}: \gamma^* \in \cR_i,$$
we perturb the preference score of every item (i.e., $\hat{\theta}^d_i$) up to $\alpha$-quantile of $\max_{j \neq i} (\hat{\theta}^d_i-\theta^*_i-\hat{\theta}^d_j+\theta^*_j)$, and conduct the test. Specifically, let  $\tilde{\Theta}$ be the set of all possible score vectors after perturbation~that
\begin{equation}\label{eqn:set theta tilde}
\tilde{\Theta}=
\Big\{{\theta}:{\theta}_k \in \big[\hat{\theta}^d_k - c_W(\alpha,i)/\sqrt{npL}, \ \hat{\theta}^d_k + c_W(\alpha,i)/\sqrt{npL}\big],
1 \leq k \leq n \Big\}.
\end{equation}
We reject the null hypothesis  if $\gamma{({\theta})} \in \cR_i$ for any ${\theta} \in \tilde{\Theta}$, i.e., the event $\cap_{{\theta} \in \tilde{\Theta}} \{\gamma{({\theta})} \in \cR_i \}$~holds. 

\begin{remark} \label{rmk:topk}
We point out that we can simplify this general procedure for specific problems. For instance, when we test if item $i$ is ranked within the  top-$K$, we only need to consider the extreme point of the perturbation, where for $k=1,...,n$, its $k$-th entry is defined as 
\begin{equation}\nonumber
\theta_k = 
\left\{\begin{array}{ll}
\hat{\theta}^d_{k}  & \quad \hat{\theta}^d_{k} > \hat{\theta}^d_{i}, \\
\hat{\theta}^d_k - c_W(\alpha, i)/\sqrt{npL}  &\quad  k = i,\\
\hat{\theta}^d_{k} + c_W(\alpha, i)/\sqrt{npL}  &\quad  \hat{\theta}^d_{k} < \hat{\theta}^d_{i}.
\end{array}\right.
\end{equation}
In fact, we can further simplify this procedure that we only consider ${\theta}$ where its $k$-th entry is defined as
\begin{equation}\nonumber
\theta_k = 
\left\{\begin{array}{ll}
\hat{\theta}^d_{k}  & \quad
\hat{\theta}^d_{k} > \hat{\theta}^d_{i}, \\
\hat{\theta}^d_k - c_W(\alpha, i)/\sqrt{npL}  &\quad  k = i,\\
\hat{\theta}^d_{k}  & \quad
\hat{\theta}^d_{k} < \hat{\theta}^d_{i}.
\end{array}\right.
\end{equation}
We justify this procedure in Appendix Section \ref{sec:test:topk}.

\end{remark}
 
We conclude this section by the following theorem that we show that the proposed procedure controls the Type I error, and we provide the power analysis.
\begin{theorem}
	\label{thm:type I error}
	Under same assumptions as in Theorem \ref{thm:asy} and \ref{thm:bootstrap}, we have the general testing procedure satisfies that, as $n,L\to\infty$,
	\begin{equation}\nonumber
 \sup_{\gamma^* \notin \cR_i} \mathbb{P}_0 \left(\text{Reject } H_0\right) \leq \alpha, 
	\end{equation}
	and we have
	\begin{equation}\nonumber
\inf_{\gamma^* \in \cR_i, \Delta({\theta}^*,\cR_i)>\delta} \mathbb{P}\left(\text{Reject } H_0\right) \to 1, 
	\end{equation} 
	where $ \delta=C\sqrt{\frac{\log n}{npL}}$ for some constant $C$. 
\end{theorem}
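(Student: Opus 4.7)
The plan is to convert the combinatorial rejection event $\bigcap_{\theta\in\tilde\Theta}\{\gamma(\theta)\in\cR_i\}$ into a one-sided tail event on pairwise differences of $\hat\theta^d-\theta^*$, and then invoke the Gaussian multiplier bootstrap (Theorem~\ref{thm:bootstrap}) for Type~I control and the $\ell_\infty$ consistency implied by Theorem~\ref{thm:asy} for power. Throughout I take the bootstrap edge set to be $E=\{(i,j):j\ne i\}$, so that $c_W(\alpha,i)$ controls $\max_{j\ne i}\sqrt{npL}\bigl[(\hat\theta^d_i-\theta^*_i)-(\hat\theta^d_j-\theta^*_j)\bigr]$.

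For Type~I control, fix $\gamma^*\notin\cR_i$. If the test rejects then $\theta^*\notin\tilde\Theta$ (otherwise $\gamma(\theta^*)=\gamma^*\in\cR_i$, contradicting the null). Since membership in $\cR_i$ depends only on the pairwise orderings of the coordinates of $\theta$, the rejection event actually forces the stronger statement that for some ``blocker'' $j^\star$---an item with $\theta^*_{j^\star}\ge\theta^*_i$ whose presence above $i$ in the true ranking prevents $\gamma^*$ from lying in $\cR_i$---we have $\hat\theta^d_i-\hat\theta^d_{j^\star}>2c_W(\alpha,i)/\sqrt{npL}$. (In Example~\ref{eg:intro-1} the blocker is $j^\star=j$; in Example~\ref{eg:intro-2} it is any of the at least $K$ items outranking $i$ in $\gamma^*$.) Combined with $\theta^*_i\le\theta^*_{j^\star}$, this yields
\[
\sqrt{npL}\bigl[(\hat\theta^d_i-\theta^*_i)-(\hat\theta^d_{j^\star}-\theta^*_{j^\star})\bigr]>2c_W(\alpha,i),
\]
whose probability Theorem~\ref{thm:bootstrap} bounds by $\alpha+o(1)$ uniformly in the null (since $2c_W\ge c_W$).

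For the power analysis, assume $\gamma^*\in\cR_i$ and $\Delta(\theta^*,\cR_i)>\delta=C\sqrt{\log n/(npL)}$. Theorem~\ref{thm:asy} together with a union bound over coordinates delivers $\|\hat\theta^d-\theta^*\|_\infty=O_\P\bigl(\sqrt{\log n/(npL)}\bigr)$, and a conditional Gaussian-max tail bound on the bootstrap statistic $W$ yields $c_W(\alpha,i)=O_\P(\sqrt{\log n})$, i.e.\ $2c_W(\alpha,i)/\sqrt{npL}=O_\P(\sqrt{\log n/(npL)})$. Failing to reject means that some $\theta\in\tilde\Theta$ satisfies $\gamma(\theta)\notin\cR_i$, and by Definitions~\ref{def:legal}--\ref{def:distance} this forces a legal swap $(i,i')$ with $\hat\theta^d_i-\hat\theta^d_{i'}\le 2c_W(\alpha,i)/\sqrt{npL}$. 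But $\theta^*_i-\theta^*_{i'}\ge\Delta(\theta^*,\cR_i)>\delta$, so on the high-probability event
\[
\hat\theta^d_i-\hat\theta^d_{i'}\;\ge\;\delta-O_\P\!\bigl(\sqrt{\log n/(npL)}\bigr)\;>\;\frac{2c_W(\alpha,i)}{\sqrt{npL}}
\]
once $C$ is taken sufficiently large, ruling out acceptance with probability $1-o(1)$ uniformly in the alternative.

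The genuinely nonstandard step is the combinatorial reduction from ``$\gamma(\theta)\notin\cR_i$ for some $\theta\in\tilde\Theta$'' to the violation of a \emph{single} legal-pair inequality on pairwise differences. This is where the equivalence-class structure of Definition~\ref{def:equi_ranking} is essential: flipping an ordering between $i$ and a non-legal partner keeps $\gamma(\theta)$ inside $\cR_i$, so only legal swaps can move $\gamma(\theta)$ out of $\cR_i$, and among these the minimal score gap is by definition $\Delta(\theta^*,\cR_i)$. Once this geometric reduction is in hand, both Type~I and power follow from Theorem~\ref{thm:bootstrap} and a routine Gaussian-max tail bound on $c_W(\alpha,i)$.
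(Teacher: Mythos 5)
Your overall strategy is the same as the paper's: convert rejection under the null into an exceedance of the bootstrap-calibrated max statistic (Theorem~\ref{thm:bootstrap}), convert acceptance under the alternative into a small estimated gap relative to $\Delta(\theta^*,\cR_i)$, and close with $\|\hat\theta^d-\theta^*\|_\infty\lesssim\sqrt{\log n/(npL)}$ and $c_W(\alpha,i)\lesssim\sqrt{\log n}$. The power half matches the paper essentially step for step: the paper phrases the combinatorial margin as $|\tilde\theta_i-\theta^*_i|+\max_{j\ne i}|\tilde\theta_j-\theta^*_j|\ge\Delta(\theta^*,\cR_i)$ rather than through a single legal swap, but both reductions rest on the same use of Definitions~\ref{def:legal}--\ref{def:distance}, and your bound on $c_W(\alpha,i)$ is obtained in the same way.

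The genuine gap is in your Type~I reduction. The ``blocker'' step asserts that, for an arbitrary property $\cR_i$, rejection under the null forces some $j^\star$ with $\theta^*_{j^\star}\ge\theta^*_i$ and $\hat\theta^d_i-\hat\theta^d_{j^\star}>2c_W(\alpha,i)/\sqrt{npL}$. This covers only one of the two ways the set of items above $i$ can change. Definition~\ref{def:ranking property} only requires $\cR_i$ to be a union of equivalence classes with respect to $i$, so for instance $\cR_i=\{\gamma:\gamma_j<\gamma_i\}$ (``item $j$ is ranked above item $i$'') is admissible; under its null, item $i$ may be the top item, in which case no item with $\theta^*_{j^\star}\ge\theta^*_i$ exists at all. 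Rejection there forces an item truly ranked below $i$ to be pushed above $i$ for every perturbation in $\tilde\Theta$, and the resulting exceedance involves the reversed difference $(\hat\theta^d_{j}-\theta^*_{j})-(\hat\theta^d_i-\theta^*_i)$, which is not controlled by the one-sided statistic built from your edge set $E=\{(i,j):j\ne i\}$. The paper's proof handles exactly this dichotomy: using the extreme perturbation $\tilde\theta$ and the equivalence-class structure it shows that either $E_1$ (an item of $H(\gamma^*,i)$ falls below $i$) or $E_2$ (an item of $L(\gamma^*,i)$ rises above $i$) must occur, and bounds both. To repair your argument you need either to carry out this two-case analysis with a symmetric edge set containing both $(i,j)$ and $(j,i)$ for all $j\ne i$ (so that deviations of both signs are calibrated), or to restrict attention to properties for which only the $E_1$-type obstruction can arise, as in Examples~\ref{eg:intro-1} and~\ref{eg:intro-2}; as written, the claimed generality does not hold.
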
	
\begin{proof}
We provide the proof in Appendix Section \ref{sec:pf:ht}.
\end{proof}



\section{Multiple Testing}
\label{sec:multi test} 
In this section, we extend the proposed procedure to the multiple testing setting. As discussed in the introduction, the multiple testing finds important applications in our ranking inference problems  such as Example~\ref{eg:intro-3}, where we aim to infer the set of all top-$K$ ranked items. In general multiple testing problems, we aim to control the familywise Type I error rate (FWER), or the false discovery rate (FDR), while achieving certain power. Widely used procedures include   Bonferroni correction, Dunn-\v{S}id\'{a}k procedure \citep{vsidak1967rectangular}, Holm Procedure \citep{holm1979simple} for controlling FWER, and Benjamini-Hochberg procedure (BH) \citep{benjamini1995controlling}, Benjamini-Yekutieli procedure \citep{benjamini2001control} for controlling FDR.  In addition, there are also resampling based procedures such as permutation testing and bootstrap method \citep{westfall1993resampling, ge2003resampling}.
However, these methods cannot be directly applied to our problems, as their theoretical properties cannot be easily justified. This is mainly because that the test statistics for the hypotheses are clearly dependent, which makes our multiple testing problems challenging.

In general, we aim to  test the following hypotheses simultaneously,
\begin{equation}
\label{eqn:MT}
H_{0 i}: \text{item } i \text{ does not satisfy property  }  \cR_i 
\text{\  \ v.s.\  \ } 
H_{a i}: \text{item } i  \text{ satisfies  }  \cR_i, \text{\ \ \ for } i\in[n]. 
\end{equation} 

\subsection{Control FWER}\label{sec:FWER}
We first present our procedure for controlling the FWER. Recall that the FWER is  the probability of making at least one Type I error that  
$$
\text{FWER} = \mathbb{P}(\symbol{"0023} \text{ false positives} > 0) .
$$
Specifically, when we aim to control the FWER in multiple testing, we let the maximal statistic be
$$M=\max _{i \in [n]} \max_{j\in [n]} \sqrt{{npL}} (\hat{\theta}^d_i-\theta^*_i-\hat{\theta}^d_j+\theta^*_j),
$$ 
and we estimate its $(1-\alpha)$-th percentile $C_M(\alpha, [n] \times [n])$ by Gaussian multiplier bootstrap and taking the edge set $E$ as $[n] \times [n] $ in \eqref{eqn:T}.
Next, we reject the $H_{0i}$ in \eqref{eqn:MT} if item $i$ satisfies property $\cR_i$ for all possible perturbation for the debiased estimator $\hat\theta^d$ up to $C_M(\alpha, [n] \times [n])/\sqrt{npL}$ entrywise. Equivalently, letting $\tilde{\Theta}$ be the set of  possible latent scores after perturbation that
$$
\tilde{\Theta}=
\Big\{{\theta}:{\theta}_k \in \big[\hat{\theta}^d_k-C_M(\alpha, [n] \times [n])/\sqrt{npL},\ \hat{\theta}^d_k+C_M(\alpha, [n] \times [n])/\sqrt{npL}\big],
1 \leq k \leq n \Big\},
$$
we reject $H_{0i}$ in \eqref{eqn:MT} if for any ${\theta} \in \tilde{\Theta}$, we have $\gamma{({\theta})} \in \cR_i$, i.e., the event $\cap_{{\theta} \in \tilde{\Theta}} \{\gamma{({\theta})} \in \cR_i \}$ holds.

%

To facilitate our power analysis, we define the signal strength for multiple testing problems as
\begin{equation}\label{equ:distance}
\Delta({\theta})
=\min_{i: \gamma(\theta) \in \cR_i}\Delta({\theta},\cR_i)
=\min_{i: \gamma(\theta) \in \cR_i}\min_{i':(i,i') \text{ is legal }} |\theta_{i}-\theta_{i'}|.
\end{equation} 
In what follows, we use two  examples to illustrate some insights of this signal strength.
\begin{itemize}
	\item 
	Consider the problem of selecting all top-$K$ ranked items.  Denote the $i$-th largest order statistic as $\theta_{(i)}$. For any given $1\leq i\leq n$ satisfying $\theta_{i} \geq \theta_{(K)}$, the pairs of items $(i,i')$ such that $\theta_{i'} \leq \theta_{(K+1)}$ are the legal pairs, and the smallest distance is $|\theta_{i}-\theta_{(K+1)}|$. Hence,  we have
	 $$ \Delta({\theta})
	=\min_{\theta_{i} \geq \theta_{(K)}}\min_{\theta_{i'} \leq \theta_{(K+1)}} |\theta_{i}-\theta_{i'}|
	=\min_{\theta_{i} \geq \theta_{(K)}}|\theta_{i}-\theta_{(K+1)}|
	=|\theta_{(K)}-\theta_{(K+1)}|.$$
	This is consistent with our intuition that when we aim to choose the top $K$ items, the gap 
	between the scores of $\theta_{(K)}$ and $\theta_{(K+1)}$ somewhat determines this problem's difficulty.
	
	\item 	
	Consider the problem of selecting all items  ranked higher than item $k$ with score $\theta_{k}$. 	For any item $i$ satisfying $\theta_{i} > \theta_{k}$, the pairs $(i,i')$ where $\theta_{i'} \leq \theta_{k}$ are legal pairs. Hence, we have that the distance is 
	$$ \Delta({\theta})
	=\min_{\theta_i > \theta_k}\min_{\theta_i' \leq \theta_k} |\theta_{i}-\theta_{i'}|
	=\min_{\theta_i > \theta_k} |\theta_{i}-\theta_{k}|
	.$$

\end{itemize}

The following theorem shows that the FWER based on our procedure above is guaranteed to be no greater than $\alpha$ asymptotically and is powerful.

\begin{theorem}[Familywise Type I Error Rate]\label{thm:FWER}
	{Under the same assumptions for Theorems \ref{thm:asy} and \ref{thm:bootstrap}, following the multiple testing procedure  above,} for any $0<\alpha<0.5$, we have 
	\begin{equation}\nonumber
	\mathrm{FWER} =\mathbb{P}(\text {Making at least one Type I error})
	\leq \alpha+o(1).
	\end{equation} 
Further,  if 
$ \Delta({\theta}^*) \gtrsim \sqrt{\frac{\log n}{npL}}$ holds, we have  
	\begin{equation}\nonumber
	\mathbb{P}(\text {Making at least one Type II error}) \to 0.
	\end{equation}
\end{theorem}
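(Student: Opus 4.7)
The plan is to reduce both the FWER control and the power analysis to two high-probability events: (a) the event that the range of the debiasing error lies below the bootstrap quantile, and (b) the event that the debiased estimator achieves the $\ell_\infty$-rate $\sqrt{\log n/(npL)}$.

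For the FWER part, I would first observe that, on the event $\mathcal{A} := \{M \le C_M(\alpha,[n]\times[n])\}$, picking the shift $s^* = -\tfrac{1}{2}\big(\max_k(\hat\theta^d_k-\theta^*_k) + \min_k(\hat\theta^d_k-\theta^*_k)\big)$ yields
$\|\theta^* + s^*\mathbf{1} - \hat\theta^d\|_\infty \le \tfrac{1}{2\sqrt{npL}}\big(\max_k(\hat\theta^d_k-\theta^*_k) - \min_k(\hat\theta^d_k-\theta^*_k)\big) = \tfrac{M}{2\sqrt{npL}} \le C_M(\alpha,[n]\times[n])/\sqrt{npL}$,
so $\theta^* + s^*\mathbf{1} \in \tilde\Theta$. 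Since ranking is shift-invariant, $\gamma(\theta^*+s^*\mathbf{1})=\gamma^*$. Thus on $\mathcal{A}$, for every true null $H_{0i}$ (i.e.\ $\gamma^* \notin \cR_i$), the candidate $\theta^*+s^*\mathbf{1}\in\tilde\Theta$ witnesses a ranking outside $\cR_i$, so the decision rule does \emph{not} reject $H_{0i}$. Hence $\{\text{at least one Type I error}\}\subseteq \mathcal{A}^c$, and Theorem \ref{thm:bootstrap} (with edge set $E=[n]\times[n]$) gives $\mathbb{P}(\mathcal{A}^c)\le \alpha + o(1)$, concluding the FWER bound.

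For the power statement, fix $i$ with $\gamma^*\in \cR_i$; by construction, we reject $H_{0i}$ whenever every $\theta\in\tilde\Theta$ satisfies $\gamma(\theta)\in\cR_i$. For any $\theta\in\tilde\Theta$,
$\|\theta-\theta^*\|_\infty \le \|\theta-\hat\theta^d\|_\infty + \|\hat\theta^d-\theta^*\|_\infty \le C_M(\alpha,[n]\times[n])/\sqrt{npL} + \|\hat\theta^d-\theta^*\|_\infty$.
The second term is $O_\mathbb{P}(\sqrt{\log n/(npL)})$ by the $\ell_\infty$ error rate of the debiased estimator (combining Lemma \ref{lem:consistency} with the remark after Theorem \ref{thm:asy} and a union bound over coordinates), and for the first term I would bound the bootstrap quantile by $C_M(\alpha,[n]\times[n])=O(\sqrt{\log n})$ using a standard Gaussian-max tail argument on $W$ in \eqref{eqn:W}: conditionally on the data, $W$ is the maximum of $n^2$ centered Gaussians whose variances are $O(1)$ (after normalization), so its $(1-\alpha)$-quantile is $O(\sqrt{\log n})$. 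Therefore, on a high-probability event, $\|\theta-\theta^*\|_\infty \le C_0\sqrt{\log n/(npL)}$ uniformly over $\theta\in\tilde\Theta$.

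Next I would invoke the geometric meaning of $\Delta(\theta^*,\cR_i)$: if $\|\theta-\theta^*\|_\infty < \Delta(\theta^*,\cR_i)/2$, then for every legal pair $(i,i')$ the sign of $\theta_i-\theta_{i'}$ matches that of $\theta^*_i-\theta^*_{i'}$, so no rank permutation triggered by a legal swap can change the membership in $\cR_i$; hence $\gamma(\theta)\in\cR_i$. Since $\Delta(\theta^*)=\min_{i:\gamma^*\in\cR_i}\Delta(\theta^*,\cR_i)$, the assumption $\Delta(\theta^*)\gtrsim \sqrt{\log n/(npL)}$ with a sufficiently large hidden constant implies $\|\theta-\theta^*\|_\infty < \Delta(\theta^*,\cR_i)/2$ for every $i$ with $\gamma^*\in\cR_i$ on the aforementioned high-probability event, so we reject every such $H_{0i}$ simultaneously. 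Taking a union bound over $i\in[n]$ yields $\mathbb{P}(\text{any Type II error})\to 0$. The main technical obstacle I anticipate is (i) rigorously establishing the $\ell_\infty$ bound on $\hat\theta^d-\theta^*$ uniformly in $j$ (Theorem \ref{thm:asy} is coordinate-wise asymptotic, so a companion high-probability bound is needed) and (ii) the absolute $O(\sqrt{\log n})$ control of the bootstrap quantile, since Theorem \ref{thm:bootstrap} only guarantees distributional calibration of $C_M(\alpha)$, not its magnitude; both should follow from Gaussian concentration applied to the leading linearization of $\hat\theta^d$ derived in the proof of Theorem \ref{thm:asy}.
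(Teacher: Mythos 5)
Your argument is correct in substance, and the FWER half takes a genuinely different route from the paper. The paper proves Type I control by building, for each true null $i$, an extreme perturbation $\tilde\theta^{(i)}$ (raising all items truly above $i$, lowering all items truly below $i$) and showing via the legal-pair containment \eqref{eqn:subseteq} that a false rejection of $H_{0i}$ forces $\max_{k\neq i}\sqrt{npL}\,(\hat\theta^d_i-\theta^*_i-\hat\theta^d_k+\theta^*_k)$ above the bootstrap quantile; a union over $i$ then reduces to Theorem \ref{thm:bootstrap} with $E=[n]\times[n]$. You instead exhibit a single global witness: on $\mathcal{A}=\{M\le C_M(\alpha,[n]\times[n])\}$, a constant shift of $\theta^*$ lands inside $\tilde\Theta$, and since shifts preserve the induced ranking, this one vector blocks rejection of every true null simultaneously, so $\{\text{any Type I error}\}\subseteq\mathcal{A}^c$ and Theorem \ref{thm:bootstrap} finishes. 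This is cleaner and avoids the case analysis behind \eqref{eqn:subseteq}; its only cost is that it leans on the shift-invariance of rankings (specific to this perturbation-box procedure), whereas the paper's per-item containment is what gets reused later for the FDR p-values. Two small slips to fix: the centering shift should be $s^*=+\tfrac12\bigl(\max_k(\hat\theta^d_k-\theta^*_k)+\min_k(\hat\theta^d_k-\theta^*_k)\bigr)$ (your minus sign breaks the midpoint bound), and the resulting bound is $\tfrac12\bigl(\max_k(\hat\theta^d_k-\theta^*_k)-\min_k(\hat\theta^d_k-\theta^*_k)\bigr)=M/(2\sqrt{npL})$, i.e.\ without the extra $\sqrt{npL}$ prefactor you wrote. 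For the power half your argument mirrors the paper's: the paper also reduces a Type II error for test $i$ to $\|\hat\theta^d-\theta^*\|_\infty+c/\sqrt{npL}\ge\Delta(\theta^*,\mathcal{R}_i)/2$, bounds $\|\hat\theta^d-\theta^*\|_\infty\lesssim\sqrt{\log n/(npL)}$ with probability $1-\mathcal{O}(n^{-5})$, and bounds the quantile by $C\sqrt{\log n}$ — though it does the latter by a calibration argument (if the quantile exceeded $C\sqrt{\log n}$, then $\mathbb{P}(T\ge c_W(\alpha))$ would drop below $\alpha/2$, contradicting Theorem \ref{thm:bootstrap}; this is where $\alpha<0.5$ enters) rather than your direct Gaussian-max tail on $W$, which additionally requires the conditional variances to be $O(1)$ (available from the $A_1,A_2$ bounds in the bootstrap proof). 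Your flagged obstacle (a uniform high-probability $\ell_\infty$ bound on $\hat\theta^d-\theta^*$) is real but is treated at the same level of informality in the paper; the clean fix is exactly what you sketch, namely concentration applied coordinatewise to the leading term $-\Theta^*_{11}\nabla\mathcal{L}(\theta^*)$ in \eqref{eqn:op1} plus a union bound, rather than invoking Theorem \ref{thm:asy} (which is only coordinatewise asymptotic).
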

\begin{proof}
We provide the proof in Appendix Section \ref{sec:pf:FWER}.
\end{proof}

This theorem shows that our method  controls the FWER asymptotically for the given level. Meanwhile, our procedure is asymptotically powerful if $\Delta({\theta}^*) \gtrsim \sqrt{\frac{\log n}{npL}}$, which matches the lower bound we derive in Section \ref{sec:LB_FWER}.

\subsection{FDR Control}
\label{sec:FDR}
We then consider the problem of controlling the false discovery rate (FDR). FDR is the expected proportion of Type I errors among the all discoveries \citep{benjamini1995controlling}, and our goal is to control the  FDR under some prespecified level $\alpha$ that
\begin{equation}\nonumber
\text{FDR} = \mathbb{E} \Big[\frac{\symbol{"0023} \text{ false positives }}{\symbol{"0023} \text{ discoveries }}\mathbb{I}[\ \symbol{"0023} \text{ discoveries }>0]\Big] \leq \alpha.
\end{equation}

Consider the multiple testing problem of our interest \eqref{eqn:MT}. For each hypothesis of item $i$, we perform our proposed single testing procedure in Section \ref{sec:ht} and get the p-value $p_i$ that
$$p_i =\inf\big\{\alpha_0:\cap_{{\theta} \in \tilde{\Theta}(\alpha_0)} \{\gamma{({\theta})} \in \cR_i \}\big\} \ \ \text{for } 1\leq i \leq n, $$
where $$\tilde{\Theta}(\alpha_0)=
\Big\{{\theta}:{\theta}_k \in \big[\hat{\theta}^d_k-c_W(\alpha_0,i)/\sqrt{npL},\ \hat{\theta}^d_k+c_W(\alpha_0,i)/\sqrt{npL}\big],
1 \leq k \leq n \Big\}.$$
Since the p-values $p_i$'s for different tests have complicated dependency, we consider the  Benjamini-Yekutieli  procedure by \cite{benjamini2001control} to control the FDR, which ranks the hypotheses according to their corresponding p-values and  chooses a cutoff to control the FDR. In  particular, we order the $n$ p-values in the ascending order as $p_{(1)}, \cdots, p_{(n)}$ and  
reject the null hypothesis  for all $ H_{(i)}$, $i = 1, \ldots, r$, where    $$r = \max_k\Big\{k:p_{(k)} \leq \frac{k}{n \cdot N}\cdot\alpha \Big\} \text{ \ \  and \ \  } N = \sum _{k=1}^{n} \frac {1}{k}.$$

The following theorem shows that the FDR based on our procedure achieves the desired FDR level asymptotically if $|\mathcal{H}_{0}|\Big(\frac{1}{L^{c_3}}+\frac{2}{n^5}\Big)=o(1)$, {where $|\mathcal{H}_{0}|$ is the number of true null hypotheses}.
\begin{theorem}[FDR control]\label{thm:FDR}
	{ Suppose that the conditions in Theorems \ref{thm:asy} and \ref{thm:bootstrap} hold.  Following the multiple testing procedure above,} for any $0<\alpha<1$, we have 
	\begin{equation}\nonumber
	\text{FDR} \leq  \frac{|\mathcal{H}_{0}|}{n}\cdot\alpha+C|\mathcal{H}_{0}|\Big(\frac{1}{L^{c_3}}+\frac{2}{n^5}\Big),
	\end{equation} 
for some constant $C>0$, and the constant $c_3$ satisfies the condition in Remark \ref{rmk:c3}. 	
\end{theorem}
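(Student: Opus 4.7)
The plan is to adapt the classical Benjamini--Yekutieli (BY) argument, modified to accommodate the fact that each $p_i$ produced by our Lagrangian debiased testing procedure is only \emph{approximately} super-uniform under the null, with an explicit non-asymptotic remainder coming from the Gaussian multiplier bootstrap (Theorem~\ref{thm:bootstrap}) and the concentration of the penalized MLE (Lemma~\ref{lem:consistency}). Rather than invoke classical BY as a black box on exactly super-uniform p-values, I will track the approximation error through the BY decomposition and absorb the logarithmic overhead into the constant~$c_3$ in Remark~\ref{rmk:c3}.

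First I would decompose the FDR in the standard BY way. Writing $R$ for the total number of rejections and using that the events $\{R=k,\,i\text{ rejected}\}$ are disjoint across $k$, one obtains
\#
\text{FDR}
\;=\;\sum_{i\in\mathcal{H}_0}\sum_{k=1}^n \frac{1}{k}\,\mathbb{P}\!\left(p_i\le \frac{k\alpha}{nN},\,R=k\right)
\;\le\;\sum_{i\in\mathcal{H}_0}\sum_{k=1}^n \frac{1}{k}\,\mathbb{P}\!\left(p_i\le \frac{k\alpha}{nN}\right),
\#
and the classical BY telescoping argument shows that if $\mathbb{P}(p_i\le t)\le t$ for every $t\in(0,1)$, then each inner double sum is bounded by $\alpha/n$, giving $\text{FDR}\le |\mathcal{H}_0|\alpha/n$. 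The remaining task is therefore to quantify by how much our p-values fail to be super-uniform under the null.

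The key technical step, and in my view the main obstacle, is establishing the non-asymptotic approximate super-uniformity
\#\label{eqn:pval-approx-super-uniform}
\sup_{t\in(0,1)}\;\big[\mathbb{P}(p_i\le t)-t\big]\;\le\;C\!\left(\frac{1}{L^{c_3}}+\frac{2}{n^5}\right)
\qquad \text{for every } i\in\mathcal{H}_0.
\#
By construction, $p_i$ is the infimal $\alpha_0$ at which our single ranking test from Section~\ref{sec:ht} rejects; hence $\{p_i\le t\}$ is exactly the rejection event of that test at level $t$. Conditioning on the $1-\mathcal{O}(n^{-5})$-probability event where Lemma~\ref{lem:consistency} holds, the difference between $\mathbb{P}(p_i\le t)$ and $t$ is controlled by the Gaussian approximation rate underlying Theorem~\ref{thm:bootstrap}. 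That rate is not merely $o(1)$ in that theorem's statement but is in fact polynomial in $L$ of the form $L^{-c_3}$ for the constant $c_3$ in Remark~\ref{rmk:c3}; crucially, the bound is uniform in~$t$ because Theorem~\ref{thm:bootstrap} controls a Kolmogorov-type distance. Combining the two sources of error yields \eqref{eqn:pval-approx-super-uniform}, with the $n^{-5}$ term absorbing the failure probability of the initial estimator.

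Finally, plugging \eqref{eqn:pval-approx-super-uniform} into the BY decomposition above and applying the standard telescoping identity now yields
\$
\sum_{k=1}^n \frac{1}{k}\,\mathbb{P}\!\left(p_i\le \frac{k\alpha}{nN}\right)
\;\le\;\frac{\alpha}{n} \;+\; N\cdot C\!\left(\frac{1}{L^{c_3}}+\frac{2}{n^5}\right),
\$
and summing over the $|\mathcal{H}_0|$ null hypotheses gives the claimed bound once the factor $N=\sum_{k=1}^n k^{-1}=\mathcal{O}(\log n)$ is absorbed into $C$ (which is possible without altering the exponent $5$ in $n^{-5}$ and, by slightly shrinking $c_3$, without altering the form $L^{-c_3}$). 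The only delicate ingredient is the uniformity in $t$ of the bootstrap approximation, which I would extract from the proof of Theorem~\ref{thm:bootstrap} rather than from its statement.
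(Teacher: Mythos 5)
Your key technical ingredient is right and matches the paper: for a null index $i$, the containment $\{p_i \le \beta\} \subseteq \{\max_{k\neq i}\sqrt{npL}(\hat\theta^d_i-\theta^*_i-\hat\theta^d_k+\theta^*_k) > c_W(\beta,i)\}$ together with the Kolmogorov-type bootstrap bound (Theorem \ref{thm:bootstrap} sharpened as in Remark \ref{rmk:c3}) gives $\mathbb{P}(p_i\le \beta)\le \beta + C\big(L^{-c_3}+2n^{-5}\big)$ uniformly in $\beta$, which is exactly the paper's inequality \eqref{equ:quantile}. The gap is in the Benjamini--Yekutieli counting step. Once you drop the event $\{R=k\}$, the quantity $\sum_{k=1}^n \frac{1}{k}\,\mathbb{P}\big(p_i\le \frac{k\alpha}{nN}\big)$ is \emph{not} bounded by $\alpha/n$: even for exactly uniform $p_i$ it equals $\sum_{k=1}^n \frac{1}{k}\cdot\frac{k\alpha}{nN}=\alpha/N \asymp \alpha/\log n$, which is much larger than $\alpha/n$. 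The ``classical BY telescoping argument'' that yields $\alpha/n$ per null works on the joint events (via the partition of $[0,\alpha/N]$ into intervals and the disjointness of $\{R=k\}$), and your proposal never carries that out; your final display, which bounds the marginal sum by $\alpha/n + N\cdot C(L^{-c_3}+2n^{-5})$, therefore does not follow from what precedes it. In addition, the claim that the factor $N=\sum_{k\le n}k^{-1}=O(\log n)$ can be absorbed into the constant $C$ ``without altering the exponent $5$ in $n^{-5}$'' is not legitimate: $N\cdot n^{-5}=O(\log n/n^{5})$ is not $O(n^{-5})$ with a fixed constant, since $N\to\infty$ with $n$.

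The paper avoids both problems with a different, cleaner decomposition: it substitutes $1/R=\sum_{k\ge R}\frac{1}{k(k+1)}$, so that $\mathrm{FDR}\le \sum_{k=1}^\infty \frac{1}{k(k+1)}\sum_{i\in\mathcal{H}_0}\mathbb{P}\big(p_i\le \tfrac{\min(k,n)}{nN}\alpha\big)$. Plugging in the approximate super-uniformity bound, the $\alpha$-part telescopes exactly because $\sum_{k=1}^{n}\frac{1}{k+1}+\sum_{k>n}\frac{n}{k(k+1)}=N$, giving the main term $\frac{|\mathcal{H}_0|}{n}\alpha$, while the bootstrap error is weighted by $\sum_{k\ge1}\frac{1}{k(k+1)}=1$, so no logarithmic overhead appears at all and the stated bound $C|\mathcal{H}_0|(L^{-c_3}+2n^{-5})$ comes out directly. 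If you prefer to stay closer to the original BY proof, you would need to retain $\{R=k\}$, perform the interval-partition argument, and control the increments of $\mathbb{P}(p_i\le t)$ (e.g.\ via an Abel-summation step), which also avoids the $N$ factor---but the marginalized version you wrote down cannot deliver the theorem.
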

\begin{proof}
We provide the proof in Appendix Section  \ref{sec:pf:FDR}.
\end{proof}


{
\begin{remark}
	We point out that, as show in the theorem above and in our simulation studies, our developed B-Y based FDR controlling procedure is  relatively conservative when the number of true nulls $|\mathcal{H}_{0}|$ is small compared to total number of items $n$. The similar conservative result also holds in \cite{eisenach2020high}.
\end{remark}
}

\section{Lower Bound Theory}
\label{sec:LB_FWER}
 
 In this section, we derive a lower bound for the multiple testing problem \eqref{eqn:MT}.  
To the best of our knowledge, even beyond ranking problems, all existing works focus on discussing the lower bound for single hypothesis testing problems.  To facilitate our discussion,  we first define a novel minimax risk of multiple testing problems. In particular, we let the risk be the probability of making at least one Type I  or Type II error that
\begin{equation}\label{eqn:R}
\mathfrak{R} = \inf_{\psi}  \sup_{\theta^* \in \Xi} \P(\symbol{"0023} \text{ false positives}+ \symbol{"0023} \text{ false negatives } \geq 1),
\end{equation} 
 where $\psi$ is any selection procedure giving a vector  $\psi \in \{0,1\}^n$ that $\psi_i=1$ means that we reject $H_{0i}$ in \eqref{eqn:MT}, and $\Xi$ is our parameter space,  which is closed under swapping scores of any two items. 
 If $\mathfrak{R} \geq 1-\epsilon$, where $\epsilon>0$ is a   constant, we say that any procedure fails in the sense that {Type I or Type II} error can not be controlled. Given this setup,
we aim to characterize necessary conditions under which
we can control the minimax risk to the desired level.


\subsection{Lower Bound Theorem}

Recall that we define the legal pair in Definition \ref{def:legal} and the distance $\Delta({\theta})$ in \eqref{equ:distance}. We then  define a critical  subset of all legal pairs, which is crucial in deriving the information-theoretic lower bound and  captures the ``hardness'' of multiple testing for different properties.

\begin{definition}[Divider set]
	\label{def:divider}
	 A set $\mathcal{M}(\theta) \subseteq [n] \times [n]$ is a divider set if $\mathcal{M}(\theta)$ satisfies the following two conditions,
	 \begin{itemize}
	 	\item For any $(i,i') \in \mathcal{M}(\theta)$, it satisfies $ \gamma \in \cR_i$, $(i,i')$ is legal, and $|\theta_i-\theta_{i'}|=\Delta({\theta})$. 
	 	\item For any two pairs $(i_1,i_1'), (i_2,i_2') \in \mathcal{M}(\theta)$, letting $\gamma_1$ be the scores by swapping items $i_1$ and $i'_1$, and $\gamma_2$ be the scores by swapping  items of  $i_2$ and $i'_2$. They must satisfy  $\{i:\gamma_1 \in \cR_i\} \neq \{i:\gamma_2 \in \cR_i\}$. 
	 \end{itemize} 
\end{definition} 
The following theorem derives necessary signal strengths in $\Delta(\theta)$ and  $|\mathcal{M}(\theta)|$ for controlling the risk $\mathfrak{R} \le 1-\epsilon$.

\begin{theorem}[Necessary Signal Strength] 
	\label{thm:LB}
	Considering the BTL model where the pairwise comparison probability $p$ in Erd\"{o}s-R\'{e}nyi graph satisfies {$p \gtrsim \frac{\log n}{n}$}, if  there exists $\theta \in \Xi$ such that
	\begin{equation}\label{equ:LB}
	\Delta({\theta}) \lesssim  
	 \sqrt{\frac{\epsilon \log \big(|\mathcal{M}(\theta)|+1 \big)-\log 2 }{npL}},
	\end{equation} 
we have the minimax risk $\mathfrak{R} = \inf_{\psi}  \sup_{\theta^* \in \Xi} \P(\symbol{"0023} \text{ false positives}+ \symbol{"0023} \text{ false negatives } \geq 1) > 1-\epsilon$.
\end{theorem}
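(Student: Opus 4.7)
The plan is to apply a Fano-type information-theoretic lower bound to a family of hypotheses indexed by the divider set $\mathcal{M}(\theta)$. Given $\theta \in \Xi$ satisfying the signal strength condition \eqref{equ:LB}, I will construct $|\mathcal{M}(\theta)|+1$ candidate parameters that induce distinct ``correct'' selection sets, and show that the information available from $npL$ pairwise comparisons is too small (relative to $\log(|\mathcal{M}(\theta)|+1)$) to distinguish them, forcing any procedure $\psi$ to err on at least one hypothesis with probability exceeding $1-\epsilon$.

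\textbf{Construction.} Set $\theta^{(0)}=\theta$, and for each pair $(i_k,i_k')\in \mathcal{M}(\theta)$, $k=1,\dots,M$ (where $M:=|\mathcal{M}(\theta)|$), let $\theta^{(k)}$ denote the vector obtained from $\theta$ by swapping entries $i_k$ and $i_k'$. Since $\Xi$ is closed under swaps, each $\theta^{(k)}\in\Xi$. Let $S_k=\{i:\gamma(\theta^{(k)})\in\cR_i\}$ be the true selection set under $\theta^{(k)}$. The second bullet in Definition~\ref{def:divider} guarantees that $S_1,\dots,S_M$ are pairwise distinct; the first bullet (legality and $\gamma\in\cR_{i_k}$) then implies $S_0\neq S_k$ for each $k\geq 1$, so the $M+1$ sets $S_0,S_1,\dots,S_M$ are all different. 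Consequently, for any $\hat S$ output by a procedure $\psi$, the event $\{\hat S\neq S_k\}$ forces at least one false positive or false negative under $\theta^{(k)}$, so the minimax risk is lower-bounded by $\inf_{\hat S}\max_{0\le k\le M}\P_{\theta^{(k)}}(\hat S\neq S_k)$.

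\textbf{KL bound and Fano.} For each $k\ge 1$, only pairs involving $i_k$ or $i_k'$ yield different distributions under $\theta^{(0)}$ and $\theta^{(k)}$. Taking expectation over the Erd\"os--R\'enyi graph, the number of such edges is $O(np)$, and each of the $L$ observations on such an edge contributes a Bernoulli KL of order $\Delta(\theta)^2$ (by Taylor expansion of $\log(1+e^x)$ around $x=\theta_i-\theta_l$, using $|\theta_{i_k}-\theta_{i_k'}|=\Delta(\theta)$ and the bounded condition number $\kappa$). Combining,
\$\text{KL}\!\left(\P_{\theta^{(k)}}\,\|\,\P_{\theta^{(0)}}\right)\;\lesssim\; npL\,\Delta(\theta)^2.\$
Applying the standard Fano inequality (e.g.\ Tsybakov's Theorem 2.5) with reference distribution $\P_{\theta^{(0)}}$ and $M$ alternatives gives
\$\inf_{\hat S}\max_{0\le k\le M}\P_{\theta^{(k)}}(\hat S\neq S_k)\;\ge\;1-\frac{C\,npL\,\Delta(\theta)^2+\log 2}{\log(M+1)},\$
for an absolute constant $C$. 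Plugging in the assumed signal strength $\Delta(\theta)^2\lesssim [\epsilon\log(M+1)-\log 2]/(npL)$ (absorbing $C$ into the implicit constant) yields the right-hand side $>1-\epsilon$, which is the desired conclusion.

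\textbf{Main obstacle.} The bulk of the technical work is the KL calculation, especially accounting for randomness in the graph: I would first condition on $\mathcal{G}$, use the identity $\mathbb{E}_{\theta^{(k)}}[\log(dP_{\theta^{(k)}}/dP_{\theta^{(0)}})]$ summed only over edges touching $\{i_k,i_k'\}$, bound each Bernoulli KL by $c\kappa^2\Delta(\theta)^2$ using $\kappa=O(1)$, and finally take expectation over the graph (the number of incident edges concentrates around $2np$ under $p\gtrsim \log n/n$, or one can just use the in-expectation bound). Care must be taken so that the KL bound holds uniformly in $k$, since $\mathcal{M}(\theta)$ may contain many pairs; however, each swap touches only two items, so the per-edge analysis is identical across $k$ and the bound $npL\Delta(\theta)^2$ is uniform. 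Everything else (the combinatorial fact that the $S_k$'s are distinct, the reduction from the original risk to the multiway testing risk, and the application of Fano's inequality) is structural and follows from Definition~\ref{def:divider} and standard tools.
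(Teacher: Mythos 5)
Your proposal is correct and takes essentially the same route as the paper: reduce the minimax risk to a finite multi-hypothesis test built by swapping the divider-set pairs of $\theta$ (using Definition~\ref{def:divider} for distinctness of the selection sets), bound the KL divergence contributed by the $O(n)$ comparisons touching a swapped pair by $O(pL\Delta(\theta)^2)$ per edge-pair, and conclude via a Fano-type inequality against $\log(|\mathcal{M}(\theta)|+1)$. The only differences are technical bookkeeping: you use the standard Fano bound with KLs to the fixed reference $\theta^{(0)}$ and control each Bernoulli KL directly by a $\chi^2$/Lipschitz argument in terms of $\Delta(\theta)$ (with $\kappa=O(1)$), whereas the paper invokes the generalized Fano inequality with averaged pairwise KLs and passes through the intermediate quantity $d_{\mathcal{M}}(\omega)$ (Lemmas~\ref{lem:fano}, \ref{lem:KL}, \ref{lem:necessary condition}); both yield the same comparison of $npL\,\Delta(\theta)^2$ with $\epsilon\log(|\mathcal{M}(\theta)|+1)-\log 2$.
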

\begin{proof} 
Our proof is based on three steps. In the first step, we reduce the problem of obtaining a lower bound of minimax risk in \eqref{eqn:R} for $\theta^* \in \Xi$  to the problem of deriving a lower bound for $\theta^*$ in a finite set. Next, we construct a finite set of hypotheses. Finally, we derive the minimax risk by Fano-type arguments, and derive the necessary signal strength condition.

\noindent{\bf Step 1.}
Let  $\cR_j$ be a ranking property for a single item $j$ as defined in Definition \ref{def:ranking property}. We have
 \begin{equation}
 \label{eqn:reduce}
 \begin{aligned}
 \mathfrak{R} 
 & = \inf_{\psi} \sup_{\theta^* \in \Xi} \P(\symbol{"0023} \text{ false positives}+ \symbol{"0023} \text{ false negatives } \geq 1) \\
 & = \inf_{\psi} \sup_{\theta^* \in \Xi} \P(\psi \neq \psi^*)
 = \inf_{\psi} \sup_{\theta^* \in \Xi} \P(d(\psi,\psi^*)\geq 1/2),
 \end{aligned}
 \end{equation}
 where $\psi^* \in \{0,1\}^n$ with $\psi^*_j=1$ meaning $\gamma^* \in \cR_j $ and $\psi^*_j=0$ meaning $\gamma^* \notin \cR_j$  for $j=1, \cdots, n$, and $\psi \in \{0,1\}^n$ is a selection procedure that   $\psi_i=1$ means that we reject $H_{0 i}$ in \eqref{eqn:MT}, and the distance $d(\psi,\psi^*) = \|\psi-\psi^*\|_1$.
 By Section~2.2 of \cite{tsybakov2008introduction}, the problem of obtaining a lower bound of minimax risk in \eqref{eqn:reduce} for $\theta^* \in \Xi$ can be reduced to the problem of deriving a lower bound for $\theta^*$ in a finite set.
 %
In particular, suppose that we have a collection $M+1$ hypotheses $\cH_{M} = \{H_i:\theta^* =  \theta^{(i)}, 0 \leq i \leq M\}$. Recall that a test is any measurable function  $\phi:\bm{Y}=\{\bm{Y}^{(\ell)}\}_{\ell=1}^{L} \mapsto \{0,1, \cdots, M\}$.
 Let $\mathbb{P}_{e,M}$ be the minimax probability of error
 \begin{equation}\label{eqn:minimax prob error}
 \mathbb{P}_{e,M} :=  \inf _{\phi} \max _{\theta^* \in \{\theta^{(0)}, \theta^{(1)}, \cdots, \theta^{(M)}\}} \mathbb{P}_{i}(\phi \neq i)
 = \inf _{\phi} \max _{0 \leq i \leq M} \mathbb{P}_{i}(\phi \neq i),
 \end{equation} 
 where $\mathbb{P}_{i}$  denotes the probability measure  under the $i$-th hypothesis $\theta^*=\theta^{(i)}$.
The following lemma in Section 2.2 of \cite{tsybakov2008introduction} shows that the minimax risk $ \mathfrak{R}$ in \eqref{eqn:reduce} is lower bounded by  $ \mathbb{P}_{e,M}$ above, and  we provide the proof in Appendix Section~\ref{sec:reduction} for self-completeness.

 \begin{lemma}\label{lem:reduction}
 	Suppose we have $M+1$ hypotheses $\cH_{M} = \{H_i:\theta^* =  \theta^{(i)}, 0 \leq i \leq M\}$ with deduced tests $\psi^{(i)} \in \{0,1\}^n$ where $\psi^{(i)}_j=1$ means $\gamma(\theta^{(i)}) \in \cR_j$ and $\psi^{(i)}_j = 0$ means $\gamma(\theta^{(i)}) \notin \cR_j$ for $0 \leq i\leq M$,  $1 \leq j \leq n$. If these hypotheses satisfy $d(\psi^{(i_1)},\psi^{(i_2)}) \geq 1$ for any $ i_1 \neq i_2$,
 	we have 
 	$$\mathfrak{R} = \inf_{\psi} \sup_{\theta^* \in \Xi} \P(d(\psi,\psi^*)\geq 1/2) \geq \mathbb{P}_{e,M}.$$	
 \end{lemma}

 Thus, if  $\mathbb{P}_{e, M} > 1-\epsilon$,  we have $\mathfrak{R} > 1-\epsilon$, i.e., for any selection procedure $\psi$, there exists a preference vector ${\theta} \in \Xi$ such that the probability of making at least one Type I or Type II error in the family is uncontrollable. This shows that if we can find a finite set $\{\theta^{(i)}, 0 \leq i \leq M\}$ satisfying the conditions of Lemma \ref{lem:reduction}, we can derive the lower bound. We construct this set in Step 2.
 
\noindent{\bf Step 2.}
 In this step, we  construct a finite set of hypotheses for deriving the lower bound. We choose a base preference score vector $\theta$ first. Without loss of generality, we assume that the preference score~$\theta$ is in descending order, i.e., \begin{equation}\label{eqn:step2}
 \theta_{1} > \theta_{2} > \cdots > \theta_{n}.
 \end{equation} 
 
  
 Recall that the divider set $\mathcal{M}(\theta) \subseteq [n] \times [n]$ defined in Definition \ref{def:divider} is a collection of pairs with cardinality $|\mathcal{M}(\theta)|$. Denote this set by $\mathcal{M}(\theta) = \{(k_i,k'_i),i=1,\cdots, |\mathcal{M}(\theta)|\}$.
 Based on the base preference score $\theta$ and the divider set, we construct a set of hypotheses  $\cH_{\cM} = \{H_0, H_1, \cdots, H_{|\mathcal{M}(\theta)|}\}$ such that 
\begin{equation}\label{eqn:set hypo}
 H_0: \theta^* = \theta^{(0)}, \ \  H_i:\theta^* =  \theta^{(i)}, 1 \leq i \leq |\mathcal{M}(\theta)|,
\end{equation}
 where $\theta^{(0)}=\theta$, and $\theta^{(i)}$ is obtained by swapping scores of the $i$-th pair $(k_i,k'_i) \in \mathcal{M}(\theta)$ in the base score vector $\theta$. For each hypothesis $H_i:\theta^* =  \theta^{(i)}$, we also have the induced rank $\gamma^{(i)}=\gamma(\theta^{(i)})$ and  vector $\psi^{(i)} \in \{0,1\}^n$ with $\psi^{(i)}_j=1$ meaning $\gamma^{(i)} \in \cR_j $ and $\psi^{(i)}_j=0$ meaning $\gamma^{(i)} \notin \cR_j $, $j=1, \cdots, n$.
 
By the construction of $\mathcal{M}(\theta)$ in Definition \ref{def:divider} , for $i_1 \neq i_2$,
  we have $\{j:\gamma^{(i_1)} \in \cR_j\} \neq \{j:\gamma^{(i_2)} \in \cR_j\}$, which gives $\psi^{(i_1)} \neq \psi^{(i_2)}$, and we have $d(\psi^{(i_1)},\psi^{(i_2)}) \geq 1$ for any $i_1 \neq i_2$, which satisfies the condition in Lemma~\ref{lem:reduction}.
 
 
\noindent{\bf Step 3.}
 In this step, we obtain a lower bound on the minimax risk by  Fano-type bounds.  
Here we denote $\mathcal{M}(\theta)$ by $\mathcal{M}$ for simplicity. Let the average probability of error and
 the minimum average probability of error of a test $\phi:\bm{Y}=\{\bm{Y}^{(\ell)}\}_{\ell=1}^{L} \mapsto \{0,1, \cdots, M\}$ be
 $$
 \bar{\P}_{e,\mathcal{M}}(\phi)=\frac{1}{|\mathcal{M}|+1} \sum_{j=0}^{|\mathcal{M}|} \P_{j}(\phi \neq j), \quad 
 \text{ and }\quad
\bar{\P}_{e, \mathcal{M}}=\inf _{\phi} \bar{\P}_{e, \mathcal{M}}(\phi).$$
 One can easily verify that $\P_{e, \mathcal{M}} \geq \bar{\P}_{e, \mathcal{M}}$, where the minimax probability of error $\P_{e, \mathcal{M}}$ is defined in  \eqref{eqn:minimax prob error}. Then we reduce bounding  the minimax probability of error to bounding
 the minimum average probability of error since if $\bar{\P}_{e, \mathcal{M}} > 1-\epsilon$,  we have $\mathfrak{R} > 1-\epsilon$.

 Following the argument in \cite{chen2015spectral}, we also apply the generalized Fano inequality in \cite{verdu1994generalizing}, which gives a lower bound for $\bar{\P}_{\mathrm{e}, \mathcal{M}} $. We summarize the result in the following lemma and provide the proof in Appendix Section~\ref{sec:fano}.
 
 \begin{lemma}
 	\label{lem:fano}
 	Under the BTL model, let $\mathcal{M}=\mathcal{M}(\theta)$ be the divider set defined in Definition~\ref{def:divider}. Given the set of hypotheses $\mathcal{H}_{\mathcal{M}}$ constructed   in Step 2, we have
 	\begin{equation}\label{eqn:l5.4}
 	\begin{aligned}
 	\bar{\P}_{\mathrm{e}, \mathcal{M}} 
 	\geq 
 	 1-\frac{1}{\log (|\mathcal{M}|+1)}\bigg\{\frac{p L}{(|\mathcal{M}|+1)^{2}} \sum_{H,\tilde{H}\in \mathcal{H}_{\mathcal{M}}} \sum_{i < j} \mathrm{KL}\left(\mathbb{P}_{y_{i, j}^{(1)} \mid H} \| \mathbb{P}_{y_{i, j}^{(1)} \mid \tilde{H}}\right)+\log 2\bigg\}.
 	\end{aligned}
 	\end{equation}
 \end{lemma}	 	
 
  Let $\omega = \exp(\theta)$, 
 and define  $$d_{\mathcal{M}}({{\omega}})=\max_{(k_1, k_2) \in \mathcal{M}} \Big(\frac{\omega_{\min(k_1,k_2)}}{\omega_{\min(k_1,k_2)+1}}+\cdots+\frac{\omega_{\max(k_1,k_2)-1}}{\omega_{\max(k_1,k_2)}}-|k_1-k_2|\Big).$$
 {We further have the following lemma to control the right hand side of \eqref{eqn:l5.4}. We provide the proof in Appendix Section~\ref{sec:KL}.
 	\begin{lemma}\label{lem:KL}
 		Under the BTL model, let $\mathcal{M}=\mathcal{M}(\theta)$ be the divider set defined in Definition~\ref{def:divider}. Given score vector $\omega$ ($\omega_1 > \cdots > \omega_n$) and the set of hypotheses $\mathcal{H}_{\mathcal{M}}$ constructed in Step 2, we have
 		\begin{equation}
 		\label{equ:KL}
 		\sum_{H,\tilde{H}\in \mathcal{H}_{\mathcal{M}}} \sum_{i < j} \mathrm{KL}\left(\mathbb{P}_{y_{i, j}^{(1)} \mid H} \| \mathbb{P}_{y_{i, j}^{(1)} \mid \tilde{H}}\right)
 		\leq 
 		4n|\mathcal{M}|^2 \big(d^2_{\mathcal{M}}({{\omega}})+\cO(d^3_{\mathcal{M}}({{\omega}})) \big).
 		\end{equation}
 	\end{lemma}
 }
 Plugging \eqref{equ:KL} into \eqref{eqn:l5.4}, we have  $\bar{\P}_{\mathrm{e}, \mathcal{M}} >1-\epsilon $ if 
 \begin{equation}\label{eqn:eps}
\frac{1}{\log (|\mathcal{M}|+1)}\bigg\{4npL \frac{|\mathcal{M}|^{2}}{(|\mathcal{M}|+1)^{2}} \big(d^2_{\mathcal{M}}({{\omega}})+\cO(d^3_{\mathcal{M}}({{\omega}})) \big)+\log 2\bigg\} < \epsilon.
 \end{equation}

Finally, {we have the following lemma, and the proof is provided in Appendix Section~\ref{sec:necessary condition}.
 	 \begin{lemma} \label{lem:necessary condition}
 		Under the BTL model where the pairwise comparison probability $p$ in Erd\"{o}s-R\'{e}nyi Graph satisfies { $p \gtrsim \frac{\log n}{n}$}, let  $\mathcal{M}=\mathcal{M}(\theta)$ be the divider set defined in Definition \ref{def:divider}, and $\theta$ ($\theta_{1} > \theta_{2} > \cdots > \theta_{n}$) be the score vector in Step 2. Assuming that
 		$$
 		\Delta({\theta}) 
 		\lesssim \sqrt{\frac{\epsilon \log(|\mathcal{M}|+1)-\log 2}{npL}},
 		$$
 		we have	
 		\begin{equation}
 		\label{equ:distance-omega}
 		d_\mathcal{M}({{\omega}}) 
 		\lesssim \sqrt{\frac{\epsilon \log(|\mathcal{M}|+1)-\log 2}{npL}}.
 		\end{equation}
 \end{lemma}
}

We have that \eqref{equ:distance-omega}
implies \eqref{eqn:eps}. This essentially concludes the proof that when \eqref{equ:LB} holds, we have the minimax risk $\mathfrak{R}\ge \bar{\P}_{\mathrm{e}, \mathcal{M}} >1-\epsilon $.
 \end{proof}

\subsection{Applications}\label{sec:LB:example}
We provide some examples of ranking property testings to illustrate the lower bound.

\begin{example}[Top-$K$ items inference]\label{eg:topk}
Consider the problem of selecting all items ranked among top-$K$.  We construct a base preference score ${\theta}$ satisfying $\theta_{1}=\cdots=\theta_{K} > \theta_{K+1}=\cdots=\theta_{n}$.
Here {the target select set} is $\{i: \gamma(\theta) \in \cR_i\}=\{1, \cdots, K\}$.  Intuitively, for this example, if $\theta_{K}$ and $\theta_{K+1}$ are close, this multiple testing problem becomes difficult, and Theorem \ref{thm:LB} justifies this intuition. 

In particular, since swapping scores $\theta_{i}$  ($1 \leq i \leq K$) with $\theta_{j}$ ($K+1 \leq j \leq n$)  changes the top $K$ items, and we have $|\theta_i-\theta_j|=\theta_{K}-\theta_{K+1}$, we have $\big\{(i,j)\big\}_{ 1 \leq i \leq K, K+1 \leq j \leq n }$ are all legal pairs, and the distance is $\Delta({\theta})=\theta_{K}-\theta_{K+1}$. Among all legal pairs, after swapping scores $\theta_{i}$  ($1 \leq i \leq K$) with $\theta_{j}$ ($K+1 \leq j \leq n$), the target selection set becomes $ \big\{[n]/\{i\}\big\} \cup \{j\}$. Thus, all legal pairs are included in divider set $\mathcal{M}(\theta)=\{(i,j)\}_{1 \leq i \leq K, K+1 \leq j \leq n}$, which gives us that $\log(|\mathcal{M}(\theta)|) \asymp \log(n)$.
Thus, Theorem~\ref{thm:LB} implies that in this example, the minimax risk $\mathfrak{R}$ is uncontrollable if $$\Delta({\theta})=\theta_K-\theta_{K+1} \lesssim  \sqrt{ \frac{\epsilon\log(n) }{npL}}.$$
\end{example}

\begin{example}\label{eg:relative}
	Consider the problem of inferring all items  ranked higher than  item $k$ with score~$\theta_{k}$.  We construct a base preference score ${\theta}$ satisfying  $\theta_{1}=\cdots=\theta_{k-1} >\theta_{k}=\cdots=\theta_{n}$. In this example, we have $\{i: \gamma(\theta) \in \cR_i\}=\{1, \cdots, k-1\}$. Swapping score $\theta_{i}$  ($1 \leq i \leq k-1$) with $\theta_{j}$ ($k \leq j \leq n$) changes the target selection set $\{i: \gamma(\theta) \in \cR_i\}=\{1, \cdots, k-1\}$, and  $ |\theta_i-\theta_{j}|=\theta_{k-1}-\theta_{k}$. Thus, $\big\{(i,j)\big\}_{ 1 \leq i \leq k-1, k \leq j \leq n }$ are all legal pairs, and the distance is $\Delta({\theta})=\theta_{k-1}-\theta_{k}$. Among all legal pairs, after swapping score $\theta_{i}$  ($1 \leq i \leq k-1$) with $\theta_{k}$, we have the same target selection set $ \{i: \gamma(\theta) \in \cR_i\} = \emptyset $, so only one of them can be included in set $\mathcal{M}(\theta)$. We then have $\mathcal{M}(\theta)=\big\{(i,j)\big\}_{ 1 \leq i \leq k-1, k+1 \leq j \leq n } \cup \big\{(k-1,k)\big\}$ with $\log(|\mathcal{M}(\theta)|)=\log((k-1)(n-k) + 1) \asymp \log(n)$.
	Theorem~\ref{thm:LB} implies that in this example, the minimax risk $\mathfrak{R}$ is uncontrollable if $\Delta({\theta})=\theta_{k-1}-\theta_{k} \lesssim  
	\sqrt{\frac{\epsilon \log(n) }{npL}}$.
\end{example}

\begin{remark}[Upper Bound]
In the above two examples, we have  that if $\Delta({\theta}) \lesssim  
\sqrt{\frac{\epsilon \log n}{npL}}$, any procedure fails to control the  risk. Meanwhile, Theorem \ref{thm:FWER} shows that if $\Delta({\theta}) \gtrsim  
\sqrt{\frac{\log n}{npL}}$, we can control the FWER and achieve power one asymptotically at the same time, which shows our procedure achieves the minimax optimality.

\end{remark}

%



\section{Numerical Experiments}\label{sec:simulation}

In this section, we conduct extensive numerical studies to test the empirical performance of the proposed methods using both synthetic data and two real datasets.

\subsection{Synthetic Data}

Using synthetic data, we first investigate the asymptotic normality of Lagrangian debiased estimators for latent preference scores. Specifically, we generate the latent scores $\theta_i^*$'s independently from a uniform distribution over $[8,10]$,  where we set the number of items $n=100$. We let $L \in \{2,6,20\}$, $p \in \{1.25,2\}\times \frac{\log(n)}{n}$. Following the procedure developed in Section~\ref{sec:debias}, we repeat the generating scheme 2,500 time and present the empirical distribution of the estimator. In particular, Figure \ref{fig:qqplot} displays the Q-Q Plots of $\theta_1$, and we find that the empirical distribution of our estimator is closed to a normal distribution, especially when the number of repeated comparisons  $L$ is large. This justifies our result in Theorem \ref{thm:asy} that our estimator weakly converges to a normal distribution.


%

\begin{figure}[h]
	\begin{center}
		\includegraphics[height=.8\textwidth,angle=270]{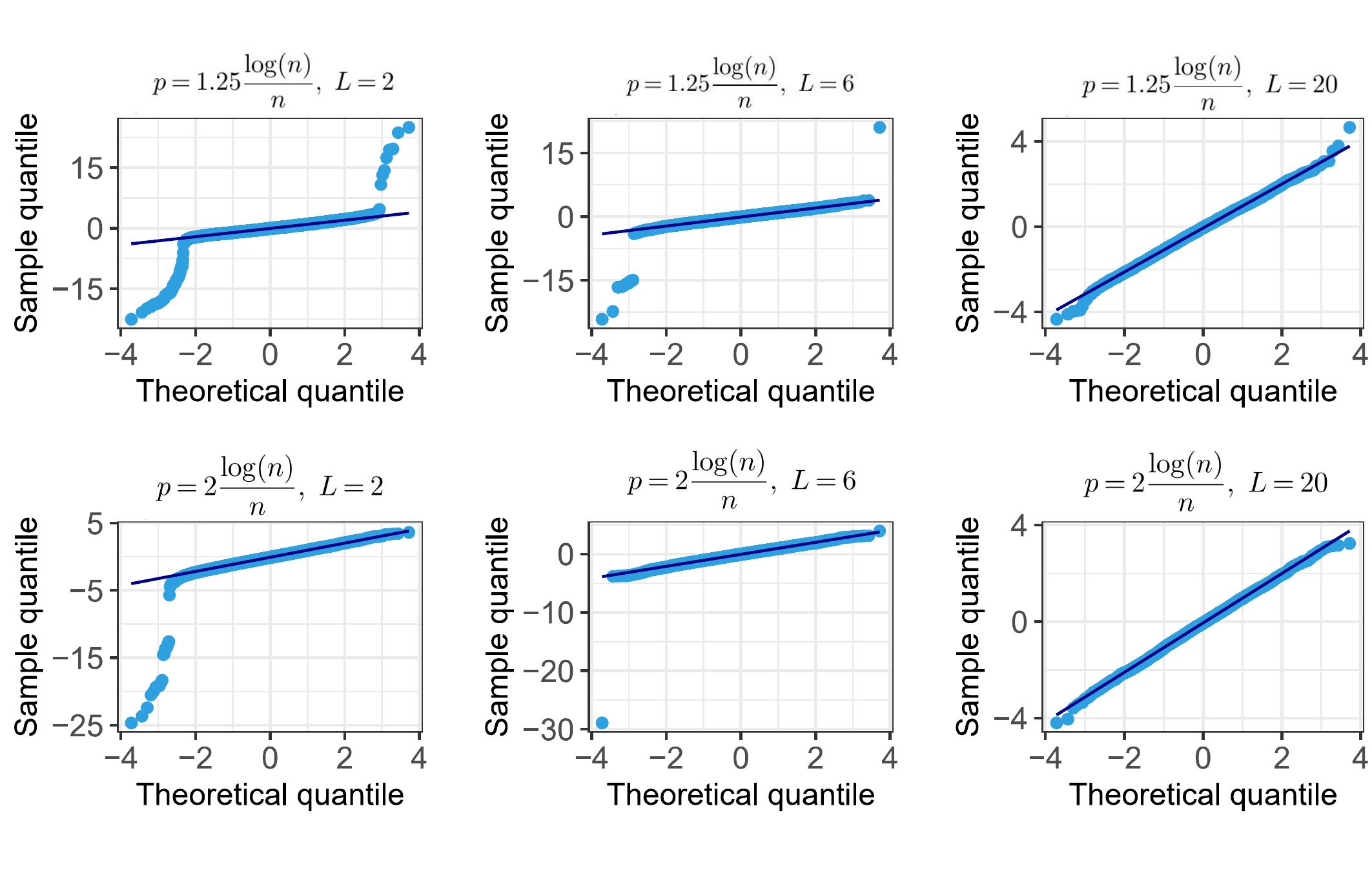}
	\end{center}
	\vskip-10pt
	\caption{Q-Q Plots for Lagrangian debiased estimators, comparing quantiles of standardized Lagrangian debiased estimators with standard normal. We fix $n=100$ and let $L \in \{2,6,20\}$, $p \in \{1.25,2\}\times \frac{\log(n)}{n}$.}
	\label{fig:qqplot}
\end{figure}

%



Next, we examine the performance of the pairwise test and top-$K$ test procedure in Section \ref{sec:ht}.  
For pairwise test, we test $
H_0: \theta^*_1 \leq \theta^*_2  \text{\ \ v.s.\ \ } H_a: \theta^*_1>\theta^*_2$. Figure \ref{fig:test1} (A) displays the Type I error with $p=0.2$ for different total number of items $n$ and  number of repeated comparisons $L$. Here we fix $\theta^*_1=\theta^*_2=10$ and $\theta^*_i=7.5$ for $3 \leq i \leq n$. As  seen from this figure, the empirical Type I error rate is close to the nominal $\alpha=0.05$.  Figure  \ref{fig:test1} (B) shows the empirical power of this test with different $\Delta = |\theta_1^* - \theta_2^*|$. Here we set $\theta^*_1=10, \theta^*_2=10-\Delta$ and $\theta^*_i=7.5$ for $3 \leq i \leq n$, and  we let $n=100$ and $p=0.2$ while change $\Delta$ and $L$. We observe that the empirical power goes to 1 quickly.

\begin{figure}[h]
	\begin{center}
		\begin{tabular}{cc}	
			{\small (A) Type I error} &{\small (B) Power}\\		
			\includegraphics[width=.45\textwidth,angle=0]{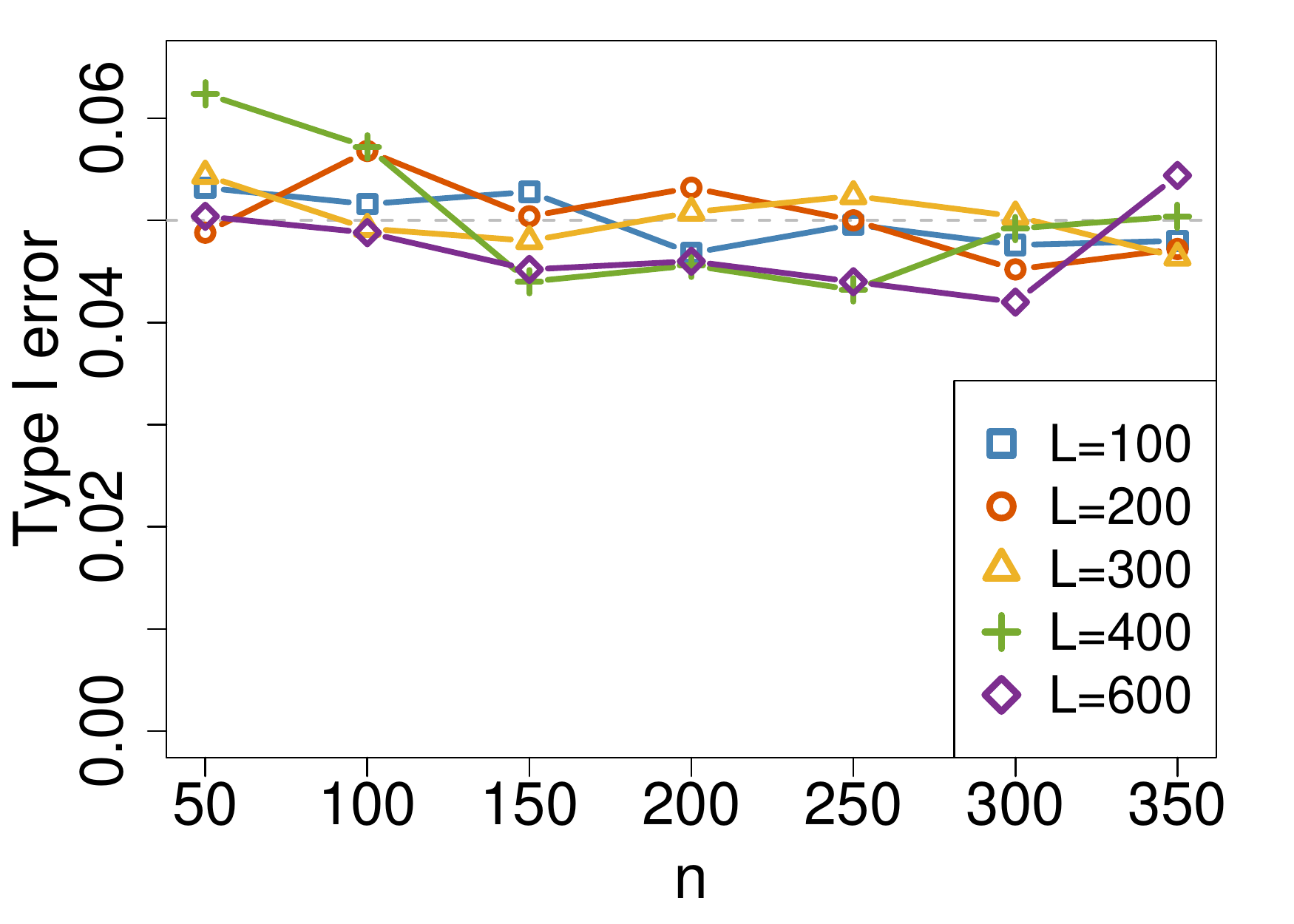} 
			& \includegraphics[width=.45\textwidth,angle=0]{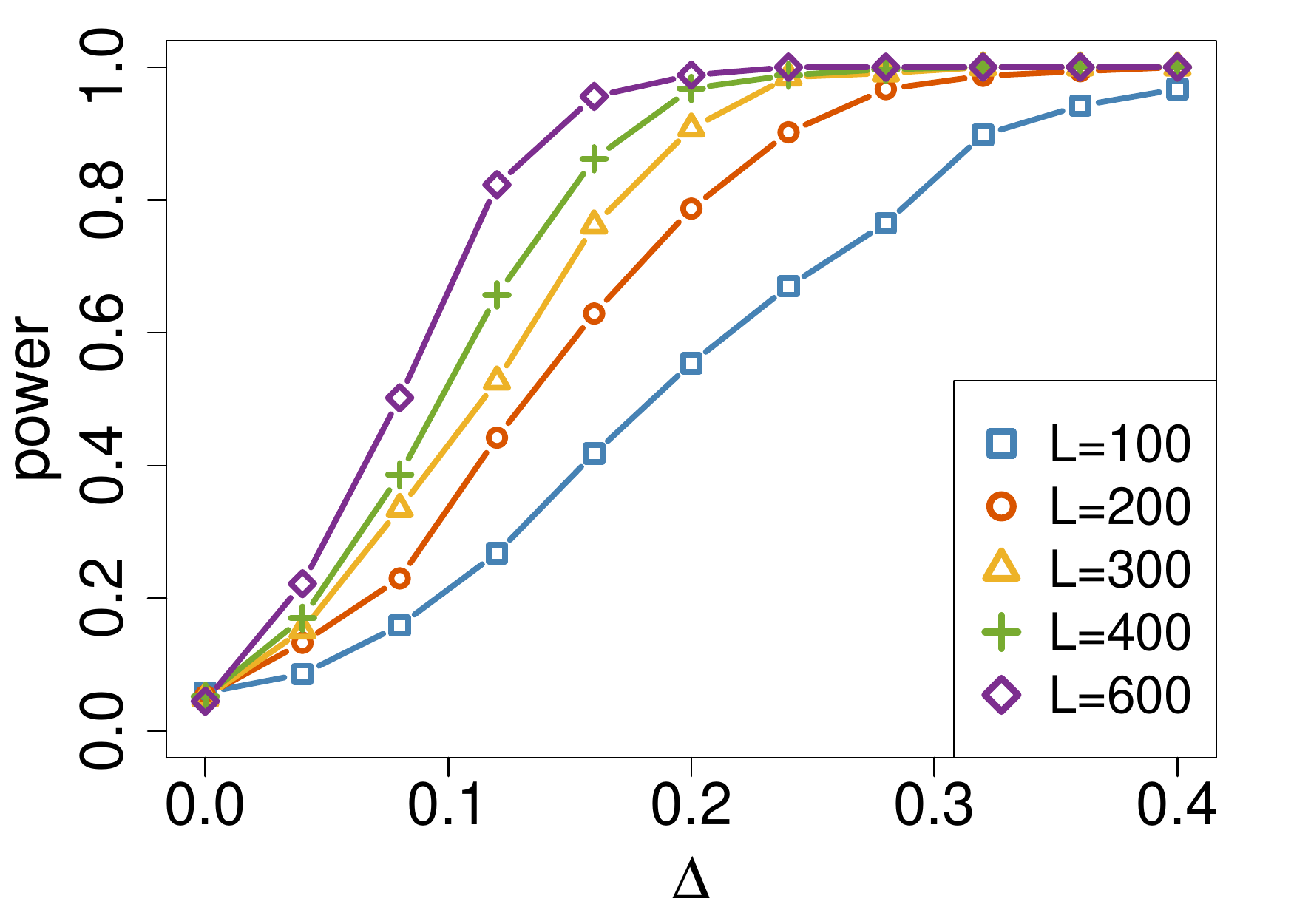} \\
			[-5pt]
		\end{tabular}
	\end{center}
	\caption{The performance of pairwise test $H_0: \theta^*_1 \leq \theta^*_2  \text{\ \ v.s.\ \ } H_a: \theta^*_1>\theta^*_2$. (A) Type I error for our proposed pairwise test procedure with different $n$ and $L$. (B) Power curve as a function of signal strength $\Delta = |\theta^*_1 - \theta^*_2|$ with $n=100$, $p=0.2$ and $L \in \{100,200,300,400,600\}$.} 
	\label{fig:test1}
\end{figure}

For top-$K$ test, we test if item $K+1$ is ranked among the top $K$ items. We fix $K=30$ and $p=0.2$, and we let $\theta^*_i=10$  for $1 \leq i \leq K+1$, and $\theta^*_j=7.5$ for $K+2 \leq j \leq n$. Figure \ref{fig:topk} (A) displays the  empirical Type I error  rate with different $n$ and $L$, and we find that that the empirical Type I error is close to the nominal $\alpha=0.05$. 
Figure  \ref{fig:topk} (B) displays the empirical power of this test with different separation $\Delta$ between top $K$ items and other items that $\Delta = |\theta^*_{(K)} - \theta^*_{(K+1)}|$. We let $n=100$, $L \in \{100,200,300,400,600\}$, and we  set $\theta^*_i=10$  for $1 \leq i \leq K$, $\theta^*_{K+1}=10-\Delta$, and $\theta^*_j=7.5$ for $K+2 \leq j \leq n$. As  seen
from this plot, the empirical power   goes to 1 as $\Delta$ and $L$ increase.

\begin{figure}[]
	\begin{center}
		\begin{tabular}{cc}	
			{\small (A) Type I error} &{\small (B) Power}\\		
			\includegraphics[width=.45\textwidth,angle=0]{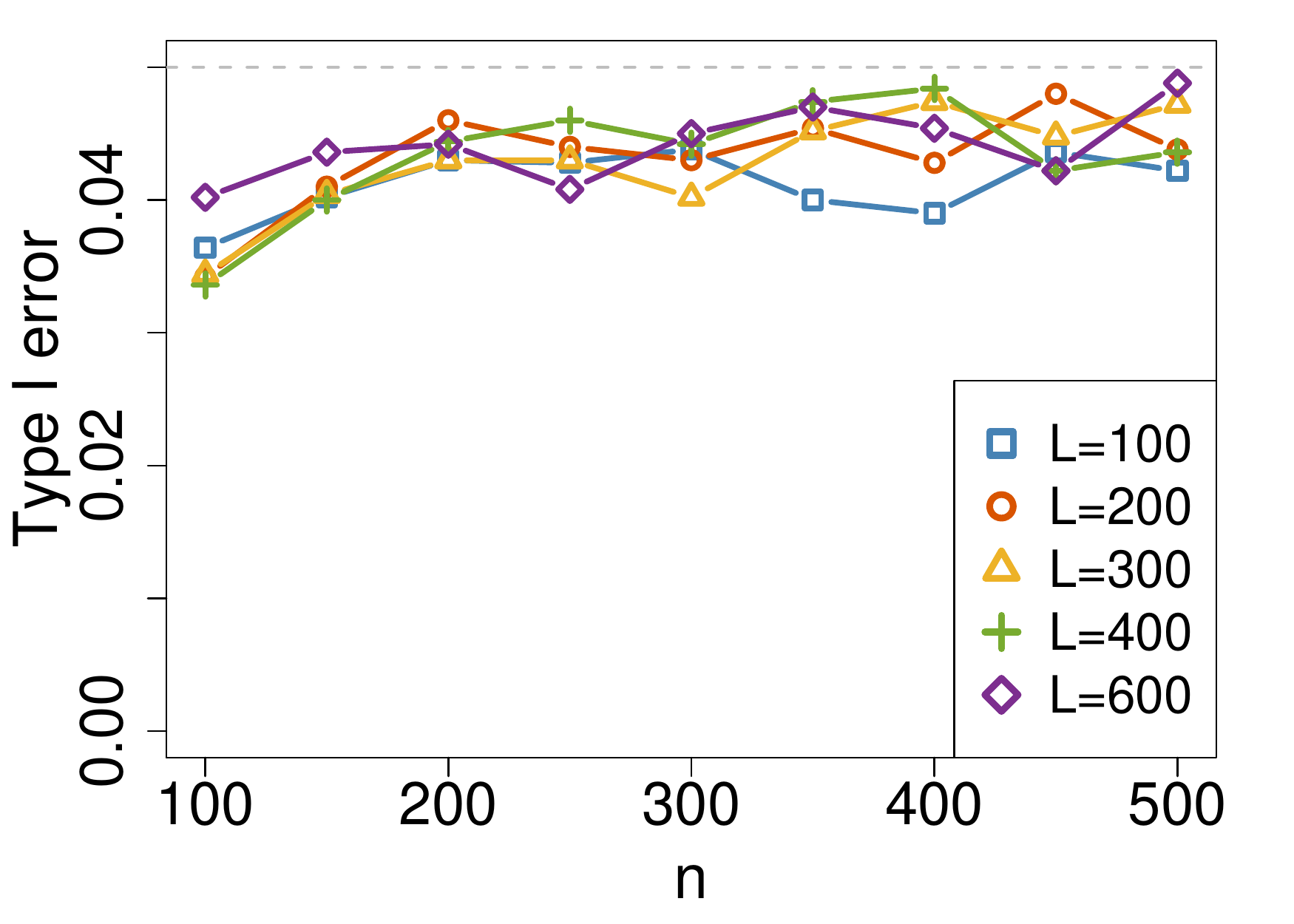} &
			\includegraphics[width=.45\textwidth,angle=0]{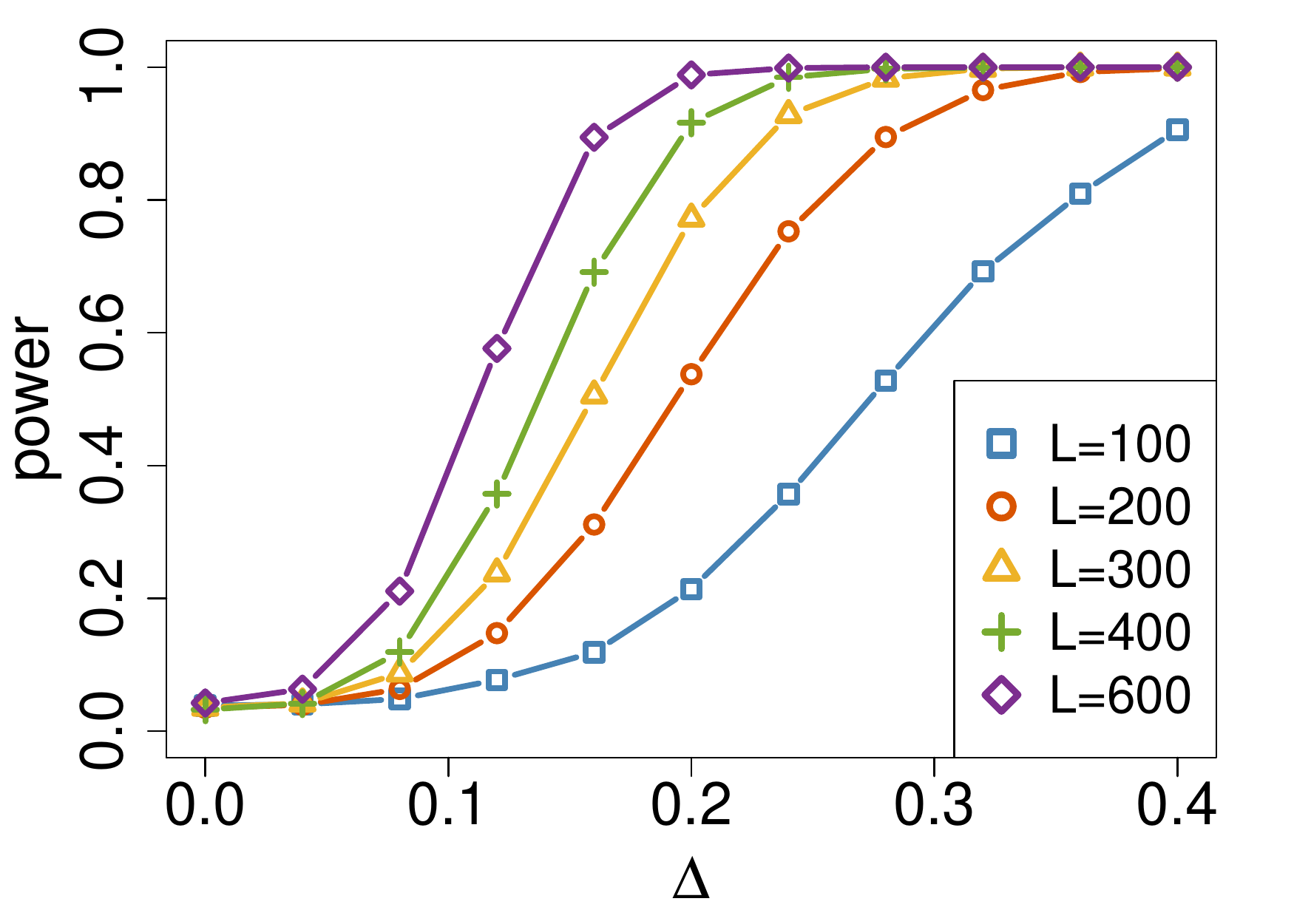} \\
			[-5pt]
		\end{tabular}
	\end{center}
	\caption{Performance of top-30 test using our proposed simplified testing procedure in Remark \ref{rmk:topk}: (A) Averaged Type I error with different $n$ and $L$. (B) Averaged power as a function of signal strength $\Delta = |\theta^*_{(K)}-\theta^*_{(K+1)}|$ with $n=100$, $p=0.2$, and $L \in \{100,200,300,400,600\}$.} 
	\label{fig:topk}
\end{figure}

Finally, we evaluate the empirical performance of our FDR procedure in Section \ref{sec:FDR}  by considering the top-$K$ test. We let $K=30$, $\theta^*_i=10$  for $1 \leq i \leq K+10$, and $\theta^*_j=6.5$ for $K+11 \leq j \leq n$.
Figure  \ref{fig:fdr} (A) displays the empirical FDR based on our procedure with $p=0.2$ and different $n$, $L$.   This figure  illustrates that the FDR is well controlled below the nominal $\alpha=0.05$, consistent with our results in Theorem \ref{thm:FDR}.  Figure  \ref{fig:fdr}~(B) displays the empirical power of the FDR procedure, which is defined as true positive rate, with different separation $\Delta = |\theta^*_{(K)} - \theta^*_{(K+1)}|$.  We set $\theta^*_i=10$ for $1 \leq i \leq K$, $\theta^*_{K+1}=10-\Delta$, and $\theta^*_j=6.5$ for $K+2 \leq j \leq n$, and we let $n=100$, $p=0.2$, and $L \in \{100,200,300,400,600\}$. As shown
in this plot, the empirical power increases and goes to 1 as $\Delta$ and $L$ increase, showing that the proposed FDR procedure is able to identify the top $K$ items with well controlled FDR.

\begin{figure}[H]
	\begin{center}
		\begin{tabular}{cc}		
			{\small (A) FDR} &{\small (B) FDR power}\\	
			\includegraphics[width=.45\textwidth,angle=0]{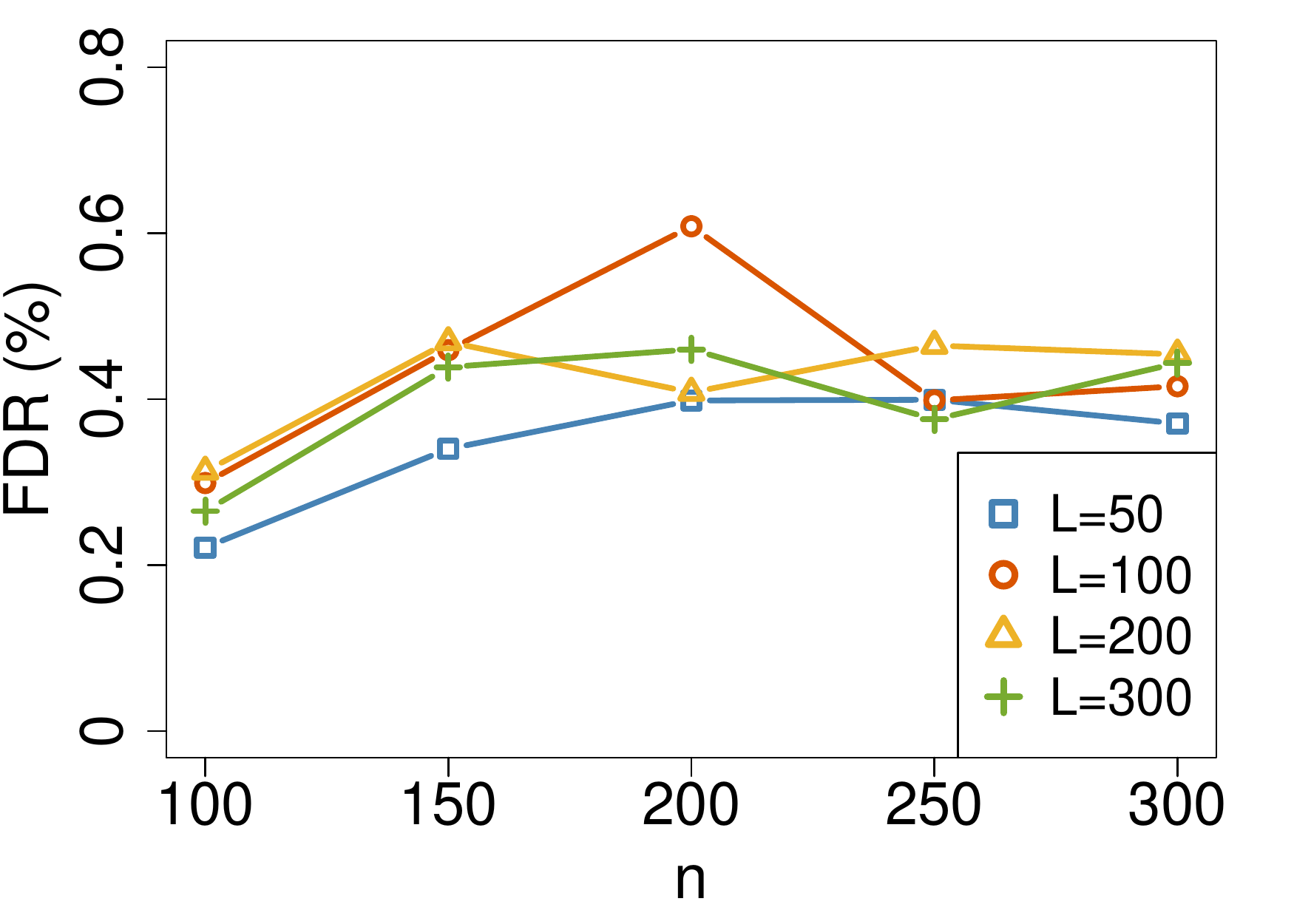} & 
			\includegraphics[width=.45\textwidth,angle=0]{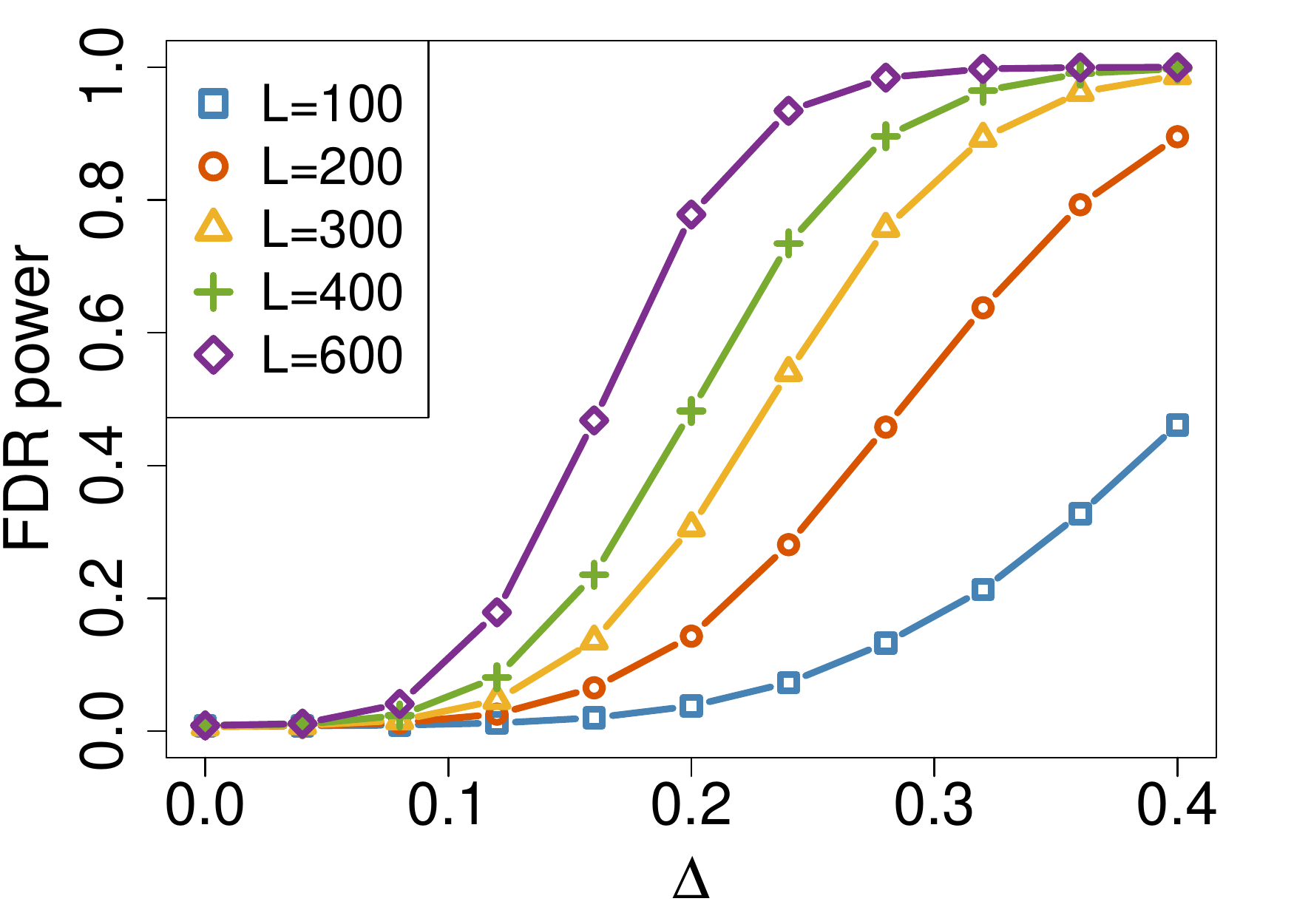} \\
			[-5pt]
		\end{tabular}
	\end{center}
	\caption{Performance of Benjamini-Yekutieli based FDR procedure for selecting top-30 items. (A) Empirical FDR with different $n$ and $L$. (B) Empirical FDR power as a function of signal strength $\Delta = |\theta_{(K)} - \theta_{(K+1)}|$ with $n=100$, $p=0.2$, and $L \in \{100,200,300,400,600\}$.}
	\label{fig:fdr}
\end{figure}

\subsection{Real-World Data} 
In this section, We apply our  method to analyze two real datasets. 

\paragraph{Jester Dataset.}
We first apply our  method to the Jester dataset from \cite{goldberg2001eigentaste}. This dataset  contains ratings of 100 jokes from 73,421 users. The more detailed description and the dataset is available through \url{http://eigentaste.berkeley.edu/dataset/}. In this dataset, 14,116 users rated all 100 jokes, while others only rated some of jokes. We only use samples from users who ranked all 100 jokes for our experiments. 
Since we  need pairwise comparisons for our ranking analysis, we generate
Erd\"{o}s-R\'{e}nyi comparison graph randomly with $p=0.3$ and obtain each pairwise comparison results based on the relative rating of compared pairs by the same user. To be specific, if joke 1 receives a higher rating than joke 2 from a same user, we have joke 1 beats joke 2 in this comparison. \cite{negahban2017rank, kim2017latent} also use similar approaches to break rating results into pairwise comparisons. Further, we randomly choose $L$ samples from the total 14,116 samples.
 

Table \ref{tab:joke} displays the top 10 jokes' IDs and their estimated scores obtained from the spectral method and our debiasing method with $L=1000$.  Furthermore, we evaluate the performance of our FDR-controlling procedure with $K \in \{20,40\}$ and $L \in \{1000,5000\}$. Figure \ref{fig:joke_fdr} presents the p-values from multiple testing and threshold obtained from Benjamini-Yekutieli procedure described in Section~\ref{sec:FDR}. It shows that our FDR procedure gains more power as $L$ increases.

\renewcommand{\arraystretch}{0.75}
\begin{table}[h!] 
	\centering
	\caption{The top 10 jokes on Jester Dataset and their estimated scores obtained from Spectral method and our Lagrangian debiasing method.} 
	\begin{tabular}{ccccc}
		\hline
		& \multicolumn{2}{c}{Spectral method} & \multicolumn{2 }{c}{Debiasing method} \\ \hline		
				Joke ID	 & Score  & Rank  & Score    & Rank    \\ \hline
     89     & 0.841  & 1      & 0.840   &1   	\\
     50     & 0.799  & 2      & 0.801   & 2  	\\
     29     & 0.651  & 3      & 0.645   & 3  	\\
     36     & 0.623  & 4      & 0.628   & 4  	\\
     27     & 0.621  & 5      & 0.620   & 5  	\\
     62     & 0.616  & 6       & 0.616   & 6  	\\
     32    & 0.603  & 7       & 0.599   & 7  	\\
     35    & 0.596  & 8       & 0.596   & 8  	\\
	 54    & 0.527  & 9       & 0.526   & 9  	\\
     69    & 0.515  &10      & 0.516   & 10  	\\ \hline \\
	\end{tabular}
	\label{tab:joke}
\end{table}

 \begin{figure}[h!]
	\begin{center}
		\begin{tabular}{ccc}
			{\small  $(K = 20, L = 1000)$} 
			&{\small   $(K = 40, L = 1000)$} \\
			\includegraphics[width=.4\textwidth,angle=0]{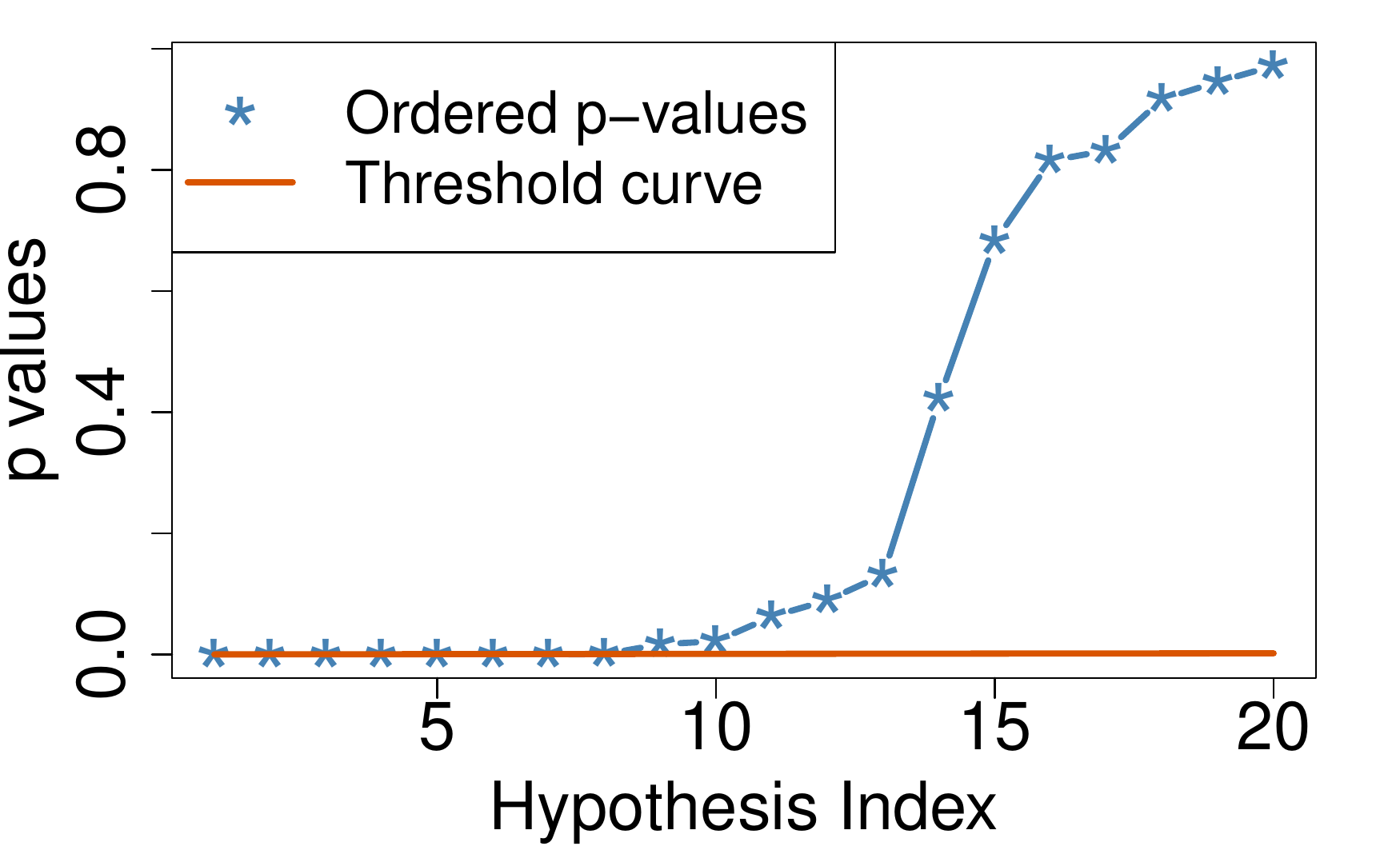} 
			& \includegraphics[width=.4\textwidth,angle=0]{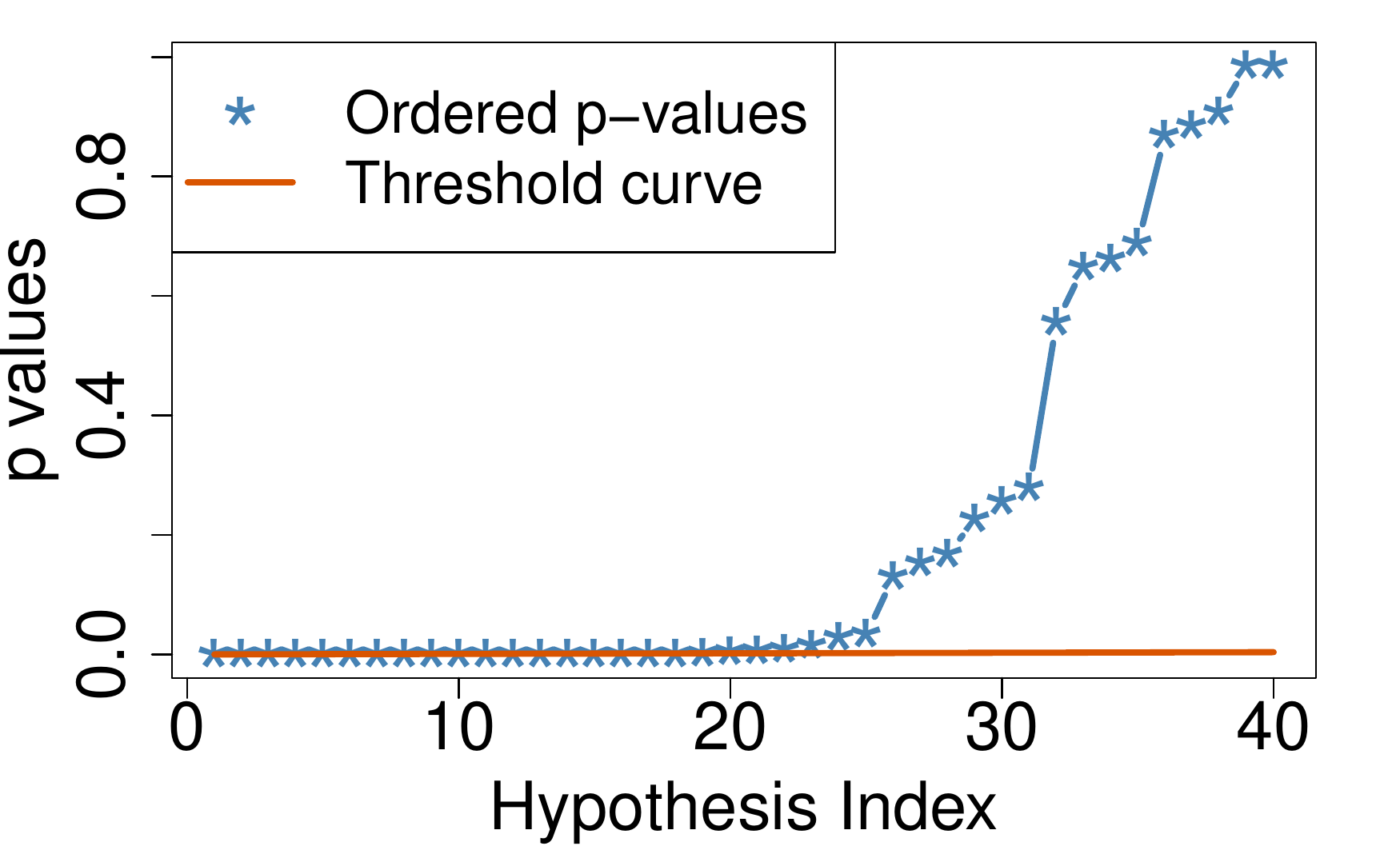} \\
				{\small  $(K = 20, L = 5000)$} 
			&{\small   $(K = 40, L = 5000)$} \\
			\includegraphics[width=.4\textwidth,angle=0]{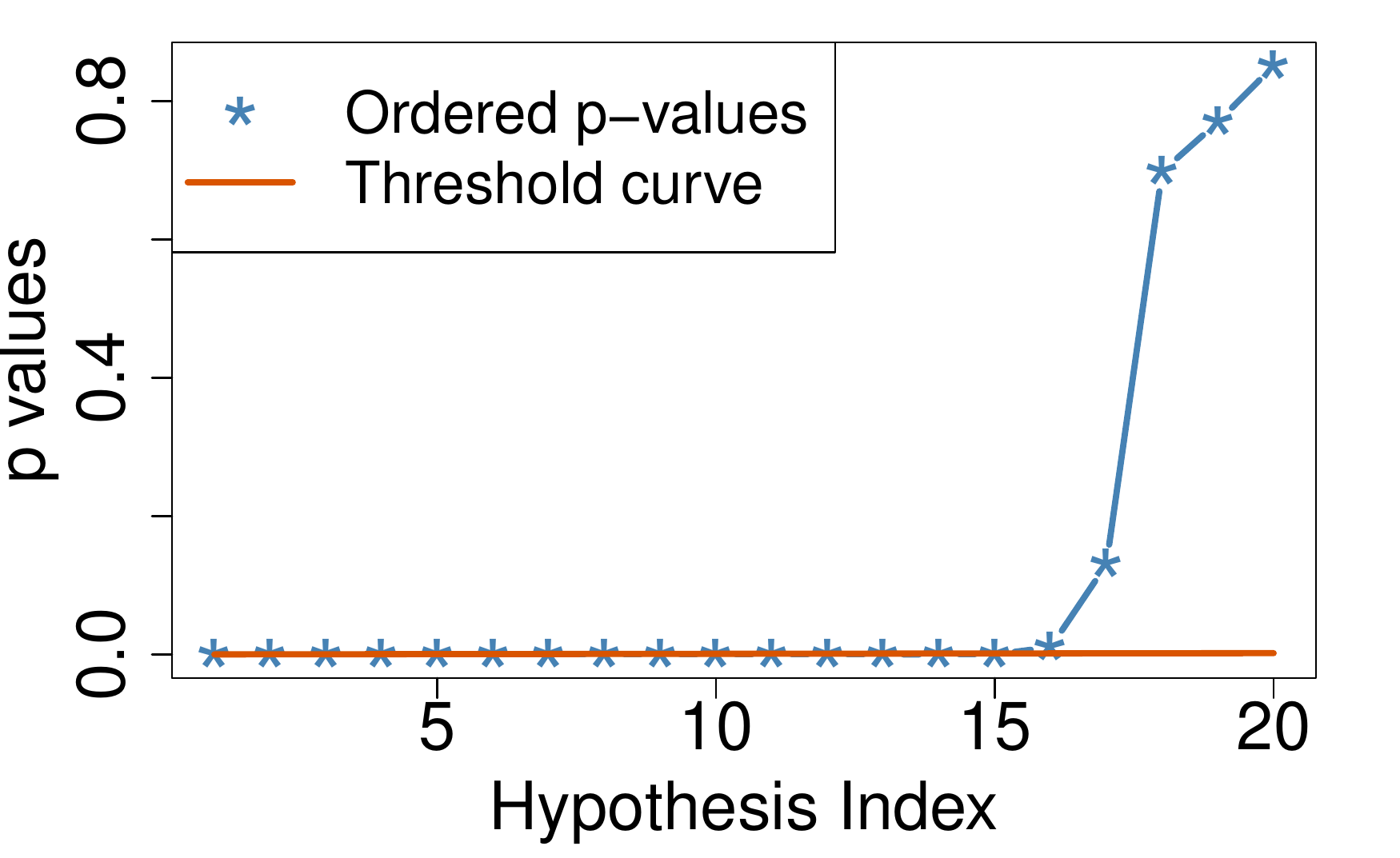} 
			&  \includegraphics[width=.4\textwidth,angle=0]{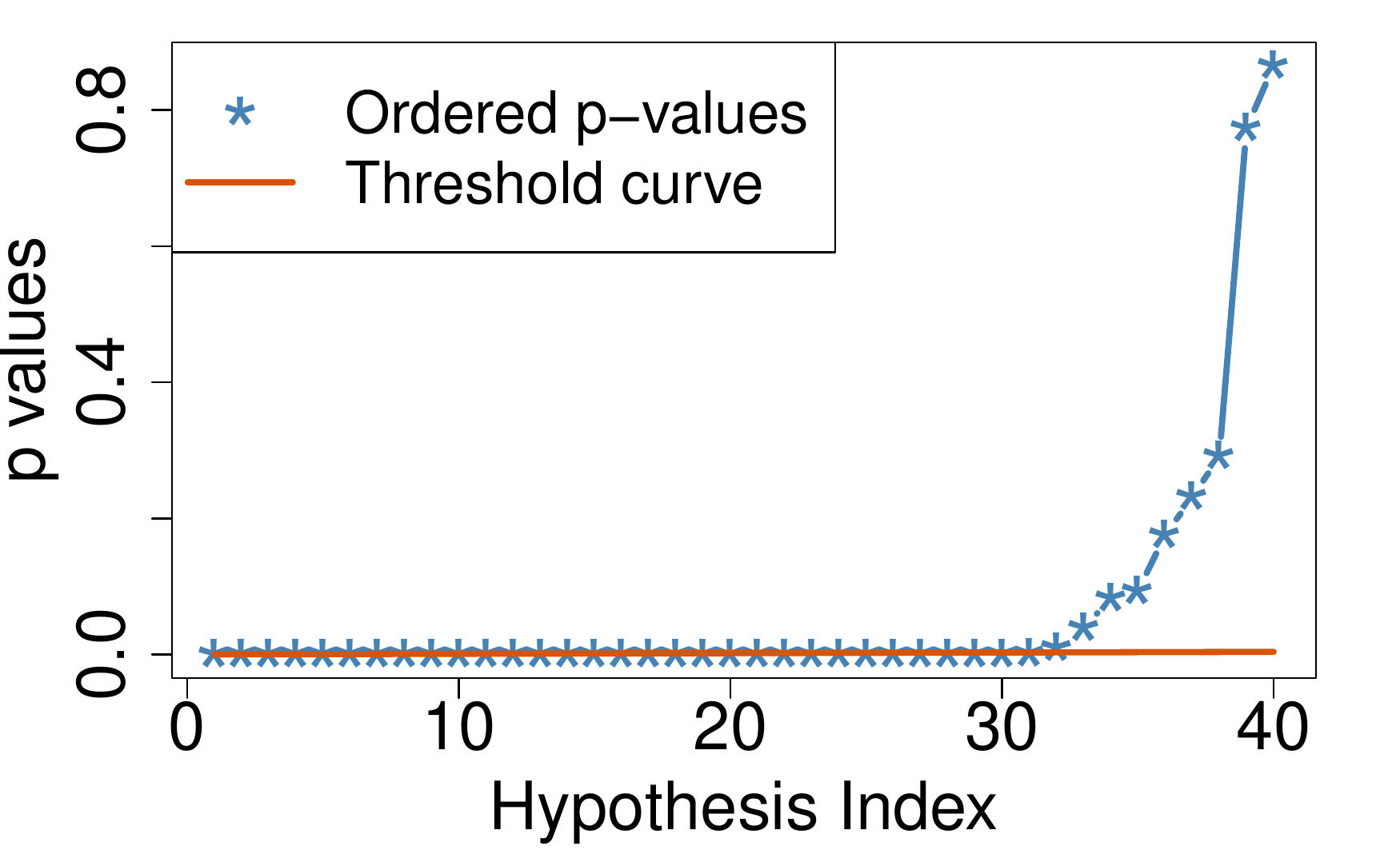} \\
			[-1pt]
		\end{tabular}
	\end{center}
	\caption{Application of our FDR procedure on Jester Dataset to select top $K$ jokes. The four panels display the ordered p-values and adjusted threshold by Benjamini-Yekutieli procedure with  $K \in \{20,40\}$ and $L \in \{1000,5000\}$. The horizontal  red line represents 0.05.} 
	\label{fig:joke_fdr}
\end{figure}

\paragraph{MovieLens Dataset}
We also apply our  method to analyze the MovieLens dataset  \citep{harper2015movielens}. 
Similar to our analysis before, we obtain pairwise comparisons results based on the relative ratings of two movies by the same user. In particular, we analyze $n=218$ movies with largest number of ratings, and randomly sample $L=1000$ comparisons. Table \ref{tab:movie} shows the top 10 movies with highest scores based on Lagrangian debiasing procedure and the corresponding p-values, where we test if they are ranked among the top 10 and top 20 movies. Figure \ref{fig:movie} displays the change of p-values when each movie is tested if it is among top 1, 5, 10, 20 ranked movies. We display the p-values of four movies (The Shawshank Redemption, The Matrix, The Usual Suspects, Schindler's List), which are ranked 1, 2, 5, 10 based on our debiased estimator. 

\begin{table}[h]
	\centering
	\caption{Top 10 movies in MovieLens Dataset based on Lagrangian debiased estimator. The table includes their titles, average rating in  \url{https://movielens.org/}, estimated scores by Lagrangian debiased procedure, and their corresponding p-values in top 10 and top 20 test by our proposed testing procedure in Remark \ref{rmk:topk}.}
	\resizebox{\textwidth}{!}{%
		\begin{tabular}{cccccc}
			\hline
			Rank & Movie Title                                           & Average rating & Debiased Score & P-value in top 10 test & P-value in top 20 test \\ \hline
			1  & The Shawshank Redemption (1994)           & 4.42 & 1.985 & $<1e-6$ & $<1e-6$        \\
			2  & The Matrix (1999)                         & 4.16 & 1.766 & $<1e-6$ & $<1e-6$        \\
			3  & The Godfather (1972)                      & 4.25 & 1.755 & $<1e-6$ & $<1e-6$        \\
			4  & Star Wars: Episode V - The Empire Strikes Back (1980) & 4.12           & 1.684          & $<1e-6$        & $<1e-6$                      \\
			5  & The Usual Suspects (1995)                 & 4.28 & 1.530 & $<1e-6$ & $<1e-6$        \\
			6  & Star Wars: Episode IV - A New Hope (1977) & 4.10 & 1.471 & 0.0010    & $<1e-6$        \\
			7  & The Silence of the Lambs (1991)           & 4.15 & 1.418 & 0.0070    & $<1e-6$        \\
			8  & Seven (a.k.a. Se7en) (1995)               & 4.08 & 1.367 & 0.0401    & $<1e-6$        \\
			9  & Lord of the Rings: The Fellowship of the Ring (2001)  & 4.10   & 1.329   & 0.1133               & $<1e-6$                      \\
			10 & Schindler's List (1993)                   & 4.25 & 1.288 & 0.3431    & 0.0002 \\ \hline \\
		\end{tabular}%
	}
	\label{tab:movie}
\end{table}

\begin{figure}[h!]
	\begin{center}
		\begin{tabular}{cc}			
			\includegraphics[height=.7\textwidth,angle=270]{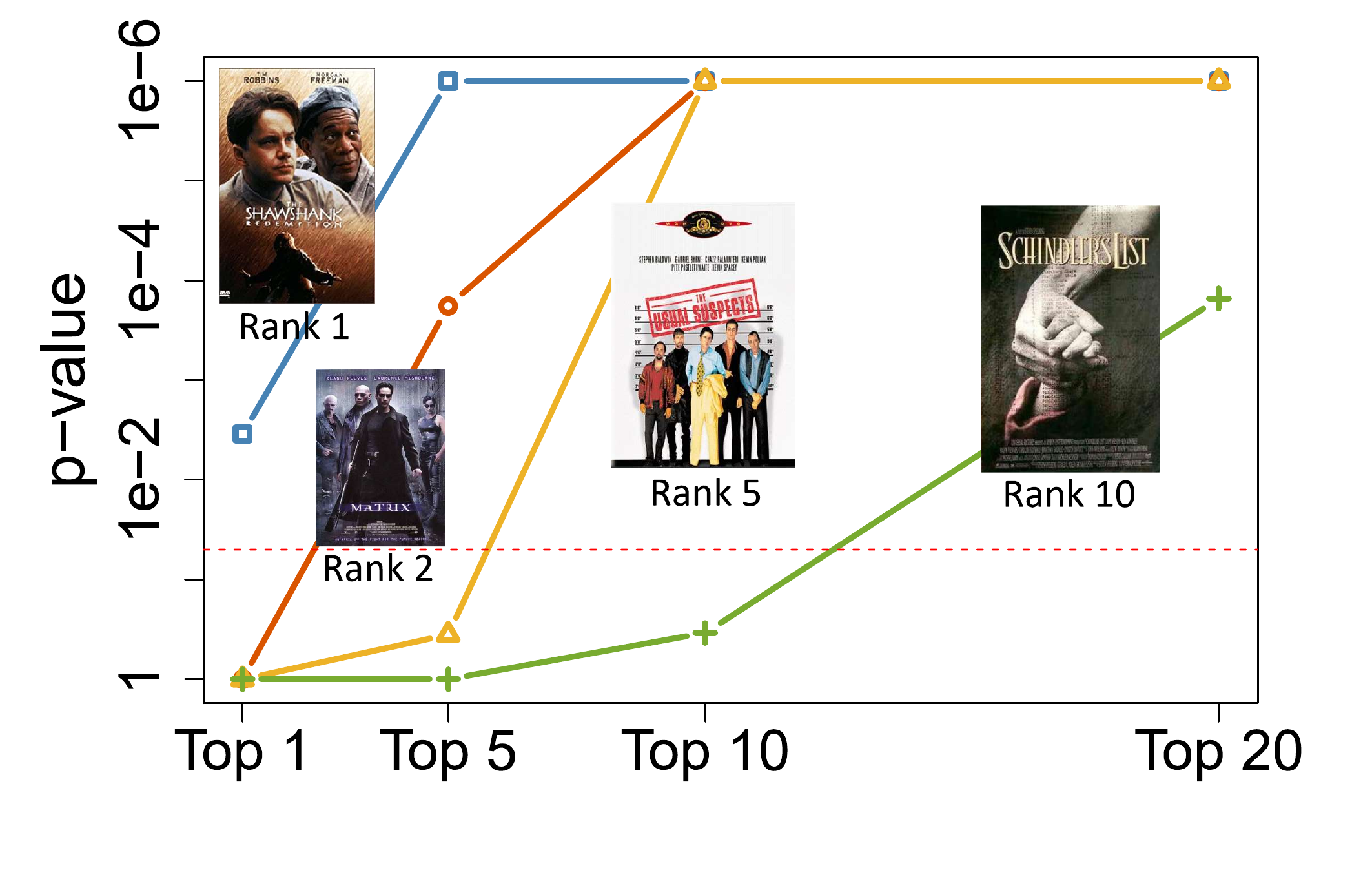} \\
			[-10pt]
		\end{tabular}
	\end{center}
	\caption{Application of top-$K$ test on four movies (The Shawshank Redemption, The Matrix, The Usual Suspects, Schindler's List) in MovieLens Dataset, which are ranked 1, 2, 5, 10 based on Lagrangian debiasing estimator. The figure displays the change of their p-values in top 1, 5, 10, 20 test by our proposed testing procedure in Remark \ref{rmk:topk}. The horizontal dotted red line represents 0.05}
	\label{fig:movie}
\end{figure}

\section{Conclusion}\label{sec:discussion}

To conclude, to the best of our knowledge, we propose the first general framework for conducting inference and quantifying uncertainties for ranking problems. Under the BTL model, we first propose a Lagrangian debiasing method to infer the latent score for each item, where we can then test ``local" properties. Next, by leveraging the powerfulness of Gaussian multiplier bootstrap, we can
test more general ``global" properties. Furthermore, we extend the framework to multiple testing problems where we control both the familywise Type I error and the false discovery rate. We prove the optimality of the proposed method by deriving the minimax lower bound. Using both synthetic and real datasets, we demonstrate that our method works well in practice.

There are still numerous promising directions that would be of interest for future investigations. We point out a few possibilities as follows. First, the Gaussian multiplier bootstrap approach is computationally expensive, and it is worth investigating if we can develop a more computationally  efficient approach to make the method more scalable. Second, as we have discussed, the current approach for false discovery rate control is conservative, and we plan to develop a more powerful method to tightly control the false discovery rate. In addition, the ranking of different items may change over  time, and we plan to develop new models to study dynamic ranking systems.

\begin{APPENDICES}


%
\newpage
%
%
%
%
\setcounter{section}{0}
\renewcommand{\thesection}{\Alph{section}}
%
%

\renewcommand{\theHchapter}{A\arabic{chapter}}

\renewcommand{\theHsection}{A\arabic{section}}

\section{Proof of Theorems in Section \ref{sec:debias}} \label{sec:pf-asy}

\subsection{Proof of Theorem \ref{thm:gen-asy}}
\label{sec:pf:gen-asy}
\begin{proof}We prove the asymptotic normality of the general Lagrangian debiasing estimator. 
First we decompose the general Lagrangian debiasing estimator as
\begin{equation}\label{eqn:decompose}
\begin{aligned}
{
	\left( \begin{array}{ccc}
	\widehat{{\theta}}^{d}-{{\theta}}^*\\
	\lambda
	\end{array}
	\right )}
=&{\
	\left( \begin{array}{ccc}
	\widehat{{\theta}}^{d}-\hat{{\theta}}\\
	\lambda
	\end{array}
	\right )}
+{
	\left( \begin{array}{ccc}
	\hat{{\theta}}-{{\theta}}^*\\
	0
	\end{array}
	\right )}\\
=&\ 
\hat{\Theta}
{
	\left( \begin{array}{ccc}
	-\nabla {\cL({\hat{{\theta}}})} \\
	-f(\hat{{\theta}})
	\end{array} 
	\right )}
+\widehat{\Theta}\widehat{\Sigma}
{
	\left( \begin{array}{ccc}
	\hat{{\theta}}-{{\theta}}^*\\
	0
	\end{array}
	\right )}
+ (I-\widehat{\Theta}\widehat{\Sigma})
{
	\left( \begin{array}{ccc}
	\hat{{\theta}}-{{\theta}}^*\\
	0
	\end{array}
	\right )}\\
=&\
\hat{\Theta}
{
	\left( \begin{array}{ccc}
	-\nabla {\cL({\hat{{\theta}}})} \\
	-f(\hat{{\theta}})
	\end{array} 
	\right )}
+\widehat{\Theta}
{
	\left( \begin{array}{ccc}
	\nabla^2 \cL(\hat{{\theta}})(\hat{{\theta}}-{\theta}^*)  \\
	\nabla f(\hat{{\theta}})^\top(\hat{{\theta}}-{\theta}^*)
	\end{array} 
	\right )}
+ (I-\widehat{\Theta}\widehat{\Sigma})
{
	\left( \begin{array}{ccc}
	\hat{{\theta}}-{{\theta}}^*\\
	0
	\end{array}
	\right )} \\
=&\ \widehat{\Theta}
{
	\left( \begin{array}{ccc}
	-\nabla \cL(\hat{{\theta}})+ \nabla^2 \cL(\hat{{\theta}})(\hat{{\theta}}-{\theta}^*)  \\
	-f(\hat{{\theta}})
	+ \nabla f(\hat{{\theta}})^\top(\hat{{\theta}}-{\theta}^*)
	\end{array} 
	\right )}
+ 
(I-\widehat{\Theta}\widehat{\Sigma})
{
	\left( \begin{array}{ccc}
	\hat{{\theta}}-{{\theta}}^*\\
	0
	\end{array}
	\right )}\\
=&\ {\Theta}^*
{
	\left( \begin{array}{ccc}
	-\nabla \cL({{\theta}}^*)\\
	-f({{\theta}}^*)
	\end{array} 
	\right )}
+ \underbrace{(\widehat{\Theta}-{\Theta}^*)
	\left( \begin{array}{ccc}
	-\nabla \cL(\hat{{\theta}})+ \nabla^2 \cL(\hat{{\theta}})(\hat{{\theta}}-{\theta}^*)  \\
	-f(\hat{{\theta}})
	+ \nabla f(\hat{{\theta}})^\top(\hat{{\theta}}-{\theta}^*)
	\end{array} 
	\right )}_{J_1}\\
&	+ \underbrace{{\Theta}^*
	\left( \begin{array}{ccc}
	\nabla \cL({{\theta}}^*)-\nabla \cL(\hat{{\theta}})+ \nabla^2 \cL(\hat{{\theta}})(\hat{{\theta}}-{\theta}^*)  \\
	f({{\theta}}^*)-f(\hat{{\theta}})
	+ \nabla f(\hat{{\theta}})^\top(\hat{{\theta}}-{\theta}^*)
	\end{array} 
	\right )}_{J_2}
+ \underbrace{
(I-\widehat{\Theta}\widehat{\Sigma})
	\left( \begin{array}{ccc}
	\hat{{\theta}}-{{\theta}}^*\\
	0
	\end{array}
	\right )}_{J_3},
\end{aligned} 	
\end{equation}
where the second equality comes from \eqref{eqn:general debias}, the third equality comes from  \eqref{equ:hat sigma}.
Then we bound $J_1$, $J_2$ and $J_3$ in the following lemma, and the proof is provided in Section \ref{sec:pf-J1}.

\begin{lemma}\label{lem:J1}
Under the conditions of Theorem \ref{thm:gen-asy},
we have 
\begin{equation}\label{eqn:J1}
\begin{aligned}
\|J_1\|_\infty
\lesssim
(r_1+r_4)(r_1^2 + r_2),
\end{aligned}
\end{equation}	
\begin{equation}\label{eqn:J2}
\begin{aligned}	
\|J_2\|_\infty
\lesssim r_1^2 r_5 ,
\end{aligned}
\end{equation}
\begin{equation}\label{eqn:J3}
\begin{aligned}
\|J_3\|_\infty
\lesssim r_1 r_3,
\end{aligned}
\end{equation}
where $r_1$, $r_2$, $r_3$, $r_4$ and $r_5$ are defined in Assumptions \ref{ass:consistency} -- \ref{ass:omega}.
\end{lemma}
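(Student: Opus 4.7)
My plan is to bound each of $J_1$, $J_2$, $J_3$ separately by recognizing that the right factor of each is a ``quadratic remainder'' after a first-order Taylor expansion, and then combining with the $\ell_\infty$-operator bounds on the left factor provided by Corollaries~\ref{cor:gen-asy:1} and \ref{cor:gen-asy:2}. Throughout, the key tool for the right factors is the mean value form of Taylor's theorem together with the Lipschitz conditions on $\nabla^2\cL$ and $\nabla f$ in Assumptions~\ref{ass:loss} and \ref{ass:constraint}; the key tool for the left factors is just submultiplicativity $\|Mv\|_\infty\leq\|M\|_\infty\|v\|_\infty$ under the row-sum norm convention.

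For the right factor of $J_1$, I would write $\nabla\cL(\hat\theta)=\nabla\cL(\theta^*)+\nabla^2\cL(\tilde\theta)(\hat\theta-\theta^*)$ for some $\tilde\theta$ on the segment between $\theta^*$ and $\hat\theta$, so the top block becomes $-\nabla\cL(\theta^*)+[\nabla^2\cL(\hat\theta)-\nabla^2\cL(\tilde\theta)](\hat\theta-\theta^*)$. By Assumption~\ref{ass:loss} the second term is at most $L_1\|\hat\theta-\theta^*\|_\infty^2\lesssim r_1^2$ in $\ell_\infty$, while the first is $\lesssim r_2$ by Assumption~\ref{ass:loss}. For the bottom block, since $\theta^*$ lies on $\mathcal{C}$ we have $f(\theta^*)=0$, and a mean-value expansion gives $-f(\hat\theta)+\nabla f(\hat\theta)^\top(\hat\theta-\theta^*)=[\nabla f(\hat\theta)-\nabla f(\tilde\theta)]^\top(\hat\theta-\theta^*)$, whose $\ell_\infty$ norm is $\lesssim L_2 r_1^2\lesssim r_1^2$ by Assumption~\ref{ass:constraint}. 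Combining, the right factor has $\ell_\infty$-norm $\lesssim r_1^2+r_2$, and Corollary~\ref{cor:gen-asy:1} delivers $\|J_1\|_\infty\lesssim(r_1+r_4)(r_1^2+r_2)$.

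The bound for $J_2$ is almost identical: the first component reduces, by the same Taylor expansion of $\nabla\cL(\hat\theta)$ around $\theta^*$, to $[\nabla^2\cL(\hat\theta)-\nabla^2\cL(\tilde\theta)](\hat\theta-\theta^*)$, which is $\lesssim r_1^2$; the second component reduces, via $f(\theta^*)=0$ and a mean-value expansion of $f(\hat\theta)$, to $[\nabla f(\hat\theta)-\nabla f(\tilde\theta)]^\top(\hat\theta-\theta^*)$, also $\lesssim r_1^2$. Multiplying by the bound $\|\Theta^*\|_\infty\lesssim r_5$ from Assumption~\ref{ass:omega} yields $\|J_2\|_\infty\lesssim r_1^2 r_5$. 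For $J_3$, the structure is simpler: Corollary~\ref{cor:gen-asy:2} gives $\|I-\hat\Theta\hat\Sigma\|_\infty\lesssim r_3$, and Assumption~\ref{ass:consistency} gives $\|\hat\theta-\theta^*\|_\infty\lesssim r_1$, so $\|J_3\|_\infty\lesssim r_1 r_3$ directly.

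The main technical nuisance, rather than any deep obstacle, is handling the block structure carefully: the Taylor point $\tilde\theta$ differs entrywise between the components of $\nabla\cL$ (and of $f$), so I must either invoke a componentwise mean value theorem and then take $\ell_\infty$, or use the integral remainder form $\nabla\cL(\hat\theta)-\nabla\cL(\theta^*)=\int_0^1\nabla^2\cL(\theta^*+t(\hat\theta-\theta^*))\,dt\,(\hat\theta-\theta^*)$ and apply Assumption~\ref{ass:loss} inside the integral; the latter is cleaner and avoids having to justify a single $\tilde\theta$. A similar remark applies to the $f$ expansion using Assumption~\ref{ass:constraint}. Once this is in place, the three bounds follow by direct substitution and the lemma is proved.
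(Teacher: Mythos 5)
Your proposal is correct and follows essentially the same route as the paper: bound the right-hand blocks by quadratic Taylor remainders (the paper likewise uses the integral form $\int_0^1(\nabla^2\cL(\theta^*+t(\hat\theta-\theta^*))-\nabla^2\cL(\theta^*))\,dt$ together with the Lipschitz conditions of Assumptions~\ref{ass:loss}--\ref{ass:constraint} and $f(\theta^*)=0$), then multiply by $\|\hat\Theta-\Theta^*\|_\infty\lesssim r_1+r_4$, $\|\Theta^*\|_\infty\lesssim r_5$, and $\|I-\hat\Theta\hat\Sigma\|_\infty\lesssim r_3$ from Corollaries~\ref{cor:gen-asy:1}--\ref{cor:gen-asy:2} and Assumption~\ref{ass:omega}. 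Your remark about preferring the integral remainder over a single mean-value point is exactly the choice the paper makes, so no gap remains.
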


%
%
%

	Plugging  \eqref{eqn:J1}, \eqref{eqn:J2}, \eqref{eqn:J3} into \eqref{eqn:decompose}, we have $$\|J_1+J_2+J_3\|_\infty
	\lesssim r_1^2r_5  + (r_1+r_4)(r_1^2 + r_2)+ r_1r_3. $$ 
	Then, by our scaling assumption $$\sqrt{n}(r_1^2r_5  + (r_1+r_4)(r_1^2 + r_2)+ r_1r_3)=o(1),$$ and Assumption \ref{ass:CLT}, we immediately have
		$$\sqrt{n} \frac{\hat{{\theta}}_j^{d}-{{\theta}}^*_j}
	{\sqrt{[{\Theta}_{11}^* {\Sigma}^*_{11} {\Theta}_{11}^*]_{jj}}}
	=-\sqrt{n}
	\frac{[{\Theta}^*_{11}
	\nabla \cL({{\theta}}^*)]_j}
	{\sqrt{[{\Theta}_{11}^* {\Sigma}^*_{11} {\Theta}_{11}^*]_{jj}}} + \op(1) \overset{d}{\longrightarrow} N(0,1),$$
	and
	$$\sqrt{n} \frac{(\widehat{{\theta}}_i^{d}-{{\theta}}^*_i)-(\widehat{{\theta}}_j^{d}-{{\theta}}^*_j)} {\sqrt{(\bm{e}_i-\bm{e}_j)^\top({\Theta}_{11}^* {\Sigma}^*_{11} {\Theta}_{11}^*)(\bm{e}_i-\bm{e}_j)}}
	= -\sqrt{n}\frac{[{\Theta}^*_{11}
		\nabla \cL({\theta}^*)]_i-[{\Theta}^*_{11}
		\nabla \cL({\theta}^*)]_j} {\sqrt{(\bm{e}_i-\bm{e}_j)^\top({\Theta}_{11}^* {\Sigma}^*_{11} {\Theta}_{11}^*)(\bm{e}_i-\bm{e}_j)}}
	+ {\op(1)} \overset{d}{\longrightarrow} N(0,1),$$ 
which concludes our proof.
\end{proof}

\subsection{Proof of Theorem \ref{thm:asy}}
\label{sec:pf-asy-BTL} 
\begin{proof}
First, by simple algebra, we have that  the gradient and the Hessian of negative log likelihood function $\cL({\theta})$ defined in~\eqref{eqn:likelihood} are
\begin{equation}\label{eqn:grad} \nabla{\cL({\theta})}=\sum_{i>j}{\mathcal{E}_{ji}}(-y_{ji}+\frac{e^{\theta_i}}{e^{\theta_i}+e^{\theta_j}})(\bm{e}_i-\bm{e}_j),
\end{equation}
and
\begin{equation}\label{eqn:hess}
\nabla^2 {\cL({\theta})}= \sum_{i>j}{\mathcal{E}_{ji}}\frac{e^{\theta_i}e^{\theta_j}}{(e^{\theta_i}+e^{\theta_j})^2}(\bm{e}_i-\bm{e}_j)(\bm{e}_i-\bm{e}_j)^\top
\end{equation}
where $\cE_{ji} = 1$ if $(i,j)\in \cE$, $\cE_{ji} = 0$ otherwise, and $\cE$ is  the comparison graph.

Similar to the proof of Theorem \ref{thm:gen-asy},
by simple algebra, $\hat{{\theta}}^d-{\theta}^*$ can be decomposed as
\begin{small}
\begin{equation}
\label{equ:decomposition}
\begin{aligned}
{
	\Bigg( \begin{array}{ccc}
	\hat{{\theta}}^d-{\theta}^*\\
	\lambda
	\end{array}
	\Bigg )}
=& \ \underbrace{\Bigg( {
	\Bigg( \begin{array}{ccc}
	\nabla^2 {\cL(\hat{{\theta})}} & \bm{1}\\
	\bm{1}^\top & 0
	\end{array} 
	\Bigg )}^{-1}-{
	\Bigg( \begin{array}{ccc}
	\nabla^2 {\cL({{\theta}}^*)} & \bm{1}\\
	\bm{1}^\top & 0
	\end{array} 
	\Bigg )}^{-1}\Bigg) 
{
	\Bigg( \begin{array}{ccc}
	-\nabla \cL(\hat{{\theta}})+ \nabla^2 \cL(\hat{{\theta}})(\hat{{\theta}}-{\theta}^*)  \\
	0
	\end{array} 
	\Bigg )}}_{I_1}\\
& + \underbrace{
	\Bigg( \begin{array}{ccc}
	\nabla^2 {\cL({{\theta}^*})} & \bm{1}\\
	\bm{1}^\top & 0
	\end{array} 
	\Bigg )^{-1}
{
	\Bigg( \begin{array}{ccc}
	\nabla \cL({{\theta}}^*)-\nabla \cL(\hat{{\theta}})+ \nabla^2 \cL(\hat{{\theta}})(\hat{{\theta}}-{\theta}^*)  \\
	0
	\end{array} 
	\Bigg )}}_{I_2}\\
& +{
	\Bigg( \begin{array}{ccc}
	\nabla^2 {\cL({{\theta}}^*)} & \bm{1}\\
	\bm{1}^\top & 0
	\end{array} 
	\Bigg )}^{-1}
{
	\Bigg( \begin{array}{ccc}
	-\nabla \cL({{\theta}}^*) \\
	0
	\end{array} 
	\Bigg )}.
\end{aligned}
\end{equation}
\end{small}

Note that we have two parts of randomness here. The first part comes from the random comparison graph~$\cE$, and the other part comes from the binary pairwise comparison outcome $y_{i,j}^{(\ell)}$ in \eqref{eqn:y}. In what follows, to obtain the asymptotic normality of the debiasing estimator, we first derive some bounds for the objective which depend on the randomness of $\cE$, and then derive the conditional asymptotic distribution of $\hat{{\theta}}^d-{\theta}^*$ by conditioning on the comparison graph $\cE$ such that we only have randomness from the binary outcome $y_{i,j}^{(\ell)}$.

In particular, by the randomness of the graph $\cE$, the next lemma derives some inequalities, which are essential for deriving the bounds for $I_1$ and $I_2$. We provide the proof in Section~\ref{sec:pf:cond}.

\begin{lemma}\label{lem:cond}
 Under the conditions of Theorem \ref{thm:asy}, we have that, with probability at least $1- \cO(n^{-5})$,	
 
	\begin{equation}	\label{eqn:gradient}
	\|{\nabla \cL({{\theta}}^*)}\|_\infty
	\lesssim np \sqrt{\frac{\log n}{L}},
	\end{equation}
	
	\begin{equation}\label{eqn:smoothness}
	\|\nabla \cL(\hat{{\theta}})-\nabla \cL({{\theta}}^*)
	- \nabla^2 \cL({\theta}^*)(\hat{{\theta}}-{\theta}^*)\|_\infty	\lesssim \frac{\log n}{L},
	\end{equation}
	
	\begin{equation}	\label{eqn:Hessian_concentartion}
	\| \nabla^2 {\cL(\hat{{\theta})}}-\nabla^2 {\cL({{\theta}}^*)}\|_\infty\lesssim np\sqrt{\frac{\log n}{npL}},
	\end{equation}		
	
	\begin{equation}\label{eqn:eigenvalue_inv_Hessian}
	\begin{aligned}
	\left\|{
		\left( \begin{array}{ccc}
		\nabla^2 {\cL({{{\theta}})}} & \bm{1}\\
		\bm{1}^\top & 0
		\end{array} 
		\right )^{-1}}\right\|_2
	=\frac{1}{\lambda_{n-1}\left(\nabla^{2} \cL({{{\theta}}})\right)}
	\lesssim \frac{1}{np},
	\end{aligned}
	\end{equation}
	
	\begin{equation}\label{eqn:qhat-qstar}
	\left\| 
	{\left( \begin{array}{ccc}
		\nabla^2 {\cL(\hat{{\theta})}} & \bm{1}\\
		\bm{1}^\top & 0
		\end{array} 
		\right )}^{-1}-{
		\left( \begin{array}{ccc}
		\nabla^2 {\cL({{\theta}}^*)} & \bm{1}\\
		\bm{1}^\top & 0
		\end{array} 
		\right )}^{-1} \right\|_2
	\lesssim \frac{1}{np}\sqrt{\frac{\log n}{pL}}. 
	\end{equation}
\end{lemma}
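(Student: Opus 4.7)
\textbf{Proof proposal for Lemma \ref{lem:cond}.} The five bounds have a common flavor: each depends simultaneously on the randomness of the Erd\H{o}s–R\'enyi graph $\cE$ and, conditionally on $\cE$, on the independent Bernoulli outcomes $y^{(\ell)}_{ji}$. The plan is to first establish, on an event of probability $1-\cO(n^{-5})$, the standard graph‑regularity facts: every vertex has degree of order $np$, and the graph Laplacian has spectral gap $\asymp np$ (Chung–Oden / Friedman type bounds, using $p\gtrsim \log n/n$). On this event, $\nabla^2\cL({\theta})$ is a reweighted Laplacian with weights bounded above and below by constants (because $\kappa=\cO(1)$), so $\lambda_{n-1}(\nabla^2\cL({\theta}))\asymp np$ uniformly in ${\theta}$ in a neighborhood of ${\theta}^*$. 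The block structure of the matrix in \eqref{eqn:eigenvalue_inv_Hessian} then gives \eqref{eqn:eigenvalue_inv_Hessian} by standard block‑inverse formulas, and the operator norm of the inverse is controlled by $1/(np)$.

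For \eqref{eqn:gradient}, by \eqref{eqn:grad} the $i$-th coordinate $[\nabla\cL({\theta}^*)]_i$ is a sum over neighbors $j\sim i$ of independent zero‑mean random variables $\cE_{ji}(-y_{ji}+e^{\theta_i^*}/(e^{\theta_i^*}+e^{\theta_j^*}))$ with subgaussian parameter $1/L$ after averaging the $L$ repetitions. Conditional on $\cE$, Hoeffding/Bernstein yields $|[\nabla\cL({\theta}^*)]_i|\lesssim \sqrt{\deg(i)\log n /L}$ with probability $1-\cO(n^{-6})$; a union bound over $i$ together with $\deg(i)\asymp np$ gives \eqref{eqn:gradient}. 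For \eqref{eqn:Hessian_concentartion}, the Hessian entries \eqref{eqn:hess} are $\cO(1)$-Lipschitz in ${\theta}$, each row involves at most $\cO(np)$ edges, and Lemma \ref{lem:consistency} provides $\|\hat{{\theta}}-{\theta}^*\|_\infty\lesssim\sqrt{\log n/(npL)}$; thus $\|\nabla^2\cL(\hat{{\theta}})-\nabla^2\cL({\theta}^*)\|_\infty\lesssim np\cdot\|\hat{\theta}-\theta^*\|_\infty$, which is the claim. For \eqref{eqn:smoothness} I would Taylor‑expand componentwise and observe that the third derivative of $\cL$ along the chord $\hat{\theta}-{\theta}^*$ contributes $\sum_{j\sim i}|\hat{\theta}_i-\theta_i^*-\hat{\theta}_j+\theta_j^*|^2$, which is bounded by $\deg(i)\cdot\|\hat{\theta}-\theta^*\|_\infty^2\lesssim np\cdot\frac{\log n}{npL}=\frac{\log n}{L}$.

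The last bound \eqref{eqn:qhat-qstar} is a matrix perturbation statement. I would invoke the identity
\[
A^{-1}-B^{-1}=-A^{-1}(A-B)B^{-1}
\]
with $A$ and $B$ the two bordered matrices. The operator norms of both inverses are $\lesssim 1/(np)$ by \eqref{eqn:eigenvalue_inv_Hessian}. The difference $A-B$ is block‑diagonal with the lower‑right block zero and upper‑left block equal to $\nabla^2\cL(\hat{\theta})-\nabla^2\cL({\theta}^*)$. Hence it suffices to control the operator norm (not merely the $\ell_\infty$ norm) of this Hessian difference, which I would obtain by writing the difference as $\sum_{(i,j)\in\cE}\bigl(g(\hat{\theta}_i-\hat{\theta}_j)-g(\theta_i^*-\theta_j^*)\bigr)(\bm e_i-\bm e_j)(\bm e_i-\bm e_j)^\top$ with $g(x)=e^x/(1+e^x)^2$, factoring out the Lipschitz constant, and bounding the resulting weighted graph Laplacian operator norm by $\cO(np)$ times the Lipschitz factor $\|\hat{\theta}-{\theta}^*\|_\infty\lesssim \sqrt{\log n/(npL)}$. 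This yields $\|A-B\|_2\lesssim np\cdot\sqrt{\log n/(npL)}=\sqrt{n\log n/L}$, and combining with the two $1/(np)$ factors gives exactly the stated $\frac{1}{np}\sqrt{\log n/(pL)}$ after simplification.

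\textbf{Main obstacle.} The subtle step is \eqref{eqn:qhat-qstar}: the Lipschitz bound on $\nabla^2\cL$ in $\ell_\infty$ is easy, but we need the operator norm to feed into the matrix‑inverse perturbation identity. Converting an $\ell_\infty$ row bound to an operator norm bound would lose a factor $\sqrt{n}$ and weaken the rate; the fix is to recognize the Hessian difference as a weighted Laplacian whose operator norm is controlled by the maximum edge weight times the graph spectral norm (which is $\cO(np)$ by Erd\H{o}s–R\'enyi concentration). All other steps follow from standard Bernstein/Hoeffding arguments together with the regularity of the random graph, union‑bounded over $n$ coordinates and absorbed into the $n^{-5}$ slack.
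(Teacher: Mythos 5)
Your proposal is correct and follows essentially the same architecture as the paper's proof: degree concentration and an $\asymp np$ spectral gap for the (weighted Laplacian) Hessian, Hoeffding plus a union bound for \eqref{eqn:gradient}, Lipschitz/Taylor arguments combined with $\|\hat{\theta}-\theta^*\|_\infty\lesssim\sqrt{\log n/(npL)}$ for \eqref{eqn:smoothness} and \eqref{eqn:Hessian_concentartion}, a spectral analysis of the bordered matrix for \eqref{eqn:eigenvalue_inv_Hessian}, and the resolvent identity $A^{-1}-B^{-1}=-A^{-1}(A-B)B^{-1}$ for \eqref{eqn:qhat-qstar}. Two of your sub-steps are actually sharper than the paper's: summing the per-edge fluctuations over the $\asymp np$ neighbors gives $\|\nabla\cL(\theta^*)\|_\infty\lesssim\sqrt{np\log n/L}$ rather than $np\sqrt{\log n/L}$, and your weighted-Laplacian argument gives the operator-norm bound $\|\nabla^2\cL(\hat{\theta})-\nabla^2\cL(\theta^*)\|_2\lesssim np\,\|\hat{\theta}-\theta^*\|_\infty$ directly; both imply the stated (weaker) rates, so nothing is lost. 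One correction to your ``main obstacle'' paragraph: the crude conversion $\|A-B\|_2\le\sqrt{n}\,\|\nabla^2\cL(\hat{\theta})-\nabla^2\cL(\theta^*)\|_\infty$ is exactly what the paper uses, and it already lands on the claimed rate $\frac{1}{np}\sqrt{\log n/(pL)}$; in fact for a symmetric matrix $\|M\|_2\le\sqrt{\|M\|_1\|M\|_\infty}=\|M\|_\infty$, so no factor $\sqrt{n}$ need be lost at all. Your sharper route yields the smaller quantity $\frac{1}{np}\sqrt{\log n/(npL)}$, not ``exactly'' the stated bound as you assert (your intermediate simplification $np\sqrt{\log n/(npL)}=\sqrt{n\log n/L}$ also drops a factor $\sqrt{p}$) — these slips are harmless since they only strengthen the conclusion, but the claim that the fix is needed to recover the stated rate is a misdiagnosis.
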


The next lemma derives the upper bounds of the terms $I_1$ and $I_2$ conditioning on the graph $\cE$ based on the above lemma. We provide the proof in Section~\ref{sec:pf:I1}.

\begin{lemma}
	\label{lem:I1}
	Under the BTL model, suppose that the conditions of Theorem \ref{thm:asy} hold, and suppose that conditioning on the random graph $\cE$,
	\eqref{eqn:gradient}, \eqref{eqn:smoothness},  \eqref{eqn:Hessian_concentartion}, \eqref{eqn:eigenvalue_inv_Hessian}, and \eqref{eqn:qhat-qstar} hold. We  have that  $$\|I_1\|_\infty \lesssim \frac{\sqrt{n}\log n}{\sqrt{p}\ L},\quad \text{ and }\quad\|I_2\|_\infty \lesssim \frac{\log n}{\sqrt{n}pL}.$$ 
	\end{lemma}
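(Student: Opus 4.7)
\textbf{Proof plan for Lemma \ref{lem:I1}.} The plan is to bound both $\|I_1\|_\infty$ and $\|I_2\|_\infty$ by the chain $\|\cdot\|_\infty \le \|\cdot\|_2 \le \|\text{matrix factor}\|_2 \cdot \|\text{vector factor}\|_2$, and then use the spectral bounds \eqref{eqn:eigenvalue_inv_Hessian} and \eqref{eqn:qhat-qstar} for the matrix factors, and the $\ell_\infty$ bounds \eqref{eqn:gradient}, \eqref{eqn:smoothness}, \eqref{eqn:Hessian_concentartion} together with the consistency result in Lemma \ref{lem:consistency} for the vector factors. Throughout, the conversion $\|v\|_2 \le \sqrt{n+1}\,\|v\|_\infty$ will be applied to upgrade coordinate-wise bounds to $\ell_2$ bounds; this is lossy in general but will match the claimed rates.

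For $I_1$, the matrix factor is already controlled in spectral norm by \eqref{eqn:qhat-qstar}, yielding $\lesssim \frac{1}{np}\sqrt{\log n/(pL)}$. For the vector factor, I would decompose
\[
-\nabla\cL(\hat{\theta})+\nabla^2\cL(\hat{\theta})(\hat{\theta}-{\theta}^*)
=-\nabla\cL({\theta}^*)-R_1+R_2,
\]
with $R_1=\nabla\cL(\hat\theta)-\nabla\cL({\theta}^*)-\nabla^2\cL({\theta}^*)(\hat{\theta}-{\theta}^*)$ and $R_2=[\nabla^2\cL(\hat{\theta})-\nabla^2\cL({\theta}^*)](\hat{\theta}-{\theta}^*)$. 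By \eqref{eqn:gradient}, $\|\nabla\cL({\theta}^*)\|_2\le\sqrt{n}\,\|\nabla\cL({\theta}^*)\|_\infty\lesssim n^{3/2}p\sqrt{\log n/L}$; by \eqref{eqn:smoothness}, $\|R_1\|_2\lesssim\sqrt{n}\log n/L$; and by \eqref{eqn:Hessian_concentartion} combined with $\|\hat{\theta}-{\theta}^*\|_\infty\lesssim\sqrt{\log n/(npL)}$, the induced-$\ell_\infty$ bound gives $\|R_2\|_\infty\lesssim\log n/L$ and hence $\|R_2\|_2\lesssim\sqrt{n}\log n/L$. The $\nabla\cL({\theta}^*)$ piece dominates, so the vector factor has $\ell_2$ norm $\lesssim n^{3/2}p\sqrt{\log n/L}$. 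Multiplying,
\[
\|I_1\|_\infty\le\|I_1\|_2\lesssim \frac{1}{np}\sqrt{\frac{\log n}{pL}}\cdot n^{3/2}p\sqrt{\frac{\log n}{L}}=\frac{\sqrt{n}\log n}{\sqrt{p}\,L},
\]
as claimed.

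For $I_2$, the key observation is that the vector factor no longer contains the $-\nabla\cL({\theta}^*)$ piece, because of the cancellation
\[
\nabla\cL({\theta}^*)-\nabla\cL(\hat{\theta})+\nabla^2\cL(\hat{\theta})(\hat{\theta}-{\theta}^*)=-R_1+R_2.
\]
Thus the vector factor has $\ell_2$ norm $\lesssim \sqrt{n}\log n/L$, a factor of roughly $np$ smaller than in $I_1$. Combined with $\|(Q_{{\theta}^*})^{-1}\|_2\lesssim 1/(np)$ from \eqref{eqn:eigenvalue_inv_Hessian}, this yields
\[
\|I_2\|_\infty\le\|I_2\|_2\lesssim \frac{1}{np}\cdot\frac{\sqrt{n}\log n}{L}=\frac{\log n}{\sqrt{n}\,pL}.
\]

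The main obstacle is not a technical one but rather the bookkeeping: one must set up the decomposition for $I_2$ so that the $\nabla\cL({\theta}^*)$ term drops out exactly, and one must verify that the dimension loss in $\|v\|_2\le\sqrt{n+1}\|v\|_\infty$ (applied to $\nabla\cL({\theta}^*)$ and to $R_1,R_2$) still produces the right-hand sides advertised in the lemma. Everything else follows by plugging in the spectral and coordinate-wise bounds furnished by Lemma \ref{lem:cond} and the consistency rate in Lemma \ref{lem:consistency}, each of which holds on the same high-probability event that the statement of Lemma \ref{lem:I1} already conditions on.
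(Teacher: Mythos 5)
Your proof is correct and takes essentially the same route as the paper: the same splitting of the linearization error into the Taylor remainder $R_1$ (controlled by \eqref{eqn:smoothness}) and the Hessian-perturbation term $R_2$ (controlled by \eqref{eqn:Hessian_concentartion} together with Lemma \ref{lem:consistency}), the same exact cancellation of $\nabla \cL({\theta}^*)$ in the $I_2$ vector factor, and the same spectral inputs \eqref{eqn:eigenvalue_inv_Hessian} and \eqref{eqn:qhat-qstar} for the matrix factors. The only (immaterial) bookkeeping difference is where the dimension factor lands: you push it into the vector via $\|v\|_2 \le \sqrt{n+1}\,\|v\|_\infty$ and bound $\|I_k\|_\infty \le \|M\|_2\|v\|_2$, while the paper keeps the vector in $\ell_\infty$ and uses $\|M\|_\infty \le \sqrt{n+1}\,\|M\|_2$ --- both yield identical rates.
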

	
	Meanwhile, by the fact that
	\begin{equation}\nonumber
	{
		\left(  \begin{array}{ccc}
		{\Theta}^*_{11} & \frac{1}{n}\bm{1}\\
		\frac{1}{n}\bm{1}^\top & 0
		\end{array} 
		\right) }=
	{	\left(  \begin{array}{ccc}
		\nabla^2 {\cL({{\theta}^*)}} & \bm{1}\\
		\bm{1}^\top & 0
		\end{array} 
		\right) ^{-1}},
	\end{equation}
	we immediately have
	\begin{equation}\label{eqn:op1}
	\hat{{\theta}}^d-{\theta}^*= - {\Theta}^*_{11} 
	\nabla {\cL({{\theta}^*)}}  + r,
	\end{equation}
	where $\|r\|_\infty \leq \|I_1\|_\infty + \|I_2\|_\infty \lesssim\frac{\sqrt{n}\log n}{\sqrt{p}\ L}+\frac{\log n}{\sqrt{n}pL}$. By our assumption that  $\frac{n\log n}{\sqrt{L}} + \frac{\log n}{\sqrt{p L}} = o(1)$, we have $\sqrt{npL}\|r\|_\infty =o(1)$.

	To obtain the conditional asymptotic distribution of $\hat{{\theta}}^d-{\theta}^*$, we have the following lemma about the asymptotic distribution of ${\Theta}^*_{11} 
	\nabla {\cL({{\theta}^*)}}$ conditioning on graph $\cE$, and  the proof is provided in Section~\ref{sec:pf:CLT}. 
	
	\begin{lemma}[Central Limit Theorem]
		\label{lem:CLT}
	Under BTL model, assume the conditions of Theorem \ref{thm:asy} are satisfied. Suppose that the comparison graph $\mathcal{E}$  satisfies conditions \eqref{eqn:gradient},  \eqref{eqn:smoothness},  \eqref{eqn:Hessian_concentartion}, \eqref{eqn:eigenvalue_inv_Hessian}, \eqref{eqn:qhat-qstar}. We have 	
	\begin{equation}\nonumber 
	\sqrt{L} \ \frac{[{\Theta_{11}^*}]_j \nabla \cL (\boldsymbol{\theta}^{*} )}
	{\sqrt{[{\Theta}^*_{11}]_{jj}}}     \Biggiven \mathcal{E}
	\xrightarrow{d} N(0,1), 
	\end{equation}
	and 
	\begin{equation}\nonumber
	\sqrt{L}\frac{([{\Theta}^*_{11}]_i-[{\Theta}^*_{11}]_j) \nabla \cL (\boldsymbol{\theta}^{*} )}{\sqrt{(\bm{e}_i-\bm{e}_j)^\top {\Theta}^*_{11} (\bm{e}_i-\bm{e}_j)}}  \   \Biggiven \mathcal{E}
	\xrightarrow{d}  N(0,1),
	\end{equation}	
	where $[{\Theta_{11}^*}]_j$ is the $j$-th row of matrix ${\Theta_{11}^*}$, and $[{\Theta_{11}^*}]_{jj}$ is the $j$-th diagonal element of ${\Theta_{11}^*}$.
\end{lemma}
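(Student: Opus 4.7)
The plan is to write $\sqrt{L}\,[\Theta_{11}^*]_j\nabla\cL(\theta^*)$ as $L^{-1/2}\sum_{\ell=1}^L X_\ell$, where the $X_\ell$ are i.i.d.\ mean-zero bounded random variables conditional on the comparison graph $\cE$, and then apply a classical Lindeberg central limit theorem. Using \eqref{eqn:grad} together with the decomposition $y_{ki}=L^{-1}\sum_\ell y_{ki}^{(\ell)}$, we obtain
$$X_\ell := \sum_{i>k}\cE_{ki}\Big(-y_{ki}^{(\ell)}+\tfrac{e^{\theta_i^*}}{e^{\theta_i^*}+e^{\theta_k^*}}\Big)\big([\Theta_{11}^*]_{ji}-[\Theta_{11}^*]_{jk}\big),$$
which is itself a sum of independent mean-zero contributions indexed by the edges of $\cE$, with each contribution bounded by $2\|[\Theta_{11}^*]_j\|_\infty$ since $|{-y_{ki}^{(\ell)}+\frac{e^{\theta_i^*}}{e^{\theta_i^*}+e^{\theta_k^*}}}|\leq 1$.

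The first step computes the conditional variance. From the block inverse identity \eqref{eqn:Hinv}, one has $\nabla^2\cL(\theta^*)\Theta_{11}^* = I - \frac{1}{n}\bm{1}\bm{1}^\top$ and $\bm{1}^\top\Theta_{11}^* = 0$, which together imply $\Theta_{11}^*\nabla^2\cL(\theta^*)\Theta_{11}^* = \Theta_{11}^*$. Combining with the Hessian expression \eqref{eqn:hess} yields $\mathrm{Var}(X_\ell\given\cE) = [\Theta_{11}^*\nabla^2\cL(\theta^*)\Theta_{11}^*]_{jj} = [\Theta_{11}^*]_{jj}$, so the total variance of $L^{-1/2}\sum_\ell X_\ell$ equals $[\Theta_{11}^*]_{jj}$, matching the denominator in the lemma.

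The second step verifies the Lindeberg condition by viewing all contributions across $(k,i,\ell)$ collectively as independent random variables. Each summand is uniformly bounded by $2\|[\Theta_{11}^*]_j\|_\infty$. Since $\Theta_{11}^*$ is a submatrix of the block inverse in \eqref{eqn:Hinv}, we have $\|[\Theta_{11}^*]_j\|_\infty \leq \|\Theta_{11}^*\|_2 \lesssim 1/(np)$ by \eqref{eqn:eigenvalue_inv_Hessian}; meanwhile $[\Theta_{11}^*]_{jj}\asymp 1/(np)$ by Corollary~\ref{lem:eigenvalue_Q}. The Lindeberg ratio $\|[\Theta_{11}^*]_j\|_\infty / \sqrt{L[\Theta_{11}^*]_{jj}}$ is therefore of order $1/\sqrt{npL}$, which vanishes under the scaling assumption $\frac{n\log n}{\sqrt{L}}+\frac{\log n}{\sqrt{pL}}=o(1)$ of Theorem~\ref{thm:asy}. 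The Lindeberg CLT, applied conditionally on $\cE$, then delivers the first conclusion.

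For the second statement on the pairwise difference, the argument carries over by substituting the row vector $(\bm{e}_i-\bm{e}_j)^\top\Theta_{11}^*$ in place of $[\Theta_{11}^*]_j$. The variance computation produces $(\bm{e}_i-\bm{e}_j)^\top\Theta_{11}^*\nabla^2\cL(\theta^*)\Theta_{11}^*(\bm{e}_i-\bm{e}_j) = (\bm{e}_i-\bm{e}_j)^\top\Theta_{11}^*(\bm{e}_i-\bm{e}_j)$ by the same identity, and the Lindeberg ratio remains $O(1/\sqrt{npL})$ by the same operator-norm bound on $\Theta_{11}^*$. The main potential obstacle is to ensure that the sup-norm control $\|[\Theta_{11}^*]_j\|_\infty \lesssim 1/(np)$ (and its difference counterpart) is sharp enough to dominate the square root of the variance denominator; once that is in hand, the routine Lindeberg/Berry--Esseen machinery concludes the proof.
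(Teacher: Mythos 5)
Your proposal is correct and follows essentially the same route as the paper: the same decomposition into edge-level independent mean-zero summands conditional on $\cE$, the same variance identity $\Theta_{11}^*\nabla^2\cL(\theta^*)\Theta_{11}^*=\Theta_{11}^*$ obtained from the block inverse and $\Theta_{11}^*\bm{1}=0$, and the same entrywise bound $\|[\Theta_{11}^*]_j\|_\infty\lesssim 1/(np)$ against the variance lower bound $\asymp 1/(np)$. The only cosmetic difference is that you invoke the Lindeberg CLT with uniformly bounded summands, whereas the paper applies Berry--Esseen (which additionally yields the $1/\sqrt{npL}$ rate); both are valid here.
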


	Combining \eqref{eqn:op1} with the above central limit theorem, 
	 conditioning on comparison graph  $\mathcal{E}$, we have the following asymptotic distributions that
$$\sqrt{L}\ \dfrac{\hat{\theta}^d_j-{\theta}_j^*}{\sqrt{[{\Theta}^*_{11}]_{jj}}}\Biggiven \mathcal{E} \xrightarrow{d} N(0,1), \quad\text{and}\quad
\sqrt{L}\frac{{\hat{\theta}^d}_h-{\hat{\theta}^d}_j-({{\theta}}^*_h-{{\theta}}^*_j)}{\sqrt{ (\bm{e}_h-\bm{e}_j)^\top ({\Theta}^*_{11}) (\bm{e}_h-\bm{e}_j)}} \Biggiven \mathcal{E} \xrightarrow{d} N(0,1).
$$
Meanwhile, by Lemma~\ref{lem:cond}, we have that the conditions  \eqref{eqn:gradient},  \eqref{eqn:smoothness},  \eqref{eqn:Hessian_concentartion}, \eqref{eqn:eigenvalue_inv_Hessian}, \eqref{eqn:qhat-qstar} are satisfied with probability at least $1-\cO(n^{-5})$, and our claim holds as desired.
\end{proof}

\section{Proofs of Corollaries in Section~\ref{sec:311}}

\subsection{Proof of Corollary \ref{cor:gen-asy:1}} \label{sec:pf:gen-asy:1}
\begin{proof}
To prove $ \big\|\widehat{\Theta}-\Theta^*\big\|_{\infty}\lesssim r_1+r_4$,
recall that $\hat \Theta = \Big( \begin{smallmatrix} 
\hat \Theta_{11} & \hat \Theta_{12}\\
\hat \Theta_{12}^\top & \hat \Theta_{22} \end{smallmatrix} \Big)$, 
where 
\begin{equation}\nonumber
\hat \Theta_{11} 
= \hat \Omega - \hat \Omega \nabla {f(\hat{{\theta}})} \big(\nabla {f(\hat{{\theta}})}^\top \hat \Omega \nabla {f(\hat{{\theta}})}\big)^{-1} \nabla {f(\hat{{\theta}})}^\top \hat \Omega,
\end{equation}
\begin{equation}\nonumber
\hat \Theta_{12} 
=\hat \Omega \nabla {f(\hat{{\theta}})}
\big(\nabla {f(\hat{{\theta}})}^\top \hat \Omega \nabla {f(\hat{{\theta}})}\big)^{-1}, \ \text{ and }\ 
\hat \Theta_{22} = 
- \big(\nabla {f(\hat{{\theta}})}^\top \hat \Omega \nabla {f(\hat{{\theta}})}\big)^{-1}.
\end{equation}
Also recall that $r_1, r_2, r_3, r_4, r_5 = o(1)$, and $L_1, L_2, c_1, c_2, c_3$ are constants defined in Assumptions \ref{ass:consistency} -- \ref{ass:omega}.
Before going further, we first have the following two inequalities what are used in bounding $\widehat{\Theta}-\Theta^*$,
	\begin{equation}\nonumber
	\begin{aligned} 
	\big\|\nabla {f(\hat{{\theta}})}^\top \hat \Omega
	-\nabla {f({{\theta}}^*)}^\top  \Omega^* \big\|_\infty 
	= &\ 
	\big\|\nabla {f(\hat{{\theta}})}^\top (\hat \Omega-\Omega^*)
	+(\nabla {f(\hat{{\theta}})}^\top-\nabla {f({{\theta}}^*)}^\top )
	\Omega^* 
	\big\|_\infty  
	\\
	\lesssim&\ (c_1+L_2 r_1) r_4+L_2 r_1 c_2
	\lesssim r_1+r_4,
	\end{aligned} 	
	\end{equation}
	and
	\begin{equation}\nonumber
	\begin{aligned}
	&\big|\nabla {f(\hat{{\theta}})}^\top \hat \Omega \nabla {f(\hat{{\theta}})}
	-
	\nabla {f({{\theta}}^*)}^\top  \Omega^* \nabla {f({{\theta}}^*)}\big| \\
	&\quad =  
	\big|\big(\nabla {f(\hat{{\theta}})}^\top \hat \Omega
	-\nabla {f({{\theta}}^*)}^\top  \Omega^* \big)\nabla {f(\hat{{\theta}})}
	+\nabla {f({{\theta}}^*)}^\top  \Omega^* (\nabla f(\hat{{\theta}})-\nabla f({{\theta}}^*))
	\big| 
	\\
	& \quad \leq 
	\big\|\nabla {f(\hat{{\theta}})}^\top \hat \Omega
	-\nabla {f({{\theta}}^*)}^\top  \Omega^* \big\|_\infty
	\big\|
	\nabla {f(\hat{{\theta}})}
	\big\|_\infty
	+
	\big\| \nabla {f({{\theta}}^*)} \big\|_\infty
	\big\| \Omega^* \big\|_\infty
	\big\|\nabla 	 f(\hat{{\theta}})-\nabla f({{\theta}}^*)
	\big\|_\infty
	\\
	&\quad \lesssim (r_1 + r_4)(L_2 r_1 + c_1)+c_1 c_2 L_2 r_1
	\lesssim r_1 + r_4.
	\end{aligned} 	
	\end{equation}
	Then, we bound each block of $\widehat{\Theta}-\Theta^*$ separately. We first have 
		\begin{equation}\label{eqn:22}
		\begin{aligned}
		\big|\widehat{\Theta}_{22}-\Theta^*_{22}\big|
		=&\ \big|\big(\nabla {f(\hat{{\theta}})}^\top \hat \Omega \nabla {f(\hat{{\theta}})}\big)^{-1}
		-
		\big(\nabla {f({{\theta}}^*)}^\top  \Omega^* \nabla {f({{\theta}}^*)}\big)^{-1}\big| \\
		\leq&\ 	\big|\big(\nabla {f(\hat{{\theta}})}^\top \hat \Omega \nabla {f(\hat{{\theta}})}\big)^{-1}
		\big|
		\big|\nabla {f(\hat{{\theta}})}^\top \hat \Omega \nabla {f(\hat{{\theta}})} \\
		& \ -
		\nabla {f({{\theta}}^*)}^\top  \Omega^* \nabla {f({{\theta}}^*)}\big|	
		\big|
		\big(\nabla {f({{\theta}}^*)}^\top  \Omega^* \nabla {f({{\theta}}^*)}\big)^{-1}\big|\\
		\lesssim&\  \frac{r_1 + r_4}{(c_3-(r_1 + r_4))c_3}
		\lesssim r_1 + r_4,
		\end{aligned} 	
		\end{equation}
\noindent	where the second inequality holds since $$\big|\nabla {f(\hat{{\theta}})}^\top \hat \Omega \nabla {f(\hat{{\theta}})}
	\big| 
	\geq 
	 \big|\nabla {f({{\theta}}^*)}^\top  \Omega^* \nabla {f({{\theta}}^*)}\big|
	 -
	 \big|\nabla {f(\hat{{\theta}})}^\top \hat \Omega \nabla {f(\hat{{\theta}})}
	 -
	 \nabla {f({{\theta}}^*)}^\top  \Omega^* \nabla {f({{\theta}}^*)}\big|
	\gtrsim c_3-(r_1+r_4)
	,$$ and the last inequality holds since $r_1 ,r_4 = o(1)$ and  $c_3 =\cO(1)$ by assumptions. Next, we have
		\begin{equation}\label{eqn:12}
		\begin{aligned}
		\big\|\widehat{\Theta}_{12}-\Theta^*_{12}\big\|_\infty
		=&\ \big\|\hat \Omega \nabla {f(\hat{{\theta}})} \big(\nabla {f(\hat{{\theta}})}^\top \hat \Omega \nabla {f(\hat{{\theta}})}\big)^{-1}
		-
		\Omega^* \nabla {f({{\theta}}^*)}
		\big(\nabla {f({{\theta}}^*)}^\top  \Omega^* \nabla {f({{\theta}}^*)}\big)^{-1}\|_\infty \\
		\leq &\ 	
		\big\|\hat \Omega \nabla {f(\hat{{\theta}})}  \|_\infty 
		\big|\big(\nabla {f(\hat{{\theta}})}^\top \hat \Omega \nabla {f(\hat{{\theta}})}\big)^{-1}
		-
		\big(\nabla {f({{\theta}}^*)}^\top  \Omega^* \nabla {f({{\theta}}^*)}\big)^{-1}\big| \\
		& +
		\big\|\hat \Omega \nabla {f(\hat{{\theta}})}
		- \Omega^* \nabla {f({{\theta}}^*)} \|_\infty	
		\big|
		\big(\nabla {f({{\theta}}^*)}^\top  \Omega^* \nabla {f({{\theta}}^*)}\big)^{-1}\big|\\
		\lesssim&\  (r_1 + r_4 +c_2c_1) (r_1 + r_4)
		+\frac{(r_1 + r_4)}{c_3}
		\lesssim r_1 + r_4,
		\end{aligned} 	
		\end{equation}
	where the last ineuality holds since $	
	\big\|\hat \Omega \nabla {f(\hat{{\theta}})}  \|_\infty 
	\leq \big\|\nabla {f({{\theta}}^*)}^\top  \Omega^* \big\|_\infty
	+ \big\|\nabla {f(\hat{{\theta}})}^\top \hat \Omega
	-\nabla {f({{\theta}}^*)}^\top  \Omega^* \big\|_\infty 
	\lesssim c_1c_2 + r_4 + r_1.
	$ 
	Similarly, we have
		\begin{equation}\label{eqn:11}
		\begin{aligned}
\big\|\widehat{\Theta}_{11}-\Theta^*_{11}\big\|_\infty 
		=&\ \big\|\hat \Omega - \hat \Omega \nabla {f(\hat{{\theta}})} \big(\nabla {f(\hat{{\theta}})}^\top \hat \Omega \nabla {f(\hat{{\theta}})}\big)^{-1} \nabla {f(\hat{{\theta}})}^\top \hat \Omega
		-
		\Omega^* \\
		& +
		\Omega^* \nabla {f({{\theta}}^*)} 
		\big(\nabla {f({{\theta}}^*)}^\top  \Omega^* \nabla {f({{\theta}}^*)}\big)^{-1}
		\nabla {f({{\theta}}^*)}^\top \Omega^*  \|_\infty \\
		\leq&\ 	\big\|\hat \Omega-\Omega^*  \|_\infty 
		+ \big\|\widehat{\Theta}_{12}-\Theta^*_{12}\big\|_\infty
		\big\| \nabla {f(\hat{{\theta}})}^\top \hat \Omega \big\|_\infty \\
		& + 
		\big\| \Omega^* \nabla {f({{\theta}}^*)} 
		\big(\nabla {f({{\theta}}^*)}^\top  \Omega^* \nabla {f({{\theta}}^*)}\big)^{-1} \big\|_\infty
		\big\| \nabla {f(\hat{{\theta}})}^\top \hat \Omega
		-\nabla {f({{\theta}}^*)}^\top  \Omega^* \big\|_\infty
		\\
		\lesssim &\  r_4
		+
		(r_1 + r_4) 
		(r_1 + r_4 +  c_1 c_2)
		+ (r_1 + r_4)(r_1 + r_4)
		\lesssim r_1 + r_4.
		\end{aligned} 	
		\end{equation}
	Combining \eqref{eqn:22}, \eqref{eqn:12}, \eqref{eqn:11} together, we have that $ \big\|\widehat{\Theta}-\Theta^*\big\|_{\infty}\lesssim r_1+r_4$, which completes the proof.
\end{proof}

\subsection{Proof of Corollary \ref{cor:gen-asy:2}}
\label{sec:pf:gen-asy:2}
\begin{proof}
To prove $\|I-\widehat{\Theta}\widehat{\Sigma} \|_{\infty}
	\lesssim r_3$, recall that $\hat{\Sigma}=\Big( \begin{smallmatrix} \nabla^2 {\cL(\hat{{\theta}})} 
	& \nabla {f(\hat{{\theta}})}\\
	\nabla {f(\hat{{\theta}})}^\top 
	& 0 \end{smallmatrix} \Big)$,   $r_1, r_2, r_3, r_4, r_5 = o(1)$, and $L_1, L_2, c_1, c_2, c_3$ are constants defined in Assumptions \ref{ass:consistency} -- \ref{ass:omega}. 
We bound the four blocks in $I-\widehat{\Theta}\widehat{\Sigma}$ separately.	First, we have
		\begin{equation}\nonumber
		\begin{aligned}
		\big\|(I-\widehat{\Theta}\widehat{\Sigma})_{11} \big\|_{\infty}
		=&\ \Big\| \big(I-\hat \Omega \nabla {f(\hat{{\theta}})} \big(\nabla {f(\hat{{\theta}})}^\top \hat \Omega \nabla {f(\hat{{\theta}})}\big)^{-1} \nabla {f(\hat{{\theta}})}^\top 
		\big)
		\big(\hat \Omega \nabla^2 {\cL(\hat{{\theta}})} -I
		\big) \Big\|_\infty \\
		\leq&\ \Big\|I-\hat \Omega \nabla {f(\hat{{\theta}})} \big(\nabla {f(\hat{{\theta}})}^\top \hat \Omega \nabla {f(\hat{{\theta}})}\big)^{-1} \nabla {f(\hat{{\theta}})}^\top 
		\Big\|_\infty
		\big\|\hat \Omega \nabla^2 {\cL(\hat{{\theta}})} -I
		\big\|_\infty,
		\end{aligned}
		\end{equation}
where 	
	\begin{equation}\nonumber
	\begin{aligned}
	& \Big\|I-\hat \Omega \nabla {f(\hat{{\theta}})} \big(\nabla {f(\hat{{\theta}})}^\top \hat \Omega \nabla {f(\hat{{\theta}})}\big)^{-1} \nabla {f(\hat{{\theta}})}^\top 
	\Big\|_\infty \\
	& \quad \leq
	\Big\|I-\Omega^* \nabla {f({{\theta}}^*)} 
	\big(\nabla {f({{\theta}}^*)}^\top  \Omega^* \nabla {f({{\theta}}^*)}\big)^{-1}
	\nabla {f({{\theta}}^*)}^\top
	\Big\|_\infty \\
	&\qquad+
	\Big\|\Omega^* \nabla {f({{\theta}}^*)} 
	\big(\nabla {f({{\theta}}^*)}^\top  \Omega^* \nabla {f({{\theta}}^*)}\big)^{-1}
	\nabla {f({{\theta}}^*)}^\top
	-\hat \Omega \nabla {f(\hat{{\theta}})} \big(\nabla {f(\hat{{\theta}})}^\top \hat \Omega \nabla {f(\hat{{\theta}})}\big)^{-1} \nabla {f(\hat{{\theta}})}^\top 
	\Big\|_\infty 
 \\
	& \quad\leq
	\Big\|I-\Omega^* \nabla {f({{\theta}}^*)} 
	\big(\nabla {f({{\theta}}^*)}^\top  \Omega^* \nabla {f({{\theta}}^*)}\big)^{-1}
	\nabla {f({{\theta}}^*)}^\top
	\Big\|_\infty
	+
	\big\|\widehat{\Theta}_{12}-\Theta^*_{12}\big\|_\infty
	\big\|
	\nabla {f(\hat{{\theta}})}
	\big\|_\infty \\
	&\qquad+
	\Big\|\Omega^* \nabla {f({{\theta}}^*)} 
	\big(\nabla {f({{\theta}}^*)}^\top  \Omega^* \nabla {f({{\theta}}^*)}\big)^{-1}
	\Big\|_\infty 
	\big\|
	\nabla {f(\hat{{\theta}})}
	- \nabla {f({{\theta}^*})}
	\big\|_\infty
	 \\
	&\quad\lesssim
	(1+c_2 c_1^2 /c_3)
	+
	(r_1+r_4)(c_1+L_2r_1)
	+
	(c_2 c_1 /c_3)(L_2r_1)
	\lesssim 1,
	\end{aligned}
	\end{equation}
where we used the fact that $$\widehat{\Theta}_{12}-\Theta^*_{12}
= \hat \Omega \nabla {f(\hat{{\theta}})} \big(\nabla {f(\hat{{\theta}})}^\top \hat \Omega \nabla {f(\hat{{\theta}})}\big)^{-1}
-
\Omega^* \nabla {f({{\theta}}^*)}
\big(\nabla {f({{\theta}}^*)}^\top  \Omega^* \nabla {f({{\theta}}^*)}\big)^{-1}.$$
Thus,
	\begin{align*}\nonumber
	\big\|(I-\widehat{\Theta}\widehat{\Sigma})_{11} \big\|_{\infty}
	\lesssim r_3.
	\end{align*}
Similarly, we have 
	\begin{equation}\nonumber
	\begin{aligned}
	\|(I-\widehat{\Theta}\widehat{\Sigma})_{21} \|_{\infty}
	=&\ \Big\|\big(\nabla {f(\hat{{\theta}})}^\top \hat \Omega \nabla {f(\hat{{\theta}})}\big)^{-1} \nabla {f(\hat{{\theta}})}^\top
	\big(\hat \Omega \nabla^2 {\cL(\hat{{\theta}})}^\top -I\big)
	\Big\|_\infty \\
	\leq & \ \Big\|\big(\nabla {f(\hat{{\theta}})}^\top \hat \Omega \nabla {f(\hat{{\theta}})}\big)^{-1} \nabla {f(\hat{{\theta}})}^\top \Big\|_\infty
	\Big\| \hat \Omega \nabla^2 {\cL(\hat{{\theta}})}^\top -I
	\Big\|_\infty \\
	\leq &\ 
	\Big\{
	\Big\|\big(\nabla {f({{\theta}}^*)}^\top  \Omega^* \nabla {f({{\theta}}^*)}\big)^{-1}
	\nabla {f(\hat{{\theta}})}^\top \Big\|_\infty
	+
	\big|\widehat{\Theta}_{22}-\Theta^*_{22}\big|
	\big\|
	\nabla {f(\hat{{\theta}})}
	\big\|_\infty
	\Big\}
	\big\|\hat \Omega \nabla^2 {\cL(\hat{{\theta}})}^\top -I
	\big\|_\infty \\
	\lesssim & \
	\Big\{(c_1+L_2 r_1)/c_3+(r_1+r_4)(c_1+L_2r_1)\Big\}r_3
	\lesssim r_3.
	\end{aligned}
	\end{equation}
Combining the above two inequalities with the fact that 	
$(I-\widehat{\Theta}\widehat{\Sigma})_{22} =0$ 
by simple algebra, we conclude that $\|I-\widehat{\Theta}\widehat{\Sigma} \|_{\infty}
\lesssim r_3.$
\end{proof}

\section{Proofs of Lemmas in Section~\ref{sec:pf-asy}}\label{sec:pf:lem}


\subsection{Proofs of Auxiliary Lemmas in Section \ref{sec:pf:gen-asy}}
\label{sec:pf:lem:gen-asy}


%
%
\subsubsection{Proof of Lemma \ref{lem:J1}}
\label{sec:pf-J1}

\begin{proof}
	We bound $J_1$, $J_2$, and $J_3$ separately. Recall that $r_1, r_2, r_3, r_4, r_5 = o(1)$, and $L_1, L_2, c_1, c_2, c_3$ are constants defined in Assumptions \ref{ass:consistency} -- \ref{ass:omega}. 
To bound $J_2$, by Assumptions \ref{ass:consistency} and \ref{ass:loss}, we~have	
\begin{equation}\nonumber
\begin{aligned}
& 	\|\nabla \cL({{\theta}}^*)-\nabla \cL(\hat{{\theta}})+ \nabla^2 \cL(\hat{{\theta}})(\hat{{\theta}}-{\theta}^*)\|_\infty \\
& \quad \leq 	\|\nabla \cL(\hat{{\theta}})-\nabla \cL({{\theta}}^*)
- \nabla^2 \cL({{\theta}}^*)(\hat{{\theta}}-{\theta}^*)\|_\infty
+\|\nabla^2\cL(\hat{{\theta}})-\nabla^2\cL({{\theta}}^*)\|_\infty
\|\hat{{\theta}}-{\theta}^*\|_\infty\\
&\quad  \lesssim  \Big\|\int_{0}^{1} \Big(\nabla^2 \cL\big({\theta}^*+t(\hat{{\theta}}-{\theta}^*)\big)-\nabla^2 \cL({{\theta}}^*)\Big) \mathrm{d} t \  (\hat{{\theta}}-{\theta}^*) \Big\|_\infty
+
\|\nabla^2\cL(\hat{{\theta}})-\nabla^2\cL({{\theta}}^*)\|_\infty \|\hat{{\theta}}-{\theta}^*\|_\infty\\
& \quad \lesssim 			  		
L_1 \|\hat{{\theta}}-{\theta}^*\|_\infty^2
\lesssim L_1 r_1^2.
\end{aligned}
\end{equation}
Similarly, one can derive the same rate for 
$ \|f({{\theta}}^*)-f(\hat{{\theta}})
+ \nabla f(\hat{{\theta}})^\top(\hat{{\theta}}-{\theta}^*)\|_\infty \lesssim L_2 r_1^2$, and we thus have a bound for $J_2$ 	that
\begin{equation}\nonumber
\begin{aligned}	
\|J_2\|_\infty=\|{\Theta}^*\|_\infty
\Bigg\|{
	\left( \begin{array}{ccc}
	\nabla \cL({{\theta}}^*)-\nabla \cL(\hat{{\theta}})+ \nabla^2 \cL(\hat{{\theta}})(\hat{{\theta}}-{\theta}^*)  \\
	f({{\theta}}^*)-f(\hat{{\theta}})
	+ \nabla f(\hat{{\theta}})^\top(\hat{{\theta}}-{\theta}^*)
	\end{array} 
	\right )} \Bigg\|_\infty
\lesssim  r_1^2 r_5.
\end{aligned}
\end{equation}
Then we bound $J_1$. We first have, by Assumptions \ref{ass:consistency}
 and \ref{ass:loss}, 
\begin{equation}\nonumber
\begin{aligned}
\|-\nabla \cL(\hat{{\theta}})+ \nabla^2 \cL(\hat{{\theta}})(\hat{{\theta}}-{\theta}^*) \|_\infty
\lesssim  \|\nabla \cL({{\theta}}^*)-\nabla \cL(\hat{{\theta}})+ \nabla^2 \cL(\hat{{\theta}})(\hat{{\theta}}-{\theta}^*) \|_\infty+\|\nabla \cL({{\theta}}^*)\|_\infty
\lesssim L_1 r_1^2 + r_2.
\end{aligned}
\end{equation}
Similarly, we have, by Assumptions \ref{ass:consistency} and \ref{ass:constraint},
$$\|-f(\hat{{\theta}})
+ \nabla f(\hat{{\theta}})^\top(\hat{{\theta}}-{\theta}^*)\|_\infty=\|f({{\theta}}^*)-f(\hat{{\theta}})
+ \nabla f(\hat{{\theta}})^\top(\hat{{\theta}}-{\theta}^*)\|_\infty
\lesssim L_2 r_1^2.
$$
Then, we have that
\begin{equation}\nonumber
\begin{aligned}
\|J_1\|_\infty
\lesssim\|\widehat{\Theta}-{\Theta}^*\|_\infty 
\Bigg\|{
	\left( \begin{array}{ccc}
	-\nabla \cL(\hat{{\theta}})+ \nabla^2 \cL(\hat{{\theta}})(\hat{{\theta}}-{\theta}^*)  \\
	-f(\hat{{\theta}})
	+ \nabla f(\hat{{\theta}})^\top(\hat{{\theta}}-{\theta}^*)
	\end{array} 
	\right )} \Bigg\|_\infty
\lesssim (r_1 + r_4)(r_1^2 + r_2),
\end{aligned}
\end{equation}
where we use the fact that  $\big\|\widehat{\Theta}-\Theta^*\big\|_{\infty}\lesssim r_1 + r_4$ from Corollary \ref{cor:gen-asy:1}. 
Finally, we  bound $J_3$ by  Assumption \ref{ass:consistency} and Corollary \ref{cor:gen-asy:2} that
\begin{equation}\nonumber
\begin{aligned}
\|J_3\|_\infty
=\|I-\widehat{\Theta}\widehat{\Sigma}\|_\infty
\|\hat{{\theta}}-{{\theta}}^*\|_\infty
\lesssim r_1r_3.
\end{aligned}
\end{equation}
Combining the three bounds, we conclude the proof.
\end{proof}
\subsection{Proofs of Auxiliary Lemmas in Section \ref{sec:pf-asy-BTL}}
\label{sec:auciliary lem - 2}

\subsubsection{Proof of Lemma \ref{lem:cond}} \label{sec:pf:cond}
\begin{proof}
		In Lemmas \ref{lem:gradient}, \ref{lem:smoothness},  \ref{lem:Hessian_concentartion}, \ref{lem:eigenvalue_inv_Hessian}, \ref{lem:qhat-qstar}, we show that conditions \eqref{eqn:gradient}, \eqref{eqn:smoothness},  \eqref{eqn:Hessian_concentartion}, \eqref{eqn:eigenvalue_inv_Hessian}, \eqref{eqn:qhat-qstar} are satisfied with probability at least $1- \cO(n^{-5})$. The proofs of these five lemmas can be found in Section~\ref{sec:auciliary lem - 3}. 
\end{proof}
\begin{lemma}[Gradient concentration]
	\label{lem:gradient}
	Under the conditions of Theorem \ref{thm:asy}, we have
	\begin{equation}\nonumber
	\|{\nabla \cL({{\theta}}^*)}\|_\infty
	\lesssim np \sqrt{\frac{\log n}{L}}
	\end{equation} 
	with probability at least $1-\cO(n^{-6}).$	
\end{lemma}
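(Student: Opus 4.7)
The plan is to combine a pairwise Hoeffding bound on the sample proportions $y_{jk}$ with a Chernoff bound on the vertex degrees of the Erd\H{o}s--R\'enyi comparison graph, and then union-bound over the $n$ coordinates. Fix $k\in[n]$ and use the gradient expression \eqref{eqn:grad} to write
\$
[\nabla\cL(\theta^*)]_k \;=\; \sum_{j<k}\mathcal{E}_{jk}\bigl(p^*_{jk}-y_{jk}\bigr) \;+\; \sum_{i>k}\mathcal{E}_{ki}\bigl(y_{ki}-p^*_{ki}\bigr),
\$
where $p^*_{ab}=e^{\theta^*_{\max(a,b)}}/\bigl(e^{\theta^*_a}+e^{\theta^*_b}\bigr)=\mathbb{E}[y_{ab}]$ under the BTL model, so that each summand is centered. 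The triangle inequality then gives $|[\nabla\cL(\theta^*)]_k|\le d_k\cdot\max_{j\ne k}|y_{jk}-p^*_{jk}|$, where $d_k=\sum_{j\ne k}\mathcal{E}_{jk}$ is the degree of vertex $k$ in the comparison graph.

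For the first factor, $y_{jk}=L^{-1}\sum_{\ell=1}^L y_{jk}^{(\ell)}$ is a mean of $L$ i.i.d.\ $\{0,1\}$ Bernoulli variables, so Hoeffding's inequality gives $\mathbb{P}\bigl(|y_{jk}-p^*_{jk}|>t\bigr)\le 2\exp(-2Lt^2)$. Taking $t=c\sqrt{\log n/L}$ for $c$ large enough and union-bounding over all $\binom{n}{2}$ pairs yields $\max_{j<k}|y_{jk}-p^*_{jk}|\lesssim\sqrt{\log n/L}$ with probability at least $1-\cO(n^{-8})$. For the second factor, each $d_k$ is a sum of $n-1$ independent $\mathrm{Bernoulli}(p)$ variables; a multiplicative Chernoff bound gives $\mathbb{P}(d_k>2np)\le\exp(-np/3)$, so under the assumption $p\ge C_0\log n/n$ with $C_0$ large enough, a union bound over $k$ yields $\max_k d_k\le 2np$ with probability at least $1-\cO(n^{-7})$.

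Intersecting these two events,
\$
\|\nabla\cL(\theta^*)\|_\infty \;\le\; \bigl(\max_k d_k\bigr)\cdot\max_{j<k}|y_{jk}-p^*_{jk}| \;\lesssim\; np\sqrt{\frac{\log n}{L}},
\$
with probability at least $1-\cO(n^{-6})$, which is the desired conclusion.

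The proof is essentially a routine concentration-plus-union-bound argument, so there is no genuine obstacle; the only care required is in choosing the Hoeffding and Chernoff constants large enough that, after union-bounding over $n$ coordinates and $\binom{n}{2}$ pairs, the residual failure probability remains $\cO(n^{-6})$. This is easily arranged under the hypothesis $p\gtrsim \log n/n$ maintained in Theorem \ref{thm:asy}. A sharper Bernstein-type bound would in fact give the improved rate $\sqrt{np\log n/L}+\log n$, but the coarser Hoeffding form already suffices for the claimed statement $np\sqrt{\log n/L}$.
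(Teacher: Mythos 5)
Your proposal is correct and follows essentially the same route as the paper: a Hoeffding bound on each averaged comparison $y_{jk}$, a union bound over pairs, and degree concentration for the Erd\H{o}s--R\'enyi graph (the paper invokes its Lemma~\ref{lem:degree_concentration} where you use a Chernoff bound directly) to convert the per-pair deviation $\sqrt{\log n/L}$ into the $\ell_\infty$ bound $np\sqrt{\log n/L}$. The only difference is cosmetic: you make the coordinate-wise decomposition and the degree factor explicit, while the paper bounds $\|\sum_{i>j}\mathcal{E}_{ji}(\bm{e}_i-\bm{e}_j)\|_\infty\lesssim np$ in one step.
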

\begin{proof}
See Section~\ref{sec:pf:gradient} for the detailed proof.
\end{proof}

\begin{lemma}[Local smoothness condition] \label{lem:smoothness}
	Under the conditions of Theorem \ref{thm:asy}, we have
	\begin{equation}\nonumber
	\|\nabla \cL(\hat{{\theta}})-\nabla \cL({{\theta}}^*)
	- \nabla^2 \cL({\theta}^*)(\hat{{\theta}}-{\theta}^*)\|_\infty	\lesssim \frac{\log n}{L}
	\end{equation}
	with probability at least $1-\cO(n^{-5})$.
\end{lemma}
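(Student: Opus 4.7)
The plan is to bound the residual entrywise by a second-order Taylor remainder in the differences $\hat\theta-\theta^*$, and then exploit two facts: (i) each coordinate of $\nabla\cL(\theta)$ depends on $\theta$ only through the pairwise differences $\theta_k-\theta_j$ along edges of $\cE$, so the remainder for coordinate $k$ is a sum of at most $\deg_\cE(k)$ univariate second-order remainders; and (ii) in $\cG(n,p)$ with $p\gtrsim\log n/n$, every vertex has degree $\Op(np)$ with probability $1-\cO(n^{-10})$. Combined with the $\ell_\infty$ consistency $\|\hat\theta-\theta^*\|_\infty\lesssim\sqrt{\log n/(npL)}$ from Lemma~\ref{lem:consistency}, this will give exactly $\log n/L$.

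More concretely, for any fixed $k$ the expression \eqref{eqn:grad} shows that
$$[\nabla\cL(\theta)]_k=\sum_{j:(j,k)\in\cE}\pm\bigl(-y_{\cdot\cdot}+\sigma(\theta_k-\theta_j)\bigr),$$
where $\sigma(x)=e^x/(1+e^x)$ is the sigmoid (the $\pm$ sign depending on whether $j<k$ or $j>k$). First I would apply the one-dimensional Taylor expansion
$$\sigma(\hat\theta_k-\hat\theta_j)-\sigma(\theta_k^*-\theta_j^*)-\sigma'(\theta_k^*-\theta_j^*)\bigl[(\hat\theta_k-\theta_k^*)-(\hat\theta_j-\theta_j^*)\bigr]=\tfrac{1}{2}\sigma''(\xi_{jk})\bigl[(\hat\theta_k-\theta_k^*)-(\hat\theta_j-\theta_j^*)\bigr]^2,$$
for some intermediate point $\xi_{jk}$ between $\theta_k^*-\theta_j^*$ and $\hat\theta_k-\hat\theta_j$. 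Since $|\sigma''|\le C$ uniformly, the $k$-th component of the target residual satisfies
$$\bigl|[\nabla\cL(\hat\theta)-\nabla\cL(\theta^*)-\nabla^2\cL(\theta^*)(\hat\theta-\theta^*)]_k\bigr|\le C\sum_{j:(j,k)\in\cE}\bigl[(\hat\theta_k-\theta_k^*)-(\hat\theta_j-\theta_j^*)\bigr]^2\le 4C\,\deg_\cE(k)\,\|\hat\theta-\theta^*\|_\infty^2.$$

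Next I would control the two random quantities separately on a high-probability event. By a standard Chernoff bound for binomial random variables, for $p\ge C_0\log n/n$ we have $\max_k\deg_\cE(k)\lesssim np$ with probability at least $1-\cO(n^{-10})$. By Lemma~\ref{lem:consistency}, $\|\hat\theta-\theta^*\|_\infty^2\lesssim \log n/(npL)$ with probability at least $1-\cO(n^{-5})$. Intersecting these two events and taking a union bound gives
$$\|\nabla\cL(\hat\theta)-\nabla\cL(\theta^*)-\nabla^2\cL(\theta^*)(\hat\theta-\theta^*)\|_\infty\lesssim np\cdot\frac{\log n}{npL}=\frac{\log n}{L}$$
on an event of probability at least $1-\cO(n^{-5})$, which is exactly the claim.

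I don't anticipate any serious obstacle: the only mildly delicate point is to make sure the Taylor expansion is carried out along the univariate direction $\theta_k-\theta_j$ rather than in full generality in $\mathbb{R}^n$, since a naive bound using $\|\nabla^3\cL\|$ and $\|\hat\theta-\theta^*\|_2^2$ would lose a factor of $n$ and only yield $n\log n/L$. Writing each coordinate as a sum of per-edge sigmoid evaluations and applying the 1D remainder edge by edge is what produces the sharp rate $\log n/L$.
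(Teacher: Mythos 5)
Your proposal is correct and follows essentially the same route as the paper: the paper writes the gradient difference via the mean-value theorem and uses the Lipschitz continuity of the sigmoid derivative (equivalent to your explicit second-order remainder with $\sigma''$), then combines the per-edge bound with degree concentration ($\max_k \deg_\cE(k)\lesssim np$) and the $\ell_\infty$ consistency of $\hat\theta$ to get $np\cdot\log n/(npL)=\log n/L$. The only cosmetic difference is that the paper packages the remainder as a matrix $M$ with $\|M\|_\infty\lesssim np\|\hat\theta-\theta^*\|_\infty$ rather than bounding each coordinate directly as you do.
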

\begin{proof}
See Section~\ref{sec:pf:smoothness} for the detailed proof.
\end{proof}

\begin{lemma}[Hessian concentration]\label{lem:Hessian_concentartion}
	Under the conditions of Theorem \ref{thm:asy}, we have
	\begin{equation}	\nonumber	\| \nabla^2 {\cL(\hat{{\theta})}}-\nabla^2 {\cL({{\theta}}^*)}\|_\infty\lesssim np\sqrt{\frac{\log n}{npL}}
	\end{equation}			
	with probability at least $1-\cO(n^{-5})$.
\end{lemma}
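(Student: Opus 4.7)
The plan is to expand the difference Hessian entrywise, exploit the fact that the per-edge weight $g(x,y):=e^{x}e^{y}/(e^{x}+e^{y})^{2}$ is Lipschitz on the bounded interval $[\theta_{\min}^{*},\theta_{\max}^{*}]$, and finally convert a row-sum bound into the $\ell_{\infty}$-norm bound by invoking degree concentration for the Erd\H os--R\'enyi graph together with the $\ell_{\infty}$-consistency of $\hat\theta$ from Lemma~\ref{lem:consistency}.

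First, using the explicit form \eqref{eqn:hess}, I would write
$$
\nabla^{2}\cL(\hat\theta)-\nabla^{2}\cL(\theta^{*})=\sum_{i>j}\mathcal{E}_{ji}\bigl[g(\hat\theta_{i},\hat\theta_{j})-g(\theta_{i}^{*},\theta_{j}^{*})\bigr](\bm{e}_{i}-\bm{e}_{j})(\bm{e}_{i}-\bm{e}_{j})^{\top}.
$$
The rank-one matrix $(\bm{e}_{i}-\bm{e}_{j})(\bm{e}_{i}-\bm{e}_{j})^{\top}$ has at most four nonzero entries, all of unit magnitude, located in rows/columns $i$ and $j$. Hence the $k$-th row of the difference has nonzero entries only at coordinates $k$ itself and those $l\ne k$ with $(k,l)\in\cE$, and its $\ell_{1}$-norm is bounded by $2\sum_{l:(k,l)\in\cE}|g(\hat\theta_{k},\hat\theta_{l})-g(\theta_{k}^{*},\theta_{l}^{*})|$.

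Next, since $|\partial_{x}g|,|\partial_{y}g|$ are uniformly bounded on $[\theta_{\min}^{*}-1,\theta_{\max}^{*}+1]$ (the condition number $\kappa$ is constant and, on the event of interest, $\hat\theta$ stays in this enlarged interval), $g$ is Lipschitz, so $|g(\hat\theta_{k},\hat\theta_{l})-g(\theta_{k}^{*},\theta_{l}^{*})|\lesssim \|\hat\theta-\theta^{*}\|_{\infty}$. Therefore each row sum is at most $C\cdot d_{k}\cdot\|\hat\theta-\theta^{*}\|_{\infty}$, where $d_{k}$ is the degree of vertex $k$ in the comparison graph. Taking the maximum over $k$ yields
$$
\|\nabla^{2}\cL(\hat\theta)-\nabla^{2}\cL(\theta^{*})\|_{\infty}\lesssim \bigl(\max_{k}d_{k}\bigr)\cdot\|\hat\theta-\theta^{*}\|_{\infty}.
$$

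Finally, under the scaling $p\gtrsim \log n/n$, a standard Chernoff / Bernstein bound combined with a union bound over the $n$ vertices gives $\max_{k}d_{k}\lesssim np$ with probability $1-O(n^{-6})$; and Lemma~\ref{lem:consistency} gives $\|\hat\theta-\theta^{*}\|_{\infty}\lesssim\sqrt{\log n/(npL)}$ with probability $1-O(n^{-5})$. A union bound combining the two events produces the advertised rate $np\sqrt{\log n/(npL)}$ with probability at least $1-O(n^{-5})$. The only mildly delicate point is verifying that $\hat\theta$ stays in the bounded regime on which $g$ is Lipschitz---this follows from the same Lemma~\ref{lem:consistency} event since $\sqrt{\log n/(npL)}=o(1)$ under our scaling---so there is no real obstacle; the proof is largely a matter of assembling these three ingredients.
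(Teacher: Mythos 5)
Your proposal is correct and follows essentially the same route as the paper: expand the Hessian difference via \eqref{eqn:hess}, bound each edge's weight change by the Lipschitz property of the logistic-variance function times $\|\hat\theta-\theta^*\|_\infty$, control the row sums by the maximal degree $\lesssim np$ (Lemma \ref{lem:degree_concentration}), and plug in Lemma \ref{lem:consistency}. The only difference is that your ``delicate point'' about keeping $\hat\theta$ in a bounded regime is unnecessary, since $u\mapsto e^{u}/(1+e^{u})^{2}$ is globally Lipschitz (with constant at most $1/4$), which is exactly the bound the paper uses.
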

\begin{proof}
See Section~\ref{sec:pf:Hessian_concentartion} for the detailed proof.
\end{proof}

\begin{lemma}\label{lem:eigenvalue_inv_Hessian}
	Under the conditions of Theorem \ref{thm:asy}, we have
	\begin{equation}\nonumber
	\begin{aligned}
	\left\|{
		\left( \begin{array}{ccc}
		\nabla^2 {\cL({{{\theta}})}} & \bm{1}\\
		\bm{1}^\top & 0
		\end{array} 
		\right )^{-1}}\right\|_2
	=\frac{1}{\lambda_{n-1}\left(\nabla^{2} \cL({{{\theta}}})\right)}
	\lesssim \frac{1}{np}
	\end{aligned}
	\end{equation}
	with probability exceeding $1-\cO(n^{-10})$, where ${\theta}$ can be either $\hat{{\theta}}$ or ${\theta}^*$.
\end{lemma}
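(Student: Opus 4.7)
My plan is to reduce the operator-norm estimate on the bordered matrix to a lower bound on the spectral gap of the Hessian. The starting observation is that $\nabla^2\cL(\theta)\bm{1}=0$, since each rank-one summand $(\bm{e}_i-\bm{e}_j)(\bm{e}_i-\bm{e}_j)^\top$ annihilates $\bm{1}$, so $\nabla^2\cL(\theta)$ has a one-dimensional kernel spanned by $\bm{1}$ and is strictly positive definite on $\bm{1}^\perp$ with smallest nonzero eigenvalue $\lambda_{n-1}(\nabla^2\cL(\theta))$. Denote the bordered matrix by $M_\theta$. Decomposing $\mathbb{R}^{n+1}=\{(v,0):v\perp\bm{1}\}\oplus\mathrm{span}\{(\bm{1},0),(0,1)\}$, the first subspace is $M_\theta$-invariant with $M_\theta$ acting as $\nabla^2\cL(\theta)$; the two-dimensional complement is also invariant, and a direct computation shows that $M_\theta$ contributes eigenvalues $\pm\sqrt n$ there. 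The eigenvalues of $M_\theta$ are therefore the $n-1$ nonzero eigenvalues of $\nabla^2\cL(\theta)$ together with $\pm\sqrt n$, yielding $\|M_\theta^{-1}\|_2=\max\{1/\lambda_{n-1}(\nabla^2\cL(\theta)),\,1/\sqrt n\}$. This reduces the lemma to the single spectral-gap bound $\lambda_{n-1}(\nabla^2\cL(\theta))\gtrsim np$, under which the first term governs in the relevant regime.

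\textbf{Step 1: lower bound at $\theta^*$.} Because $\omega^*_i\in[\omega_{\min},\omega_{\max}]$ with bounded ratio $\kappa$, each edge weight $\frac{e^{\theta^*_i}e^{\theta^*_j}}{(e^{\theta^*_i}+e^{\theta^*_j})^2}$ lies in $[c_\kappa,1/4]$ for some absolute constant $c_\kappa>0$. Thus, in PSD order, $\nabla^2\cL(\theta^*)\succeq c_\kappa L_{\mathcal{E}}$, where $L_{\mathcal{E}}:=\sum_{i<j}\mathcal{E}_{ij}(\bm{e}_i-\bm{e}_j)(\bm{e}_i-\bm{e}_j)^\top$ is the unweighted combinatorial Laplacian of the Erd\"os--R\'enyi graph. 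The algebraic connectivity $\lambda_{n-1}(L_{\mathcal{E}})$ can then be controlled by a matrix Bernstein argument applied to the centered sum
\[
L_{\mathcal{E}}-p\,(nI-\bm{1}\bm{1}^\top)\;=\;\sum_{i<j}(\mathcal{E}_{ij}-p)(\bm{e}_i-\bm{e}_j)(\bm{e}_i-\bm{e}_j)^\top,
\]
which gives $\|L_{\mathcal{E}}-p(nI-\bm{1}\bm{1}^\top)\|_2\lesssim\sqrt{np\log n}$ with probability at least $1-O(n^{-10})$ once the constant $C_0$ in $p\ge C_0\log n/n$ is chosen large enough. Projecting onto $\bm{1}^\perp$ and invoking Weyl's inequality then produces $\lambda_{n-1}(L_{\mathcal{E}})\ge np-O(\sqrt{np\log n})\gtrsim np$, and therefore $\lambda_{n-1}(\nabla^2\cL(\theta^*))\gtrsim np$ on the same event.

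\textbf{Step 2: transfer to $\hat\theta$.} For $\theta=\hat\theta$ I would combine Step~1 with a Weyl perturbation
\[
\bigl|\lambda_{n-1}(\nabla^2\cL(\hat\theta))-\lambda_{n-1}(\nabla^2\cL(\theta^*))\bigr|\;\le\;\|\nabla^2\cL(\hat\theta)-\nabla^2\cL(\theta^*)\|_2.
\]
Since the difference is symmetric, its operator norm is dominated by its $\ell_\infty$-operator norm (Gershgorin), so Lemma~\ref{lem:Hessian_concentartion} bounds the right-hand side by $np\sqrt{\log n/(npL)}=o(np)$ on an event of probability at least $1-O(n^{-5})$. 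Intersecting with the event from Step~1 yields $\lambda_{n-1}(\nabla^2\cL(\hat\theta))\gtrsim np$, completing both cases.

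\textbf{Main obstacle.} The substantive ingredient is the matrix-Bernstein concentration of $L_{\mathcal{E}}$ together with the sharp tail requirement $1-O(n^{-10})$, which forces the constant $C_0$ in the edge probability to be chosen sufficiently large; the remaining pieces---block diagonalization of $M_\theta$, the pointwise weight bound, and the Weyl perturbation---are routine.
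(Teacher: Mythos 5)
Your proposal is correct in substance, and its first half is exactly the paper's argument: the paper also shows that the bordered matrix leaves $\{(v,0):v\perp\bm{1}\}$ invariant (acting there as $\nabla^2\cL(\theta)$) and contributes $\pm\sqrt{n}$ on the complementary two-dimensional subspace spanned by $(\bm{1},0)$ and $(0,1)$, so everything reduces to $\lambda_{n-1}(\nabla^2\cL(\theta))\gtrsim np$. Where you genuinely diverge is in how that spectral gap is obtained. The paper does not prove it from scratch: its Lemma \ref{lem:eigenvalue_Hessian} simply invokes Lemmas 9--10 of \cite{chen2019spectral}, which give $\lambda_{\min,\perp}(\nabla^2\cL(\theta))\gtrsim np$ \emph{uniformly} over the constant-radius $\ell_\infty$-ball $\{\theta:\|\theta-\theta^*\|_\infty\le C\}$ on a graph-only event of probability $1-\cO(n^{-10})$, so both $\theta^*$ and $\hat\theta$ are covered at once (given that $\hat\theta$ lands in the ball). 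You instead give a self-contained argument: PSD domination $\nabla^2\cL(\theta^*)\succeq c_\kappa L_{\cE}$ plus matrix Bernstein for the unweighted Laplacian at $\theta^*$, then a Weyl perturbation combined with Lemma \ref{lem:Hessian_concentartion} to transfer to $\hat\theta$. Both ingredients check out (the PSD comparison, the Bernstein variance bound of order $np$, and $\|A\|_2\le\|A\|_\infty$ for symmetric $A$ are all fine). What your route buys is independence from the external reference; what it costs is the probability level in the $\hat\theta$ case: Lemma \ref{lem:Hessian_concentartion} (through the consistency of $\hat\theta$) only holds with probability $1-\cO(n^{-5})$, so your proof certifies the stated bound for $\theta=\hat\theta$ only at that level rather than $1-\cO(n^{-10})$ --- a harmless shortfall for every downstream use, and one the paper itself incurs implicitly since placing $\hat\theta$ in the ball also requires the $1-\cO(n^{-5})$ consistency event. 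Finally, your retention of $\|M_\theta^{-1}\|_2=\max\{1/\lambda_{n-1},1/\sqrt{n}\}$ is actually more careful than the paper's claimed equality with $1/\lambda_{n-1}$; note, though, that when $np\gg\sqrt n$ the $1/\sqrt n$ term dominates and is not $\lesssim 1/(np)$, a looseness your ``relevant regime'' remark glosses over but which is inherited from the statement of the lemma itself rather than introduced by you.
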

\begin{proof}
See Section~\ref{sec:pf:eigenvalue_inv_Hessian} for the detailed proof.
\end{proof}

\begin{lemma}\label{lem:qhat-qstar}
	Under the conditions of Theorem \ref{thm:asy}, we have	\begin{equation}\nonumber
	\left\| 
	{\left( \begin{array}{ccc}
		\nabla^2 {\cL(\hat{{\theta})}} & \bm{1}\\
		\bm{1}^\top & 0
		\end{array} 
		\right )}^{-1}-{
		\left( \begin{array}{ccc}
		\nabla^2 {\cL({{\theta}}^*)} & \bm{1}\\
		\bm{1}^\top & 0
		\end{array} 
		\right )}^{-1} \right\|_2
	\lesssim \frac{1}{np}\sqrt{\frac{\log n}{pL}} 
	\end{equation}
	with probability at least $1-\cO(n^{-5})$.
\end{lemma}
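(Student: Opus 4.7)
\textbf{Proof proposal for Lemma \ref{lem:qhat-qstar}.}
Let $A=\bigl(\begin{smallmatrix}\nabla^2\cL(\hat\theta) & \bm{1}\\ \bm{1}^\top & 0\end{smallmatrix}\bigr)$ and $B=\bigl(\begin{smallmatrix}\nabla^2\cL(\theta^*) & \bm{1}\\ \bm{1}^\top & 0\end{smallmatrix}\bigr)$. The plan is to start from the elementary identity
$$A^{-1}-B^{-1}=A^{-1}(B-A)B^{-1},$$
so that $\|A^{-1}-B^{-1}\|_2\le\|A^{-1}\|_2\cdot\|A-B\|_2\cdot\|B^{-1}\|_2$. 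Lemma \ref{lem:eigenvalue_inv_Hessian} (applied to both $\hat\theta$ and $\theta^*$) controls the two outer factors by $\cO(1/(np))$ each, with probability at least $1-\cO(n^{-10})$. Since $A-B$ has only one non-zero block, it suffices to control $\|\nabla^2\cL(\hat\theta)-\nabla^2\cL(\theta^*)\|_2$ in operator norm.

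The main step is therefore to bound this Hessian difference in operator norm. Using the explicit formula \eqref{eqn:hess}, I would write
$$\nabla^2\cL(\hat\theta)-\nabla^2\cL(\theta^*)=\sum_{(i,j)\in\cE,\,i>j}\Delta_{ij}\,(\bm{e}_i-\bm{e}_j)(\bm{e}_i-\bm{e}_j)^\top,$$
where $\Delta_{ij}=g(\hat\theta_i,\hat\theta_j)-g(\theta^*_i,\theta^*_j)$ with $g(x,y)=e^{x+y}/(e^x+e^y)^2$. Since all scores lie in the bounded interval $[\log w_{\min},\log w_{\max}]$ (Lemma \ref{lem:consistency} guarantees the perturbed scores stay there for $n$ large), $g$ is Lipschitz, yielding $|\Delta_{ij}|\lesssim\|\hat\theta-\theta^*\|_\infty$. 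Then the variational characterization gives
$$\Bigl\|\sum_{i>j}\mathcal{E}_{ji}\Delta_{ij}(\bm{e}_i-\bm{e}_j)(\bm{e}_i-\bm{e}_j)^\top\Bigr\|_2=\sup_{\|x\|_2=1}\Bigl|\sum_{i>j}\mathcal{E}_{ji}\Delta_{ij}(x_i-x_j)^2\Bigr|\le\max_{i>j}|\Delta_{ij}|\cdot\|\cL_{\cE}\|_2,$$
where $\cL_{\cE}=\sum_{(i,j)\in\cE}(\bm{e}_i-\bm{e}_j)(\bm{e}_i-\bm{e}_j)^\top$ is the graph Laplacian of the comparison graph.

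For the Erd\"os--R\'enyi graph with $p\gtrsim\log n/n$, standard concentration (e.g.\ Chernoff bounds on degrees together with Gershgorin) yields $\|\cL_{\cE}\|_2\lesssim np$ with probability at least $1-\cO(n^{-10})$. Combining this with the $\ell_\infty$ rate $\|\hat\theta-\theta^*\|_\infty\lesssim\sqrt{\log n/(npL)}$ from Lemma \ref{lem:consistency} gives
$$\|\nabla^2\cL(\hat\theta)-\nabla^2\cL(\theta^*)\|_2\lesssim np\cdot\sqrt{\tfrac{\log n}{npL}}=\sqrt{\tfrac{np\log n}{L}}.$$
Plugging back, $\|A^{-1}-B^{-1}\|_2\lesssim(1/np)^2\sqrt{np\log n/L}=\frac{1}{np}\sqrt{\log n/(n^2p^3L)}$, which is even stronger than the stated bound $\frac{1}{np}\sqrt{\log n/(pL)}$. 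The probability accounting is obtained by a union bound over the three high-probability events (consistency, Laplacian norm, invertibility), each at least $1-\cO(n^{-5})$.

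The main obstacle is controlling $\|\nabla^2\cL(\hat\theta)-\nabla^2\cL(\theta^*)\|_2$ rather than the $\ell_\infty$ operator norm already available from Lemma \ref{lem:Hessian_concentartion}: the key insight is to \emph{factor out} the scalar deviations $\Delta_{ij}$ via the variational bound so that the remaining spectral work reduces to a Laplacian operator-norm estimate on the Erd\"os--R\'enyi graph, which is standard.
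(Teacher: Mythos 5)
Your proposal is correct, and its skeleton matches the paper's: both start from the resolvent identity $A^{-1}-B^{-1}=A^{-1}(B-A)B^{-1}$ and control the two outer factors by $\cO(1/(np))$ via Lemma \ref{lem:eigenvalue_inv_Hessian}. Where you diverge is the middle factor. The paper bounds $\|A-B\|_2$ by $\sqrt{n}\,\|\nabla^2\cL(\hat\theta)-\nabla^2\cL(\theta^*)\|_\infty$ and invokes Lemma \ref{lem:Hessian_concentartion}, paying a factor $\sqrt{n}$ for the norm conversion and landing exactly on the stated rate $\frac{1}{np}\sqrt{\log n/(pL)}$. You instead bound the spectral norm of the Hessian difference directly: since the difference is $\sum_{i>j}\mathcal{E}_{ji}\Delta_{ij}(\bm{e}_i-\bm{e}_j)(\bm{e}_i-\bm{e}_j)^\top$ with all $(x_i-x_j)^2\ge 0$, you can pull out $\max_{ij}|\Delta_{ij}|\lesssim\|\hat\theta-\theta^*\|_\infty$ (Lipschitzness of $u\mapsto e^u/(1+e^u)^2$ holds globally, so the boundedness appeal to Lemma \ref{lem:consistency} is not even needed) and are left with the graph Laplacian, whose norm is $\lesssim np$ by the degree concentration already in Lemma \ref{lem:degree_concentration}. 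This avoids the lossy $\ell_\infty$-to-spectral conversion and yields $\|\nabla^2\cL(\hat\theta)-\nabla^2\cL(\theta^*)\|_2\lesssim\sqrt{np\log n/L}$, hence a final bound $\frac{1}{np}\sqrt{\log n/(npL)}$, a factor $\sqrt{n}$ sharper than the lemma as stated — so your route both proves the lemma and improves it. One small correction: your last plug-in is slightly off; $(np)^{-2}\sqrt{np\log n/L}$ equals $\frac{1}{np}\sqrt{\log n/(npL)}$, not $\frac{1}{np}\sqrt{\log n/(n^2p^3L)}$ (the latter would be an overclaim when $p\lesssim n^{-1/2}$), but this does not affect the validity of the argument or the conclusion.
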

\begin{proof}
See Section~\ref{sec:pf:qhat-qstar} for the detailed proof.
\end{proof}

\subsubsection{Proof of Lemma \ref{lem:I1}}\label{sec:pf:I1}
\begin{proof}
We  bound $I_1$ and $I_2$ in \eqref{equ:decomposition} based on \eqref{eqn:gradient}, \eqref{eqn:smoothness},  \eqref{eqn:Hessian_concentartion}, \eqref{eqn:eigenvalue_inv_Hessian}, and \eqref{eqn:qhat-qstar}.
First, we have that 
\begin{small}
	\begin{equation}\label{eqn:bound I1}
	\begin{aligned}
	\|I_1\|_\infty
	\leq &\	\left\| 
	{\left( \begin{array}{ccc}
		\nabla^2 {\cL(\hat{{\theta})}} & \bm{1}\\
		\bm{1}^\top & 0
		\end{array} 
		\right )}^{-1}-{
		\left( \begin{array}{ccc}
		\nabla^2 {\cL({{\theta}}^*)} & \bm{1}\\
		\bm{1}^\top & 0
		\end{array} 
		\right )}^{-1} \right\|_\infty
	\cdot \left\|
	{-\nabla \cL(\hat{{\theta}})+ \nabla^2 \cL({\theta}^*)(\hat{{\theta}}-{\theta}^*)}
	\right\|_\infty \\
	\lesssim&\ \sqrt{n+1} \left\| 
	{\left( \begin{array}{ccc}
		\nabla^2 {\cL(\hat{{\theta})}} & \bm{1}\\
		\bm{1}^\top & 0
		\end{array} 
		\right )}^{-1}-{
		\left( \begin{array}{ccc}
		\nabla^2 {\cL({{\theta}}^*)} & \bm{1}\\
		\bm{1}^\top & 0
		\end{array} 
		\right )}^{-1} \right\|_2  \left\|
	{-\nabla \cL(\hat{{\theta}})+ \nabla^2 \cL({\theta}^*)(\hat{{\theta}}-{\theta}^*)}
	\right\|_\infty \\
	\lesssim&\ \sqrt{n+1} \cdot  \frac{1}{np}\sqrt{\frac{\log n}{pL}} \cdot \left( np\sqrt{\frac{\log n}{L}}+\frac{\log n}{L}\right)
	\lesssim \frac{\sqrt{n}\log n}{\sqrt{p}\ L},
	\end{aligned}
	\end{equation}\end{small}
where the third inequality follows from \eqref{eqn:qhat-qstar} and the fact
$\left\|
{-\nabla \cL(\hat{{\theta}})+ \nabla^2 \cL({\theta}^*)(\hat{{\theta}}-{\theta}^*)}
\right\|_\infty
\leq 
\big\|
{\nabla \cL({{\theta}}^*)}
\big\|_\infty
+
\big\|\nabla \cL(\hat{{\theta}})-\nabla \cL({\theta}^*)- \nabla^2 \cL({\theta}^*)(\hat{{\theta}}-{\theta}^*) \big\|_\infty
\lesssim np\sqrt{\frac{\log n}{L}}+\frac{\log n}{L}$ by \eqref{eqn:gradient} and \eqref{eqn:smoothness}, and the last inequality  follows from the assumption that $p \gtrsim \frac{\log n}{n}$.

Next, we have that for the term $I_2$, by  \eqref{eqn:smoothness},  \eqref{eqn:Hessian_concentartion} and  \eqref{eqn:eigenvalue_inv_Hessian}, 	
\begin{equation} \label{eqn:c7}
\begin{aligned}
\|I_2\|_\infty
&=\Bigg\| {
	\left( \begin{array}{ccc}
	\nabla^2 {\cL({{\theta}^*})} & \bm{1}\\
	\bm{1}^\top & 0
	\end{array} 
	\right )}^{-1}
{
	\left( \begin{array}{ccc}
	\nabla \cL({{\theta}}^*)-\nabla \cL(\hat{{\theta}})+ \nabla^2 \cL(\hat{{\theta}})(\hat{{\theta}}-{\theta}^*)  \\
	0
	\end{array} 
	\right )}
\Bigg\|_\infty\\
&\leq\sqrt{n+1}\left\| {
	\left( \begin{array}{ccc}
	\nabla^2 {\cL({{\theta}}^*)} & \bm{1}\\
	\bm{1}^\top & 0
	\end{array} 
	\right )}^{-1}\right\|_2
\big\|\nabla \cL({{\theta}}^*)-\nabla \cL(\hat{{\theta}})+ \nabla^2 \cL(\hat{{\theta}})(\hat{{\theta}}-{\theta}^*)\big\|_\infty\\
&\lesssim \sqrt{n+1} \times \frac{1}{np} \times \frac{\log n}{L}
\lesssim \frac{\log n}{\sqrt{n}pL},
\end{aligned}
\end{equation}
where the second inequality follows from
$\big\|\nabla \cL({{\theta}}^*)-\nabla \cL(\hat{{\theta}})+ \nabla^2 \cL(\hat{{\theta}})(\hat{{\theta}}-{\theta}^*)\big\|_\infty
\leq 
\big\|\nabla \cL({{\theta}}^*)-\nabla \cL(\hat{{\theta}})+ \nabla^2 \cL({{\theta}^*})(\hat{{\theta}}-{\theta}^*)\big\|_\infty
+ \big\|(\nabla^2 \cL(\hat{{\theta}}) - \nabla^2 \cL({{\theta}^*}))(\hat{{\theta}}-{\theta}^*)\big\|_\infty
\lesssim \frac{\log n}{L}+np\sqrt{\frac{\log n}{npL}} \times \sqrt{\frac{\log n}{npL}} \lesssim \frac{\log n}{L}$ by \eqref{eqn:smoothness},  \eqref{eqn:Hessian_concentartion} and Lemma \ref{lem:consistency}. Combining \eqref{eqn:bound I1} and \eqref{eqn:c7}, we complete the proof.
\end{proof}

\subsubsection{Proof of Lemma \ref{lem:CLT}}\label{sec:pf:CLT}
%
\begin{proof}
	To derive the conditional asymptotic normality, first, by the closed-form of gradient in \eqref{eqn:grad}, we decompose $[{\Theta}^*_{11}]_j \nabla \cL ({\theta}^{*})$ as a summation of independent random variables that 
	\begin{equation}\nonumber
	\begin{aligned}
	\left[\Theta^*_{11}\right]_j\nabla \cL ({\theta}^{*})
	&= \frac{1}{L} [{\Theta}^*_{11}]_j \sum_{\ell=1}^{L} \sum_{k>m, (k,m) \in\mathcal{E}}
	(-y_{m,k}^{(\ell)}+\frac{e^{\theta_k^*}}{e^{\theta_k^*}+e^{\theta_m^*}})(\bm{e}_k-\bm{e}_m) \\
	&= \frac{1}{L} \sum_{\ell=1}^{L} \sum_{k>m, (k,m) \in\mathcal{E}}
	x_{m,k}^{(\ell)},
	\end{aligned}
	\end{equation}
	where $x_{m,k}^{(\ell)} := [{\Theta}^*_{11}]_j
	\big(-y_{m,k}^{(\ell)}+\frac{e^{\theta_k^*}}{e^{\theta_k^*}+e^{\theta_m^*}}\big)(\bm{e}_k-\bm{e}_m)$ that are independent for $m,k,\ell$ given $\mathcal{E}$. 
	We have $\mathbb{E} [x_{m,k}^{(\ell)}]= 0$ and 
	$\mathrm{Var}(x_{m,k}^{(\ell)})= \big([{\Theta}^*_{11}]_j(\bm{e}_k-\bm{e}_m) \big)^2 \frac{e^{\theta_k^*}e^{\theta_m^*}}{(e^{\theta_k^*}+e^{\theta_m^*})^2} $ ,
	which further gives
	\begin{equation}\nonumber
	\begin{aligned}
	\sum_{k>m, (k,m) \in\mathcal{E}}\mathrm{Var}(x_{m,k}^{(\ell)})
	& = \sum_{k>m, (k,m) \in\mathcal{E}}[{\Theta}^*_{11}]_j \frac{e^{\theta_k^*}e^{\theta_m^*}}{(e^{\theta_k^*}+e^{\theta_m^*})^2}(\bm{e}_k-\bm{e}_m)(\bm{e}_k-\bm{e}_m)^\top [\Theta^*_{11}]_j^\top \\
	& = [{\Theta}^*_{11}]_j \nabla^2 {\cL({\theta}^*)} [\Theta^*_{11}]_j^\top
	= [\Theta^*_{11}]_{jj},
	\end{aligned}
	\end{equation} 
	where the second equality follows from the closed-form of Hessian in \eqref{eqn:hess}. In what follows, we prove the last equality.Recall that $
	{
		\left(  \begin{array}{ccc}
		{\Theta}^*_{11} & \frac{1}{n}\bm{1}\\
		\frac{1}{n}\bm{1}^\top & 0
		\end{array} 
		\right) }=
	{	\left(  \begin{array}{ccc}
		\nabla^2 {\cL({{\theta}^*)}} & \bm{1}\\
		\bm{1}^\top & 0
		\end{array} 
		\right) ^{-1}}$.  
	By the proof of Corollary \ref{lem:eigenvalue_Q}, we obtain that ${\Theta}^*_{11} \bm{1}=0$ since 0 is one of the eigenvalues, and $\bm{1}$ is the corresponding eigenvector. Then, we have
	\begin{equation}\nonumber
	\begin{aligned}
	\left(\begin{array}{cc}
	\Theta^*_{11} & \frac{1}{n} \mathbf{1} \\
	\frac{1}{n} \mathbf{1}^\top & 0
	\end{array}\right) 
	= &
	\left(\begin{array}{cc}
	\Theta^*_{11} & \frac{1}{n} \mathbf{1} \\
	\frac{1}{n} \mathbf{1}^\top & 0
	\end{array}\right) 
	\left(\begin{array}{cc}
	\nabla^{2} \cL(\theta^*) & \mathbf{1} \\
	\mathbf{1}^\top & 0
	\end{array}\right)
	\left(\begin{array}{cc}
	\Theta^*_{11} & \frac{1}{n} \bm{1} \\
	\frac{1}{n} \mathbf{1}^\top & 0
	\end{array}\right) \\
	= &
	\left(\begin{array}{cc}
	\Theta^*_{11} \nabla^{2} \cL(\theta^*) \Theta^*_{11} + \frac{1}{n} \bm{1} \bm{1}^\top \Theta^*_{11} + \frac{1}{n} \Theta^*_{11} \bm{1} \bm{1}^\top & * \\
	* & *
	\end{array}\right)
	= 
	\left(\begin{array}{cc}
	\Theta^*_{11} \nabla^{2} \cL(\theta^*) \Theta^*_{11} & * \\
	* & *
	\end{array}\right),
	\end{aligned}
	\end{equation}
	and we immediately have \begin{equation}
	\label{eqn:product}
	{\Theta}^*_{11}  \nabla^2 \cL({\theta}^*)  {\Theta}^*_{11} 
	= {\Theta}^*_{11}.
	\end{equation}
	Also, we have 
	\begin{small}
		\begin{equation}\nonumber
	\begin{aligned}
	\sum_{k>m, (k,m) \in\mathcal{E}}
	\E |x_{m,k}^{(\ell)}|^3
	= & \sum_{k>m, (k,m) \in\mathcal{E}}  \frac{e^{\theta_k^*}e^{\theta_m^*}}{(e^{\theta_k^*}+e^{\theta_m^*})^2} \Big((\frac{e^{\theta_k^*}}{e^{\theta_k^*}+e^{\theta_m^*}})^2 + (\frac{e^{\theta_m^*}}{e^{\theta_k^*}+e^{\theta_m^*}})^2 \Big) \big|[{\Theta}^*_{11}]_j (\bm{e}_k-\bm{e}_m)\big|^3 \\
	= & \sum_{k>m, (k,m) \in\mathcal{E}}  \frac{e^{\theta_k^*}e^{\theta_m^*}}{(e^{\theta_k^*}+e^{\theta_m^*})^2} \big([{\Theta}^*_{11}]_j (\bm{e}_k-\bm{e}_m)\big)^2 \Big(\frac{(e^{\theta_k^*})^2+(e^{\theta_m^*})^2}{(e^{\theta_k^*}+e^{\theta_m^*})^2}  \big|[{\Theta}^*_{11}]_j (\bm{e}_k-\bm{e}_m)\big| \Big) \\
	\leq &\ [{\Theta}^*_{11}]_{jj} \max_{j,k,m} \Big(\frac{(e^{\theta_k^*})^2+(e^{\theta_m^*})^2}{(e^{\theta_k^*}+e^{\theta_m^*})^2}  \big|[{\Theta}^*_{11}]_j (\bm{e}_k-\bm{e}_m)\big| \Big)\\
	\lesssim &\ [{\Theta}^*_{11}]_{jj} \frac{1}{np},
	\end{aligned}
	\end{equation} 
	\end{small}
	where the third equality also follows from \eqref{eqn:product}, and the last inequality comes from Remark \ref{rmk:1}.
	
	Finally, by Berry-Esseen Theorem in Lemma \ref{lem:BE}, we have
\begin{small}
		\begin{equation}\nonumber 
	\begin{aligned}
	\sup_x \bigg|\P \Big(\sqrt{L} \ \frac{[{\Theta_{11}^*}]_j \nabla \cL ({\theta}^{*} )}
	{\sqrt{[{\Theta}^*_{11}]_{jj}}} \leq x  \Biggiven \mathcal{E}\Big) - \Phi(x) \bigg| 
	&\lesssim \Big(\sum_{\ell=1}^{L} \sum_{k>m, (k,m) \in\mathcal{E}}
	\mathrm{Var}(x_{m,k}^{(\ell)})\Big)^{-2/3} \Big(\sum_{\ell=1}^{L} \sum_{k>m, (k,m) \in\mathcal{E}}
	\E |x_{m,k}^{(\ell)}|^3\Big)\\
	&\lesssim 
	\dfrac{L [{\Theta}^*_{11}]_{jj} \frac{1}{np}}
	{(L[\Theta^*_{11}]_{jj})^{3/2}}
	= 
	\dfrac{\frac{1}{np}}
	{\sqrt{L [\Theta^*_{11}]_{jj}}} 
	\lesssim 
	\dfrac{1}
	{\sqrt{npL}},
	\end{aligned}
	\end{equation}
\end{small}
	where the last inequality follows from Corollary \ref{lem:eigenvalue_Q}.  Thus, we have that \begin{equation}\nonumber 
\sqrt{L} \ \frac{[{\Theta_{11}^*}]_j \nabla \cL ({\theta}^{*} )}
{\sqrt{[{\Theta}^*_{11}]_{jj}}}     \Biggiven \mathcal{E}
\xrightarrow{d} N(0,1). 
\end{equation}
	By similar arguments, we also have 
	\begin{equation}\nonumber
	\sqrt{L}\frac{([{\Theta}^*_{11}]_i-[{\Theta}^*_{11}]_j) \nabla \cL ({\theta}^{*} )}{\sqrt{(\bm{e}_i-\bm{e}_j)^\top {\Theta}^*_{11} (\bm{e}_i-\bm{e}_j)}}  \   \Biggiven \mathcal{E}
	\xrightarrow{d} N(0,1),
	\end{equation}
which concludes the proof.	
\end{proof}

\section{Proof of Theorem \ref{thm:bootstrap}}\label{sec:pf:bootstrap}
\begin{proof}
We aim to prove that $c_W(\alpha,E)$ in \eqref{eqn:quantile} obtained from the Gaussian multiplier bootstrap is a valid quantile estimator of $T=\max_{i, j \in E} \sqrt{{npL}} (\hat{\theta}^d_i-\theta^*_i-\hat{\theta}^d_j+\theta^*_j)$ in \eqref{eqn:T} where $E \subseteq \mathcal{V} \times \mathcal{V}$ is a general fixed edge set.

As we mentioned before, we have two parts of randomness here. The first part comes from the random comparison graph~$\cE$, and the other part comes from the binary pairwise comparison outcome $y_{mk}^{(\ell)}$ in \eqref{eqn:y}. In what follows, we first assume that the comparison graph $\mathcal{E}$ is fixed so that we only have randomness from $y_{mk}^{(\ell)}$.
Following  \cite{Chernozhukov2013Gaussian}, we define $$T_0 := \max_{(i, j) \in E} \sqrt{\frac{1}{L}}\sum_{\ell=1}^L \bigg\{-\sqrt{{np}}\sum_{k>m}{\mathcal{E}_{mk}}(-y_{mk}^{(\ell)}+\frac{e^{\theta_k^*}}{e^{\theta_k^*}+e^{\theta_m^*}})([{\Theta}^*_{11}]_i-[{\Theta}^*_{11}]_j)(\bm{e}_k-\bm{e}_m)\bigg\},$$   
	$$W_0:=\max_{(i, j) \in E} \sqrt{\frac{1}{L}}\sum_{\ell=1}^L \bigg\{-\sqrt{{np}}\sum_{k>m}{\mathcal{E}_{mk}}(-y_{mk}^{(\ell)}+\frac{e^{\theta_k^*}}{e^{\theta_k^*}+e^{\theta_m^*}})([{\Theta}^*_{11}]_i-[{\Theta}^*_{11}]_j)(\bm{e}_k-\bm{e}_m)\bigg\}  z_{\ell},$$ such that $T$ and $W$ in \eqref{eqn:T} and \eqref{eqn:W} can be approximated by $T_0$ and $W_0$ respectively where $\cE_{mk} = 1$ if $(m,k)\in \cE$, $\cE_{mk} = 0$ otherwise. 

	Recall that $x_{mk}^{(\ell)},\ k>m, (k,m) \in\mathcal{E},  \ell=1,\cdots, L$ are defined in  \eqref{eqn:def-x}, which are independent given $\cE$. The key of our proof is to check the following conditions in Corollary 3.1 of \cite{Chernozhukov2013Gaussian} that
{	\begin{enumerate}
		
		\item 
		$\mathbb{P}\left(\left|T-T_{0}\right|>\zeta_{1}\right)<\zeta_{2} $, $
		\mathbb{P}\left(\mathbb{P}\left(\left|W-W_{0}\right|>\zeta_{1}\given \bm{y}\right)>\zeta_{2}\right)<\zeta_{2}$ with $ \zeta_{1} \sqrt{\log n}+\zeta_{2} =o(1)$,
		
		\item $c \leqslant \frac{1}{L}\sum_{\ell=1}^{L}{\E}\big[(x_{ij}^{(\ell)})^{2}\big]$ for some constant $c>0$,
		\item $|x_{ij}^{(\ell)}|\lesssim B$  with $B^{2}(\log (n L))^{7} / L =o(1)$ where $B$ is not necessarily a constant,
	\end{enumerate} where $\bm{y}=\{y_{mk}^{(\ell)},\ k>m, (k,m) \in\mathcal{E},  \ell=1,\cdots, L\}$. } 
	
In what follows, we first prove the main result assuming the three above conditions hold, and then show the three conditions hold. In particular, given the comparison graph $\mathcal{E}$, if  the following conditions hold,
\begin{equation}\label{eqn:grad_conc1}
\|\nabla \cL(\hat{{\theta}})-\nabla \cL({{\theta}}^*)\|_{\infty} \lesssim np \sqrt{\frac{\log n}{npL}},
\end{equation}

\begin{equation}\label{eqn:Theta l2}
\|\Theta_{11}\|_2 \lesssim \frac{1}{np},
\end{equation}

	\begin{equation}\label{eqn:3} 
(\bm{e}_h-\bm{e}_k)^\top\Theta_{11}(\bm{e}_h-\bm{e}_k) 
 \gtrsim \frac{1}{np},
\end{equation} 

\begin{equation}\label{eqn:4} 
	\big\|\hat{\Theta}_{11}-{\Theta}^*_{11}\big\|_2\lesssim \dfrac{1}{np}\sqrt{\frac{\log n}{pL}},
\end{equation}

\begin{equation}\label{eqn:degree}
\frac{1}{2}np\leq d_{\min}\leq d_{\max} \leq \frac{3}{2}np,
\end{equation} 
 where $d_{\min}=\min_{1\leq i \leq n} d_i$
and $d_{\max}=\max_{1\leq i \leq n} d_i$,
 we have
\begin{equation}\nonumber
\sup_{\alpha \in (0,1)} |\mathbb{P}(T > c_W(\alpha)\given \mathcal{E} )-\alpha| = o(1)
\end{equation} 
by Corollary 3.1 in \cite{Chernozhukov2013Gaussian}. 

Meanwhile, by Remark~\ref{rmk:grad_conc}, Corollary \ref{lem:eigenvalue_Q} , Remark \ref{rmk:1}, Remark \ref{rmk:eqn:4}, Lemma \ref{lem:degree_concentration}, we have that the conditions  \eqref{eqn:grad_conc1}, \eqref{eqn:Theta l2}, \eqref{eqn:3}, \eqref{eqn:4} and \eqref{eqn:degree} are satisfied with probability at least $1-\cO(n^{-5})$, so we have that as $n,L \to \infty$, our main claim holds that
\begin{equation}\nonumber
\sup_{\alpha \in (0,1)} \big|\mathbb{P}(T > c_W(\alpha))-\alpha\big| \to 0.
\end{equation} 
	
%
%
It  remains to check the above three conditions.	For the third condition, we have
	\begin{equation}\nonumber
	\begin{aligned}
	|x_{ij}^{(\ell)}|=&\Big|\sqrt{{np}} \sum_{k>m}{\mathcal{E}_{mk}}(-y_{mk}^{(\ell)}+\frac{e^{\theta_k^*}}{e^{\theta_k^*}+e^{\theta_m^*}})([{\Theta}^*_{11}]_i-[{\Theta}^*_{11}]_j)(\bm{e}_k-\bm{e}_m)\Big|\\
	\leq & \sqrt{np}\big(\|[{\Theta}^*_{11}]_i\|_1+\|[{\Theta}^*_{11}]_j\|_1\big)\Big\| \sum_{k>m}{\mathcal{E}_{mk}}(-y_{mk}^{(\ell)}+\frac{e^{\theta_k^*}}{e^{\theta_k^*}+e^{\theta_m^*}})(\bm{e}_k-\bm{e}_m)\Big\|_\infty\\
	\lesssim& \sqrt{np} \times \sqrt{n}\frac{1}{np} \times np 
	\lesssim n\sqrt{p},
	\end{aligned}
	\end{equation}
	where the first inequality follows from H\"older's inequality, and the second inequality follows from the fact that $\max_{1 \leq j \leq n} \|[{\Theta}^*_{11}]_j\|_1\leq \|{\Theta}^*_{11}\|_\infty\leq \sqrt{n}\|{\Theta}^*_{11}\|_2 \lesssim \sqrt{n}\frac{1}{np}$ by  \eqref{eqn:Theta l2} and degree concentration by \eqref{eqn:degree}. Therefore, we have $B=n\sqrt{ p  }$, which leads to the condition $n^2 p \frac{(\log (n L))^{7}}{L}  =o(1)$.

	For the second condition, we have 
	\begin{equation}\nonumber
	\begin{aligned}
	\frac{1}{L}\sum_{\ell=1}^{L}{\E}\left[(x_{ij}^{(\ell)})^{2}\right]
	&={\E}\left[(x_{ij}^{(\ell)})^{2}\right]
	=\mathrm{Var}(x_{ij}^{(\ell)})\\
	&=np \sum_{k>m}{\mathcal{E}_{mk}}\frac{e^{\theta_k^*}e^{\theta_m^*}}{(e^{\theta_k^*}+e^{\theta_m^*})^2}([{\Theta}^*_{11}]_i-[{\Theta}^*_{11}]_j)\\
	&\quad \cdot(\bm{e}_k-\bm{e}_m)(\bm{e}_k-\bm{e}_m)^\top {([{\Theta}^*_{11}]_i-[{\Theta}^*_{11}]_j)}^\top\\
	&=np ([{\Theta}^*_{11}]_i-[{\Theta}^*_{11}]_j)\nabla^2 {\cL({\theta}^*)}{([{\Theta}^*_{11}]_i-[{\Theta}^*_{11}]_j)}^\top\\
	&=np ([{\Theta}^*_{11}]_{ii}+[{\Theta}^*_{11}]_{jj}-[{\Theta}^*_{11}]_{ij}-[{\Theta}^*_{11}]_{ji})\\
	&=np(\bm{e}_i-\bm{e}_j)^\top {\Theta}^*_{11} (\bm{e}_i-\bm{e}_j),
	\end{aligned}
	\end{equation}
	where the fourth equality holds by ${\Theta}^*_{11} \nabla^2 {\cL({\theta}^*)} {{\Theta}^*_{11}}^\top={\Theta}^*_{11}$ in \eqref{eqn:product}. 
From \eqref{eqn:3}, we have that the second condition holds that
	\begin{equation}\nonumber
	\frac{1}{L}\sum_{\ell=1}^{L}{\E}\left[(x_{ij}^{(\ell)})^{2}\right]=np(\bm{e}_i-\bm{e}_j)^\top {\Theta}^*_{11} (\bm{e}_i-\bm{e}_j) \gtrsim np\frac{1}{np} = 1.
	\end{equation} 
	For the first condition, we have 
		\begin{equation}\nonumber
		\begin{aligned}
		|W-W_0|
		\leq& \max_{(i, j) \in E} \Bigg| \sqrt{\frac{1}{L}}\sum_{\ell=1}^L \sqrt{{np}}\sum_{k>m}{\mathcal{E}_{mk}} \bigg\{\Big(y_{mk}^{(\ell)}-\frac{e^{\hat{\theta}_k}}{e^{\hat{\theta}_k}+e^{\hat{\theta}_m}}\Big)\big([\hat{\Theta}_{11}]_i-[\hat{\Theta}_{11}]_j\big)  \\ &-\Big(y_{mk}^{(\ell)}-\frac{e^{\theta_k^*}}{e^{\theta_k^*}+e^{\theta_m^*}}\Big)\big([{\Theta}^*_{11}]_i-[{\Theta}^*_{11}]_j\big)\bigg\}(\bm{e}_k-\bm{e}_m)  z_l \Bigg|.
		\end{aligned}
		\end{equation}
Conditioning on the data $\bm{y}$, the  above equation is  suprema of a Gaussian process. So we first bound the following conditional variance that
	\begin{small}
	$$
\begin{aligned}
		&\max_{(i,j) \in E}{\frac{1}{L}}\sum_{\ell=1}^L \Bigg[  \sqrt{{np}}\sum_{k>m}{\mathcal{E}_{mk}} \bigg\{\Big(y_{mk}^{(\ell)}-\frac{e^{\hat{\theta}_k}}{e^{\hat{\theta}_k}+e^{\hat{\theta}_m}}\Big)\big([\hat{\Theta}_{11}]_i-[\hat{\Theta}_{11}]_j\big) \\
		&\qquad\quad-\Big(y_{mk}^{(\ell)}-\frac{e^{\theta_k^*}}{e^{\theta_k^*}+e^{\theta_m^*}}\Big)\big([{\Theta}^*_{11}]_i-[{\Theta}^*_{11}]_j\big)\bigg\}(\bm{e}_k-\bm{e}_m)\Bigg]^2\\
		\lesssim & \underbrace{\max_{(i, j) \in E}{\frac{1}{L}}\sum_{\ell=1}^L \Bigg[ \sqrt{{np}}\sum_{k>m}{\mathcal{E}_{mk}} \bigg(\frac{e^{\hat{\theta}_k}}{e^{\hat{\theta}_k}+e^{\hat{\theta}_m}}-\frac{e^{\theta_k^*}}{e^{\theta_k^*}+e^{\theta_m^*}}\bigg)\big([{\Theta}^*_{11}]_i-({\Theta_{11}^*})_j\big)(\bm{e}_k-\bm{e}_m) \Bigg]^2}_{A_1}\\
		& + \underbrace{\max_{(i, j) \in E}{\frac{1}{L}}\sum_{\ell=1}^L \Bigg[ \sqrt{{np}}\sum_{k>m}{\mathcal{E}_{mk}} \bigg(-y_{mk}^{(\ell)}+\frac{e^{\hat{\theta}_k}}{e^{\hat{\theta}_k}+e^{\hat{\theta}_m}}\bigg)\big([\hat{\Theta}_{11}]_i-[\hat{\Theta}_{11}]_j-[{\Theta}^*_{11}]_i+[{\Theta}^*_{11}]_j\big)(\bm{e}_k-\bm{e}_m) \Bigg]^2}_{A_2}. \notag
		\end{aligned}
		$$
		\end{small}
	Then we control $A_1$ and $A_2$ separately. 
	For $A_1$, with probability at least $1-\cO (n^{-5})$, we have
	$$
	\begin{aligned}
	A_1=& \max_{(i, j) \in E}{\frac{1}{L}}\sum_{\ell=1}^L  \Bigg[ \sqrt{{np}}\sum_{k>m}{\mathcal{E}_{mk}} \bigg(\frac{e^{\hat{\theta}_k}}{e^{\hat{\theta}_k}+e^{\hat{\theta}_m}}-\frac{e^{\theta_k^*}}{e^{\theta_k^*}+e^{\theta_m^*}}\bigg)([{\Theta}^*_{11}]_i-({\Theta_{11}^*})_j)(\bm{e}_k-\bm{e}_m) \Bigg]^2\\
	=&\max_{(i, j) \in E} {np}\Big[\big([{\Theta}^*_{11}]_i-[{\Theta}^*_{11}]_j\big) \big(\nabla \cL(\hat{{\theta}})-\nabla \cL({{\theta}}^*)\big)\Big]^2\\
	\leq&\max_{(i, j) \in E} {np}\Big[\big\|[{\Theta}^*_{11}]_i-[{\Theta}^*_{11}]_j\big\|_1 \big\|\nabla \cL(\hat{{\theta}})-\nabla \cL({{\theta}}^*)\big\|_\infty\Big]^2\\
	\lesssim&\ np(\sqrt{n}\big\|{\Theta}^*_{11}\big\|_2)^2\big\|\nabla \cL(\hat{{\theta}})-\nabla \cL({{\theta}}^*)\big\|_\infty^2\\
	\lesssim &\ np\times n \Big(\frac{1}{np}\Big)^2 \Big(np \sqrt{\frac{\log n}{npL}} \Big)^2
	\lesssim \ \frac{n\log n}{L},
	\end{aligned}
	$$
	where the second inequality follows from $\max_{(i, j) \in E}\|[{\Theta}^*_{11}]_i-[{\Theta}^*_{11}]_j\|_1^2\leq \max_{(i, j) \in E} (\|[{\Theta}^*_{11}]_i\|_1+\|[{\Theta}^*_{11}]_j\|_1)^2\leq  \max_{1\leq j \leq n}(2\|[{\Theta}^*_{11}]_j\|_1)^2\lesssim \|{\Theta}^*_{11}\|_\infty^2\lesssim (\sqrt{n}\|{\Theta}^*_{11}\|_2)^2$ by \eqref{eqn:Theta l2}. The second inequality follows from \eqref{eqn:grad_conc1}.

	For $A_2$, we have that
	$$
	\begin{aligned}
	A_2=& \max_{(i, j) \in E}{\frac{1}{L}}\sum_{\ell=1}^L \Bigg[ \sqrt{{np}}\sum_{k>m}{\mathcal{E}_{mk}} \bigg(-y_{mk}^{(\ell)}+\frac{e^{\hat{\theta}_k}}{e^{\hat{\theta}_k}+e^{\hat{\theta}_m}}\bigg)\big([\hat{\Theta}_{11}]_i-[\hat{\Theta}_{11}]_j-[{\Theta}^*_{11}]_i+[{\Theta}^*_{11}]_j\big)(\bm{e}_k-\bm{e}_m) \Bigg]^2\\
	\leq&\ np  \Bigg(\max_{(i, j) \in E} \big\|[\hat{\Theta}_{11}]_i-[\hat{\Theta}_{11}]_j-[{\Theta}^*_{11}]_i+[{\Theta}^*_{11}]_j\big\|_1 \max_{\ell} \bigg\|\sum_{k>m}{\mathcal{E}_{mk}} \Big(-y_{mk}^{(\ell)}+\frac{e^{\hat{\theta}_k}}{e^{\hat{\theta}_k}+e^{\hat{\theta}_m}}\Big)(\bm{e}_k-\bm{e}_m) \bigg\|_\infty\Bigg)^2\\
	\lesssim&\ np\times n \bigg(\dfrac{1}{np}\sqrt{\frac{\log n}{pL}}\bigg)^2\times (np)^2
	\lesssim\frac{n^2 \log n}{L},
	\end{aligned}
	$$
	where the second inequality follows from $\max_{(i, j) \in E}\big\|[\hat{\Theta}_{11}]_i-[\hat{\Theta}_{11}]_j-[{\Theta}^*_{11}]_i+[{\Theta}^*_{11}]_j\big\|_1^2\leq \max_{(i, j) \in E} \big(\big\|[\hat{\Theta}_{11}]_i-[{\Theta}^*_{11}]_i\big\|_1+\big\|[\hat{\Theta}_{11}]_j-[{\Theta}^*_{11}]_j\big\|_1\big)^2\leq  \max_{1\leq j \leq n}\big(2\big\|[\hat{\Theta}_{11}]_j-[{\Theta}^*_{11}]_j\big\|_1\big)^2\lesssim \big\|\hat{\Theta}_{11})-{\Theta}^*_{11}\big\|_\infty^2\lesssim n\big\|\hat{\Theta}_{11}-{\Theta}^*_{11}\big\|_2^2\lesssim n\Big(\dfrac{1}{np}\sqrt{\frac{\log n}{pL}}\Big)^2$ by \eqref{eqn:4} and degree concentration by \eqref{eqn:degree}.
	
	Therefore, let event $\mathfrak{E}$ be \begin{small}
		$$
		\begin{aligned}
		& \max_{(i, j) \in E}{\frac{1}{L}}\sum_{\ell=1}^L \Bigg[  \sqrt{{np}}\sum_{k>m}{\mathcal{E}_{mk}} \bigg\{(y_{mk}^{(\ell)}-\frac{e^{\hat{\theta}_k}}{e^{\hat{\theta}_k}+e^{\hat{\theta}_m}})([\hat{\Theta}_{11}]_i-[\hat{\Theta}_{11}]_j) \\
		&\qquad\qquad -(y_{mk}^{(\ell)}-\frac{e^{\theta_k^*}}{e^{\theta_k^*}+e^{\theta_m^*}})([{\Theta}^*_{11}]_i-[{\Theta}^*_{11}]_j)\bigg\}(\bm{e}_k-\bm{e}_m)\Bigg]^2 \lesssim  n^2\frac{\log n}{L}
		\end{aligned}
		$$
	\end{small} with $\mathbb{P}(\mathfrak{E}^C)\leq \frac{1}{n^5}$. Then by maximal inequality, under the event $\mathfrak{E}$, we have
		$$
		\begin{aligned}
		&\mathbb{E}\bigg[\max_{(i, j) \in E}
		\sqrt{\frac{1}{L}}\sum_{\ell=1}^L \sqrt{{np}}\sum_{k>m}{\mathcal{E}_{mk}} \Bigg\{(y_{mk}^{(\ell)}-\frac{e^{\hat{\theta}_k}}{e^{\hat{\theta}_k}+e^{\hat{\theta}_m}})([\hat{\Theta}_{11}]_i-[\hat{\Theta}_{11}]_j) \\
		&\qquad -(y_{mk}^{(\ell)}-\frac{e^{\theta_k^*}}{e^{\theta_k^*}+e^{\theta_m^*}})([{\Theta}^*_{11}]_i-[{\Theta}^*_{11}]_j)\Bigg\}(\bm{e}_k-\bm{e}_m) z_l 
		\ \Big| \ 
		  \bm{y}\bigg]\lesssim n\log n /\sqrt{L}. 
		\end{aligned}
		$$Then by Borell's inequality, we have 
		$$
		\begin{aligned}
		\mathbb{P}\Bigg(\max_{(i, j) \in E} &
		\sqrt{\frac{1}{L}}\sum_{\ell=1}^L \sqrt{{np}}\sum_{k>m}{\mathcal{E}_{mk}} \bigg\{\Big(y_{mk}^{(\ell)}-\frac{e^{\hat{\theta}_k}}{e^{\hat{\theta}_k}+e^{\hat{\theta}_m}}\Big)[\hat{\Theta}_{11}]_j  -\Big(y_{mk}^{(\ell)}-\frac{e^{\theta_k^*}}{e^{\theta_k^*}+e^{\theta_m^*}}\Big)[{\Theta}^*_{11}]_j\bigg\}\\
&\cdot		(\bm{e}_k-\bm{e}_m)  z_l
		\gtrsim \dfrac{n\log n}{\sqrt{L}} \ \Big| \   \bm{y}\Bigg)\leq \frac{1}{n^5},
		\end{aligned}
		$$
	which implies
	$$\mathbb{P}\left(\mathbb{P}\left(\left|W-W_{0}\right|\gtrsim n\log n /\sqrt{L}\ \Big| \by \right)>1 / n^5\right)<1 / n^5.$$  
	Meanwhile, with probability at least $1-\cO (n^{-5})$, we have $|T-T_0|\leq\max_{(i, j) \in E}\sqrt{{npL}}|r_{i}-r_j|\leq2\sqrt{{npL}}\|r\|_\infty\lesssim \frac{n\log n}{\sqrt{L}}$ since $\|r\|_\infty\lesssim\frac{\sqrt{n}\log n}{\sqrt{p}\ L}+\frac{\log n}{\sqrt{n}pL}$ and $p\gtrsim\sqrt{\frac{\log n}{npL}}$. Thus, we have  that $$\mathbb{P}\left(\left|T-T_{0}\right|>\zeta_{1}\right)<\zeta_{2}, $$
	and $$\mathbb{P}\left(\mathbb{P}\left(\left|W-W_{0}\right|>\zeta_1 \given \bm{y} \right)>\zeta_2\right)<\zeta_2,$$
	where \begin{equation}\nonumber
	\zeta_{1}=C\frac{n\log n}{\sqrt{L}} \text{ and } \zeta_{2}=1 / n^5.
	\end{equation} Thus, the first condition $\zeta_1\sqrt{\log n}+\zeta_2 \lesssim \frac{n\log n}{\sqrt{L}}\sqrt{\log n}+n^{-5}=o(1)$ holds, which concludes the proof.
	
\end{proof}
\begin{remark}
	\label{rmk:c3}
In our Theorem \ref{thm:bootstrap}, we only need $T$ to satisfy $$\sup_{\alpha \in (0,1)} \big|\mathbb{P}(T > c_W(\alpha,E) )-\alpha\big|=o(1) .$$
However, if we need a stronger result that $$\sup _{\alpha \in(0,1)}\left|\mathrm{P}\left(T > c_{W}(\alpha)\right) -\alpha\right| \lesssim L^{-c_3} + n^{-5},$$
which will be needed in FDR controlling procedure in Section \ref{sec:FWER},
$n, L, p$ need to satisfy stronger condition by the following argument.

 By Theorem 3.2 in \cite{Chernozhukov2013Gaussian}, $c_3$ is a constant satisfying 
\begin{equation}\label{eqn:cond-c3}
\sup _{\alpha \in(0,1)}\left|\mathrm{P}\left(T > c_{W}(\alpha)\right) -\alpha\right| \leq \rho_{\ominus} + \rho \lesssim L^{-c_3} + \frac{1}{n^5},
\end{equation}
where $\rho_{\ominus}$ is bounded by
$$
\rho_{\ominus} \leq 2(\rho+\pi(\vartheta)+\mathrm{P}(\Delta>\vartheta))+C_{3} \zeta_{1} \sqrt{1 \vee \log \left(n / \zeta_{1}\right)}+5 \zeta_{2}
$$
for every $\vartheta>0$, and $\rho$ is bounded by \eqref{eqn:rho}. 
{We aim to find a sufficient condition for \eqref{eqn:cond-c3}.}

We have 
$$\pi(\vartheta) = C_{2} \vartheta^{1 / 3}(1 \vee \log (n / \vartheta))^{2 / 3}, $$ from Lemma 3.2 in \cite{Chernozhukov2013Gaussian},
and taking $\vartheta=(\mathrm{E}[\Delta])^{1 / 2} / \log n$ from proof of Corollary 3.1 in \cite{Chernozhukov2013Gaussian}. 
We also have $$\PP(\Delta>\vartheta) \leqslant \EE[\Delta] / \vartheta, $$ 
where $\EE[\Delta]$ is further bounded by $$\EE[\Delta] \lesssim \sqrt{\frac{B^{2} \log n}{L}} \bigvee \frac{B^{2}(\log (n L))^{2}(\log n)}{L} =  \sqrt{\frac{n^2 p \log n}{L}} \bigvee \frac{n^2 p(\log (n L))^{2}(\log n)}{L}:=B_{\EE[\Delta]},$$ from Lemma C.1 in \cite{Chernozhukov2013Gaussian}.

Combining with $\zeta_{1}=C\frac{n\log n}{\sqrt{L}}$ and $\zeta_{2}=1 / n^5$, we have 
%
$$\rho_{\ominus} \lesssim \rho +\sqrt{B_{\EE[\Delta]}}\log n + \frac{(B_{\EE[\Delta]})^{1/6}}{(\log n)^{1/3}}\Big\{\Big(\log n + \log (\log n) - \frac{1}{2}\log B_{\EE[\Delta]}\Big)^{2/3} \bigvee 1\Big\} + n \log n\sqrt{\frac{\log n}{L}} + \frac{1}{n^5}.$$

Meanwhile, by Theorem 2.2 in \cite{Chernozhukov2013Gaussian},  for every $\gamma \in(0,1)$,
we have 
\begin{equation}\label{eqn:rho}
\rho \lesssim L^{-1 / 8}\left(M_{3}^{3 / 4} \vee M_{4}^{1 / 2}\right)(\log (n L / \gamma))^{7 / 8}+L^{-1 / 2}(\log (n L / \gamma))^{3 / 2} u(\gamma)+\gamma,
\end{equation}
 and taking $\gamma=L^{-c'}$ from the proof of corollary 2.1 in \cite{Chernozhukov2013Gaussian}. We further have
$$u(\gamma) \lesssim \max \left\{D h^{-1}(L / \gamma), B' \sqrt{\log (n L / \gamma)}\right\}$$ from Lemma 2.2 in \cite{Chernozhukov2013Gaussian}, and 
$$B' \vee M_{3}^{3} \vee M_{4}^{2} \lesssim B, \ D \lesssim B \log n,\  h(v)=e^{v}-1 $$ from proof of Corollary 2.1. 

Thus, for every $c'>0$,
$$\rho \color{black}\lesssim \frac{n^{1/4}p^{1/8}(\log n+(1+c')\log L)^{7/8}}{L^{1/8}}
+ \frac{np^{1/2}\log n(1+c')\log L(\log n+(1+c')\log L)^{3/2}} {L^{1/2}}
+L^{-c'}.$$ 

{ Combining the above inequalities together, if we impose a stronger condition that there exists $c'>0$ and $c_3>0$ such that 
\begin{equation}\nonumber
\begin{aligned}
&\sqrt{B_{\EE[\Delta]}}\log n + \frac{(B_{\EE[\Delta]})^{1/6}}{(\log n)^{1/3}}\Big\{\Big(\log n + \log (\log n) - \frac{1}{2}\log B_{\EE[\Delta]}\Big)^{2/3} \vee 1\Big\} + n \log n\sqrt{\frac{\log n}{L}} + \frac{1}{n^5}\\
& +  \frac{n^{1/4}p^{1/8}(\log n+(1+c')\log L)^{7/8}}{L^{1/8}}
+ \frac{np^{1/2}\log n(1+c')\log L(\log n+(1+c')\log L)^{3/2}} {L^{1/2}}
+L^{-c'} \\
&\quad \lesssim L^{-c_3} + n^{-5}, 
\end{aligned}
\end{equation}	
then we have the stronger conclusion that
$$\sup _{\alpha \in(0,1)}\left|\mathrm{P}\left(T > c_{W}(\alpha)\right) -\alpha\right| \lesssim L^{-c_3} + n^{-5}.$$ }
\end{remark}

\section{Proofs in Section \ref{sec:ht}}
\label{sec:pf:ht}
\subsection{Proof of Theorem \ref{thm:type I error}}
\begin{proof}
We show that the Type I error can be well-controlled at desired level, and the power is asymptotically one with the required signal strength. 
First, for the Type I error, we let $H(\gamma,i)=\{j \in [n]: \gamma_j < \gamma_i \} = \{j \in [n]: \theta_j > \theta_i \}$ and $L(\gamma,i)=\{j \in [n]: \gamma_j > \gamma_i \} = \{j \in [n]: \theta_j < \theta_i \}$.
Also, recall that we let the collection of all possible score vector after perturbation  be  $\tilde{\Theta}$ as defined in~\eqref{eqn:set theta tilde}. 
In particular, we let  $\tilde{{\theta}}$  be the score vector after some perturbation that
\begin{equation}\nonumber
\tilde{\theta}_k=
\left\{\begin{array}{ll}
\hat{\theta}^d_{k}+c_W(\alpha,i)/\sqrt{npL}, & k \in H^* \\
\hat{\theta}^d_i, & k=i \\
\hat{\theta}^d_{k}-c_W(\alpha,i)/\sqrt{npL}, & k \in L^*
\end{array}\right.
\end{equation}
where we let $H^*=H(\gamma^*,i), L^*=L(\gamma^*,i)$ for ease of notation.
Thus, we have
\begin{equation}\nonumber
\sup_{\gamma^* \notin \cR_i} \P_0 (\text{reject } H_0) 
=\sup_{\gamma^* \notin \cR_i} \P_0 \big(\bigcap_{{{\theta}} \in \tilde{{\Theta}}}\{\gamma({{\theta}}) \in \cR_i\} \big)
\leq\sup_{\gamma^* \notin \cR_i} \P_0 \big(\gamma(\tilde{{\theta}}) \in \cR_i \big).
\end{equation}
%
%
%
We first bound the probability of event 
$\{\gamma(\tilde{{\theta}}) \in \cR_i | \gamma^* \notin \cR_i\}$,
which is the event that one of the following two events holds
$$E_1:\  \exists j \in H(\gamma^*,i) \text{ such that }
\theta_j^* > \theta_i^*, \tilde{\theta}_j < \tilde{\theta}_i, $$ 
or
$$E_2:\  \exists j \in L(\gamma^*,i) \text{ such that }
\theta_j^* < \theta_i^*, \tilde{\theta}_j > \tilde{\theta}_i. $$
Otherwise, we have that for all $ j \in H(\gamma^*,i), 
\theta_j^* > \theta_i^*, \tilde{\theta}_j > \tilde{\theta}_i$ and for all $ j \in L(\gamma^*,i),
\theta_j^* < \theta_i^*, \tilde{\theta}_j < \tilde{\theta}_i$, which implies $\gamma(\tilde{{\theta}})$ and $\gamma^* $ are in the same equivalent class, and cannot hold since $\gamma(\tilde{{\theta}}) \in \cR_i$ and
$\gamma^* \notin \cR_i$. Thus, under the event $\{\gamma(\tilde{{\theta}}) \in \cR_i | \gamma^* \notin \cR_i\}$, $E_1$ or $E_2$  holds.

Then, we consider the two events $E_1$ and $E_2$ separately. Suppose that $E_1$ holds. We have
$$\min_{j \in H(\gamma^*,i)}\theta_j^* > \theta_i^*
,\ \min_{j \in H(\gamma^*,i)}\tilde{\theta}_j < \tilde{\theta}_i, $$
i.e.,
$$\min_{j \in H(\gamma^*,i)}\theta_j^* > \theta_i^*
,\ \min_{j \in H(\gamma^*,i)}\hat{\theta}_j^d +c_W(\alpha,i)/\sqrt{npL} < \hat{\theta}_i^d. $$
By the fact that 
$\min_j(a_j + b_j) \geq \min_j a_j + \min_j b_j$, we have
\begin{equation}\nonumber
\begin{aligned}
c_W(\alpha,i)/\sqrt{npL}
 \leq 
\hat{\theta}^d_{i}-\min _{j \in H(\gamma^*,i)}\hat{\theta}^d_{j}
&\leq
\hat{\theta}^d_{i}-\min _{j \in H(\gamma^*,i)} \theta_{j}^{*} 
-\min _{j \in H(\gamma^*,i)}\left(\hat{\theta}^d_{j}-\theta_{j}^{*}\right) \\
&=
\theta_{i}^{*}-\min _{j \in H(\gamma^*,i)} \theta_{j}^{*} 
+\max _{j \in H(\gamma^*,i)}\left(\hat{\theta}^d_{i}-\theta_{i}^{*}-\hat{\theta}^d_{j}+\theta_{j}^{*}\right),
\end{aligned}
\end{equation}
which gives $$\max _{j \in H(\gamma^*,i)}\big(\hat{\theta}^d_{i}-\theta_{i}^{*}-\hat{\theta}^d_{j}+\theta_{j}^{*}\big)>c_W(\alpha,i)/\sqrt{npL}.$$
Thus,  we have $$
\{E_1\}
\subseteq
\Big\{\max _{j \in H(\gamma^*,i)}\big(\hat{\theta}^d_{i}-\theta_{i}^{*}-\hat{\theta}^d_{j}+\theta_{j}^{*}\big)>c_W(\alpha,i)/\sqrt{npL}\Big\}.$$
For the event $E_2$, by similar arguments, we have 
$$\{E_2\}\subseteq 
\Big\{\max _{j \in L(\gamma^*,i)}\big(\hat{\theta}^d_{i}-\theta_{i}^{*}-\hat{\theta}^d_{j}+\theta_{j}^{*}\big)>c_W(\alpha,i)/\sqrt{npL}\Big\}. 
$$ 
Combining them together, we have
\begin{equation}
\label{eqn:subseteq}
\{\gamma(\tilde{{\theta}}) \in \cR_i | \gamma^* \notin \cR_i\}
\subseteq
\{E_1\}
\cup
\{E_2\}
\subseteq
\Big\{\max _{j \neq i } \big(\hat{\theta}^d_{i}-\theta_{i}^{*}-\hat{\theta}^d_{j}+\theta_{j}^{*}\big)>c_W(\alpha,i)/\sqrt{npL}\Big\}.
\end{equation}
Thus, we have
\begin{equation}\nonumber
\begin{aligned}
\sup_{\gamma^* \notin \cR_i} \mathbb{P}_0 \left(\text{reject } H_0\right) 
\leq&\ 
\sup_{\gamma^* \notin \cR_i} \P_0 \big(\{\gamma(\tilde{{\theta}}) \in \cR_i\} \big)\\
\leq &\
\P_0
\Big(\max _{j \neq i}\big(\hat{\theta}^d_{i}-\theta_{i}^{*}-\hat{\theta}^d_{j}+\theta_{j}^{*}\big)>c_W(\alpha,i)/\sqrt{npL}\Big) \\
\leq &\  \alpha +o(1),
\end{aligned}
\end{equation}
where the last inequality holds by Theorem \ref{thm:bootstrap}, and the Type I error is controlled as desired.

Next, we prove our testing procedure is powerful that the power goes to one asymptotically.  In particular,
to control Type II error, we first have
$$\sup_{\gamma^* \in \cR_i} \P (\text{Do not reject } H_0 )
=\sup_{\gamma^* \in \cR_i} \P \big(\bigcup_{{{\theta}} \in \tilde{{\Theta}}}\{\gamma({{\theta}}) \notin \cR_i\} \big).
$$
Then we bound the probability of the event $\{\gamma({{\theta}}) \notin \cR_i | \gamma^* \in \cR_i\}$.
We have  
\begin{equation}\nonumber
\begin{aligned}
\{\gamma({{\theta}}) \notin \cR_i | \gamma^* \in \cR_i\}
&\subseteq
\big\{|\tilde{\theta}_i-{\theta}^*_i|+\max_{j \neq i}|\tilde{\theta}_j-{\theta}^*_j| \geq \Delta({\theta}^*,\cR_i)\big\}\\
&\subseteq
\big\{\|\hat{{\theta}}^d-{\theta}^*\|_\infty+c_W(\alpha,i)/\sqrt{npL} \geq \Delta({\theta}^*,\cR_i)/2\big\},
\end{aligned}
\end{equation}
where the first subset follows from the definition of $\Delta({\theta}^*,\cR_i)$ that if $|\tilde{\theta}_i-{\theta}^*_i|+\max_{j \neq i}|\tilde{\theta}_j-{\theta}^*_j| < \Delta({\theta}^*,\cR_i)$ and $\{\gamma^* \in \cR_i\}$, we have $\gamma({{\theta}}) \in \cR_i$), and  the second subset holds by
$\|\tilde{{\theta}}-{\theta}^*\|_\infty \leq \|\hat{{\theta}}^d-{\theta}^*\|_\infty+2 c_W(\alpha,i)/\sqrt{npL}
\leq 2 \|\hat{{\theta}}^d-{\theta}^*\|_\infty+2 c_W(\alpha,i)/\sqrt{npL}$.
Thus, we have
\begin{equation}\nonumber
\begin{aligned}
\bigcup_{{{\theta}} \in \tilde{{\Theta}}}\big\{\gamma({{\theta}}) \notin \cR_i | \gamma^* \in \cR_i\big\}
\subseteq
\big\{\|\hat{{\theta}}^d-{\theta}^*\|_\infty+c_W(\alpha,i)/\sqrt{npL} \geq \Delta({\theta}^*,\cR_i)/2\big\}.
\end{aligned}
\end{equation}
By Theorem \ref{thm:bootstrap}, for any fixed $\alpha$ and sufficiently large $L, n$, we have $$\mathbb{P}(\sqrt{{npL}} (\max_{j \neq i} (\hat{\theta}^d_i-\theta^*_i-\hat{\theta}^d_j+\theta^*_j))\geq  c_W(\alpha,i))>\alpha/2,$$ 
and by Theorem \ref{thm:asy}, we have $$\P(\sqrt{{npL}}\max_{j \neq i} |\hat{\theta}^d_i-\theta^*_i-\hat{\theta}^d_j+\theta^*_j| \leq C\sqrt{\log n}) > 1-\cO(n^{-5}),$$ 
which immediately leads to
\begin{equation}\nonumber
c_W(\alpha,i)\leq C_0 \sqrt{  \log n}
\end{equation}
for  some sufficiently large constant $C_0$ with probability at least $1-\cO(n^{-5})$.
By Lemma~\ref{lem:consistency}, we have $$\|\hat{{\theta}}^d-{\theta}^*\|_\infty
\lesssim \sqrt{\frac{\log n}{npL}}.$$
Combining the above two inequalities together, with  probability at least $1-\cO(n^{-5})$, we have 
$$\|\hat{{\theta}}^d-{\theta}^*\|_\infty+c_W(\alpha,i)/\sqrt{npL}
\lesssim \sqrt{\frac{\log n}{npL}}.$$
Then, we let
\begin{equation}\label{eqn:delta}
\Delta({\theta}^*,\cR_i)>\delta=C\sqrt{\frac{\log n}{npL}}
\end{equation} for  some sufficiently large constant $C$, and it follows that 
\begin{equation}\nonumber
 \P\Big(\|\hat{{\theta}}^d-{\theta}^*\|_\infty+c_W(\alpha,i)/\sqrt{npL} \geq \Delta({\theta}^*,\cR_i)/2\Big)
\leq \cO(n^{-5}).
\end{equation}
Finally, since the above inequality holds for any $\gamma^* \in \cR_i$,  we have
\begin{equation}\nonumber
\inf _{\gamma^* \in \cR_i, \Delta({\theta}^*,\cR_i)>\delta} \mathbb{P}\left(\text{reject } H_0\right) > 1-\cO(n^{-5}),
\end{equation}
which completes the proof.
\end{proof}

\subsection{Specific testing procedure for top $K$}\label{sec:test:topk}

For the top-$K$ inference example, we test if the $i$-th item is ranked among top $K$. Our simplified testing procedure only considers $\tilde{\theta}$, whose  $k$-th entry is
\begin{equation}\nonumber
\tilde{\theta}_k=
\left\{\begin{array}{ll}
\hat{\theta}^d_{k},  & 
\hat{\theta}^d_{k}>\hat{\theta}^d_{i}, \\
\hat{\theta}^d_i-c_W(\alpha,i)/\sqrt{npL},  &  k=i,\\
\hat{\theta}^d_{k},  & 
\hat{\theta}^d_{k}<\hat{\theta}^d_{i}.
\end{array}\right.
\end{equation}
In this section, we provide the validity and power analysis results for this simplified testing procedure. The next theorem shows that the Type I error can be well-controlled and our procedure is powerful.

\begin{theorem}
Suppose that conditions in Theorem \ref{thm:bootstrap} are satisfied. We have the proposed  procedure in Remark \ref{rmk:topk} satisfies that
\begin{equation}\nonumber
 \sup_{\theta^*_i \leq \theta^*_{(K+1)}} \mathbb{P}_0 \left(\text{reject } H_0\right) \leq \alpha\ \ \ \ \text{ as } {n,L \to \infty}.
\end{equation}
Furthermore, we have
\begin{equation}\nonumber
\inf_{ \theta^*_{i}-\theta^*_{(K+1)}>\delta} \mathbb{P}\left(\text{reject } H_0\right) \to 1 
\end{equation} 
where $ \delta=C\sqrt{\frac{\log n}{npL}}$ for some constant $C$. 
\end{theorem}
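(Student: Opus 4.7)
The plan is to mirror the proof of Theorem~\ref{thm:type I error} but specialize to the top-$K$ property so that the simplified one-point perturbation $\tilde\theta$ suffices. Only the count of items $k \neq i$ with $\hat\theta^d_k > \hat\theta^d_i - c_W(\alpha,i)/\sqrt{npL}$ determines whether $\gamma(\tilde\theta)\in\mathcal{R}_i$; a pigeonhole argument comparing this count with the analogous count under $\theta^*$ then reduces both the size and power analysis to a one-sided maximal deviation event of the form already controlled by Theorem~\ref{thm:bootstrap}.

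\textbf{Type I control.} Under $H_0$, $\theta^*_i \leq \theta^*_{(K+1)}$ forces the existence of at least $K$ items $j \neq i$ with $\theta^*_j \geq \theta^*_i$. Rejection means item $i$ lies in the top-$K$ under $\tilde\theta$, i.e., at most $K-1$ items $k \neq i$ exceed the cutoff $\hat\theta^d_i - c_W(\alpha,i)/\sqrt{npL}$. By pigeonhole, there is some $j^*$ with $\theta^*_{j^*}\geq\theta^*_i$ and $\hat\theta^d_{j^*}\leq \hat\theta^d_i - c_W(\alpha,i)/\sqrt{npL}$, which yields
$$
(\hat\theta^d_i - \theta^*_i) - (\hat\theta^d_{j^*} - \theta^*_{j^*}) \geq c_W(\alpha,i)/\sqrt{npL} + (\theta^*_{j^*}-\theta^*_i) \geq c_W(\alpha,i)/\sqrt{npL}.
$$
So the rejection event is contained in $\{\sqrt{npL}\max_{j\neq i}[(\hat\theta^d_i - \theta^*_i) - (\hat\theta^d_j - \theta^*_j)] \geq c_W(\alpha,i)\}$, and Theorem~\ref{thm:bootstrap} applied with $E=\{(i,j):j\neq i\}$ bounds its probability by $\alpha+o(1)$.

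\textbf{Power.} Under $H_a$ with $\theta^*_i - \theta^*_{(K+1)} > \delta$, item $i$ lies strictly in the top-$K$ under $\theta^*$ and every item $k$ with $\gamma^*_k > K$ obeys $\theta^*_i-\theta^*_k \geq \theta^*_i - \theta^*_{(K+1)} > \delta$. Non-rejection forces at least $K$ items $k\neq i$ to exceed the cutoff; since only $K-1$ can come from the true top-$K$, some $k^*$ with $\theta^*_{k^*}\leq\theta^*_{(K+1)}$ must satisfy $\hat\theta^d_{k^*} > \hat\theta^d_i - c_W(\alpha,i)/\sqrt{npL}$, so
$$
(\hat\theta^d_{k^*}-\theta^*_{k^*}) - (\hat\theta^d_i-\theta^*_i) > \delta - c_W(\alpha,i)/\sqrt{npL}.
$$
From the proof of Theorem~\ref{thm:type I error}, we have $c_W(\alpha,i)/\sqrt{npL} \leq C_0\sqrt{\log n/(npL)}$ and $\|\hat\theta^d-\theta^*\|_\infty \lesssim \sqrt{\log n/(npL)}$ with probability $1-\mathcal{O}(n^{-5})$. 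For $\delta = C\sqrt{\log n/(npL)}$ with $C$ large enough, $\delta - c_W/\sqrt{npL}$ exceeds $2\|\hat\theta^d-\theta^*\|_\infty$, making the non-rejection event occur with vanishing probability. The main (mild) obstacle is verifying that omitting the upward perturbation of items with $\hat\theta^d_k < \hat\theta^d_i$ (as in Remark~\ref{rmk:topk}) still yields a valid test; this is exactly what the pigeonhole reduction accomplishes, since the rank of item $i$ under $\tilde\theta$ depends only on how many coordinates of $\hat\theta^d$ cross the single cutoff $\hat\theta^d_i - c_W/\sqrt{npL}$.
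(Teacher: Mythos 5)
Your proof is correct and takes essentially the same route as the paper: both arguments show that rejection (resp.\ non-rejection) under the simplified one-point perturbation forces the one-sided maximal deviation $\sqrt{npL}\max_{j\neq i}\big[(\hat{\theta}^d_i-\theta^*_i)-(\hat{\theta}^d_j-\theta^*_j)\big]$ (resp.\ its sign-flipped counterpart) to cross $c_W(\alpha,i)$, then invoke Theorem~\ref{thm:bootstrap} for the size and the high-probability bounds $c_W(\alpha,i)\lesssim\sqrt{\log n}$ and $\|\hat{\theta}^d-\theta^*\|_\infty\lesssim\sqrt{\log n/(npL)}$ for the power. Your pigeonhole count of cutoff exceedances is just a rephrasing of the paper's comparison of the bottom-$(n-K)$ index sets ${\T}^*_0$ and $\hat{\T}_0$ through max/min inequalities, so there is no substantive difference.
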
	

\begin{proof}
	First we show that the Type I error can be controlled at the desired level. For ease of presentation, we  denote by $\T_0^*=\{j:\theta^*_j \leq \theta^*_{(K+1)}\}$ and $\hat{\T}_0=\{j:\hat{\theta}_j^d \leq \hat{\theta}_{(K+1)}^d\}$ that $\T_0^*$ and $\hat{\T}_0$ are the indices of the smallest $n-K$ scores in $\theta^*$ and  $\hat{\theta}^d$, respectively. Thus, we have $\theta^*_{(K+1)}=\max_{j \in {\T}^*_0}\theta^*_j$, $\hat{\theta}^d_{(K+1)}=\max_{j \in \hat{\T}_0}\hat{\theta}^d_j$.
	
		By the fact that $\max_j(a_j + b_j) \leq \max_j a_j + \max_j b_j$, we have 
	\begin{equation}\label{equ:tri-inequ}
	\theta^*_i-\max_{j \in \hat{\T}_0}\theta^*_j 
		\geq 
		\hat{\theta}^d_i-\max_{j \in \hat{\T}_0}\hat{\theta}^d_j- (\hat{\theta}^d_i-\theta^*_i+\max_{j \in \hat{\T}_0}(\theta^*_j-\hat{\theta}^d_j))
		=
		\hat{\theta}^d_i-\max_{j \in \hat{\T}_0}\hat{\theta}^d_j- \max_{j \in \hat{\T}_0}(\hat{\theta}^d_i-\theta^*_i+\theta^*_j-\hat{\theta}^d_j).
	\end{equation}
	Also, by the definition of $\T_0^*$ and $\hat{\T}_0$, we have 
	$\max_{j \in {\T}^*_0}\theta^*_j 
	\leq 
	\max_{j \in \hat{\T}_0}{\theta}^*_j 
	$.
	Combining this with \eqref{equ:tri-inequ}, we have
	$$\theta^*_i-\max_{j \in {\T}^*_0}\theta^*_j 
	\geq 
	\hat{\theta}^d_i-\max_{j \in \hat{\T}_0}\hat{\theta}^d_j- \max_{j \in \hat{\T}_0}(\hat{\theta}^d_i-\theta^*_i+\theta^*_j-\hat{\theta}^d_j).$$
Thus, we have that the Type I error satisfies
	\begin{equation}\nonumber
	\begin{aligned}
	\sup_{\theta^*_i \leq \theta^*_{(K+1)}}
	\mathbb{P}\big(\text{reject } H_0\big)
	=&\sup_{\theta^*_i \leq \theta^*_{(K+1)}}
	\mathbb{P}\big(\gamma(\tilde{\theta}) \in \cR_i \big)\\
	=& \sup_{\theta^*_i \leq \theta^*_{(K+1)}} \mathbb{P}\big(\hat{\theta}^d_i-c_W(\alpha,i)/\sqrt{{npL}} > \hat{\theta}^d_{(K+1)}\big)\\
	=& \sup_{\theta^*_i-\max_{j \in {\T}^*_0}\theta^*_j \leq 0} \mathbb{P}\big(\hat{\theta}^d_i-\max_{j \in \hat{\T}_0}\hat{\theta}^d_j > c_W(\alpha,i)/\sqrt{{npL}} \big)\\
	\leq & \
	 \mathbb{P}\Big(\sqrt{{npL}} \max_{j \in \hat{\T}_0}(\hat{\theta}^d_i-\theta^*_i-(\hat{\theta}^d_j-\theta^*_j)) >c_W(\alpha,i) \Big)\\ 
	 \leq & \
	 \mathbb{P}\Big(\sqrt{{npL}} \max_{j \neq i} (\hat{\theta}^d_i-\theta^*_i-(\hat{\theta}^d_j-\theta^*_j)) >c_W(\alpha,i) \Big)\\ 
	\leq&\ \alpha+o(1),
	\end{aligned}
	\end{equation}
where the last inequality holds by Theorem \ref{thm:bootstrap}.

Next, we analyze the power.
%
Similar to the  arguments above, since
	$$\max_{j \in {\T}^*_0}\hat{\theta}^d_j
	\leq 
	\max_{j \in {\T}^*_0}\theta^*_j
	+ \max_{j \in {\T}^*_0}(\hat{\theta}^d_j-\theta^*_j),$$
	we have 
	\begin{equation}
	\label{equ:tri-inequ2}
	\hat{\theta}^d_i - \max_{j \in {\T}^*_0}\hat{\theta}^d_j
	\geq 
	\hat{\theta}^d_i - \max_{j \in {\T}^*_0}\theta^*_j
	- \max_{j \in {\T}^*_0}(\hat{\theta}^d_j -\theta^*_j)
	= 
	\theta^*_i - \max_{j \in {\T}^*_0}\theta^*_j
	- \max_{j \in {\T}^*_0}(\hat{\theta}^d_j - \theta^*_j - (\hat{\theta}^d_i - \theta^*_i)).
	\end{equation}
	Also, by the definition of $\T_0^*$ and $\hat{\T}_0$, we have 
	$\max_{j \in {\T}^*_0}\hat{\theta}^d_j 
	\geq 
	\max_{j \in \hat{\T}_0}\hat{\theta}^d_j 
	$.
	Combining this with  \eqref{equ:tri-inequ2}, we have
	$$\hat{\theta}^d_i- \max_{j \in \hat{\T}_0}\hat{\theta}^d_j
	\geq 
	\theta^*_i- \max_{j \in {\T}^*_0}\theta^*_j
	- \max_{j \in {\T}^*_0}(\hat{\theta}^d_j-\theta^*_j-(\hat{\theta}^d_i-\theta^*_i)).
	$$
	Thus,  we have
	\begin{equation}\nonumber
	\begin{aligned}
	&\mathbb{P}\left(\text{reject } H_0\right)
	= \mathbb{P}\big(\hat{\theta}^d_i-c_W(\alpha,i)/\sqrt{{npL}} > \hat{\theta}^d_{(K+1)}\big)
	= \mathbb{P}\Big(\sqrt{{npL}}(\hat{\theta}^d_i- \max_{j \in \hat{\T}_0}\hat{\theta}^d_j)>c_W(\alpha,i)\Big)\\
	&\qquad \ge 
	\mathbb{P}\Big( \sqrt{{npL}} (\theta^*_i- \max_{j \in {\T}^*_0}\theta^*_j)>2\delta \text {, }  
	\sqrt{{npL}} \max_{j \in {\T}^*_0}(\hat{\theta}^d_j-\theta^*_j-(\hat{\theta}^d_i-\theta^*_i))  \leq \delta \text{ and } \delta \geq  c_W(\alpha,i)\Big)
	\\
	&\qquad \geq 
	\mathbb{P}\Big( \sqrt{{npL}} (\theta^*_i- \max_{j \in {\T}^*_0}\theta^*_j)>2\delta \text {, }  
	\sqrt{{npL}} \max_{j \neq i} (\hat{\theta}^d_j-\theta^*_j-(\hat{\theta}^d_i-\theta^*_i))  \leq \delta \text{ and } \delta \geq  c_W(\alpha,i)\Big).
	\end{aligned}	
	\end{equation}
	By Theorem \ref{thm:bootstrap}, for any fixed $\alpha$ and some sufficiently large $L, n$,
	$$\mathbb{P}(\sqrt{{npL}} \max_{j \neq i} (\hat{\theta}^d_j-\theta^*_j-\hat{\theta}^d_i+\theta^*_i)\geq  c_W(\alpha,i))>\alpha/2. $$ 
	Combining this with the fact that for sufficiently large constant $C$, we have $$\P(\sqrt{{npL}}\max_{j \neq i}  (\hat{\theta}^d_j-\theta^*_j-\hat{\theta}^d_i+\theta^*_i) 
	\leq 2\sqrt{{npL}} \|\hat{\theta}^d-\theta^*\|_\infty 
	\leq C\sqrt{\log n}) > 1- 1/n^5, $$ 
	we further have
	\begin{equation}\nonumber
	c_W(\alpha,i)\leq C \sqrt{\log n}.
	\end{equation}
Letting $\delta =  C\sqrt{\frac{\log n}{npL}}$,  it follows that 
	\begin{equation}\nonumber
	\P \big(\text{reject } H_0\big) = \P \Big(\sqrt{{npL}}(\hat{\theta}^d_i -  \max_{j \in \hat{\T}_0}\hat{\theta}^d_j) > c_W(\alpha, i) \Big) \geq 1 - n^{-5}.
	\end{equation}
	Hence,
	\begin{equation}\nonumber
	\inf_{ \theta^*_{i}-\theta^*_{(K+1)}>\delta} \mathbb{P}\big(\text{Reject } H_0\big) \to 1,
	\end{equation}
	which completes the proof.
\end{proof}	
%
%

\section{Proof in Section  \ref{sec:multi test}}
\label{sec:pf:multi test}

\subsection{Proof of Theorem \ref{thm:FWER}}
\label{sec:pf:FWER}
\begin{proof} We first show our procedure can control FWER with desired level asymptotically. 
Recall that in FWER control procedure, the set of all possible latent scores after perturbation $\tilde{\Theta}$ is 
\begin{equation}\nonumber
\tilde{\Theta}=
\{{\theta}:{\theta}_k \in \big[\hat{\theta}^d_k-C_M(\alpha, [n] \times [n])/\sqrt{npL},\ \hat{\theta}^d_k+C_M(\alpha, [n] \times [n])/\sqrt{npL}\big],
1 \leq k \leq n\}.
\end{equation} 
In specific, we consider $\tilde{\theta}^{(i)} \in \tilde{\Theta}$, $i=1, \cdots, n$,  whose $k$-th entry is, 
\begin{equation}\nonumber
\tilde{\theta}^{(i)}_k=
\left\{\begin{array}{ll}
\hat{\theta}^d_{k}+c_M(\alpha,[n] \times [n]), & k \in H(\gamma^*,i), \\
\hat{\theta}^d_i, &  k = i,\\
\hat{\theta}^d_{k}-c_M(\alpha,[n] \times [n]), & k \in L(\gamma^*,i).
\end{array}\right.
\end{equation}
By our FWER control procedure, we have, 
\begin{equation}\nonumber
\P (\text {making at least one Type I error}) 
= \P \big(\bigcup_{i: \gamma^* \notin \cR_i} \bigcap_{{\theta} \in \tilde{\Theta}}\{\gamma({{\theta}}) \in \cR_i\} \big)
\leq \P \big( 
\bigcup_{i: \gamma^* \notin \cR_i}
\{\gamma(\tilde{\theta}^{(i)}) \in \cR_i
\}
\big).
\end{equation}
Similar as the argument in \eqref{eqn:subseteq}, we have 
$$\big\{\gamma(\tilde{\theta}^{(i)}) \in \cR_i | \gamma^* \notin \cR_i\big\}  
\subseteq
\big\{\max _{k \neq i } \big(\hat{\theta}^d_{i}-\theta_{i}^{*}-\hat{\theta}^d_{k}+\theta_{k}^{*}\big)>c_M(\alpha,[n] \times [n])\big\},$$
which  gives us 

\begin{equation}\nonumber
\begin{aligned}
\P \big( 
\bigcup_{i: \gamma^* \notin \cR_i}
\{\gamma(\tilde{\theta}^{(i)}) \in \cR_i
\}
\big) 
&\leq 
\P \Big( 
\bigcup_{i: \gamma^* \notin \cR_i}
\big\{\max _{k \neq i}\big(\hat{\theta}^d_{i}-\theta_{i}^{*}-\hat{\theta}^d_{k}+\theta_{k}^{*}\big)>c_M(\alpha,[n] \times [n])
\big\}
\Big)\\
&\leq 
\P \Big( 
\max _{i: \gamma^* \notin \cR_i,k \neq i} \big(\hat{\theta}^d_{i}-\theta_{i}^{*}-\hat{\theta}^d_{k}+\theta_{k}^{*}\big)>c_M(\alpha,[n] \times [n])
\Big)\\
&\leq 
\P \Big( 
\max _{i\in[n],k \in[n]}\big(\hat{\theta}^d_{i}-\theta_{i}^{*}-\hat{\theta}^d_{k}+\theta_{k}^{*}\big)>c_M(\alpha,[n] \times [n])
\Big)\\
&\leq \alpha+o(1),
\end{aligned}
\end{equation}
where the last inequality holds by Theorem \ref{thm:bootstrap}, which completes the proof of controlling FWER.

Then, we consider the  Type II error. First, we have
\begin{equation}\nonumber
\begin{aligned}
\mathbb{P}(\text {At least one Type II error}) 
\leq \Sigma_{i=1}^{n}\mathbb{P}(\text {making Type II error in test  }i),
\end{aligned}
\end{equation}
where the test $i$ is $$H_{0}:  \gamma^* \notin \cR_i
\text{ \ \ v.s. \ \  } 
H_{a}: \gamma^* \in \cR_i.$$
Following similar arguments in \eqref{eqn:delta}, we 
let $\Delta({\theta}^*)>\delta=C \sqrt{  \log n}/\sqrt{npL}=C\sqrt{\frac{\log n}{npL}}$, and have
\begin{equation}\nonumber
\mathbb{P}(\text {Type II error in test  }i) \lesssim n^{-5}.
\end{equation}
Thus, we have 
\begin{equation}\nonumber
\begin{aligned}
\mathbb{P}(\text {At least one Type II error}) \lesssim n^{-4} \to 0,
\end{aligned}
\end{equation}
which completes the proof.
\end{proof}

\subsection{Proof of Theorem \ref{thm:FDR}}
\label{sec:pf:FDR}
\begin{proof}
We prove that our FDR procedure in Section \ref{sec:FDR}   controls the FDR asymptotically, in the sense that
 $\text{FDR} \leq \alpha + o(1)$ for any given $\alpha\in(0,1)$.  
By the definition of FDR, we have \begin{equation}\nonumber
  \begin{aligned}
	\text{FDR}
	&=\mathbb{E}\left[\frac{\sum_{i \in \mathcal{H}_{0}} \mathbb{I}\left[p_i \leq \frac{R }{n \cdot N}\alpha\right] \mathbb{I}\left[R>0\right]}{R}\right]
	=\sum_{i \in \mathcal{H}_{0}} \mathbb{E}\left[\frac{\mathbb{I}\left[p_i \leq \frac{R}{n \cdot N}\alpha\right] \mathbb{I}\left[R>0\right]}{R}\right],
	\end{aligned}
	\end{equation}
	where 
$|\mathcal{H}_{0}|$ is the number of true null hypotheses in \eqref{eqn:MT}, and $R$ is the total number of discoveries.
Since $1 / R=\sum_{k=R}^{\infty} \frac{1}{k(k+1)}=\sum_{k=1}^{\infty} \frac{\mathbb{I}\left[k \geq R\right]}{k(k+1)},$  we further have 
\begin{equation}\label{equ:fqr}
\begin{aligned}
\text{FDR}&=\sum_{k=1}^{\infty} \frac{1}{k(k+1)}\sum_{i \in \mathcal{H}_{0}} \mathbb{E}\bigg[\mathbb{I}\Big[p_i \leq \frac{R}{n \cdot N}\alpha\Big] \mathbb{I}\left[R>0\right]\mathbb{I}\left[k \geq R\right] \bigg]\\
&\leq \sum_{k=1}^{\infty} \frac{1}{k(k+1)}\sum_{i \in \mathcal{H}_{0}} \mathbb{P}\Big(p_i \leq \frac{\min(k,n)}{n \cdot N}\alpha\Big) .
\end{aligned}
\end{equation}
In addition, for any $0<\beta<1$ and $i \in \mathcal{H}_{0}$ (i.e., $\gamma^* \notin \cR_i$), we have the following inequality hold by  \eqref{eqn:subseteq},
\begin{equation}\label{equ:quantile}
\begin{aligned}
\mathbb{P}\left(p_i \leq \beta\right)
& \leq  \mathbb{P}\Big(\max_{k \neq i} \sqrt{{npL}} \big(\hat{\theta}^d_i-\theta^*_i-\hat{\theta}^d_k+\theta^*_k\big) >  c_W(\beta,i) \Big)\\
& \leq \beta+C\Big(\frac{1}{L^{c_3}}+\frac{2}{n^5}\Big)
\end{aligned}
\end{equation}
where the last inequality follows from Theorem \ref{thm:bootstrap} and Remark \ref{rmk:c3}, and $C>0$ is a sufficiently large constant. Plugging  \eqref{equ:quantile} into  \eqref{equ:fqr}, we have
\begin{equation}\nonumber
\begin{aligned}
\text{FDR} &\leq \sum_{k=1}^{\infty} \frac{1}{k(k+1)}\sum_{i \in \mathcal{H}_{0}}  \frac{\min(k,n)}{n \cdot N}\alpha+\sum_{k=1}^{\infty} \frac{1}{k(k+1)} \sum_{i \in \mathcal{H}_{0}} C\Big(\frac{1}{L^{c_3}}+\frac{2}{n^5}\Big)\\
&= \sum_{k=1}^{n} \frac{1}{k+1}\sum_{i \in \mathcal{H}_{0}}  \frac{1}{n \cdot N}\alpha
+ \sum_{k=n+1}^{\infty} \frac{1}{k(k+1)}\sum_{i \in \mathcal{H}_{0}}  \frac{1}{ \cdot N}\alpha+\sum_{i \in \mathcal{H}_{0}}C\Big(\frac{1}{L^{c_3}}+\frac{2}{n^5}\Big)\\
&= \sum_{i \in \mathcal{H}_{0}} \frac{\alpha}{n \cdot N} \Big(\sum_{k=1}^{n} \frac{1}{k+1} 
+ \sum_{k=n+1}^{\infty} \frac{n}{k(k+1)}\Big)+\sum_{i \in \mathcal{H}_{0}}C\Big(\frac{1}{L^{c_3}}+\frac{2}{n^5}\Big)\\
&\leq \frac{|\mathcal{H}_{0}|}{n}\alpha+C|\mathcal{H}_0|\Big(\frac{1}{L^{c_3}}+\frac{2}{n^5}\Big),
\end{aligned}
\end{equation}
where the last inequality follows from $\sum_{k=1}^{n} \frac{1}{k+1} 
+ \sum_{k=n+1}^{\infty} \frac{n}{k(k+1)} = \sum _{k=1}^{n} \frac {1}{k} = N$, and our claim holds as desired. 
\end{proof}

\section{Proofs of in Section~\ref{sec:LB_FWER}} \label{sec:pf:LB_FWER}

\subsection{Proof of Lemma \ref{lem:reduction}}
\label{sec:reduction}
\begin{proof}
	In this section, we prove 
	$$\mathfrak{R} = \inf_{\psi} \sup_{\theta^* \in \Xi} \P(d(\psi,\psi^*)\geq 1/2) \geq \mathbb{P}_{e,M}
	= \inf _{\phi} \max _{0 \leq i \leq M} \mathbb{P}_{i}(\phi \neq i), $$
	 where $\mathbb{P}_{i}$  is the probability measure under the $i$-th null hypothesis $\theta^*=\theta^{(i)}$, and $\Xi$, $d(\psi,\psi^*)$, $\phi$ are defined in \eqref{eqn:R}, \eqref{eqn:reduce} and \eqref{eqn:minimax prob error}, respectively. 
%

First, considering preference score vector ${\theta}^* \in \Xi$ within a finite set $\theta^* \in \{\theta^{(0)}, \cdots, \theta^{(M)}\}$, we have
$$
\inf_{\psi} \sup_{{\theta}^* \in \Xi} \P\big(d(\psi,\psi^*) \geq \frac{1}{2}\big)
\geq \inf_{\psi} \sup _{\theta^* \in \{\theta^{(0)}, \cdots, \theta^{(M)}\}} \P_i\big(d(\psi,\psi^{(i)})\geq \frac{1}{2}\big)
= \inf_{\psi} \sup_{ 0 \leq i \leq M} \P_i\big(d(\psi,\psi^{(i)})\geq \frac{1}{2}\big)
.$$ 
Then, for any selection procedure $\psi$, we have 
\begin{equation}\label{eqn:min distance test}
\P_i(d(\psi,\psi^{(i)}) \geq 1/2) 
\geq \P_i(\tilde{\phi} \neq i),
\end{equation} 
where $\tilde{\phi}: \bm{Y} \rightarrow\{0,1, \ldots, M\}$ is the  minimum distance test defined by 
$$\tilde{\phi}=\argmin _{0 \leq k \leq M} d(\psi, \psi^{(k)}).$$
Note that \eqref{eqn:min distance test} holds since
the event $\{\tilde{\phi} \neq i\}$ is equivalent to the event $\{\tilde{\phi} = k \text{ for some } k\neq i\}$. Due to the minimum distance nature of $\tilde{\phi}$, we have $d(\psi, \psi^{(k)}) \leq d(\psi, \psi^{(i)})$ for some $k$. We also have the triangle ineqaulity $1 \leq d(\psi^{(j)}, \psi^{(k)}) \leq d(\psi, \psi^{(j)})+d(\psi, \psi^{(k)})$ where $1 \leq d(\psi^{(j)}, \psi^{(k)})$ follows by our assumption. Combining the above inequalities, we  have $d(\psi, \psi^{(j)}) \geq 1/2$. 
Thus, we have
$$ \inf_{\psi} \sup_{\theta^* \in \Xi} \P(d(\psi,\psi^*) \geq 1/2)
\geq  \sup_{ 0 \leq i \leq M} \P_i(\tilde{\phi} \neq i) 
\geq \inf_{\phi} \sup_{0 \leq i \leq M} \P_i(\phi \neq i) 
= \mathbb{P}_{e,M},
$$
which completes the proof.
\end{proof}

\subsection{Proof of Lemma \ref{lem:fano}} \label{sec:fano}
\begin{proof}	
We prove 	$$
\bar{\P}_{\mathrm{e}, \mathcal{M}} 
\geq 
1-\frac{1}{\log (|\mathcal{M}|+1)}\bigg\{\frac{p L}{(|\mathcal{M}|+1)^{2}} \sum_{H,\tilde{H}\in \mathcal{H}_{\mathcal{M}}} \sum_{i < j} \mathrm{KL}\left(\mathbb{P}_{y_{i, j}^{(1)} \mid H} \| \mathbb{P}_{y_{i, j}^{(1)} \mid \tilde{H}}\right)+\log 2\bigg\},
$$ where the set of hypotheses  $\cH_{\cM} = \{H_0, H_1, \cdots, H_{|\mathcal{M}(\theta)|}\}$ is defined in \eqref{eqn:set hypo}.
 
Following the argument in the proof of Theorem 3 of \cite{chen2015spectral}, because of the partial observation in Erd\"{o}s-R\'{e}nyi graph, we introduce an erased version of observations that
$\boldsymbol{y}_{i, j}:=\left(y_{i, j}^{(1)}, \cdots, y_{i, j}^{(\ell)}\right), i,j \in [n]$ as
\begin{equation}\nonumber
\boldsymbol{z}_{i, j}=\left\{\begin{array}{ll}
\boldsymbol{y}_{i, j}, & \text { with probability } p \\
\text {0, } & \text { else }
\end{array}\right.
\end{equation}
and define the set $\boldsymbol{Z}:=\left\{\boldsymbol{z}_{i, j}\right\}_{1 \leq i<j \leq n}$. Then, we apply the generalized Fano inequality in \cite{verdu1994generalizing}, we have

\begin{equation}\nonumber
\begin{aligned}
\bar{\P}_{\mathrm{e}, \mathcal{M}} 
& \geq 1-\frac{1}{\log (|\mathcal{M}|+1)}\bigg\{\frac{1}{(|\mathcal{M}|+1)^{2}} \sum_{H,\tilde{H}\in \mathcal{H}_{\mathcal{M}}} \mathrm{KL}\left(\mathbb{P}_{\boldsymbol{Z} \mid H} \| \mathbb{P}_{\boldsymbol{Z} \mid \tilde{H}}\right)+\log 2\bigg\} \\
& = 1-\frac{1}{\log (|\mathcal{M}|+1)}\bigg\{\frac{1}{(|\mathcal{M}|+1)^{2}} \sum_{H,\tilde{H}\in \mathcal{H}_{\mathcal{M}}}\sum_{i < j} \mathrm{KL}\left(\mathbb{P}_{\bm{z}_{i, j} \mid H} \| \mathbb{P}_{\bm{z}_{i, j} \mid \tilde{H}}\right)+\log 2\bigg\} \\
&= 1-\frac{1}{\log (|\mathcal{M}|+1)}\bigg\{\frac{p}{(|\mathcal{M}|+1)^{2}} \sum_{H,\tilde{H}\in \mathcal{H}_{\mathcal{M}}} \sum_{i < j} \mathrm{KL}\left(\mathbb{P}_{\bm{y}_{i, j} \mid H} \| \mathbb{P}_{\bm{y}_{i, j}\mid \tilde{H}}\right)+\log 2\bigg\} \\
& = 1-\frac{1}{\log (|\mathcal{M}|+1)}\bigg\{\frac{p L}{(|\mathcal{M}|+1)^{2}} \sum_{H,\tilde{H}\in \mathcal{H}_{\mathcal{M}}} \sum_{i < j} \mathrm{KL}\left(\mathbb{P}_{y_{i, j}^{(1)} \mid H} \| \mathbb{P}_{y_{i, j}^{(1)} \mid \tilde{H}}\right)+\log 2\bigg\},
\end{aligned}
\end{equation}
where the first equality holds by
independence of the $\bz_{i,j}$ for different $(i,j)$, the second equality follows from the relation between $\bz_{i,j}$ and $\by_{i,j}$, and the third equality follows from i.i.d. assumption of $y_{ij}^{(\ell)}$ for different $\ell \in [L]$.
\end{proof}

\subsection{Proof of Lemma \ref{lem:KL}}\label{sec:KL}
\begin{proof}
We show
\begin{equation}\label{eqn:KL2}
	\sum_{H,\tilde{H}\in \mathcal{H}_{\mathcal{M}}} \sum_{i < j} \mathrm{KL}\left(\mathbb{P}_{y_{i, j}^{(1)} \mid H} \| \mathbb{P}_{y_{i, j}^{(1)} \mid \tilde{H}}\right)
	\leq 
	4n|\mathcal{M}|^2 \big(d^2_{\mathcal{M}}({{\omega}})+\cO(d^3_{\mathcal{M}}({{\omega}})) \big),
\end{equation}
where $d_{\mathcal{M}}({{\omega}})=\max_{(k_1, k_2) \in \mathcal{M}} \Big(\frac{\omega_{\min(k_1,k_2)}}{\omega_{\min(k_1,k_2)+1}}+\cdots+\frac{\omega_{\max(k_1,k_2)-1}}{\omega_{\max(k_1,k_2)}}-|k_1-k_2|\Big)$ and $\omega_1 > \cdots > \omega_n$.

We have the following Lemma to bound the KL divergence. Proof is provided in Section~\ref{sec:pf:KL-chi}.
\begin{lemma}\label{lem:KL-chi}
Assume we have two probability  measures $P \sim \text{Bern}(p)$ and $Q \sim \text{Bern}(q)$, and assume $p=\frac{\omega_{b}}{\omega_a+\omega_{b}}$ and $q=\frac{\omega_{d}}{\omega_{c}+\omega_d}$, where $a,b,c,d \in [n]$ and $\omega_1 > \cdots > \omega_n$. If $a<b$, $c<d$ or $a>b$, $c>d$, we have
\begin{small}
	\begin{equation}\nonumber
	\begin{aligned}
	\mathrm{KL}(P \| Q) \leq &  \frac{1}{4}\Big(\frac{\omega_{\min(a,b)}}{\omega_{\min(a,b)+1}}-1
	+\cdots+\frac{\omega_{\max(a,b)-1}}{\omega_{\max(a,b)}}-1-
	(\frac{\omega_{\min(c,d)}}{\omega_{\min(c,d)+1}}-1
	+\cdots+\frac{\omega_{\max(c,d)-1}}{\omega_{\max(c,d)}}-1)\Big)^2 \\
	+& \cO\Bigg(\Big(|\frac{\omega_{\min(a,b)}}{\omega_{\min(a,b)+1}}-1|
	+\cdots+|\frac{\omega_{\max(a,b)-1}}{\omega_{\max(a,b)}}-1|
	+ |\frac{\omega_{\min(c,d)}}{\omega_{\min(c,d)+1}}-1|+\cdots+|\frac{\omega_{\max(c,d)-1}}{\omega_{\max(c,d)}}-1|\Big)^3\Bigg). 
	\end{aligned}
	\end{equation}
\end{small}
Similarly, if $a>b$, $c<d$ or $a<b$, $c>d$, we have
\begin{small}
	\begin{equation}\nonumber
	\begin{aligned}
	\mathrm{KL}(P \| Q) \leq &  \frac{1}{4}\Big(\frac{\omega_{\min(a,b)}}{\omega_{\min(a,b)+1}}-1
	+\cdots+\frac{\omega_{\max(a,b)-1}}{\omega_{\max(a,b)}}-1 +
	(\frac{\omega_{\min(c,d)}}{\omega_{\min(c,d)+1}}-1
	+\cdots+\frac{\omega_{\max(c,d)-1}}{\omega_{\max(c,d)}}-1)\Big)^2 \\
	+& \cO\Bigg(\Big(|\frac{\omega_{\min(a,b)}}{\omega_{\min(a,b)+1}}-1|
	+\cdots+|\frac{\omega_{\max(a,b)-1}}{\omega_{\max(a,b)}}-1|
	+ |\frac{\omega_{\min(c,d)}}{\omega_{\min(c,d)+1}}-1|+\cdots+|\frac{\omega_{\max(c,d)-1}}{\omega_{\max(c,d)}}-1|\Big)^3\Bigg). 
	\end{aligned}
	\end{equation}
\end{small}

\end{lemma}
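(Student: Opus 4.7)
The plan is to reduce the KL bound to a chi-square bound and then Taylor expand in the deviations of the ratios from $1$. Since $P = \mathrm{Bern}(p)$ and $Q = \mathrm{Bern}(q)$, the standard inequality $\mathrm{KL}(P\|Q) \le \chi^2(P\|Q)$ gives
$$\mathrm{KL}(P\|Q) \;\le\; \frac{(p-q)^2}{q(1-q)},$$
so it suffices to expand the right-hand side in terms of the quantities $u_k := \omega_k/\omega_{k+1} - 1 \ge 0$ that appear in the statement of the lemma, and to track the errors up to cubic order.

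Set $A := \sum_{k=\min(a,b)}^{\max(a,b)-1} u_k$ and $C := \sum_{k=\min(c,d)}^{\max(c,d)-1} u_k$; these are exactly the bracketed quantities in the lemma. I will use the telescoping identity $\omega_a/\omega_b = \prod_{k=a}^{b-1}(1+u_k)$ when $a < b$, which expands as $1 + A + O(A^2)$, and analogously for the reversed ordering. Substituting into $p = 1/(1 + \omega_a/\omega_b)$ and Taylor expanding $(2 + A + O(A^2))^{-1}$ around $A = 0$ yields
$$p \;=\; \tfrac{1}{2} \;+\; \mathrm{sign}(a-b)\cdot\tfrac{A}{4} \;+\; O(A^2),\qquad q \;=\; \tfrac{1}{2} \;+\; \mathrm{sign}(c-d)\cdot\tfrac{C}{4} \;+\; O(C^2).$$

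In the same-direction case ($\mathrm{sign}(a-b) = \mathrm{sign}(c-d)$) the two linear terms subtract to give $(p-q)^2 = (A-C)^2/16 + O((A+C)^3)$, and in the opposite-direction case they add, producing $(A+C)^2/16 + O((A+C)^3)$. Since $q(1-q) = \tfrac{1}{4} - (q-\tfrac{1}{2})^2 = \tfrac{1}{4} + O(C^2)$, we have $1/[q(1-q)] = 4 + O(C^2)$. Multiplying the two expansions turns the factor $1/16$ into $4/16 = 1/4$, yielding precisely $\tfrac{1}{4}(A-C)^2$ or $\tfrac{1}{4}(A+C)^2$, with remainder $O((A+C)^3)$ as required.

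The main bookkeeping obstacle is verifying that the Taylor remainders are uniform across all four configurations of $(a,b,c,d)$ and that none of the intermediate $O(\cdot)$ constants depend on $n$ or the length of the index interval; this reduces to the fact that $\prod(1+u_k) - 1 - \sum u_k = \sum_{|S|\ge 2}\prod_{k\in S} u_k$ is bounded by $A^2/(1-A)$ when $A$ is small, and similar geometric-series estimates for $(2 + A + O(A^2))^{-1}$. These are routine once $A, C$ are assumed small (which is implicit in the asymptotic statement of the lemma), so I would record them with explicit integral remainders rather than tracked term-by-term.
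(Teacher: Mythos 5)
Your proposal is correct and follows essentially the same route as the paper's proof: bound $\mathrm{KL}(P\|Q)$ by $\chi^{2}(P\|Q)=\frac{(p-q)^{2}}{q(1-q)}$, write $\omega_a/\omega_b$ as the telescoping product $\prod_k(1+u_k)$, and Taylor expand to extract the leading quadratic term with cubic remainder. Your version is in fact slightly more explicit than the paper's (which asserts the expansion directly), since you record the intermediate expansions $p=\tfrac12+\mathrm{sign}(a-b)\tfrac{A}{4}+\cO(A^2)$ and $1/[q(1-q)]=4+\cO(C^2)$ and track how the sign configuration produces $(A-C)^2$ versus $(A+C)^2$.
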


%
Then to prove \eqref{eqn:KL2}, we decompose the left hand side of  \eqref{eqn:KL2} as
	\begin{equation}\nonumber
	\begin{aligned}
&	\sum_{H,\tilde{H}\in \mathcal{H}_{\mathcal{M}}} \sum_{i < j} \mathrm{KL}\Big(\mathbb{P}_{y_{i, j}^{(1)} \mid H} \| \mathbb{P}_{y_{i, j}^{(1)} \mid \tilde{H}}\Big) \\
&\qquad	= \!\!\! \underbrace{\sum_{\substack{(k_1,k'_1) \in \mathcal{M}\\ (k_2,k'_2) \in \mathcal{M} }} \sum_{i < j} 
	\mathrm{KL}\Big(\mathbb{P}_{y_{i, j}^{(1)} \mid H_{(k_1,k'_1)}} \| \mathbb{P}_{y_{i, j}^{(1)} \mid H_{(k_2,k'_2)}}\Big)}_{J_1} \\
& \qquad\quad	 + \!\!\!\!
	\underbrace{\sum_{\substack{(k,k') \in \mathcal{M}}} \sum_{i < j} \Big(\mathrm{KL}\big(\mathbb{P}_{y_{i, j}^{(1)} \mid H_0} \| \mathbb{P}_{y_{i, j}^{(1)} \mid H_{(k,k')}}\big)
	\!\!+ 
	\mathrm{KL}\big(\mathbb{P}_{y_{i, j}^{(1)} \mid H_{(k,k')}} \| \mathbb{P}_{y_{i, j}^{(1)} \mid H_{0}}\big)
	\Big)}_{J_2}, 
	\end{aligned}
	\end{equation}
where ${H}_{(k,k')} \in \mathcal{H}_{\mathcal{M}}$ denotes the hypothesis with respect to $(k,k') \in \mathcal{M}$.
Then we control $J_1$ and $J_2$ separately. 
For with $J_1$, 
if neither of $i,j$ is equal to one of $k_1,k'_1,k_2,k'_2$ (i.e., $i,j \neq k_1,k'_1,k_2,k'_2$), we have that $\mathrm{KL}\Big(\mathbb{P}_{y_{i, j}^{(1)} \mid H_{(k_1,k'_1)}} \| \mathbb{P}_{y_{i, j}^{(1)} \mid H_{(k_2,k'_2)}}\Big)=0$.
Otherwise,  
we divide $i,j$ into the following cases. Let $p$ and $q$ be the parameters in Bernoulli distributions with respect to probability measures $\mathbb{P}_{y_{i, j}^{(1)} \mid H_{(k_1,k'_1)}}$ and $\mathbb{P}_{y_{i, j}^{(1)} \mid H_{(k_2,k'_2)}}$.
\begin{itemize}
	\item If $i=k_1 , j=k'_1$, we have $p=\frac{\omega_{k_1}}{\omega_{k'_1}+\omega_{k_1}}$ and $q=\frac{\omega_{k'_1}}{\omega_{k_1}+\omega_{k'_1}}$. Thus, by Lemma \ref{lem:KL-chi} we have
\begin{small}
	\begin{equation}\nonumber
\begin{aligned}
&\mathrm{KL}\Big(\mathbb{P}_{y_{i, j}^{(1)} \mid H_{(k_1,k'_1)}} \| \mathbb{P}_{y_{i, j}^{(1)} \mid H_{(k_2,k'_2)}}\Big) 
\leq  \frac{(p-q)^{2}}{q(1-q)}  \\
= &
\frac{1}{4}\Big(\frac{\omega_{\min(k_1,k'_1)}}{\omega_{\min(k_1,k'_1)+1}}-1 + \cdots+\frac{\omega_{\max(k_1,k'_1)-1}}{\omega_{\max(k_1,k'_1)}}-1
+ (\frac{\omega_{\min(k_1,k'_1)}}{\omega_{\min(k_1,k'_1)+1}}-1 + \cdots+\frac{\omega_{\max(k_1,k'_1)-1}}{\omega_{\max(k_1,k'_1)}}-1)\Big)^2 \\
&+ \cO\bigg(\Big(|\frac{\omega_{\min(k_1,k'_1)}}{\omega_{\min(k_1,k'_1)+1}}-1|
+\cdots+|\frac{\omega_{\max(k_1,k'_1)-1}}{\omega_{\max(k_1,k'_1)}}-1|
+ |\frac{\omega_{\min(k_1,k'_1)}}{\omega_{\min(k_1,k'_1)+1}}-1|
+\cdots+|\frac{\omega_{\max(k_1,k'_1)-1}}{\omega_{\max(k_1,k'_1)}}-1|\Big)^3\bigg)\\
= &
\Big(\frac{\omega_{\min(k_1,k'_1)}}{\omega_{\min(k_1,k'_1)+1}}-1 + \cdots+\frac{\omega_{\max(k_1,k'_1)-1}}{\omega_{\max(k_1,k'_1)}}-1\Big)^2 
+ \cO\bigg(\Big(|\frac{\omega_{\min(k_1,k'_1)}}{\omega_{\min(k_1,k'_1)+1}}-1|
+\cdots+|\frac{\omega_{\max(k_1,k'_1)-1}}{\omega_{\max(k_1,k'_1)}}-1|
\Big)^3\bigg)\\
\leq & d^2_{\mathcal{M}}({{\omega}}) + \cO(d^3_{\mathcal{M}}({{\omega}})) 
\end{aligned}
\end{equation}
\end{small}
Note that first, for special case that $k_1=k_2$ or $k'_1=k'_2$ or $k_1=k'_2$ or $k'_1=k_2$, the upper bound $d^2_{\mathcal{M}}({{\omega}}) + \cO(d^3_{\mathcal{M}}({{\omega}}))$ also holds. 
Second, by Remark \ref{rmk:symmetry}, 
the cases $(i=k_2, j=k'_2), (i=k'_1, j=k_1), (i=k'_2, j=k_2)$ all have the same upper bound. 
	
	\item If $i=k_1 , j=k'_2$, we have $p=\frac{\omega_{k'_2}}{\omega_{k'_1}+\omega_{k'_2}}$ and $q=\frac{\omega_{k_2}}{\omega_{k_1}+\omega_{k_2}}$. Similarly, we also have 
	$$\mathrm{KL}\Big(\mathbb{P}_{y_{i, j}^{(1)} \mid H_{(k_1,k'_1)}} \| \mathbb{P}_{y_{i, j}^{(1)} \mid H_{(k_2,k'_2)}}\Big) \leq \frac{(p-q)^{2}}{q(1-q)}
	\leq d^2_{\mathcal{M}}({{\omega}})+\cO(d^3_{\mathcal{M}}({{\omega}})).$$
	Also,  $(i=k_2 , j=k'_1)$, $(i=k'_1 , j=k_2)$, $(i=k'_2, j=k_1)$, $(i=k_1 , j=k_2)$, $(i=k'_1 , j=k'_2)$, $(i=k_2 , j=k_1)$, $(i=k'_2 , j=k'_1)$ all have the same upper bound.
	
	\item if $i=[n]/\{k_1,k'_1,k_2,k'_2\}, j=k'_1$, we have $p=\frac{\omega_{k_1}}{\omega_{i}+\omega_{k_1}}$ and $q=\frac{\omega_{k'_1}}{\omega_i+\omega_{k'_1}}$. Similarly, we  have 
	$$\mathrm{KL}\Big(\mathbb{P}_{y_{i, j}^{(1)} \mid H_{(k_1,k'_1)}} \| \mathbb{P}_{y_{i, j}^{(1)} \mid H_{(k_2,k'_2)}}\Big) \leq \frac{(p-q)^{2}}{q(1-q)}=d^2_{\mathcal{M}}({{\omega}})+\cO(d^3_{\mathcal{M}}({{\omega}})). $$
	Similarly, cases $(i=k_2 , j=[n]/\{k_1,k'_1,k_2,k'_2\}), 
	(i=k'_1 ,  j=[n]/\{k_1,k'_1,k_2,k'_2\}), 
	(i=k'_2 ,  j=[n]/\{k_1,k'_1,k_2,k'_2\}), 
	(i=[n]/\{k_1,k'_1,k_2,k'_2\}, j=k_1),  
	(i=[n]/\{k_1,k'_1,k_2,k'_2\}, j=k'_1),  
	(i=[n]/\{k_1,k'_1,k_2,k'_2\}, j=k_2),  
	(i=[n]/\{k_1,k'_1,k_2,k'_2\}, j=k'_2)$ all have the same upper bound.
	
\end{itemize}

In summary, we have, if $i$ or $j$ equals to one of $k_1,k'_1,k_2,k'_2$, we have 
$$\mathrm{KL}\left(\mathbb{P}_{y_{i, j}^{(1)} \mid H_{(k_1,k'_1)}} \| \mathbb{P}_{y_{i, j}^{(1)} \mid H_{(k_2,k'_2)}}\right)
\leq
d^2_{\mathcal{M}}({{\omega}})+\cO(d^3_{\mathcal{M}}({{\omega}})).$$
Thus, we have $$J_1 
=\sum_{\substack{(k_1,k'_1) \in \mathcal{M}\\ (k_2,k'_2) \in \mathcal{M} }} \sum_{i < j} 
\mathrm{KL}\left(\mathbb{P}_{y_{i, j}^{(1)} \mid H_{(k_1,k'_1)}} \| \mathbb{P}_{y_{i, j}^{(1)} \mid H_{(k_2,k'_2)}}\right)
\leq 4n|\mathcal{M}|(|\mathcal{M}|-1)(d^2_{\mathcal{M}}({{\omega}})+\cO(d^3_{\mathcal{M}}({{\omega}}))),$$
where $4n$ comes from the sum over $i<j$ and one of $i,j$ is equal to one of $k_1,k'_1,k_2,k'_2$, and $|\mathcal{M}|(|\mathcal{M}|-1)$ comes from the summation over $(k_1,k'_1) \in \mathcal{M}, (k_2,k'_2) \in \mathcal{M} $ and $(k_1,k'_1) \neq (k_2,k'_2)$.

We next bound $J_2$. If $i$ or $j$ equals $k$ or $k'$, by similar arguments as above, we have
$$\mathrm{KL}\left(\mathbb{P}_{y_{i, j}^{(1)} \mid H_0} \| \mathbb{P}_{y_{i, j}^{(1)} \mid H_{(k,k')}}\right)
\leq 
d^2_{\mathcal{M}}({{\omega}})+\cO(d^3_{\mathcal{M}}({{\omega}})).$$
Hence, 
\begin{equation}\nonumber
\begin{aligned}
J_2 &=\sum_{\substack{(k,k') \in \mathcal{M}}} \sum_{i < j} \Big(\mathrm{KL}\left(\mathbb{P}_{y_{i, j}^{(1)} \mid H_0} \| \mathbb{P}_{y_{i, j}^{(1)} \mid H_{(k,k')}}\right)
+
\mathrm{KL}\left(\mathbb{P}_{y_{i, j}^{(1)} \mid H_{(k,k')}} \| \mathbb{P}_{y_{i, j}^{(1)} \mid H_{0}}\right)
\Big)\\
&\leq 4n|\mathcal{M}|(d^2_{\mathcal{M}}({{\omega}})+\cO(d^3_{\mathcal{M}}({{\omega}}))).
\end{aligned}
\end{equation}
Combining $J_1$ and $J_2$ together, we have
\begin{small}
	\begin{equation}\nonumber
	\begin{aligned}
	&\sum_{H,\tilde{H}\in \mathcal{H}_{\mathcal{M}}} \sum_{i < j} \mathrm{KL}\left(\mathbb{P}_{y_{i, j}^{(1)} \mid H} \| \mathbb{P}_{y_{i, j}^{(1)} \mid \tilde{H}}\right) \\
	&\qquad \leq 
	4n|\mathcal{M}|(|\mathcal{M}|-1) \big(d^2_{\mathcal{M}}({{\omega}})+\cO(d^3_{\mathcal{M}}({{\omega}})) \big)
	+ 
	4n|\mathcal{M}| \big(d^2_{\mathcal{M}}({{\omega}})+\cO(d^3_{\mathcal{M}}({{\omega}})) \big)\\
	&\qquad \leq  4n|\mathcal{M}|^2 \big(d^2_{\mathcal{M}}({{\omega}})+\cO(d^3_{\mathcal{M}}({{\omega}})) \big),
	\end{aligned}
	\end{equation}
\end{small}
which completes the proof. 
\end{proof}

\subsection{Proof of Lemma \ref{lem:necessary condition}}
\label{sec:necessary condition}
\begin{proof}
We show that if
\begin{equation}\nonumber
\Delta({\theta}) 
\lesssim \sqrt{\frac{\epsilon \log(|\mathcal{M}|+1)-\log 2}{npL}},
\end{equation}
we have	
\begin{equation}\nonumber
d_\mathcal{M}({{\omega}}) 
\lesssim \sqrt{\frac{\epsilon \log(|\mathcal{M}|+1)-\log 2}{npL}},
\end{equation}
where $\theta$ ($\theta_{1} > \theta_{2} > \cdots > \theta_{n}$) is chosen in \eqref{eqn:step2} and $\omega=\exp(\theta)$. 

Let $\delta_i:=\theta_{i-1}-\theta_i \geq 0$  for $1 \leq i \leq n$ where $\theta_0=\theta_1$. Since $\omega=\exp(\theta)$, we have 
$\Delta_{i}:=\frac{\omega_{i-1}}{\omega_i} = \exp (\delta_i) \geq 1$. 
Recall that 
$$d_{\mathcal{M}}({{\omega}})=\max_{(k_1, k_2) \in \mathcal{M}} \Big(\frac{\omega_{\min(k_1,k_2)}}{\omega_{\min(k_1,k_2)+1}}+\cdots+\frac{\omega_{\max(k_1,k_2)-1}}{\omega_{\max(k_1,k_2)}}-|k_1-k_2|\Big).$$ 
WLOG, we assume $d_{\mathcal{M}}({{\omega}})$ is achieved by a pair $(k_1, k_2) \in \mathcal{M}$ and $k_1<k_2$. We then have
$$d_{\mathcal{M}}({{\omega}})= \frac{\omega_{\min(k_1,k_2)}}{\omega_{\min(k_1,k_2)+1}}+\cdots+\frac{\omega_{\max(k_1,k_2)-1}}{\omega_{\max(k_1,k_2)}}-|k_1-k_2|
=  \Delta_{k_1+1} - 1 + \cdots + \Delta_{k_2} - 1.$$ 
Accordingly we have  $$\Delta({\theta})
=|\theta_{k_1} - \theta_{k_2}|
=\delta_{k_1+1} + \cdots + \delta_{k_2}
=\log \Delta_{k_1+1} + \cdots + \log \Delta_{k_2}.$$ 
By the inequality $x - 1 \leq 2\log(x)$ for $1 \leq x \leq 3$, we have $\Delta_{k_1+1} - 1 + \cdots + \Delta_{k_2} - 1 \leq 2(\log \Delta_{k_1+1} + \cdots + \log \Delta_{k_2})$ if $\Delta_{k_1+1}, \cdots, \Delta_{k_2} \leq 3$.
Meanwhile, we have $\Delta({\theta}) \lesssim \sqrt{\frac{\epsilon \log(|\mathcal{M}|+1)-\log 2}{npL}} \lesssim \sqrt{\frac{1}{L}}$ where the last inequality is achieved by the fact $|\mathcal{M}|\leq n^2$ and our assumption that $p \gtrsim \frac{\log n}{n}$. 
We have 
$d_\mathcal{M}({{\omega}}) \lesssim \Delta({\theta})
 \lesssim \sqrt{\frac{\epsilon \log(|\mathcal{M}|+1)-\log 2}{npL}}$, which concludes the proof. 
\end{proof}

\subsection{Proof of Lemma \ref{lem:KL-chi}}\label{sec:pf:KL-chi}
\begin{proof}
	We first provide a bound of the KL divergence by chi-square divergence. Since $P \sim \text{Bern}(p)$ and $Q \sim \text{Bern}(q)$,
	one have 
	\begin{equation}\nonumber
	\mathrm{KL}(P \| Q) \leq \chi^{2}(P \| Q)=\frac{(p-q)^{2}}{q}+\frac{(p-q)^{2}}{1-q}=\frac{(p-q)^{2}}{q(1-q)}.
	\end{equation} 
	
	By the condition that $p=\frac{\omega_{b}}{\omega_a+\omega_{b}}$ and $q=\frac{\omega_{d}}{\omega_{c}+\omega_d}$, we first expand $\frac{(p-q)^{2}}{q(1-q)}$ assuming $a<b$ and $c<d$ (so $\frac{\omega_a}{\omega_{b}} = \frac{\omega_a}{\omega_{a+1}}\times\cdots\times\frac{\omega_{b-1}}{\omega_{b}}$), and then expand $\frac{(p-q)^{2}}{q(1-q)}$ without assumption on $a,b,c,d$. 
	
	In particular, assuming $a<b$ and $c<d$, by Taylor expansion we have 
	\begin{equation}\nonumber
	\begin{aligned}
	\frac{(p-q)^{2}}{q(1-q)} 
	= &\  \frac{\big(\frac{1}{\frac{\omega_a}{\omega_{b}}+1}-\frac{1}{\frac{\omega_c}{\omega_{d}}+1}\big)^{2}}
	{\frac{1}{\frac{\omega_c}{\omega_{d}}+1}
		\big(1-\frac{1}{\frac{\omega_c}{\omega_{d}}+1}\big)} 
	=  \frac{\Big(\frac{1}{\frac{\omega_a}{\omega_{a+1}}\times\cdots\times\frac{\omega_{b-1}}{\omega_{b}}+1}-\frac{1}{\frac{\omega_c}{\omega_{c+1}}\times\cdots\times\frac{\omega_{d-1}}{\omega_{d}}+1}\Big)^{2}}
	{\frac{1}{\frac{\omega_c}{\omega_{c+1}}\times\cdots\times\frac{\omega_{d-1}}{\omega_{d}}+1}
		\Big(1-\frac{1}{\frac{\omega_c}{\omega_{c+1}}\times\cdots\times\frac{\omega_{d-1}}{\omega_{d}}+1}\Big)} \\
	=&\  \frac{1}{4}\Big(\frac{\omega_a}{\omega_{a+1}}-1
	+\cdots+\frac{\omega_{b-1}}{\omega_{b}}-1-(\frac{\omega_c}{\omega_{c+1}}-1+\cdots+\frac{\omega_{d-1}}{\omega_{d}}-1)\Big)^2 \\
	&+ \cO\Bigg(\Big(|\frac{\omega_a}{\omega_{a+1}}-1|
	+\cdots+|\frac{\omega_{b-1}}{\omega_{b}}-1|
	+ |\frac{\omega_c}{\omega_{c+1}}-1|+\cdots+|\frac{\omega_{d-1}}{\omega_{d}}-1|\Big)^3\Bigg). 
	\end{aligned}
	\end{equation}
	Generally, if $a<b$, $c<d$ or $a>b$, $c>d$, we have
	\begin{small}
		\begin{equation}\nonumber
		\begin{aligned}
		\frac{(p-q)^{2}}{q(1-q)} 
		= &  \frac{1}{4}\Big(\frac{\omega_{\min(a,b)}}{\omega_{\min(a,b)+1}}-1
		+\cdots+\frac{\omega_{\max(a,b)-1}}{\omega_{\max(a,b)}}-1-
		(\frac{\omega_{\min(c,d)}}{\omega_{\min(c,d)+1}}-1
		+\cdots+\frac{\omega_{\max(c,d)-1}}{\omega_{\max(c,d)}}-1)\Big)^2 \\
		+& \cO\Bigg(\Big(|\frac{\omega_{\min(a,b)}}{\omega_{\min(a,b)+1}}-1|
		+\cdots+|\frac{\omega_{\max(a,b)-1}}{\omega_{\max(a,b)}}-1|
		+ |\frac{\omega_{\min(c,d)}}{\omega_{\min(c,d)+1}}-1|+\cdots+|\frac{\omega_{\max(c,d)-1}}{\omega_{\max(c,d)}}-1|\Big)^3\Bigg). 
		\end{aligned}
		\end{equation}
	\end{small}
	Similarly, if $a>b$, $c<d$ or $a<b$, $c>d$, we have
	\begin{small}
		\begin{equation}\nonumber
		\begin{aligned}
		\frac{(p-q)^{2}}{q(1-q)} 
		= &  \frac{1}{4}\Big(\frac{\omega_{\min(a,b)}}{\omega_{\min(a,b)+1}}-1
		+\cdots+\frac{\omega_{\max(a,b)-1}}{\omega_{\max(a,b)}}-1 +
		(\frac{\omega_{\min(c,d)}}{\omega_{\min(c,d)+1}}-1
		+\cdots+\frac{\omega_{\max(c,d)-1}}{\omega_{\max(c,d)}}-1)\Big)^2 \\
		+& \cO\Bigg(\Big(|\frac{\omega_{\min(a,b)}}{\omega_{\min(a,b)+1}}-1|
		+\cdots+|\frac{\omega_{\max(a,b)-1}}{\omega_{\max(a,b)}}-1|
		+ |\frac{\omega_{\min(c,d)}}{\omega_{\min(c,d)+1}}-1|+\cdots+|\frac{\omega_{\max(c,d)-1}}{\omega_{\max(c,d)}}-1|\Big)^3\Bigg). 
		\end{aligned}
		\end{equation}
	\end{small}
	
\end{proof}

\begin{remark}\label{rmk:symmetry}
	Based on the above argument, we further have 
	\begin{itemize}
		\item $\frac{(p-q)^{2}}{q(1-q)}$ and $\frac{(p-q)^{2}}{p(1-p)}$ have the same closed-form of expansion, which means $\mathrm{KL}(P \| Q)$ and $\mathrm{KL}(Q \| P)$ (i.e., $\mathrm{KL}\Big(\mathbb{P}_{y_{i, j}^{(1)} \mid H} \| \mathbb{P}_{y_{i, j}^{(1)} \mid \tilde{H}}\Big)$ and $\mathrm{KL}\Big(\mathbb{P}_{y_{i, j}^{(1)} \mid \tilde{H}} \| \mathbb{P}_{y_{i, j}^{(1)} \mid {H}}\Big)$) have the same upper bound,
		\item $\frac{(p-q)^{2}}{q(1-q)} = \frac{((1-p)-(1-q))^{2}}{(1-q)q}$, which means $\mathrm{KL}\Big(\mathbb{P}_{y_{i, j}^{(1)} \mid H} \| \mathbb{P}_{y_{i, j}^{(1)} \mid \tilde{H}}\Big)$ and $\mathrm{KL}\Big(\mathbb{P}_{y_{j,i}^{(1)} \mid H} \| \mathbb{P}_{y_{j,i}^{(1)} \mid \tilde{H}}\Big)$ have the same upper bound.
	\end{itemize}
\end{remark}

\section{Proofs of Auxiliary Lemmas in Section~\ref{sec:auciliary lem - 2} }
\label{sec:auciliary lem - 3}

Before proving the auxiliary lemmas in Section~\ref{sec:auciliary lem - 2}, we first present the following lemma for self-completeness, which is used throughout the proof in this section.

\begin{lemma}[Degree concentration, Lemma 1  in \cite{chen2019spectral}]\label{lem:degree_concentration}
	
	Let $d_i$ be the degree of node $i$ in Erd\"{o}s-R\'{e}nyi graph $\mathcal{G}(n,p)$ where $p \geq C_0 \frac{\log n}{n}$ for sufficiently large constant $C_0>0$, we have \begin{equation}\nonumber
	\frac{1}{2}np\leq d_{\min}\leq d_{\max} \leq \frac{3}{2}np
	\end{equation} 
	is satisfied with probability at least $1-\cO ( n^{-10})$, where $d_{\min}=\min_{1\leq i \leq n} d_i$
	and $d_{\max}=\max_{1\leq i \leq n} d_i$.

\end{lemma}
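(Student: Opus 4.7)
The plan is a textbook Chernoff-plus-union-bound argument. For each fixed vertex $i \in [n]$, write $d_i = \sum_{j \neq i} X_{ij}$ where the $X_{ij}$ are independent $\mathrm{Bern}(p)$ indicators of edges in $\mathcal{G}(n,p)$. Then $\mathbb{E}[d_i] = (n-1)p$, and for $n$ large enough we have $(n-1)p \in [\tfrac{3}{4}np, np]$, so it suffices to show that with high probability every $d_i$ lies within a multiplicative factor of its mean.

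The main tool is the multiplicative Chernoff bound for sums of independent $[0,1]$-valued variables: for any $0 < \delta \leq 1$,
\[
\mathbb{P}\bigl(|d_i - (n-1)p| \geq \delta (n-1)p\bigr) \leq 2\exp\!\bigl(-\delta^2 (n-1)p/3\bigr).
\]
I will pick $\delta$ to be a small constant (e.g.\ $\delta = 1/3$) so that deviations of at most $\delta (n-1)p$ from $(n-1)p$ keep $d_i$ inside the interval $[\tfrac{1}{2}np, \tfrac{3}{2}np]$ once $n$ is large. Under the assumption $p \geq C_0 \log n / n$, we have $(n-1)p \geq \tfrac{1}{2} C_0 \log n$ (for $n \geq 2$), and so the Chernoff bound gives
\[
\mathbb{P}\bigl(|d_i - (n-1)p| \geq \delta(n-1)p\bigr) \leq 2\exp(-c\, C_0 \log n) = 2\, n^{-c C_0}
\]
for a fixed numerical constant $c = \delta^2/6 > 0$.

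Finally, I apply a union bound over the $n$ vertices:
\[
\mathbb{P}\!\left(\max_{i \in [n]} |d_i - (n-1)p| \geq \delta(n-1)p\right) \leq 2 n^{1 - c C_0}.
\]
Choosing $C_0$ sufficiently large so that $c C_0 \geq 11$ yields a failure probability of order $n^{-10}$, as required. On the complementary event, all degrees simultaneously satisfy $\tfrac{1}{2} np \leq d_i \leq \tfrac{3}{2} np$, giving the claimed bound on $d_{\min}$ and $d_{\max}$. There is no real obstacle here; the only care needed is to absorb the difference between $(n-1)p$ and $np$ into the slack between $\delta$ and the target factors $1/2$ and $3/2$, and to pick $C_0$ large enough (depending only on $\delta$ and the exponent $10$) to drive the union bound.
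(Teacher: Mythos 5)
Your proof is correct: the multiplicative Chernoff bound with a constant $\delta$ (say $\delta=1/3$), the absorption of the $(n-1)p$ versus $np$ discrepancy into the slack between $4/3$ and $3/2$ (and between $2/3$ and $1/2$), and the union bound over $n$ vertices with $C_0$ chosen large enough all go through. The paper itself does not prove this lemma but quotes it from \cite{chen2019spectral}, whose proof is the same standard Chernoff-plus-union-bound argument you give, so your approach matches the source essentially exactly.
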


\subsection{Proof of Lemma \ref{lem:consistency}}
\label{sec:pf:consistency}	
\begin{proof}
	For the proof that  $
\|\hat{{\theta}}-{{\theta}^*}\|_\infty\lesssim \sqrt{\frac{\log n}{npL}}$ for regularized MLE, see Section~6  in \cite{chen2019spectral} for the detailed proof.

For the proof of the spectral method, \cite{chen2019spectral} show that 
\begin{equation}\nonumber
\frac{\left\|{\pi}-{\pi}^{*}\right\|_{\infty}}
{\left\|{\pi}^{*}\right\|_{\infty}} 
\lesssim \sqrt{\frac{\log n}{n p L}}
\end{equation}
with probability at least $1-\cO ( n^{-5})$ where $\bm{1}^\top{\pi}^{*}=1$ and  $\bm{1}^\top{\pi}=1$ obtained from spectral method. 

We then show that the same rate can be achieved by $\hat{{\theta}}=(\mathbf{I}-\mathbf{1}\mathbf{1}^{T} / n)\log \pi$ under the constraint parameter set $\mathcal{C}=\{{\theta}:\bm{1}^\top{\theta}=0\}$, where the projection matrix $\mathbf{I}-\mathbf{1}\mathbf{1}^{T} / n$ project $\log \pi$ onto the constraint parameter space, that
\begin{equation}\nonumber
\|\hat{{\theta}}-{{\theta}^*}\|_\infty\lesssim \sqrt{\frac{\log n}{npL}}.
\end{equation} 
Letting ${\eta}=\log({\pi})$ and ${\eta}^*=\log({\pi}^*)$, we first show that $\|\eta-{\eta}^*\|_\infty\lesssim \sqrt{\frac{\log n}{npL}}$. In particular, we have 
\begin{equation}\nonumber
\begin{aligned}
\frac{\left\|{\pi}-{\pi}^{*}\right\|_{\infty}}
{\left\|{\pi}^{*}\right\|_{\infty}} 
=  \frac{\left\|e^{{\eta}}-e^{{\eta}^*}\right\|_{\infty}}
{\left\|e^{{\eta}^*}\right\|_{\infty}} 
=  
\frac{\max_{k}|e^{{\eta}_k}-e^{{\eta}_k^*}|}
{e^{{\eta}^*_{\max}}} 
=  
\frac{\max_{k}|e^{c_k}({\eta}_k-{\eta}_k^*)|}
{e^{{\eta}^*_{\max}}},
\end{aligned}
\end{equation}
where $c_k$ is some real value betwen ${\eta}_k$ and ${\eta}_k^*$.
Thus, we have $e^{c_k} \geq e^{\eta_{\min}^*-\left\|{{\eta}}-{{\eta}^*}\right\|_{\infty}}$ for any $k$, which immediately gives us 
\begin{equation}\nonumber
\begin{aligned}
\frac{\max_{k}|e^{c_k}({\eta}_k-{\eta}_k^*|)}
{e^{{\eta}^*_{\max}}}
\geq  \frac{\max_{k}|e^{\eta_{\min}^*-\left\|{{\eta}}-{{\eta}^*}\right\|_{\infty}}({\eta}_k-{\eta}_k^*)|}
{e^{{\eta}^*_{\max}}} 
\geq  \frac{e^{\eta_{\min}^*-\left\|{{\eta}}-{{\eta}^*}\right\|_{\infty}}\left\|{{\eta}}-{{\eta}^*}\right\|_{\infty}}
{e^{{\eta}^*_{\max}}}
\geq 
\frac{1}{\kappa}
\frac{\left\|{{\eta}}-{{\eta}^*}\right\|_{\infty}}
{e^{\left\|{{\eta}}-{{\eta}^*}\right\|_{\infty}}},
\end{aligned}
\end{equation}
where the last inequality comes from $\frac{e^{\eta_{\min}^*}}{e^{{\eta}^*_{\max}}} 
\geq \frac{\omega_{\min}}{\omega_{\max}} 
= \frac{1}{\kappa}$. 
Combining the above inequalities,  we have 
\begin{equation}\nonumber
\begin{aligned}
\frac{\left\|{{\eta}}-{{\eta}^*}\right\|_{\infty}}
{e^{\left\|{{\eta}}-{{\eta}^*}\right\|_{\infty}}}
\lesssim \frac{\left\|{\pi}-{\pi}^{*}\right\|_{\infty}}
{\left\|{\pi}^{*}\right\|_{\infty}}
\lesssim 
\sqrt{\frac{\log n}{n p L}},
\end{aligned}
\end{equation}
which immediately gives   $\left\|{{\eta}}-{{\eta}^*}\right\|_{\infty} 
\lesssim 
\sqrt{\frac{\log n}{n p L}}$. 
Projecting ${{\eta}}$ and ${{\eta}^*}$ onto constrained parameter space, we finally have 
$$\|\hat{{\theta}}-{{\theta}^*}\|_\infty
=\|(\mathbf{I}-\mathbf{1}\mathbf{1}^{T} / n) ({{\eta}}-{{\eta}^*})\|_\infty
\lesssim \|{{\eta}}-{{\eta}^*}\|_\infty
\lesssim \sqrt{\frac{\log n}{npL}},$$
which completes the proof.
\end{proof}

\subsection{Proof of Lemma \ref{lem:gradient}}
\label{sec:pf:gradient}
\begin{proof}
	We aim to show that, with probability exceeding $1-\cO ( n^{-6})$, 
\begin{equation}\nonumber	\label{equ:gradient}
\|{\nabla \cL({{\theta}}^*)}\|_\infty
=\Big\|\sum_{i>j}{\mathcal{E}_{ji}}(-y_{ji}+\frac{e^{\theta_i^*}}{e^{\theta_i^*}+e^{\theta_j^*}})(\bm{e}_i-\bm{e}_j)\Big\|_\infty
\lesssim np \sqrt{\frac{\log n}{L}}.
\end{equation}
First, note that $ y_{ji}=\sum_{\ell=1}^{L}y_{ji}^{(\ell)}/L$. 
By Hoeffding's inequality, we have $$\mathbb{P}\Big(\Big| -y_{ji}+\frac{e^{\theta_i^*}}{e^{\theta_i^*}+e^{\theta_j^*}}\Big| \geq t \Big)\leq 2e^{-2t^2L},$$ which immediately gives $ \left|-y_{ji}+\frac{e^{\theta_i^*}}{e^{\theta_i^*}+e^{\theta_j^*}}\right|\leq 2\sqrt{\frac{\log n}{L}}$  with probability at least $1-\cO( n^{-8})$ by taking $t=2\sqrt{\frac{\log n}{L}}$. 
Thus, we have, with probability larger than $1-\cO ( n^{-6})$,
$$ \max_{i,j}\Big|-y_{ji}+\frac{e^{\theta_i^*}}{e^{\theta_i^*}+e^{\theta_j^*}}\Big|\lesssim \sqrt{\frac{\log n}{L}}.
$$
Finally by  Lemma \ref{lem:degree_concentration}, we have
\begin{equation}\nonumber
\|{\nabla \cL({{\theta}}^*)}\|_\infty
\lesssim \max_{i,j}
\Big|-y_{ji}
+
\frac{e^{\theta_i^*}}{e^{\theta_i^*}+e^{\theta_j^*}}\Big|
\Bigl\|\sum_{i>j}{\mathcal{E}_{ji}}(\bm{e}_i-\bm{e}_j)\Big\|_\infty
\lesssim np \sqrt{\frac{\log n}{L}},
\end{equation}
which concludes the proof.
\end{proof}

\subsection{Proof of Lemma \ref{lem:smoothness} }
\label{sec:pf:smoothness}
\begin{proof}
We aim to show that, with probability exceeding $1-\cO ( n^{-5})$, 
\begin{equation}\label{equ:smoothness}
\|\nabla \cL(\hat{{\theta}})-\nabla \cL({{\theta}}^*)
- \nabla^2 \cL({\theta}^*)(\hat{{\theta}}-{\theta}^*)\|_\infty	\lesssim \frac{\log n}{L}.
\end{equation}
Consider the term $	\nabla \cL(\hat{{\theta}})-\nabla \cL({{\theta}}^*)$. 
%
%
By the mean-value theorem, we have 
\begin{equation}\label{eqn:grad_conc}
\begin{aligned}
\nabla \cL(\hat{{\theta}})-\nabla \cL({{\theta}}^*)
&=\sum_{i>j}{\mathcal{E}_{ji}}\left(\frac{e^{\hat{\theta}_i}}{e^{\hat{\theta}_i}+e^{\hat{\theta}_j}}-\frac{e^{\theta_i^*}}{e^{\theta_i^*}+e^{\theta_j^*}}\right) (\bm{e}_i-\bm{e}_j) \\
&=-\sum_{i>j}{\mathcal{E}_{ji}} \frac{e^{c_{ji}}}{\left(1+e^{c_{ji}}\right)^{2}}\left[\hat{\theta}_j-\hat{\theta}_i -\left(\theta_{j}^{*}-\theta_{i}^{*}\right)\right](\bm{e}_i-\bm{e}_j)\\
&=\sum_{i>j}{\mathcal{E}_{ji}}\frac{e^{c_{ji}}}{\left(1+e^{c_{ji}}\right)^{2}}(\bm{e}_i-\bm{e}_j)(\bm{e}_i-\bm{e}_j)^\top(\hat{{\theta}}-{\theta}^*),
\end{aligned}
\end{equation}
where $c_{ji}$ is some real number  between $\hat{\theta}_j-\hat{\theta}_i$ and $\theta_{j}^{*}-\theta_{i}^{*}$.
Substituting  \eqref{eqn:grad_conc} back into  \eqref{equ:smoothness}, and by the definition of $\nabla^2\cL(\theta^*)$ in \eqref{eqn:hess}, we have 
\begin{equation}\nonumber
\begin{aligned}
& \|\nabla \cL(\hat{{\theta}})-\nabla \cL({{\theta}}^*)- \nabla^2 \cL({{\theta}}^*)(\hat{{\theta}}-{\theta}^*)\|_\infty \\
\leq & \Big\|\sum_{i>j}{\mathcal{E}_{ji}} \bigg(\frac{e^{c_{ji}}}{\left(1+e^{c_{ji}}\right)^{2}}
-\frac{e^{{\theta}^*_j-{\theta}^*_i}} {(1+e^{{\theta}^*_j-{\theta}^*_i} )^{2}} \bigg)
(\bm{e}_i-\bm{e}_j)(\bm{e}_i-\bm{e}_j)^\top\Big\|_\infty \|\hat{{\theta}}-{\theta}^*\|_\infty.
\end{aligned}
\end{equation}
Then, we  bound $\Big\|\sum_{i>j}{\mathcal{E}_{ji}} \bigg(\frac{e^{c_{ji}}}{\left(1+e^{c_{ji}}\right)^{2}}
-\frac{e^{{\theta}^*_j-{\theta}^*_i}} {(1+e^{{\theta}^*_j-{\theta}^*_i} )^{2}} \bigg)
(\bm{e}_i-\bm{e}_j)(\bm{e}_i-\bm{e}_j)^\top\Big\|_\infty$. 
Denote the term by $$M=\sum_{i>j}{\mathcal{E}_{ji}}\left(\frac{e^{c_{ji}}}{\left(1+e^{c_{ji}}\right)^{2}}
-\frac{e^{{\theta}^*_j-{\theta}^*_i}} {(1+e^{{\theta}^*_j-{\theta}^*_i} )^{2}}\right)(\bm{e}_i-\bm{e}_j)(\bm{e}_i-\bm{e}_j)^\top.$$
Since  
$\left|\frac{e^{c_{ji}}}{\left(1+e^{c_{ji}}\right)^{2}}
-\frac{e^{{\theta}^*_j-{\theta}^*_i}} {(1+e^{{\theta}^*_j-{\theta}^*_i} )^{2}}\right| 
\leq \frac{1}{4}|c_{ji}-({\theta}^*_j-{\theta}^*_i)|
\lesssim \|\hat{{\theta}}-{\theta}^*\| _\infty$ by Lipschitz continuity,  the $k$-th diagonal entry  of $M$ is bounded by 
\begin{equation}\nonumber
\begin{aligned}
|M_{kk}|=&\ \bigg|\sum_{k>j}{\mathcal{E}_{jk}}\Big(\frac{e^{c_{jk}}}{\left(1+e^{c_{jk}}\right)^{2}}
-\frac{e^{{\theta}^*_j-{\theta}^*_i}} {(1+e^{{\theta}^*_j-{\theta}^*_i} )^{2}}\Big)
+\sum_{i>k}{\mathcal{E}_{ki}}\Big(\frac{e^{c_{ki}}}{\left(1+e^{c_{ki}}\right)^{2}}
-\frac{e^{{\theta}^*_j-{\theta}^*_i}} {(1+e^{{\theta}^*_j-{\theta}^*_i} )^{2}}\Big)\bigg|\\
\lesssim &\ \sum_{k>j}{\mathcal{E}_{jk}}\|\hat{{\theta}}-{\theta}^*\| _\infty+\sum_{i>k}{\mathcal{E}_{ki}}\|\hat{{\theta}}-{\theta}^*\| _\infty.
\end{aligned}
\end{equation}
For the off-diagonal entries in the $k$-th row of $M$, we have $$k>j,\  |M_{kj}|=\bigg|-{\mathcal{E}_{jk}}\Big(\frac{e^{c_{jk}}}{\left(1+e^{c_{jk}}\right)^{2}}
-\frac{e^{{\theta}^*_j-{\theta}^*_k}} {(1+e^{{\theta}^*_j-{\theta}^*_k} )^{2}}\Big)\bigg|\lesssim{\mathcal{E}_{jk}}\|\hat{{\theta}}-{\theta}^*\| _\infty,$$
$$i>k,\  |M_{ki}|=\bigg|-{\mathcal{E}_{ki}}\Big(\frac{e^{c_{ki}}}{\left(1+e^{c_{ki}}\right)^{2}}
-\frac{e^{{\theta}^*_k-{\theta}^*_i}} {(1+e^{{\theta}^*_k-{\theta}^*_i} )^{2}}\Big)\bigg|\lesssim{\mathcal{E}_{ki}}\|\hat{{\theta}}-{\theta}^*\| _\infty,$$
which immediately leads to  
\begin{equation}\label{eqn:Minf}
\begin{aligned}
\|M\|_\infty &=\max_{1\leq k\leq n}\{|M_{k1}|+\cdots+|M_{kk}|+\cdots+|M_{kn}|\} \\
&\lesssim
2\sum_{k>j}{\mathcal{E}_{jk}}\|\hat{{\theta}}-{\theta}^*\| _\infty
+2\sum_{i>k}{\mathcal{E}_{ki}}\|\hat{{\theta}}-{\theta}^*\| _\infty
\lesssim np\|\hat{{\theta}}-{\theta}^*\| _\infty,
\end{aligned}
\end{equation}
where the last inequality holds by Lemma \ref{lem:degree_concentration}.  
Putting the previous bounds together, we have
\begin{equation}\nonumber
\begin{aligned}
\|\nabla \cL(\hat{{\theta}})-\nabla \cL({{\theta}}^*)
- \nabla^2 \cL({{\theta}}^*)(\hat{{\theta}}-{\theta}^*)\|_\infty 
\lesssim  \|M\|_\infty \|\hat{{\theta}}-{\theta}^*\|_\infty
\lesssim  np\|\hat{{\theta}}-{\theta}^*\| ^2_\infty
\lesssim  \frac{\log n}{L} 
\end{aligned}
\end{equation}
with probability exceeding $1-\cO (n^{-5})$, where the last inequality follows from Lemma \ref{lem:consistency}.
\end{proof}

\begin{remark}
\label{rmk:grad_conc}
	From the above proof, we have 
	\begin{equation}\label{eqn:grad_conc11}
	\begin{aligned}
	\nabla \cL(\hat{{\theta}})-\nabla \cL({{\theta}}^*)
	=\sum_{i>j}{\mathcal{E}_{ji}}\frac{e^{c_{ji}}}{\left(1+e^{c_{ji}}\right)^{2}}(\bm{e}_i-\bm{e}_j)(\bm{e}_i-\bm{e}_j)^\top(\hat{{\theta}}-{\theta}^*) 
	\end{aligned}
	\end{equation}
	by \eqref{eqn:grad_conc11}, which gives \begin{equation}\nonumber
	\|\nabla \cL(\hat{{\theta}})-\nabla \cL({{\theta}}^*)\|_{\infty} \lesssim np \sqrt{\frac{\log n}{npL}}
	\end{equation}
	with probability exceeding $1-\cO (n^{-5})$.
\end{remark}

\subsection{Proof of Lemma \ref{lem:Hessian_concentartion}}
\label{sec:pf:Hessian_concentartion}
\begin{proof}
	We aim to show, with probability exceeding $1-\cO (n^{-5})$, 
\begin{equation}\nonumber
\| \nabla^2 {\cL(\hat{{\theta})}}-\nabla^2 {\cL({{\theta}}^*)}\|_\infty\lesssim np\sqrt{\frac{\log n}{npL}}.
\end{equation}			
By the closed-form of Hessian in \eqref{eqn:hess}, we have, with probability exceeding $1-\cO (n^{-5})$,
\begin{equation}\nonumber
\begin{aligned}
\| 
\nabla^2 {\cL(\hat{{\theta})}}-\nabla^2 {\cL({{\theta}}^*)}\|_\infty 
\lesssim& \left\|\sum_{i>j}{\mathcal{E}_{ji}}
\left(
\frac{e^{\hat{\theta}_i}e^{\hat{\theta}_j}}
{(e^{\hat{\theta}_i}+e^{\hat{\theta}_j})^2}-\frac{e^{\theta_i^*}e^{\theta_j^*}}{(e^{\theta_i^*}+e^{\theta_j^*})^2}\right)(\bm{e}_i-\bm{e}_j)(\bm{e}_i-\bm{e}_j)^\top \right\|_\infty\\
\lesssim&np \|\hat{{\theta}}-{\theta}^*\| _\infty  
\lesssim np\sqrt{\frac{\log n}{npL}},
\end{aligned}
\end{equation}
where the second inequality  follows from \eqref{eqn:Minf}, and the third inequality follows from Lemma~\ref{lem:degree_concentration}.

\end{proof}

\subsection{Proof of Lemma \ref{lem:eigenvalue_inv_Hessian}}
\label{sec:pf:eigenvalue_inv_Hessian}
\begin{proof}
We aim to show that, with probability exceeding $1-\cO (n^{-10})$, for $\theta = \hat{{\theta}}$ or ${\theta}^*$,
\begin{equation}\label{equ:eigenvalue_inv_Hessian}
\begin{aligned}
\left\|{
	\left( \begin{array}{ccc}
	\nabla^2 {\cL({{{\theta}})}} & \bm{1}\\
	\bm{1}^\top & 0
	\end{array} 
	\right )^{-1}}\right\|_2
=\frac{1}{\lambda_{n-1}\left(\nabla^{2} \cL({{{\theta}}})\right)}
\lesssim \frac{1}{np}.
\end{aligned}
\end{equation}
To prove \eqref{equ:eigenvalue_inv_Hessian}, first, we first bound the eigenvalues of $\nabla^2 {\cL({{{\theta}})}}$ in the following Lemma. The proof can be found in Section~\ref{sec:pf:eigenvalue_Hessian}.


\begin{lemma}\label{lem:eigenvalue_Hessian}
	
	Under the conditions of Theorem \ref{thm:asy}, for all ${\theta}$ satisfying $\|{\theta}-{\theta}^*\|_\infty \leq C$ for some $C>0$, assume $\nabla^2 {\cL({{{\theta}})}}$ has eigenvalues $\lambda_1 \geq\cdots \geq \lambda_{n-1} \geq \lambda_n=0$ with corresponding eigenvectors $\bm{v}_1, \cdots , \bm{v}_{n-1}$ and $\bm{1}$, which  follows from the closed form of $\nabla^2 \cL({\theta})$. We have, with probability at least $1-\cO (n^{-10})$,
	\begin{equation}\nonumber
	{np} \lesssim \lambda_{n-1} \leq \lambda_{1} \lesssim {np}.
	\end{equation}
\end{lemma}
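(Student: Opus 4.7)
The plan is to exploit the weighted graph Laplacian structure of $\nabla^2 \cL(\theta)$ and sandwich it between two scalar multiples of the unweighted Erd\"os--R\'enyi Laplacian, then invoke standard spectral concentration to pin down both eigenvalues.

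First, I would bound the BTL weights uniformly. Since $\theta^*_i \in [\log \omega_{\min}, \log \omega_{\max}]$ and $\|\theta-\theta^*\|_\infty \le C$, each coefficient $w_{ij}(\theta) := e^{\theta_i}e^{\theta_j}/(e^{\theta_i}+e^{\theta_j})^2$ satisfies $0 < c_w \le w_{ij}(\theta) \le C_w < \infty$ for constants depending only on $\kappa$ and $C$. Writing $L_\cG := \sum_{i>j}\mathcal{E}_{ji}(\bm{e}_i-\bm{e}_j)(\bm{e}_i-\bm{e}_j)^\top$ for the unweighted graph Laplacian of the Erd\"os--R\'enyi comparison graph, the expression \eqref{eqn:hess} immediately yields the operator inequality
\begin{equation*}
c_w\, L_\cG \ \preceq\ \nabla^2 \cL(\theta) \ \preceq\ C_w\, L_\cG.
\end{equation*}
Since both $L_\cG$ and $\nabla^2\cL(\theta)$ have $\bm{1}$ in their nullspace, it suffices to bound $\lambda_{n-1}(L_\cG)$ and $\lambda_1(L_\cG)$.

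For the upper bound on $\lambda_1$, I would use the classical estimate $\lambda_1(L_\cG) \le 2 d_{\max}$. Combined with Lemma~\ref{lem:degree_concentration}, which gives $d_{\max} \le \tfrac{3}{2}np$ with probability at least $1-\cO(n^{-10})$, this yields $\lambda_1(\nabla^2\cL(\theta)) \le 3 C_w\, np \lesssim np$.

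The substantive step is the lower bound on $\lambda_{n-1}(L_\cG)$, i.e.\ the algebraic connectivity of $\cG(n,p)$. The plan is to compare $L_\cG$ to its expectation $\E[L_\cG] = p(nI - \bm{1}\bm{1}^\top)$, whose spectrum on $\bm{1}^\perp$ is constantly $np$. Writing $L_\cG - \E[L_\cG]$ as a sum of independent mean-zero matrices indexed by edges, matrix Bernstein inequality applied under the assumption $p \gtrsim \log n/n$ (with a sufficiently large constant) yields
\begin{equation*}
\|L_\cG - \E[L_\cG]\|_2 \ \lesssim\ \sqrt{np\log n} \ = \ o(np)
\end{equation*}
with probability at least $1-\cO(n^{-10})$. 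Restricting to $\bm{1}^\perp$, Weyl's inequality then gives $\lambda_{n-1}(L_\cG) \ge np - C\sqrt{np\log n} \gtrsim np$, hence $\lambda_{n-1}(\nabla^2\cL(\theta)) \ge c_w\,\lambda_{n-1}(L_\cG) \gtrsim np$. The same conclusion is established in Lemma~8 of \cite{chen2019spectral} and Lemma~9 of \cite{negahban2017rank}, which I would cite for the precise constants and tail bound.

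The main obstacle, such as it is, lies in making the matrix Bernstein tail bound sharp enough to yield the $1-\cO(n^{-10})$ probability (rather than the $1-\cO(n^{-c})$ obtained by a naive application), which is ensured by taking the hidden constant in $p \gtrsim \log n/n$ sufficiently large. Once that is in hand, the proof assembles in one line from the operator sandwich and the two-sided Laplacian spectral bound, uniformly in any $\theta$ with $\|\theta-\theta^*\|_\infty \le C$.
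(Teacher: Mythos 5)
Your proof is correct, and it takes a somewhat more self-contained route than the paper's. The paper's argument splits into two unrelated halves: for the lower bound it simply invokes Lemmas 9 and 10 of \cite{chen2019spectral} (which bound $\lambda_{\min,\perp}(\nabla^2\cL(\theta))$ directly for $\theta$ near $\theta^*$), and for the upper bound it writes $\nabla^2\cL(\theta)=M-P$ with $M=\tfrac14\sum_{i>j}\mathcal{E}_{ji}(\bm e_i-\bm e_j)(\bm e_i-\bm e_j)^\top$ and $P\succeq 0$, uses Weyl's monotonicity to get $\lambda_1\le\lambda_1(M)$, and then applies the matrix Chernoff inequality to $M$. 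Your two-sided sandwich $c_w L_\cG\preceq\nabla^2\cL(\theta)\preceq C_w L_\cG$ subsumes the paper's $M$-comparison (the upper half, with the sharp $C_w=1/4$) and additionally reduces the hard direction to the algebraic connectivity of the unweighted Erd\H{o}s--R\'enyi Laplacian, which you handle either by matrix Bernstein against $\E[L_\cG]=p(nI-\bm1\bm1^\top)$ or by citation; your variance and range computations, and the observation that the $1-\cO(n^{-10})$ tail requires the constant in $p\gtrsim\log n/n$ to be large, are all sound, and the restriction to $\bm1^\perp$ via Courant--Fischer is legitimate since both matrices annihilate $\bm1$. Your upper bound via $\lambda_1(L_\cG)\le 2d_{\max}$ plus the degree concentration in Lemma~\ref{lem:degree_concentration} is more elementary than the paper's matrix Chernoff step and reuses a lemma already in the paper. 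What the paper's route buys is brevity (the weighted lower bound is outsourced wholesale); what yours buys is a single unified comparison argument that is essentially self-contained up to one standard matrix concentration inequality. The only nits: the uniform bounds $c_w\le w_{ij}(\theta)\le C_w$ deserve one explicit line (they follow from $|\theta_i-\theta_j|\le\log\kappa+2C$), and your pointer to ``Lemma 8 of \cite{chen2019spectral}'' does not match the lemma numbers the paper itself relies on (Lemmas 9 and 10), though the cited content exists.
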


Under the conditions of Lemma \ref{lem:eigenvalue_Hessian},  we study the eigenvalues and eigenvectors of expanded matrix $\Bigl( \begin{smallmatrix} \nabla^2 {\cL({{\theta})}} & \bm{1}\\\bm{1}^\top & 0 \end{smallmatrix} \Bigr)$ based on the eigenvalues and eigenvectors of $\nabla^2 {\cL({{{\theta}})}}$. In particular, 
for $j=1,\cdots,n-1$, $\lambda_j\neq 0$, we have  $$0=\bm{1}^\top\nabla^2 {\cL({{{\theta}})}}\bm{v}_j=\bm{1}^\top\lambda_j\bm{v}_j, $$ where the first equality follows from the closed-form of Hessian in \eqref{eqn:hess}, so we have \begin{equation}\label{eqn:ortho}
\bm{1}^\top\bm{v}_j=0.
\end{equation}
Hence,  
\begin{equation}\nonumber
\begin{aligned}
{
	\left( \begin{array}{ccc}
	\nabla^2 {\cL({{\theta})}} & \bm{1}\\
	\bm{1}^\top & 0
	\end{array} 
	\right )}
{
	\left( \begin{array}{ccc}
	\bm{v}_j \\
	0
	\end{array} 
	\right )}
={
	\left( \begin{array}{ccc}
	\nabla^2 {\cL({{\theta})}}\bm{v}_j\\
	\bm{1}^\top\bm{v}_j
	\end{array} 
	\right )}
=\lambda_j{
	\left( \begin{array}{ccc}
	\bm{v}_j\\
	0
	\end{array} 
	\right )},
\end{aligned}
\end{equation}
which implies $\lambda_j$ and $\Bigl( \begin{smallmatrix} \bm{v}_j\\ 0 \end{smallmatrix} \Bigr)$   are the eigenvalues and the corresponding eigenvectors of $\Bigl( \begin{smallmatrix} \nabla^2 {\cL({{\theta})}} & \bm{1}\\\bm{1}^\top & 0 \end{smallmatrix} \Bigr)$.

Meanwhile, observe that 
\begin{equation}\nonumber
\begin{aligned}
{
	\left( \begin{array}{ccc}
	\nabla^2 {\cL({{\theta})}} & \bm{1}\\
	\bm{1}^\top & 0
	\end{array} 
	\right )}
{
	\left( \begin{array}{ccc}
	\bm{1} \\
	\sqrt{n}
	\end{array} 
	\right )}
={
	\left( \begin{array}{ccc}
	\nabla^2 {\cL({{\theta})}}\bm{1}+\sqrt{n}\bm{1}\\
	\bm{1}^\top \bm{1}
	\end{array} 
	\right )}
=\sqrt{n}{
	\left( \begin{array}{ccc}
	\bm{1} \\
	\sqrt{n}
	\end{array} 
	\right )},
\end{aligned}
\end{equation}
and 
\begin{equation}\nonumber
\begin{aligned}
{
	\left( \begin{array}{ccc}
	\nabla^2 {\cL({{\theta})}} & \bm{1}\\
	\bm{1}^\top & 0
	\end{array} 
	\right )}
{
	\left( \begin{array}{ccc}
	\bm{1} \\
	-\sqrt{n}
	\end{array} 
	\right )}
={
	\left( \begin{array}{ccc}
	\nabla^2 {\cL({{\theta})}}\bm{1}-\sqrt{n}\bm{1}\\
	\bm{1}^\top \bm{1}
	\end{array} 
	\right )}
=-\sqrt{n}{
	\left( \begin{array}{ccc}
	\bm{1} \\
	-\sqrt{n}
	\end{array} 
	\right )},
\end{aligned}
\end{equation}
which implies $\pm\sqrt{n}$, $\bigl( \begin{smallmatrix} \bm{1}\\ \pm\sqrt{n} \end{smallmatrix} \bigr)$  are also the  eigenvalues and corresponding eigenvectors of $\Bigl( \begin{smallmatrix} \nabla^2 {\cL({{\theta})}} & \bm{1}\\\bm{1}^\top & 0 \end{smallmatrix} \Bigr).$

In summary, $\Bigl( \begin{smallmatrix} \nabla^2 {\cL({{\theta})}} & \bm{1}\\\bm{1}^\top & 0 \end{smallmatrix} \Bigr)$ has $n+1$ eigenvalues, which are $\sqrt{n}$, $\lambda_1$, $\cdots$,  $\lambda_{n-1}$, $-\sqrt{n}$ in descending order.  
Thus,
\begin{equation}\nonumber
\begin{aligned}
\left\|{
	\left( \begin{array}{ccc}
	\nabla^2 {\cL({{{\theta}})}} & \bm{1}\\
	\bm{1}^\top & 0
	\end{array} 
	\right )^{-1}}\right\|_2
= \frac{1}{\lambda_{n-1}}
\lesssim  \frac{1}{np},
\end{aligned}
\end{equation}
which completes the proof.
\end{proof}

\begin{remark}\label{rmk:inv}
From the above proof about the eigenvalues of 	$\Bigl( \begin{smallmatrix} \nabla^2 {\cL(\theta)} & \bm{1}\\\bm{1}^\top & 0 \end{smallmatrix} \Bigr)$, we show that $\Bigl( \begin{smallmatrix} \nabla^2 {\cL(\theta)} & \bm{1}\\\bm{1}^\top & 0 \end{smallmatrix} \Bigr)$ is invertible.
\end{remark}
We further have the following corollary about bounding the diagonal entries of $\Bigl( \begin{smallmatrix} \nabla^2 {\cL(\theta)} & \bm{1}\\\bm{1}^\top & 0 \end{smallmatrix} \Bigr)^{-1}$. 
\begin{corollary}\label{lem:eigenvalue_Q}
	Under the conditions of Theorem \ref{thm:asy}, letting 
	\begin{equation}\label{eqn:Theta}
	{
		\left(  \begin{array}{ccc}
		\Theta_{11}  & \frac{1}{n}\bm{1}\\
		\frac{1}{n}\bm{1}^\top & 0
		\end{array} 
		\right) } :=
	{
		\left(  \begin{array}{ccc}
		\nabla^2 {\cL({{\theta})}} & \bm{1}\\
		\bm{1}^\top & 0
		\end{array} 
		\right) ^{-1}},
	\end{equation}
	we have, with probability exceeding $1-\cO( n^{-10})$,
	\begin{equation}\nonumber
	\|\Theta_{11}\|_2 \asymp \frac{1}{np}
	\ \text{ and }\ 
	[\Theta_{11}]_{kk}  \asymp \frac{1}{np} \ \ \text{ for } 1 \leq k \leq n,
	\end{equation}
	where $[\Theta_{11}]_{kk}$ is the $k$-th diagonal entry of $\Theta_{11}$.
\end{corollary}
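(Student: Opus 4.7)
The plan is to combine the eigendecomposition of $\nabla^2 \cL(\theta)$ (derived in the proof of Lemma \ref{lem:eigenvalue_inv_Hessian}) with the eigenvalue bounds from Lemma \ref{lem:eigenvalue_Hessian} and then extract the claims about $\|\Theta_{11}\|_2$ and the diagonal entries $[\Theta_{11}]_{kk}$ by a direct spectral computation. Concretely, from the proof of Lemma \ref{lem:eigenvalue_inv_Hessian} we already know that $\nabla^2 \cL(\theta)$ has eigenvalues $\lambda_1 \ge \cdots \ge \lambda_{n-1} > 0 = \lambda_n$ with eigenvectors $\mathbf{v}_1,\dots,\mathbf{v}_{n-1}, \mathbf{1}/\sqrt{n}$, so $\{\mathbf{v}_j\}_{j=1}^{n-1} \cup \{\mathbf{1}/\sqrt{n}\}$ is an orthonormal basis of $\mathbb{R}^n$ and $\mathbf{v}_j \perp \mathbf{1}$ for all $j$.

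The first step is to identify $\Theta_{11}$ explicitly. The natural candidate is
$$\Theta_{11} \;=\; \sum_{j=1}^{n-1} \frac{1}{\lambda_j}\, \mathbf{v}_j \mathbf{v}_j^\top,$$
and I would verify it by checking that the block-matrix product in \eqref{eqn:Theta} equals the identity. The only nontrivial block is the $(1,1)$ block, which uses the resolution of identity $\sum_{j=1}^{n-1} \mathbf{v}_j \mathbf{v}_j^\top + \tfrac{1}{n}\mathbf{1}\mathbf{1}^\top = I_n$; the remaining three blocks reduce to $\nabla^2 \cL(\theta)\mathbf{1} = 0$, $\mathbf{1}^\top \mathbf{v}_j = 0$, and $\mathbf{1}^\top \mathbf{1}/n = 1$. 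This identifies $\Theta_{11}$ as the pseudoinverse of $\nabla^2 \cL(\theta)$ on $\mathbf{1}^\perp$, which is the key representation for everything that follows.

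Given this representation, the spectral norm bound is immediate: the nonzero eigenvalues of $\Theta_{11}$ are exactly $1/\lambda_j$, so $\|\Theta_{11}\|_2 = 1/\lambda_{n-1}$, and Lemma \ref{lem:eigenvalue_Hessian} gives $1/\lambda_{n-1} \asymp 1/(np)$. For the diagonal entries I will compute
$$[\Theta_{11}]_{kk} \;=\; \mathbf{e}_k^\top \Theta_{11} \mathbf{e}_k \;=\; \sum_{j=1}^{n-1} \frac{v_{j,k}^2}{\lambda_j},$$
and sandwich it between $\frac{1}{\lambda_1}\sum_{j=1}^{n-1} v_{j,k}^2$ and $\frac{1}{\lambda_{n-1}}\sum_{j=1}^{n-1} v_{j,k}^2$. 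Using Parseval on $\mathbf{e}_k$ in the ONB $\{\mathbf{v}_j\} \cup \{\mathbf{1}/\sqrt{n}\}$ gives $\sum_{j=1}^{n-1} v_{j,k}^2 = 1 - 1/n$, independent of $k$, so both bounds are of order $1/(np)$ by Lemma \ref{lem:eigenvalue_Hessian}, yielding $[\Theta_{11}]_{kk} \asymp 1/(np)$ uniformly in $k$.

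There is no real obstacle here; the argument is essentially bookkeeping once the eigendecomposition from Lemma \ref{lem:eigenvalue_inv_Hessian} is in hand. The only point that deserves care is the orthogonality $\mathbf{v}_j \perp \mathbf{1}$, which was already established in \eqref{eqn:ortho} from the fact that $\mathbf{1}$ lies in the kernel of $\nabla^2 \cL(\theta)$; this is precisely what allows the ``pseudoinverse'' identification to go through and what makes the Parseval sum equal $1 - 1/n$ rather than something $k$-dependent. The high-probability statement then inherits directly from the $1 - O(n^{-10})$ probability in Lemma \ref{lem:eigenvalue_Hessian}.
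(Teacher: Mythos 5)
Your proposal is correct and follows essentially the same route as the paper: both rest on the eigendecomposition from Lemma \ref{lem:eigenvalue_inv_Hessian}, the eigenvalue bounds of Lemma \ref{lem:eigenvalue_Hessian}, the orthogonality $\bm{1}^\top\bm{v}_j=0$, and the expansion of $\bm{e}_k$ in the orthonormal eigenbasis to control $[\Theta_{11}]_{kk}$. The only (cosmetic) difference is that you identify the block structure of the inverse by exhibiting $\Theta_{11}=\sum_{j=1}^{n-1}\lambda_j^{-1}\bm{v}_j\bm{v}_j^\top$ and verifying the product is the identity, whereas the paper solves the block equations to get $\bm{\alpha}=\bm{1}/n$ and $\beta=0$.
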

\begin{proof}
	First, we show that \eqref{eqn:Theta} holds, i.e., we show the off-diagonal block of inverse is $\frac{1}{n}\bm{1}$, and the right bottom block is $0$. 
	Denoting  $\Bigl( \begin{smallmatrix} \Theta_{11} & \bm{\alpha}\\ \bm{\alpha}^\top & \beta \end{smallmatrix} \Bigr)
	=
	\Bigl( \begin{smallmatrix} \nabla^2 {\cL({{\theta})}} & \bm{1}\\\bm{1}^\top & 0 \end{smallmatrix} \Bigr)^{-1}$,
	we have the following equations
	\begin{equation}\nonumber
	\left\{\begin{array}{ll}
	\Theta_{11} \nabla^2 {\cL({{\theta})}} +\bm{\alpha}\bm{1}^\top=I   & (1)\\
	\Theta_{11}\bm{1}=\bm{0} & (2) \\
	\bm{\alpha}^\top \nabla^2 {\cL({{\theta})}} + \beta \bm{1}^\top=\bm{0}^\top & (3) \\
	\bm{\alpha}^\top \bm{1}=1 & (4)
	\end{array}\right.
	\end{equation}
	Then, by $(3)$, we have $(\bm{\alpha}^\top \nabla^2 {\cL({{\theta})}} + \beta \bm{1}^\top)\bm{1}=0+\beta \bm{1}^\top\bm{1}=0$ , which gives $\beta=0$. 
	By $(1)$, we have $(\Theta_{11} \nabla^2 {\cL({{\theta})}} +\bm{\alpha}\bm{1}^\top)\bm{1}=0+n\bm{\alpha}=\bm{1}$ , which gives $\bm{\alpha}=\bm{1}/n$.
Thus, we have
	\begin{equation}\nonumber
	{
		\left(  \begin{array}{ccc}
		\Theta_{11} & \frac{1}{n}\bm{1}\\
		\frac{1}{n}\bm{1}^\top & 0
		\end{array} 
		\right) }=
	{
		\left(  \begin{array}{ccc}
		\nabla^2 {\cL({{\theta})}} & \bm{1}\\
		\bm{1}^\top & 0
		\end{array} 
		\right) ^{-1}}.
	\end{equation}
By the proof of Lemma \ref{lem:eigenvalue_inv_Hessian},   $\Bigl( \begin{smallmatrix} \nabla^2 {\cL({{\theta})}} & \bm{1}\\\bm{1}^\top & 0 \end{smallmatrix} \Bigr)^{-1}$ has $n+1$ eigenvalues   $\frac{1}{\lambda_{n-1}}$, $\cdots$,  $\frac{1}{\lambda_{1}}$, $\frac{1}{\sqrt{n}}$, $-\frac{1}{\sqrt{n}}$ in descending order, and the corressponding eigenvectors are 
	$\Bigl( \begin{smallmatrix} \bm{v}_{n-1}\\ 0 \end{smallmatrix} \Bigr), \cdots, \Bigl( \begin{smallmatrix} \bm{v}_{1}\\ 0 \end{smallmatrix} \Bigr)$,  $\bigl( \begin{smallmatrix} \bm{1}\\ \sqrt{n} \end{smallmatrix} \bigr)$, $\Bigl( \begin{smallmatrix} \bm{1}\\ -\sqrt{n} \end{smallmatrix} \Bigr)$. 
	Following similar argument in the proof of Lemma \ref{lem:eigenvalue_inv_Hessian} in Section \ref{sec:pf:eigenvalue_inv_Hessian}, we immediatly obatin eigenvalues of matrix $\Theta_{11}$ are $\frac{1}{\lambda_{n-1}}$, $\cdots$,  $\frac{1}{\lambda_{1}}$, 0 with eigenvectors  $\bm{v}_{n-1}, \cdots, \bm{v}_{1}$, $\bm{1}$.
	
	Combining the above argument with Lemma \ref{lem:eigenvalue_Hessian} , we have
	\begin{equation}\nonumber
	\|\Theta_{11}\|_2 = \frac{1}{\lambda_{n-1}} \asymp \frac{1}{np},
	\end{equation}
	which immediately gives us 
	\begin{equation}\nonumber
	\max_{k} \ [\Theta_{11}]_{kk}  \lesssim \frac{1}{np}.
	\end{equation} 
	Next, we show that $ [\Theta_{11}]_{kk}  \gtrsim \frac{1}{np}$. Each natural basis $\bm{e}_k$ can be represented as $$\bm{e}_k=\alpha_1\bm{v}_{1}+\cdots+\alpha_{n-1}\bm{v}_{n-1}+\alpha_n\bm{1}.$$
 Combining the above equation with  $\bm{1}^\top\bm{v}_j=0$ from \eqref{eqn:ortho}, we have
 $$1=\bm{e}_k^\top \bm{1} = \alpha_n\bm{1}^\top\bm{1}=\alpha_n n,$$  which gives $\alpha_n=\frac{1}{n}$.
 
	Furthermore, 
	$$1=\bm{e}_k^\top \bm{e}_k = \alpha_1^2\bm{v}_{1}^\top \bm{v}_{1}+\cdots+\alpha_{n-1}^2\bm{v}_{n-1}^\top\bm{v}_{n-1}+\alpha_n^2 \bm{1}^\top\bm{1} =\alpha_1^2+\cdots+\alpha_{n-1}^2+\alpha_n^2 n,$$ which gives $\alpha_1^2+\cdots+\alpha_{n-1}^2=1-\frac{1}{n}$.
	Thus, we have
	\begin{equation}\nonumber
	\begin{aligned}
	[\Theta_{11}]_{kk}
	=&\ \bm{e}_k^\top \Theta_{11} \bm{e}_k
	=(\alpha_1\bm{v}_{1}+\cdots+\alpha_{n-1}\bm{v}_{n-1}+\alpha_n\bm{1})^\top \Theta_{11} (\alpha_1\bm{v}_{1}+\cdots+\alpha_{n-1}\bm{v}_{n-1}+\alpha_n\bm{1}) \\
	=&\ \alpha_1^2\frac{1}{\lambda_1}+\cdots+\alpha_{n-1}^2\frac{1}{\lambda_{n-1}} 
	\geq  (\alpha_1^2+\cdots+\alpha_{n-1}^2)\frac{1}{\lambda_1}  
	\gtrsim  \frac{1}{np},
	\end{aligned}
	\end{equation} 
	which finishes the proof. 
\end{proof}

\begin{remark}
	\label{rmk:1}
	The smallest eigenvalue of $\Theta_{11}$ restricted to vectors orthogonal to $\bm{1}$ is
	$\lambda_{\min , \perp}(\Theta_{11})=\frac{1}{\lambda_{1}}\gtrsim \frac{1}{np}$, so we have $\bm{z}^\top \Theta_{11} \bm{z} \geq \lambda_{\min,\perp}\left(\Theta_{11}\right)\bm{z}^\top\bm{z}$ for all $\bm{z}$ with $\bm{1}^\top\bm{z}=0$.
	We have
	\begin{equation}\nonumber
	(\bm{e}_j-\bm{e}_k)^\top\Theta_{11}(\bm{e}_j-\bm{e}_k) 
	\geq 2\frac{1}{\lambda_{1}} \gtrsim \frac{1}{np},
	\end{equation} 
	and
	\begin{equation}\nonumber
	(\bm{e}_j-\bm{e}_k)^\top \Theta_{11}(\bm{e}_j-\bm{e}_k)
	\leq 2\frac{1}{\lambda_{n-1}} 
	\lesssim \frac{1}{np}.
	\end{equation} 
	Hence, combining with Corollary \ref{lem:eigenvalue_Q}, we  have 
	\begin{equation}\nonumber
	[\Theta_{11}]_{jk} \asymp \frac{1}{np}.
	\end{equation}
\end{remark}

\subsection{Proof of Lemma \ref{lem:qhat-qstar}}
\label{sec:pf:qhat-qstar}
\begin{proof}
We aim to show that, with probability at least $1-\cO(n^{-5})$,
	\begin{equation}\nonumber
\left\| 
{\left( \begin{array}{ccc}
	\nabla^2 {\cL(\hat{{\theta})}} & \bm{1}\\
	\bm{1}^\top & 0
	\end{array} 
	\right )}^{-1}-{
	\left( \begin{array}{ccc}
	\nabla^2 {\cL({{\theta}}^*)} & \bm{1}\\
	\bm{1}^\top & 0
	\end{array} 
	\right )}^{-1} \right\|_2
\lesssim \frac{1}{np}\sqrt{\frac{\log n}{pL}} .
\end{equation}
Following Lemma \ref{lem:Hessian_concentartion} and Lemma \ref{lem:eigenvalue_inv_Hessian}, we have 
\begin{equation}\nonumber
\begin{aligned}
&\left\| 
{\left( \begin{array}{ccc}
	\nabla^2 {\cL(\hat{{\theta}})} & \bm{1}\\
	\bm{1}^\top & 0
	\end{array} 
	\right )}^{-1}-{
	\left( \begin{array}{ccc}
	\nabla^2 {\cL({{\theta}}^*)} & \bm{1}\\
	\bm{1}^\top & 0
	\end{array} 
	\right )}^{-1} \right\|_2\\
 &\qquad \leq \left\| 
{\left( \begin{array}{ccc}
	\nabla^2 {\cL(\hat{{\theta}})} & \bm{1}\\
	\bm{1}^\top & 0
	\end{array} 
	\right )}^{-1} \right\|_2
\Bigg\| 
{\left( \begin{array}{ccc}
	\nabla^2 {\cL(\hat{{\theta}})} & \bm{1}\\
	\bm{1}^\top & 0
	\end{array} 
	\right )}-{
	\left( \begin{array}{ccc}
	\nabla^2 {\cL({{\theta}}^*)} & \bm{1}\\
	\bm{1}^\top & 0
	\end{array} 
	\right )} \Bigg\|_2
\left\| 
{\left( \begin{array}{ccc}
	\nabla^2 {\cL({{\theta}^*)}} & \bm{1}\\
	\bm{1}^\top & 0
	\end{array} 
	\right )}^{-1} \right\|_2\\
 &\qquad \lesssim \Big(\frac{1}{np}\Big)^2 \times \sqrt{n}\big\| 
\nabla^2 {\cL(\hat{{\theta}})}-\nabla^2 {\cL({{\theta}}^*)}\big\|_\infty   
\lesssim \Big(\frac{1}{np}\Big)^2 \times \sqrt{n}\times np\sqrt{\frac{\log n}{npL}}
\lesssim \dfrac{1}{np}\sqrt{\frac{\log n}{pL}},
\end{aligned}
\end{equation}
which concludes the proof.
\end{proof}
\begin{remark}\label{rmk:eqn:4} 
	Note that from the above proof, we immediately have \begin{equation}\nonumber
	\big\|\hat{\Theta}_{11}-{\Theta}^*_{11}\big\|_2\lesssim \dfrac{1}{np}\sqrt{\frac{\log n}{pL}}.
	\end{equation}
\end{remark}

\subsection{Proof of Lemma \ref{lem:eigenvalue_Hessian}}
\label{sec:pf:eigenvalue_Hessian}
\begin{proof} 
	For the control of $\lambda_{n-1}$,  we refer the readers to the proof of Lemma 9 and Lemma 10 in \cite{chen2019spectral}. By these two lemmas, for all ${\theta}$ satisfying $\|{\theta}-{\theta}^*\|_\infty \leq C$ for some $C>0$, $\lambda_{n-1}=\lambda_{\min , \perp}\left(\nabla^{2} \cL({{{\theta}}})\right)  \gtrsim np$ with probability exceeding $1-\cO( n^{-10})$, which also shows that graph $\mathcal{G}$ is connected.
	
	To control $\lambda_1$,  we first decompose $\nabla^2 {\cL({\theta})}$ as 
	$$\begin{aligned}
	\nabla^2 {\cL({\theta})}= &\sum_{i>j}{\mathcal{E}_{ji}}\frac{e^{\theta_i}e^{\theta_j}}{(e^{\theta_i}+e^{\theta_j})^2}(\bm{e}_i-\bm{e}_j)(\bm{e}_i-\bm{e}_j)^\top\\
	=&\underbrace{\sum_{i>j}{\mathcal{E}_{ji}}\frac{1}{4}(\bm{e}_i-\bm{e}_j)(\bm{e}_i-\bm{e}_j)^\top}_{M}
	 - \underbrace{\sum_{i>j}{\mathcal{E}_{ji}}\left(\frac{1}{4} - \frac{e^{\theta_i}e^{\theta_j}}{(e^{\theta_i}+e^{\theta_j})^2}\right)(\bm{e}_i-\bm{e}_j)(\bm{e}_i-\bm{e}_j)^\top}_{P},
	\end{aligned}$$
	where $M$ has eigenvalues $\mu_1 \geq\cdots \geq \mu_{n-1} \geq \mu_n=0$ and $P$ has eigenvalues $\rho_1 \geq\cdots \geq \rho_{n-1} \geq \rho_n=0.$\\
	By Weyl’s Monotonicity Theorem (Corollary 2.3 in \cite{zheng2020inertia}), we have 
	$\lambda_1\leq \mu_1.$ 
	
We then bound $\mu_1$.	Since $\mathbb{E}[M]=\mathbb{E} \sum_{i>j}{\mathcal{E}_{ji}}\frac{1}{4}(\bm{e}_i-\bm{e}_j)(\bm{e}_i-\bm{e}_j)^\top=\frac{1}{4}p\sum_{i>j}(\bm{e}_i-\bm{e}_j)(\bm{e}_i-\bm{e}_j)^\top$, the largest eigenvalue of $\mathbb{E}[M]$ is $\frac{1}{4}np.$
	By matrix Chernoff inequality (Theorem 5.1.1 in \cite{tropp2015introduction}), we obtain that
	$\mu_1 \leq 3\times\frac{1}{4}np$  with probability exceeding $1-\cO( n^{-10})$ as long as  $p>\frac{C_0\log n}{n}$ for some   constant $C_0>0$.

	Combining the results above, we have  $\lambda_1\leq \mu_1  \lesssim np$  with probability at least $1-\cO (n^{-10})$.
\end{proof}

\subsection{Berry-Esseen Theorem}
\label{sec:CLT}
We provide the Berry-Esseen Theorem here for self-completeness. Proof can be found in Theorem 5.4 of \cite{petrov1995limit}.
\begin{lemma}[Berry-Esseen Theorem]
	\label{lem:BE}
Assume $X_{1}, \ldots, X_{n}$ are independent random variables with $\mathbb{E}\left(X_{i}\right)=0, \mathbb{E}\left(X_{i}^{2}\right)=\sigma_{i}^{2}$ and $\mathbb{E}\left(\left|X_{i}\right|^{3}\right)=\rho_{i} .$ Then it holds that
$$
\sup _{x \in \mathbb{R}}\left|\mathbb{P}\left(\frac{1}{s_{n}} \sum_{i=1}^{n} X_{i} \leqslant x\right)-\Phi(x)\right| \leqslant C\left(\sum_{i=1}^{n} \sigma_{i}^{2}\right)^{-3 / 2}\left(\sum_{i=1}^{n} \rho_{i}\right)
$$
where $C>0$ is a constant and $s_{n}^{2}=\sum_{i=1}^{n} \sigma_{i}^{2}$.

\end{lemma}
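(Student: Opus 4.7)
The plan is to prove the non-uniform rate of normal approximation via the classical characteristic-function route, namely Esseen's smoothing lemma combined with a careful Taylor expansion of the characteristic functions of the summands. Let $Y_i = X_i/s_n$ and let $F_n$ denote the distribution function of $\sum_{i=1}^n Y_i$. Write $\phi_n(t) = \prod_{i=1}^n \phi_{Y_i}(t)$ for its characteristic function, and compare it to $\phi(t) = e^{-t^2/2}$.

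First I would invoke Esseen's smoothing inequality, which gives, for any $T>0$,
\[
\sup_{x \in \mathbb{R}} |F_n(x) - \Phi(x)| \;\lesssim\; \int_{-T}^{T} \frac{|\phi_n(t) - e^{-t^2/2}|}{|t|}\, dt \;+\; \frac{1}{T}.
\]
This reduces the problem to bounding $|\phi_n(t) - e^{-t^2/2}|$ on an appropriately chosen interval $[-T,T]$. Next, for each individual summand I would use the third-order Taylor expansion with remainder,
\[
\Big|\phi_{Y_i}(t) - 1 + \tfrac{1}{2}\sigma_i^2 t^2/s_n^2\Big| \;\leq\; \frac{\rho_i}{6 s_n^3} |t|^3,
\]
which follows from $|e^{iu} - 1 - iu + u^2/2| \leq |u|^3/6$ applied to $u = tY_i$ and integration against the law of $X_i$. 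Combined with $|\phi_{Y_i}(t) - 1| \leq t^2 \sigma_i^2/(2s_n^2)$, this pins down each factor up to an error proportional to $\rho_i |t|^3 / s_n^3$.

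The decisive step is then to control the product $\phi_n(t)$. Taking the principal branch of the logarithm on the region where each $\phi_{Y_i}(t)$ is close to $1$, I would write
\[
\log \phi_n(t) = \sum_{i=1}^n \log \phi_{Y_i}(t) = -\frac{t^2}{2} + \sum_{i=1}^n \varepsilon_i(t),
\]
where each $\varepsilon_i(t)$ is controlled by $\rho_i |t|^3/s_n^3$. Choosing $T \asymp s_n^3/\sum_{i=1}^n \rho_i$ ensures that on $[-T,T]$ the cumulative remainder $\sum \varepsilon_i(t)$ stays small relative to $t^2/2$, so that $|\phi_n(t) - e^{-t^2/2}| \lesssim (\sum \rho_i / s_n^3)\, |t|^3 e^{-t^2/4}$. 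Substituting into the smoothing bound and integrating yields both the integral term and the $1/T$ term of the same order $(\sum \rho_i)/s_n^3 = (\sum \rho_i)(\sum \sigma_i^2)^{-3/2}$, which is precisely the claimed rate.

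The main obstacle, in my view, is justifying the logarithmic expansion uniformly across $t \in [-T, T]$ when the $\sigma_i$ are very unequal: one must verify that each $|\phi_{Y_i}(t)|$ stays bounded away from $0$ on this range so that $\log \phi_{Y_i}(t)$ is well defined as a single-valued analytic function, and that the cubic remainder really sums in the Lyapunov norm rather than producing spurious quadratic contributions. By the Lyapunov inequality $\sigma_i^3 \leq \rho_i$, the natural scale for $T$ is indeed governed by $(\sum \rho_i)/s_n^3$, and a slightly smaller cutoff (e.g., one with an extra small constant) suffices to keep the individual factors within a fixed disk around $1$; this is the step where the constant $C$ in the statement is pinned down.
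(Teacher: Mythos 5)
First, note that the paper does not actually prove this lemma: it is quoted verbatim from Theorem 5.4 of Petrov (1995) and cited for self-completeness, so there is no in-paper argument to compare against. Your overall route (Esseen's smoothing inequality plus a comparison of $\phi_n(t)=\prod_i\phi_{Y_i}(t)$ with $e^{-t^2/2}$ on $|t|\le T\asymp s_n^3/\sum_i\rho_i$) is indeed the classical route by which the cited result is established, and the first two steps of your sketch are fine.

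However, the ``decisive step'' as you execute it has a genuine gap in the non-identically-distributed setting. You claim that with $T\asymp s_n^3/\sum_i\rho_i$ (even with a small constant) each factor $\phi_{Y_i}(t)$ stays in a fixed disk around $1$, so that $\log\phi_{Y_i}(t)$ can be expanded, and you invoke Lyapunov ($\sigma_i^3\le\rho_i$) to justify this. Lyapunov only gives $\sigma_i/s_n\le L_n^{1/3}$ with $L_n:=\sum_i\rho_i/s_n^3$, so on $|t|\le c/L_n$ you can only conclude $\sigma_i|t|/s_n\lesssim L_n^{-2/3}$, which is \emph{large} precisely when $L_n$ is small (the only regime that matters, since for $L_n$ bounded below the inequality is trivial). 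Concretely, let $X_1$ be Rademacher with $\sigma_1=0.1\,s_n$ and let the remaining summands be many small Rademachers contributing negligibly to $\sum_i\rho_i$; then $L_n$ can be made close to $10^{-3}$ while $\phi_{Y_1}(t)=\cos(0.1t)$ vanishes and changes sign well inside $|t|\le 1/(4L_n)$. So the principal-branch factorization of $\log\phi_n$ is not available on $[-T,T]$, and the constant $C$ cannot be ``pinned down'' this way. The standard repair (this is what Petrov/Feller do, and why the log does not appear in their proofs) is to avoid logarithms: use the telescoping identity $\prod_k\alpha_k-\prod_k\beta_k=\sum_k\bigl(\prod_{j<k}\alpha_j\bigr)(\alpha_k-\beta_k)\bigl(\prod_{j>k}\beta_j\bigr)$ with $\alpha_k=\phi_{Y_k}(t)$, $\beta_k=e^{-\sigma_k^2t^2/(2s_n^2)}$, bound each $|\alpha_k-\beta_k|$ by your third-order Taylor estimate plus $|e^{-x}-1+x|\le x^2/2$, and obtain the Gaussian decay of the partial products from the \emph{sum} of the exponents via $|\phi_{Y_k}(t)|\le\exp\bigl(-\sigma_k^2t^2/(2s_n^2)+\rho_k|t|^3/(6s_n^3)\bigr)$, which requires no closeness of individual factors to $1$. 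With that replacement (and the trivial reduction to small $L_n$), your argument goes through and yields the stated bound; as written, it does not.
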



%


\end{APPENDICES}



\bibliographystyle{informs2014} 
\bibliography{ranking_reference} 

\begin{thebibliography}{127}
\providecommand{\natexlab}[1]{#1}
\providecommand{\url}[1]{\texttt{#1}}
\providecommand{\urlprefix}{URL }

\bibitem[{Adler et~al.(1994)Adler, Gemmell, Harchol-Balter, Karp,
  \protect\BIBand{} Kenyon}]{adler1994selection}
Adler M, Gemmell P, Harchol-Balter M, Karp RM, Kenyon C (1994) Selection in the
  presence of noise: The design of playoff systems. \emph{Proceedings of Fifth
  Annual ACM-SIAM Symposium on Discrete Algorithms}, 564--572.

\bibitem[{Ailon(2010)}]{ailon2010aggregation}
Ailon N (2010) Aggregation of partial rankings, p-ratings and top-m lists.
  \emph{Algorithmica} 57(2):284--300.

\bibitem[{Ailon et~al.(2008)Ailon, Charikar, \protect\BIBand{}
  Newman}]{ailon2008aggregating}
Ailon N, Charikar M, Newman A (2008) Aggregating inconsistent information:
  ranking and clustering. \emph{Journal of the ACM (JACM)} 55(5):1--27.

\bibitem[{Altman \protect\BIBand{} Tennenholtz(2005)}]{altman2005ranking}
Altman A, Tennenholtz M (2005) Ranking systems: the pagerank axioms.
  \emph{Proceedings of the sixth ACM conference on Electronic commerce}, 1--8.

\bibitem[{Ammar \protect\BIBand{} Shah(2011)}]{ammar2011ranking}
Ammar A, Shah D (2011) Ranking: Compare, don't score. \emph{2011 49th Annual
  Allerton Conference on Communication, Control, and Computing (Allerton)},
  776--783 (IEEE).

\bibitem[{Ammar \protect\BIBand{} Shah(2012)}]{ammar2012efficient}
Ammar A, Shah D (2012) Efficient rank aggregation using partial data. \emph{ACM
  SIGMETRICS Performance Evaluation Review} 40(1):355--366.

\bibitem[{Aouad et~al.(2018)Aouad, Farias, Levi, \protect\BIBand{}
  Segev}]{aouad2018approximability}
Aouad A, Farias V, Levi R, Segev D (2018) The approximability of assortment
  optimization under ranking preferences. \emph{Operations Research}
  66(6):1661--1669.

\bibitem[{Azari~Soufiani et~al.(2013)Azari~Soufiani, Chen, Parkes,
  \protect\BIBand{} Xia}]{azari2013generalized}
Azari~Soufiani H, Chen WZ, Parkes DC, Xia L (2013) Generalized
  method-of-moments for rank aggregation.

\bibitem[{Baltrunas et~al.(2010)Baltrunas, Makcinskas, \protect\BIBand{}
  Ricci}]{baltrunas2010group}
Baltrunas L, Makcinskas T, Ricci F (2010) Group recommendations with rank
  aggregation and collaborative filtering. \emph{Proceedings of the fourth ACM
  conference on Recommender systems}, 119--126.

\bibitem[{Benjamini \protect\BIBand{}
  Hochberg(1995)}]{benjamini1995controlling}
Benjamini Y, Hochberg Y (1995) Controlling the false discovery rate: a
  practical and powerful approach to multiple testing. \emph{Journal of the
  Royal statistical society: series B (Methodological)} 57(1):289--300.

\bibitem[{Benjamini \protect\BIBand{} Yekutieli(2001)}]{benjamini2001control}
Benjamini Y, Yekutieli D (2001) The control of the false discovery rate in
  multiple testing under dependency. \emph{Annals of Statistics} 1165--1188.

\bibitem[{Beutel et~al.(2019)Beutel, Chen, Doshi, Qian, Wei, Wu, Heldt, Zhao,
  Hong, Chi, \protect\BIBand{} Goodrow}]{beutel2019fairness}
Beutel A, Chen J, Doshi T, Qian H, Wei L, Wu Y, Heldt L, Zhao Z, Hong L, Chi
  EH, Goodrow C (2019) Fairness in recommendation ranking through pairwise
  comparisons. \emph{Proceedings of the 25th ACM SIGKDD International
  Conference on Knowledge Discovery \& Data Mining}, 2212--2220.

\bibitem[{Bouadjenek et~al.(2013)Bouadjenek, Hacid, \protect\BIBand{}
  Bouzeghoub}]{bouadjenek2013sopra}
Bouadjenek MR, Hacid H, Bouzeghoub M (2013) Sopra: A new social personalized
  ranking function for improving web search. \emph{Proceedings of the 36th
  international ACM SIGIR conference on Research and development in information
  retrieval}, 861--864.

\bibitem[{Boulesteix \protect\BIBand{} Slawski(2009)}]{boulesteix2009stability}
Boulesteix AL, Slawski M (2009) Stability and aggregation of ranked gene lists.
  \emph{Briefings in bioinformatics} 10(5):556--568.

\bibitem[{Bradley \protect\BIBand{} Terry(1952)}]{bradley1952rank}
Bradley RA, Terry ME (1952) Rank analysis of incomplete block designs: I. the
  method of paired comparisons. \emph{Biometrika} 39(3/4):324--345.

\bibitem[{Brown(2003)}]{brown2003ranking}
Brown LD (2003) Ranking journals using social science research network
  downloads. \emph{Review of Quantitative Finance and Accounting}
  20(3):291--307.

\bibitem[{Busa-Fekete et~al.(2013)Busa-Fekete, Szorenyi, Cheng, Weng,
  \protect\BIBand{} H{\"u}llermeier}]{busa2013top}
Busa-Fekete R, Szorenyi B, Cheng W, Weng P, H{\"u}llermeier E (2013) Top-k
  selection based on adaptive sampling of noisy preferences.
  \emph{International Conference on Machine Learning}, 1094--1102 (PMLR).

\bibitem[{Cai et~al.(2011)Cai, Liu, \protect\BIBand{} Luo}]{cai2011constrained}
Cai T, Liu W, Luo X (2011) A constrained l1 minimization approach to sparse
  precision matrix estimation. \emph{Journal of the American Statistical
  Association} 106(494):594--607.

\bibitem[{Cand{\`e}s \protect\BIBand{} Recht(2009)}]{candes2009exact}
Cand{\`e}s EJ, Recht B (2009) Exact matrix completion via convex optimization.
  \emph{Foundations of Computational mathematics} 9(6):717--772.

\bibitem[{Chen et~al.(2020)Chen, Gao, \protect\BIBand{}
  Zhang}]{chen2020partial}
Chen P, Gao C, Zhang AY (2020) Partial recovery for top-$ k $ ranking:
  Optimality of mle and sub-optimality of spectral method. \emph{arXiv preprint
  arXiv:2006.16485} .

\bibitem[{Chen et~al.(2021)Chen, Gao, \protect\BIBand{}
  Zhang}]{chen2021optimal}
Chen P, Gao C, Zhang AY (2021) Optimal full ranking from pairwise comparisons.
  \emph{arXiv preprint arXiv:2101.08421} .

\bibitem[{Chen et~al.(2013)Chen, Bennett, Collins-Thompson, \protect\BIBand{}
  Horvitz}]{chen2013pairwise}
Chen X, Bennett PN, Collins-Thompson K, Horvitz E (2013) Pairwise ranking
  aggregation in a crowdsourced setting. \emph{Proceedings of the sixth ACM
  international conference on Web search and data mining}, 193--202.

\bibitem[{Chen et~al.(2017)Chen, Chen, \protect\BIBand{}
  Li}]{chen2017asymptotically}
Chen X, Chen Y, Li X (2017) Asymptotically optimal sequential design for rank
  aggregation. \emph{arXiv preprint arXiv:1710.06056} .

\bibitem[{Chen et~al.(2018{\natexlab{a}})Chen, Gopi, Mao, \protect\BIBand{}
  Schneider}]{chen2018optimal}
Chen X, Gopi S, Mao J, Schneider J (2018{\natexlab{a}}) Optimal instance
  adaptive algorithm for the top-$ k $ ranking problem. \emph{IEEE Transactions
  on Information Theory} 64(9):6139--6160.

\bibitem[{Chen et~al.(2016)Chen, Jiao, \protect\BIBand{}
  Lin}]{chen2016bayesian}
Chen X, Jiao K, Lin Q (2016) Bayesian decision process for cost-efficient
  dynamic ranking via crowdsourcing. \emph{The Journal of Machine Learning
  Research} 17(1):7617--7656.

\bibitem[{Chen et~al.(2018{\natexlab{b}})Chen, Li, \protect\BIBand{}
  Mao}]{chen2018nearly}
Chen X, Li Y, Mao J (2018{\natexlab{b}}) A nearly instance optimal algorithm
  for top-k ranking under the multinomial logit model. \emph{Proceedings of the
  Twenty-Ninth Annual ACM-SIAM Symposium on Discrete Algorithms}, 2504--2522
  (SIAM).

\bibitem[{Chen et~al.(2019)Chen, Fan, Ma, \protect\BIBand{}
  Wang}]{chen2019spectral}
Chen Y, Fan J, Ma C, Wang K (2019) Spectral method and regularized mle are both
  optimal for top-k ranking. \emph{Annals of Statistics} 47(4):2204.

\bibitem[{Chen \protect\BIBand{} Suh(2015)}]{chen2015spectral}
Chen Y, Suh C (2015) Spectral {MLE}: Top-k rank aggregation from pairwise
  comparisons. \emph{International Conference on Machine Learning}, 371--380.

\bibitem[{Chernoff(1954)}]{chernoff1954distribution}
Chernoff H (1954) On the distribution of the likelihood ratio. \emph{Annals of
  Mathematical Statistics} 573--578.

\bibitem[{Chernozhukov et~al.(2013)Chernozhukov, Chetverikov, \protect\BIBand{}
  Kato}]{Chernozhukov2013Gaussian}
Chernozhukov V, Chetverikov D, Kato K (2013) Gaussian approximations and
  multiplier bootstrap for maxima of sums of high-dimensional random vectors.
  \emph{Annals of Statistics} 41(6):2786--2819, ISSN 0090-5364,
  \urlprefix\url{http://dx.doi.org/10.1214/13-AOS1161}.

\bibitem[{Copeland(1951)}]{copeland1951reasonable}
Copeland AH (1951) A reasonable social welfare function. Technical report,
  mimeo, 1951. University of Michigan.

\bibitem[{Cremonesi et~al.(2010)Cremonesi, Koren, \protect\BIBand{}
  Turrin}]{cremonesi2010performance}
Cremonesi P, Koren Y, Turrin R (2010) Performance of recommender algorithms on
  top-n recommendation tasks. \emph{Proceedings of the fourth ACM conference on
  Recommender systems}, 39--46.

\bibitem[{Ding et~al.(2018)Ding, Han, Dezert, \protect\BIBand{}
  Yang}]{ding2018new}
Ding J, Han D, Dezert J, Yang Y (2018) A new hierarchical ranking aggregation
  method. \emph{Information Sciences} 453:168--185.

\bibitem[{Duchi et~al.(2010)Duchi, Mackey, \protect\BIBand{}
  Jordan}]{duchi2010consistency}
Duchi JC, Mackey LW, Jordan MI (2010) On the consistency of ranking algorithms.
  \emph{ICML}.

\bibitem[{Dwork et~al.(2001)Dwork, Kumar, Naor, \protect\BIBand{}
  Sivakumar}]{dwork2001rank}
Dwork C, Kumar R, Naor M, Sivakumar D (2001) Rank aggregation methods for the
  web. \emph{Proceedings of the 10th international conference on World Wide
  Web}, 613--622.

\bibitem[{Eisenach et~al.(2020)Eisenach, Bunea, Ning, \protect\BIBand{}
  Dinicu}]{eisenach2020high}
Eisenach C, Bunea F, Ning Y, Dinicu C (2020) High-dimensional inference for
  cluster-based graphical models. \emph{Journal of Machine Learning Research}
  21(53).

\bibitem[{Farnoud et~al.(2012)Farnoud, Touri, \protect\BIBand{}
  Milenkovic}]{farnoud2012novel}
Farnoud F, Touri B, Milenkovic O (2012) Novel distance measures for vote
  aggregation. \emph{arXiv preprint arXiv:1203.6371} .

\bibitem[{Filiberto et~al.(2018)Filiberto, Bello, \protect\BIBand{}
  Nowe}]{filiberto2018new}
Filiberto Y, Bello R, Nowe A (2018) A new method for personnel selection based
  on ranking aggregation using a reinforcement learning approach.
  \emph{Computaci{\'o}n y Sistemas} 22(2):537--546.

\bibitem[{Ford~Jr(1957)}]{ford1957solution}
Ford~Jr LR (1957) Solution of a ranking problem from binary comparisons.
  \emph{The American Mathematical Monthly} 64(8P2):28--33.

\bibitem[{Ge et~al.(2003)Ge, Dudoit, \protect\BIBand{}
  Speed}]{ge2003resampling}
Ge Y, Dudoit S, Speed TP (2003) Resampling-based multiple testing for
  microarray data analysis. \emph{Test} 12(1):1--77.

\bibitem[{Geyik et~al.(2019)Geyik, Ambler, \protect\BIBand{}
  Kenthapadi}]{geyik2019fairness}
Geyik SC, Ambler S, Kenthapadi K (2019) Fairness-aware ranking in search \&
  recommendation systems with application to linkedin talent search.
  \emph{Proceedings of the 25th ACM SIGKDD International Conference on
  Knowledge Discovery \& Data Mining}, 2221--2231.

\bibitem[{Gleich \protect\BIBand{} Lim(2011)}]{gleich2011rank}
Gleich DF, Lim Lh (2011) Rank aggregation via nuclear norm minimization.
  \emph{Proceedings of the 17th ACM SIGKDD International Conference on
  Knowledge Discovery and Data Mining}, 60--68.

\bibitem[{Goldberg et~al.(2001)Goldberg, Roeder, Gupta, \protect\BIBand{}
  Perkins}]{goldberg2001eigentaste}
Goldberg K, Roeder T, Gupta D, Perkins C (2001) Eigentaste: A constant time
  collaborative filtering algorithm. \emph{information retrieval}
  4(2):133--151.

\bibitem[{Gourieroux et~al.(1982)Gourieroux, Holly, \protect\BIBand{}
  Monfort}]{gourieroux1982likelihood}
Gourieroux C, Holly A, Monfort A (1982) Likelihood ratio test, wald test, and
  kuhn-tucker test in linear models with inequality constraints on the
  regression parameters. \emph{Econometrica: journal of the Econometric
  Society} 63--80.

\bibitem[{Guiver \protect\BIBand{} Snelson(2009)}]{guiver2009bayesian}
Guiver J, Snelson E (2009) Bayesian inference for plackett-luce ranking models.
  \emph{Proceedings of the 26th Annual International Conference on Machine
  Learning}, 377--384.

\bibitem[{Guo et~al.(2020)Guo, Fan, Pang, Yang, Ai, Zamani, Wu, Croft,
  \protect\BIBand{} Cheng}]{guo2020deep}
Guo J, Fan Y, Pang L, Yang L, Ai Q, Zamani H, Wu C, Croft WB, Cheng X (2020) A
  deep look into neural ranking models for information retrieval.
  \emph{Information Processing \& Management} 57(6):102067.

\bibitem[{Hajek et~al.(2014)Hajek, Oh, \protect\BIBand{} Xu}]{hajek2014minimax}
Hajek B, Oh S, Xu J (2014) Minimax-optimal inference from partial rankings.
  \emph{arXiv preprint arXiv:1406.5638}.

\bibitem[{Hall \protect\BIBand{} Miller(2009)}]{hall2009using}
Hall P, Miller H (2009) Using the bootstrap to quantify the authority of an
  empirical ranking. \emph{The Annals of Statistics} 3929--3959.

\bibitem[{Harper \protect\BIBand{} Konstan(2015)}]{harper2015movielens}
Harper FM, Konstan JA (2015) The movielens datasets: History and context.
  \emph{ACM Transactions on Interactive Intelligent Systems (TiiS)} 5(4):1--19.

\bibitem[{He et~al.(2018)He, He, Du, \protect\BIBand{}
  Chua}]{he2018adversarial}
He X, He Z, Du X, Chua TS (2018) Adversarial personalized ranking for
  recommendation. \emph{The 41st International ACM SIGIR Conference on Research
  \& Development in Information Retrieval}, 355--364.

\bibitem[{Heckel et~al.(2019)Heckel, Shah, Ramchandran, \protect\BIBand{}
  Wainwright}]{heckel2019active}
Heckel R, Shah NB, Ramchandran K, Wainwright MJ (2019) Active ranking from
  pairwise comparisons and when parametric assumptions do not help.
  \emph{Annals of Statistics} 47(6):3099--3126.

\bibitem[{Holm(1979)}]{holm1979simple}
Holm S (1979) A simple sequentially rejective multiple test procedure.
  \emph{Scandinavian Journal of Statistics} 65--70.

\bibitem[{Hunter(2004)}]{hunter2004mm}
Hunter DR (2004) {MM} algorithms for generalized bradley-terry models.
  \emph{Annals of Statistics} 32(1):384--406.

\bibitem[{Jain et~al.(2020)Jain, Gilbert, \protect\BIBand{}
  Varma}]{jain2020spectral}
Jain L, Gilbert A, Varma U (2020) Spectral methods for ranking with scarce
  data. \emph{Conference on Uncertainty in Artificial Intelligence}, 609--618
  (PMLR).

\bibitem[{Jamieson \protect\BIBand{} Nowak(2011)}]{jamieson2011active}
Jamieson KG, Nowak RD (2011) Active ranking using pairwise comparisons.
  \emph{arXiv preprint arXiv:1109.3701}.

\bibitem[{Jang et~al.(2018)Jang, Kim, \protect\BIBand{} Suh}]{jang2018top}
Jang M, Kim S, Suh C (2018) Top-$ k $ rank aggregation from $ m $-wise
  comparisons. \emph{IEEE Journal of Selected Topics in Signal Processing}
  12(5):989--1004.

\bibitem[{Jang et~al.(2016)Jang, Kim, Suh, \protect\BIBand{} Oh}]{jang2016top}
Jang M, Kim S, Suh C, Oh S (2016) Top-$ k $ ranking from pairwise comparisons:
  When spectral ranking is optimal. \emph{arXiv preprint arXiv:1603.04153} .

\bibitem[{Jang et~al.(2017)Jang, Kim, Suh, \protect\BIBand{}
  Oh}]{jang2017optimal}
Jang M, Kim S, Suh C, Oh S (2017) Optimal sample complexity of m-wise data for
  top-k ranking. \emph{Proceedings of the 31st International Conference on
  Neural Information Processing Systems}, 1685--1695.

\bibitem[{Javanmard \protect\BIBand{}
  Montanari(2014{\natexlab{a}})}]{javanmard2014confidence}
Javanmard A, Montanari A (2014{\natexlab{a}}) Confidence intervals and
  hypothesis testing for high-dimensional regression. \emph{The Journal of
  Machine Learning Research} 15(1):2869--2909.

\bibitem[{Javanmard \protect\BIBand{}
  Montanari(2014{\natexlab{b}})}]{javanmard2014hypothesis}
Javanmard A, Montanari A (2014{\natexlab{b}}) Hypothesis testing in
  high-dimensional regression under the gaussian random design model:
  Asymptotic theory. \emph{IEEE Transactions on Information Theory}
  60(10):6522--6554.

\bibitem[{Jiang et~al.(2011)Jiang, Lim, Yao, \protect\BIBand{}
  Ye}]{jiang2011statistical}
Jiang X, Lim LH, Yao Y, Ye Y (2011) Statistical ranking and combinatorial hodge
  theory. \emph{Mathematical Programming} 127(1):203--244.

\bibitem[{Jin et~al.(2020)Jin, Xu, Gu, \protect\BIBand{}
  Farnoud}]{Jin_Xu_Gu_Farnoud_2020}
Jin T, Xu P, Gu Q, Farnoud F (2020) Rank aggregation via heterogeneous
  thurstone preference models. \emph{Proceedings of the AAAI Conference on
  Artificial Intelligence} \urlprefix\url{http://par.nsf.gov/biblio/10163658}.

\bibitem[{Keener(1993)}]{keener1993perron}
Keener JP (1993) The perron--frobenius theorem and the ranking of football
  teams. \emph{SIAM review} 35(1):80--93.

\bibitem[{Kendall(1955)}]{kendall1955further}
Kendall MG (1955) Further contributions to the theory of paired comparisons.
  \emph{Biometrics} 11(1):43--62.

\bibitem[{Kendall \protect\BIBand{} Smith(1940)}]{kendall1940method}
Kendall MG, Smith BB (1940) On the method of paired comparisons.
  \emph{Biometrika} 31(3/4):324--345.

\bibitem[{Kim et~al.(2015)Kim, Farnoud, \protect\BIBand{}
  Milenkovic}]{kim2015hydra}
Kim M, Farnoud F, Milenkovic O (2015) Hydra: gene prioritization via hybrid
  distance-score rank aggregation. \emph{Bioinformatics} 31(7):1034--1043.

\bibitem[{Kim et~al.(2017)Kim, Kim, \protect\BIBand{} Shim}]{kim2017latent}
Kim Y, Kim W, Shim K (2017) Latent ranking analysis using pairwise comparisons
  in crowdsourcing platforms. \emph{Information Systems} 65:7--21.

\bibitem[{Kodde \protect\BIBand{} Palm(1986)}]{kodde1986wald}
Kodde DA, Palm FC (1986) Wald criteria for jointly testing equality and
  inequality restrictions. \emph{Econometrica: journal of the Econometric
  Society} 1243--1248.

\bibitem[{Kolde et~al.(2012)Kolde, Laur, Adler, \protect\BIBand{}
  Vilo}]{kolde2012robust}
Kolde R, Laur S, Adler P, Vilo J (2012) Robust rank aggregation for gene list
  integration and meta-analysis. \emph{Bioinformatics} 28(4):573--580.

\bibitem[{Li et~al.(2018)Li, Liu, Huang, \protect\BIBand{}
  Shi}]{li2018integrating}
Li MM, Liu X, Huang Y, Shi C (2018) Integrating empirical estimation and
  assortment personalization for e-commerce: A consider-then-choose model .

\bibitem[{Liang \protect\BIBand{} de~Alfaro(2020)}]{liang2020online}
Liang S, de~Alfaro L (2020) Online top-k selection in crowdsourcing
  environments. \emph{Proceedings of 2020 the 6th International Conference on
  Computing and Data Engineering}, 252--259.

\bibitem[{Lu \protect\BIBand{} Boutilier(2011)}]{lu2011learning}
Lu T, Boutilier C (2011) Learning mallows models with pairwise preferences.
  \emph{ICML}.

\bibitem[{Lu \protect\BIBand{} Negahban(2015)}]{lu2015individualized}
Lu Y, Negahban SN (2015) Individualized rank aggregation using nuclear norm
  regularization. \emph{2015 53rd Annual Allerton Conference on Communication,
  Control, and Computing (Allerton)}, 1473--1479 (IEEE).

\bibitem[{Luce(1959)}]{luce1959individual}
Luce RD (1959) Individual choice behavior .

\bibitem[{Mallows(1957)}]{mallows1957non}
Mallows CL (1957) Non-null ranking models. i. \emph{Biometrika}
  44(1/2):114--130.

\bibitem[{Massey(1997)}]{massey1997statistical}
Massey K (1997) Statistical models applied to the rating of sports teams.
  \emph{Bluefield College} .

\bibitem[{Maystre \protect\BIBand{} Grossglauser(2015)}]{maystre2015fast}
Maystre L, Grossglauser M (2015) Fast and accurate inference of plackett--luce
  models. \emph{Advances in Neural Information Processing Systems}, 172--180.

\bibitem[{McFadden(1973)}]{mcfadden1973conditional}
McFadden D (1973) Conditional logit analysis of qualitative choice behavior .

\bibitem[{Minka et~al.(2018)Minka, Cleven, \protect\BIBand{}
  Zaykov}]{minka2018trueskill}
Minka T, Cleven R, Zaykov Y (2018) Trueskill 2: An improved bayesian skill
  rating system. Technical Report MSR-TR-2018-8, Microsoft.

\bibitem[{Minka et~al.(2007)Minka, Graepel, \protect\BIBand{}
  Herbrich}]{minka2007trueskill}
Minka T, Graepel T, Herbrich R (2007) Trueskill {TM}: a bayesian skill rating
  system. \emph{Advances in Neural Information Processing Systems} .

\bibitem[{Mohammadi \protect\BIBand{} Rezaei(2020)}]{mohammadi2020ensemble}
Mohammadi M, Rezaei J (2020) Ensemble ranking: Aggregation of rankings produced
  by different multi-criteria decision-making methods. \emph{Omega} 96:102254.

\bibitem[{Molenberghs \protect\BIBand{}
  Verbeke(2007)}]{molenberghs2007likelihood}
Molenberghs G, Verbeke G (2007) Likelihood ratio, score, and wald tests in a
  constrained parameter space. \emph{The American Statistician} 61(1):22--27.

\bibitem[{Mosteller(2006)}]{mosteller2006remarks}
Mosteller F (2006) Remarks on the method of paired comparisons: I. the least
  squares solution assuming equal standard deviations and equal correlations.
  \emph{Selected Papers of Frederick Mosteller}, 157--162 (Springer).

\bibitem[{N{\'a}poles et~al.(2017)N{\'a}poles, Falcon, Dikopoulou,
  Papageorgiou, Bello, \protect\BIBand{} Vanhoof}]{napoles2017weighted}
N{\'a}poles G, Falcon R, Dikopoulou Z, Papageorgiou E, Bello R, Vanhoof K
  (2017) Weighted aggregation of partial rankings using ant colony
  optimization. \emph{Neurocomputing} 250:109--120.

\bibitem[{Negahban et~al.(2017)Negahban, Oh, \protect\BIBand{}
  Shah}]{negahban2017rank}
Negahban S, Oh S, Shah D (2017) Rank centrality: Ranking from pairwise
  comparisons. \emph{Operations Research} 65(1):266--287.

\bibitem[{Negahban et~al.(2018)Negahban, Oh, Thekumparampil, \protect\BIBand{}
  Xu}]{negahban2018learning}
Negahban S, Oh S, Thekumparampil KK, Xu J (2018) Learning from comparisons and
  choices. \emph{The Journal of Machine Learning Research} 19(1):1478--1572.

\bibitem[{Negahban \protect\BIBand{} Wainwright(2012)}]{negahban2012restricted}
Negahban S, Wainwright MJ (2012) Restricted strong convexity and weighted
  matrix completion: Optimal bounds with noise. \emph{The Journal of Machine
  Learning Research} 13:1665--1697.

\bibitem[{Neudorfer \protect\BIBand{} Rosset(2018)}]{neudorfer2018predicting}
Neudorfer A, Rosset S (2018) Predicting the ncaa basketball tournament using
  isotonic least squares pairwise comparison model. \emph{Journal of
  Quantitative Analysis in Sports} 14(4):173--183.

\bibitem[{Ning \protect\BIBand{} Liu(2017)}]{ning2017general}
Ning Y, Liu H (2017) A general theory of hypothesis tests and confidence
  regions for sparse high dimensional models. \emph{Annals of Statistics}
  45(1):158--195.

\bibitem[{Orb{\'a}n-Mih{\'a}lyk{\'o} et~al.(2019)Orb{\'a}n-Mih{\'a}lyk{\'o},
  Mih{\'a}lyk{\'o}, \protect\BIBand{} Koltay}]{orban2019generalization}
Orb{\'a}n-Mih{\'a}lyk{\'o} {\'E}, Mih{\'a}lyk{\'o} C, Koltay L (2019) A
  generalization of the thurstone method for multiple choice and incomplete
  paired comparisons. \emph{Central European Journal of Operations Research}
  27(1):133--159.

\bibitem[{Osting et~al.(2013)Osting, Brune, \protect\BIBand{}
  Osher}]{osting2013enhanced}
Osting B, Brune C, Osher S (2013) Enhanced statistical rankings via targeted
  data collection. \emph{International Conference on Machine Learning},
  489--497 (PMLR).

\bibitem[{Pananjady et~al.(2017)Pananjady, Mao, Muthukumar, Wainwright,
  \protect\BIBand{} Courtade}]{pananjady2017worst}
Pananjady A, Mao C, Muthukumar V, Wainwright MJ, Courtade TA (2017) Worst-case
  vs average-case design for estimation from fixed pairwise comparisons.
  \emph{arXiv preprint arXiv:1707.06217} .

\bibitem[{Pelechrinis et~al.(2016)Pelechrinis, Papalexakis, \protect\BIBand{}
  Faloutsos}]{pelechrinis2016sportsnetrank}
Pelechrinis K, Papalexakis E, Faloutsos C (2016) Sportsnetrank: Network-based
  sports team ranking .

\bibitem[{Petrov(1995)}]{petrov1995limit}
Petrov VV (1995) Limit theorems of probability theory: sequences of independent
  random variables .

\bibitem[{Price(2017)}]{price2017scores}
Price GH (2017) Scores ranking of sports teams including home-field advantage.
  \emph{Available on the web at Academia} .

\bibitem[{Pujahari \protect\BIBand{} Sisodia(2020)}]{pujahari2020aggregation}
Pujahari A, Sisodia DS (2020) Aggregation of preference relations to enhance
  the ranking quality of collaborative filtering based group recommender
  system. \emph{Expert Systems with Applications} 156:113476.

\bibitem[{Rajkumar \protect\BIBand{} Agarwal(2014)}]{rajkumar2014statistical}
Rajkumar A, Agarwal S (2014) A statistical convergence perspective of
  algorithms for rank aggregation from pairwise data. \emph{International
  Conference on Machine Learning}, 118--126 (PMLR).

\bibitem[{Rajkumar \protect\BIBand{} Agarwal(2016)}]{rajkumar2016can}
Rajkumar A, Agarwal S (2016) When can we rank well from comparisons of o
  (n$\backslash$log (n)) non-actively chosen pairs? \emph{Conference on
  Learning Theory}, 1376--1401 (PMLR).

\bibitem[{Richardson et~al.(2006)Richardson, Prakash, \protect\BIBand{}
  Brill}]{richardson2006beyond}
Richardson M, Prakash A, Brill E (2006) Beyond pagerank: machine learning for
  static ranking. \emph{Proceedings of the 15th International Conference on
  World Wide Web}, 707--715.

\bibitem[{Rogers(1986)}]{rogers1986modified}
Rogers AJ (1986) Modified lagrange multiplier tests for problems with one-sided
  alternatives. \emph{Journal of Econometrics} 31(3):341--361.

\bibitem[{Saaty(2003)}]{saaty2003decision}
Saaty TL (2003) Decision-making with the ahp: Why is the principal eigenvector
  necessary. \emph{European Journal of Operational Research} 145(1):85--91.

\bibitem[{Shah et~al.(2015)Shah, Balakrishnan, Bradley, Parekh, Ramchandran,
  \protect\BIBand{} Wainwright}]{shah2015estimation}
Shah N, Balakrishnan S, Bradley J, Parekh A, Ramchandran K, Wainwright M (2015)
  Estimation from pairwise comparisons: Sharp minimax bounds with topology
  dependence. \emph{Artificial Intelligence and Statistics}, 856--865 (PMLR).

\bibitem[{Shah et~al.(2016)Shah, Balakrishnan, Guntuboyina, \protect\BIBand{}
  Wainwright}]{shah2016stochastically}
Shah N, Balakrishnan S, Guntuboyina A, Wainwright M (2016) Stochastically
  transitive models for pairwise comparisons: Statistical and computational
  issues. \emph{International Conference on Machine Learning}, 11--20 (PMLR).

\bibitem[{Shah \protect\BIBand{} Wainwright(2017)}]{shah2017simple}
Shah NB, Wainwright MJ (2017) Simple, robust and optimal ranking from pairwise
  comparisons. \emph{The Journal of Machine Learning Research}
  18(1):7246--7283.

\bibitem[{Shapiro(1988)}]{shapiro1988towards}
Shapiro A (1988) Towards a unified theory of inequality constrained testing in
  multivariate analysis. \emph{International Statistical Review/Revue
  Internationale de Statistique} 49--62.

\bibitem[{{\v{S}}id{\'a}k(1967)}]{vsidak1967rectangular}
{\v{S}}id{\'a}k Z (1967) Rectangular confidence regions for the means of
  multivariate normal distributions. \emph{Journal of the American Statistical
  Association} 62(318):626--633.

\bibitem[{Subochev et~al.(2018)Subochev, Aleskerov, \protect\BIBand{}
  Pislyakov}]{subochev2018ranking}
Subochev A, Aleskerov F, Pislyakov V (2018) Ranking journals using social
  choice theory methods: A novel approach in bibliometrics. \emph{Journal of
  Informetrics} 12(2):416--429.

\bibitem[{Suh et~al.(2017)Suh, Tan, \protect\BIBand{}
  Zhao}]{suh2017adversarial}
Suh C, Tan VY, Zhao R (2017) Adversarial top-$ k $ ranking. \emph{IEEE
  Transactions on Information Theory} 63(4):2201--2225.

\bibitem[{Susko(2013)}]{susko2013likelihood}
Susko E (2013) Likelihood ratio tests with boundary constraints using
  data-dependent degrees of freedom. \emph{Biometrika} 100(4):1019--1023.

\bibitem[{Swain et~al.(2017)Swain, Chandler, Backman, \protect\BIBand{}
  Marcelino}]{swain2017consensus}
Swain TD, Chandler J, Backman V, Marcelino L (2017) Consensus thermotolerance
  ranking for 110 symbiodinium phylotypes: an exemplar utilization of a novel
  iterative partial-rank aggregation tool with broad application potential.
  \emph{Functional Ecology} 31(1):172--183.

\bibitem[{Talluri \protect\BIBand{} Van~Ryzin(2006)}]{talluri2006theory}
Talluri KT, Van~Ryzin GJ (2006) \emph{The theory and practice of revenue
  management}, volume~68 (Springer Science \& Business Media).

\bibitem[{Thurstone(1927)}]{thurstone1927law}
Thurstone LL (1927) A law of comparative judgment. \emph{Psychological review}
  34(4):273.

\bibitem[{Tropp(2015)}]{tropp2015introduction}
Tropp JA (2015) An introduction to matrix concentration inequalities.
  \emph{arXiv preprint arXiv:1501.01571} .

\bibitem[{Tsybakov(2008)}]{tsybakov2008introduction}
Tsybakov AB (2008) \emph{Introduction to nonparametric estimation} (Springer
  Science \& Business Media).

\bibitem[{Van~de Geer et~al.(2014)Van~de Geer, B{\"u}hlmann, Ritov,
  \protect\BIBand{} Dezeure}]{van2014asymptotically}
Van~de Geer S, B{\"u}hlmann P, Ritov Y, Dezeure R (2014) On asymptotically
  optimal confidence regions and tests for high-dimensional models.
  \emph{Annals of Statistics} 42(3):1166--1202.

\bibitem[{Verd{\'u}(1994)}]{verdu1994generalizing}
Verd{\'u} S (1994) Generalizing the fano inequality. \emph{IEEE Transactions on
  Information Theory} 40(4):1247--1251.

\bibitem[{Vigna(2016)}]{vigna2016spectral}
Vigna S (2016) Spectral ranking. \emph{Network Science} 4(4):433--445.

\bibitem[{Vojnovic \protect\BIBand{} Yun(2017)}]{vojnovic2017parameter}
Vojnovic M, Yun SY (2017) Parameter estimation for thurstone choice models.
  \emph{arXiv preprint arXiv:1705.00136} .

\bibitem[{Volkovs \protect\BIBand{} Zemel(2012)}]{volkovs2012flexible}
Volkovs MN, Zemel RS (2012) A flexible generative model for preference
  aggregation. \emph{Proceedings of the 21st international conference on World
  Wide Web}, 479--488.

\bibitem[{Wang et~al.(2020)Wang, Shah, \protect\BIBand{}
  Ravi}]{wang2020stretching}
Wang J, Shah N, Ravi R (2020) Stretching the effectiveness of mle from accuracy
  to bias for pairwise comparisons. \emph{International Conference on
  Artificial Intelligence and Statistics}, 66--76 (PMLR).

\bibitem[{Westfall \protect\BIBand{} Young(1993)}]{westfall1993resampling}
Westfall PH, Young SS (1993) \emph{Resampling-based multiple testing: Examples
  and methods for p-value adjustment}, volume 279 (John Wiley \& Sons).

\bibitem[{Wolak(1989)}]{wolak1989testing}
Wolak FA (1989) Testing inequality constraints in linear econometric models.
  \emph{Journal of Econometrics} 41(2):205--235.

\bibitem[{Xia et~al.(2018)Xia, Jain, Krishna, \protect\BIBand{}
  Brinton}]{xia2018network}
Xia V, Jain K, Krishna A, Brinton CG (2018) A network-driven methodology for
  sports ranking and prediction. \emph{2018 52nd Annual Conference on
  Information Sciences and Systems (CISS)}, 1--6 (IEEE).

\bibitem[{Yu et~al.(2019)Yu, Gupta, \protect\BIBand{}
  Kolar}]{yu2019constrained}
Yu M, Gupta V, Kolar M (2019) Constrained high dimensional statistical
  inference. \emph{arXiv preprint arXiv:1911.07319} .

\bibitem[{Zhang \protect\BIBand{} Zhang(2014)}]{zhang2014confidence}
Zhang CH, Zhang SS (2014) Confidence intervals for low dimensional parameters
  in high dimensional linear models. \emph{Journal of the Royal Statistical
  Society: Series B: Statistical Methodology} 217--242.

\bibitem[{Zhang(2020)}]{zhang2020ranking}
Zhang L (2020) \emph{Ranking Based on Triple Comparison}. Ph.D. thesis.

\bibitem[{Zheng et~al.(2020)Zheng, Chen, Liu, \protect\BIBand{}
  Wang}]{zheng2020inertia}
Zheng SN, Chen X, Liu LL, Wang Y (2020) Inertia indices and eigenvalue
  inequalities for hermitian matrices. \emph{Linear and Multilinear Algebra}
  1--10.

\end{thebibliography}



\end{document}